\theoremstyle{plain}
\newtheorem{thm}{\protect\theoremname}
\theoremstyle{definition}
\newtheorem{defn}[thm]{\protect\definitionname}
\theoremstyle{plain}
\newtheorem{prop}[thm]{\protect\propositionname}
\theoremstyle{plain}
\newtheorem{lem}[thm]{\protect\lemmaname}
\theoremstyle{plain}
\theoremstyle{plain}
\newtheorem{observation}[thm]{\protect\observationname}
\newcommand{\thickhline}{%
    \noalign {\ifnum 0=`}\fi \hrule height 1pt
    \futurelet \reserved@a \@xhline
}
\newtheorem{claim}[thm]{Claim}
\newcommand{\spn}[1]{\text{Span}\left( #1 \right)}
\DeclareMathOperator*{\argmaxop}{arg\,max}
\DeclareMathOperator*{\argminop}{arg\,min}
\newcommand{\myeqold}[1]{\stackrel{\mathclap{\normalfont\mbox{#1}}}{=}}
\newcommand{\myleqold}[1]{\stackrel{\mathclap{\normalfont\mbox{#1}}}{\leq}}
\newcommand{\mygeqold}[1]{\stackrel{\mathclap{\normalfont\mbox{#1}}}{\geq}}
\newcommand{\red}[1]{{ #1}}
\newcommand{\blue}[1]{{ #1}}
\global\long\def\conv#1{\mathop{\text{conv}}\left(#1\right)}%
\providecommand{\corollaryname}{Corollary}
\providecommand{\definitionname}{Definition}
\providecommand{\lemmaname}{Lemma}
\providecommand{\observationname}{Observation}
\providecommand{\propositionname}{Proposition}
\providecommand{\theoremname}{Theorem}
\newcommand{\E}{\mathop{\mathbb{E}}}
\newcommand{\R}{\mathbb{R}}
\def\poly{\operatorname{poly}}
\title{Testing assumptions summary}
\author{arsenvasilyan }
\date{February 2022}
\begin{document}

\global\long\def\B{\mathcal{B}}%

\global\long\def\red#1{{\color{red}#1}}%
\global\long\def\blue#1{{\color{blue}#1}}%

\global\long\def\dimvc{\dim_{\text{VC}}}%

\global\long\def\spn#1{\text{Span}\left(#1\right)}%

\global\long\def\argmax{\argmaxop}%

\global\long\def\argmin{\argminop}%

\global\long\def\E{\mathbb{E}}%

\global\long\def\R{\mathbb{R}}%

\global\long\def\P{\mathbb{P}}%

\global\long\def\C{\mathbb{C}}%

\global\long\def\Z{\mathbb{Z}}%

\global\long\def\F{\mathcal{F}}%

\global\long\def\Ep{\mathcal{E}}%

\global\long\def\sign{\mathbb{\text{sign}}}%

\global\long\def\myeq#1{\myeqold{#1}}%

\global\long\def\myleq#1{\myleqold{#1}}%

\global\long\def\mygeq#1{\mygeqold{#1}}%

\global\long\def\indicator{\mathbbm{1}}%

\global\long\def\parr#1{\left(#1\right)}%

\global\long\def\pars#1{\left[#1\right]}%

\global\long\def\fig#1{\left\{  #1\right\}  }%

\global\long\def\abs#1{\left|#1\right|}%

\global\long\def\norm#1{\left\Vert #1\right\Vert }%

\global\long\def\bra#1{\left\langle #1\right\rangle }%

\global\long\def\d{\,d}%

\global\long\def\conditional{\bigg\vert}%

\global\long\def\vect#1{\bm{#1}}%
%\global\long\def\vect#1{#1}%

\global\long\def\mydot#1{\stackrel{\mathbf{.}}{#1}}%

\global\long\def\mydoubledot#1{\stackrel{\mathbf{.}\mathbf{.}}{#1}}%

\global\long\def\mytripledot#1{\stackrel{\mathbf{.}\mathbf{.}\mathbf{.}}{#1}}%

\global\long\def\mybar#1{\overline{#1}}%

\global\long\def\myhat#1{\widehat{#1}}%

\global\long\def\spn#1{\mathop{\text{span}}\left(#1\right)}%

\global\long\def\righ{\rightarrow}%

\global\long\def\var{\text{var}}%

\global\long\def\N{\mathcal{N}}%

\begin{titlepage}
		\def\thepage{}
		\thispagestyle{empty}
		
		\title{Testing distributional assumptions of learning algorithms} 
		
		\date{}
		\author{
			Ronitt Rubinfeld\thanks{MIT, {\tt ronitt@csail.mit.edu}.  
			Supported in part by NSF awards CCF-2006664, DMS-2022448 and Fintech@CSAIL.  }
			\and
			Arsen Vasilyan \thanks{MIT, {\tt vasilyan@mit.edu}.
			Supported in part by NSF awards CCF-1565235, CCF-1955217, DMS-2022448, Big George Fellowship and Fintech@CSAIL.
			}
		}
		
		\maketitle

\abstract{
There are many important {\em high dimensional} function classes
that have fast agnostic learning algorithms when strong assumptions
on the distribution of examples can be made, such as Gaussianity or uniformity over the domain.
But how can one be sufficiently confident that the data indeed
satisfies the distributional assumption, so that one can  
trust in the output quality of the agnostic learning algorithm?
We  propose a model by which to systematically study the design of 
\emph{tester-learner pairs} $(\mathcal{A},\mathcal{T})$, 
such that if the distribution on examples in the data
passes the tester $\mathcal{T}$ then one can {\em safely trust}  the
output of the  agnostic
learner $\mathcal{A}$ on the data.

To demonstrate the power of the model, we apply it to the classical problem of
agnostically learning halfspaces under the standard Gaussian distribution
and 
present a tester-learner pair with a combined run-time of 
$n^{\tilde{O}(1/\epsilon^4)}$.
This qualitatively matches that of the
best known ordinary agnostic learning algorithms for this task. 
In contrast, finite sample Gaussian distribution testers do not
exist for the $L_1$ and EMD distance measures. 
Previously it was known that half-spaces are well-approximated with low-degree polynomials relative to the Gaussian distribution. A key step in our analysis is showing that this is the case even relative to distributions whose low-degree moments approximately match those of a Gaussian.

We also go beyond spherically-symmetric distributions, and give a tester-learner pair for halfspaces under the uniform distribution on $\{0,1\}^n$ with combined run-time of $n^{\tilde{O}(1/\epsilon^4)}$. This is achieved using polynomial approximation theory and critical index machinery of \cite{DiakonikolasGJSV09}. 

%In contrast, we show strong lower bounds on the combined
%run-times of tester-learner pairs
%for the problems of agnostically learning convex sets under
%the Gaussian distribution and for monotone Boolean functions
%under the uniform distribution over $\{0,1\}^n$. 
%Through these lower bounds we exhibit the existence of natural problems 
%for which testing the assumption of an agnostic learning algorithms can come only at a very steep price in terms of run-time. 
%Thereby we  commence charting the territory of agnostic learning problems that have
%efficient assumption testers.
%%Therefore, there is provably no general method that allows one to automatically convert standard agnostic learning algorithms into tester-learner pairs with low run-time overhead. 

Can one design agnostic learning algorithms under distributional assumptions and count on future technical work to produce, as a matter of course, tester-learner pairs with similar run-time? Our answer is a resounding no, as we show there exist some well-studied settings for which $2^{\tilde{O}(\sqrt{n})}$ run-time agnostic learning algorithms are available, yet the combined
run-times of tester-learner pairs must be as high as $2^{\Omega(n)}$. On that account, the design of tester-learner pairs is a research direction in its own right independent of standard agnostic learning.
To be specific, our lower bounds apply to the problems of agnostically learning convex sets under
the Gaussian distribution and for monotone Boolean functions
under the uniform distribution over $\{0,1\}^n$.

}
\end{titlepage}

\maketitle

%\tableofcontents

\section{Introduction.} 
\subsection{Motivation.}

Suppose one wants to learn from i.i.d. example-label pairs, but some unknown fraction of labels are corrupted by an adversary.
The well-studied field of \textbf{agnostic learning} seeks to develop learning algorithms that are robust to such corruptions.
Agnostic learning can be notoriously harder than standard learning (see for example \cite{GuruswamiR06,FeldmanGKP06,Daniely16}).  
Nevertheless, there are many important \textbf{high dimensional}
function classes that do have fast agnostic learning algorithms, including halfspaces, convex sets and monotone Boolean functions. However, these learning algorithms 
%In \textbf{high dimensions} many of agnostic learning algorithms we know 
make strong assumptions about the underlying distribution on examples%\footnote{To give the reader examples to keep in mind, we note that in this work we consider the %following three settings:
%\begin{itemize}
%    \item Agnostic learning of halfspaces in $\R^n$, assuming examples are Gaussian.
%    \item Agnostic learning of convex sets in $\R^n$, assuming examples are Gaussian.
%    \item Agnostic learning of monotone functions in $\{0,1\}^n$, assuming examples are uniform.
%\end{itemize}
%For context, agnostic learning under distributional assumptions is a large research direction, and more such work is referenced in Section \ref{} 
%}
, such as Gaussianity or uniformity over $\{0,1\}^n$.

Thus, to be confident in such a learning algorithm one needs to be confident in the distributional assumption. 
In some cases, users can attain confidence in their distributional assumptions 
by creating their own set of examples which conform to the distribution, and querying  labels for these examples.
%drawn according to the distribution, and 
%querying the function in order to obtain labels.  
Yet, this approach
requires query access, which is often unavailable.
Is there a way to ascertain  that the examples are indeed coming from a distribution for which the learning algorithm will give a robust answer? %\textbf{What about testing these assumptions from example-label pairs?}

We propose to systematically study the design of \textbf{tester-learner pairs} $(\mathcal{A},\mathcal{T})$, such that \textbf{tester $\mathcal{T}$ tests the distributional assumptions of agnostic learner $\mathcal{A}$}. In other words,
the tester-learner pair is to be designed such that if the distribution on examples
in the data pass the tester, then one can {\em safely use the learner on the data}. 
%\sout{More specifically,
%we propose a new model which makes the following end-to-end requirements on such a pair $(\mathcal{A},\mathcal{T})$}. 
By considering the most basic requirements that such a pair ought to satisfy,
we propose a new model that makes the following end-to-end requirements on 
a  tester-learner pair
%such a pair 
$(\mathcal{A},\mathcal{T})$: 
\begin{itemize}
    \item  \textbf{Composability:}
    For any example-label distribution, it should be unlikely that simultaneously (i) the tester $\mathcal{T}$ accepts but (ii) the learner $\mathcal{A}$ outputs something not satisfying the agnostic learning guarantee. 
    \item \textbf{Completeness:} If the distribution on examples conforms to the distributional assumption, tester $\mathcal{T}$ will likely accept. 
    \item The performance of the tester-learner pair is judged by the combined run-time of $\mathcal{A}$ and $\mathcal{T}$.
\end{itemize}
 See Section \ref{sec: definition of tester/learner} for the fully formal definition and see Subsection \ref{subsec: more comments} for more comments. 

We emphasize that assumptions on the distribution of examples are in fact made in a very large number of works on agnostic learning
\footnote{The reason for this ubiquity of distributional assumptions in high-dimensional agnostic learning is that with no assumption at all on the distribution the task of agnostic learning is usually intractable. For example (i) The task of learning indicators of convex sets over $\R^n$ cannot be achieved with finite number of samples if nothing is assumed about the distribution. If the distribution is assumed to be Gaussian, this task can be achieved with run-time of $n^{\tilde{O}(\sqrt{n}/\epsilon^4)}$ \cite{KlivansOS08}.  
 (ii) If one is unwilling to make any distributional assumption, no agnostic learning algorithm for halfspaces with run-time of $2^{o(n)}$ is known despite decades of research (also see \cite{GuruswamiR06,FeldmanGKP06,Daniely16} for some known hardness results). 
However, as we mentioned if the examples are distributed according to the standard Gaussian, a dramatically faster run-time of $n^{\tilde{O}(1/\epsilon^2)}$ is achievable \cite{kalai_agnostically_2005,DiakonikolasGJSV09}. 
}. Here is an incomplete list of such papers that only scratches the surface:  \cite{kalai_agnostically_2005,odonnell_learning_2006,blais_polynomial_2008,KlivansOS08,gopalan_learning_2010,kane_gaussian_2010,wimmer_agnostically_2010,harsha_invariance_2010,diakonikolas_bounding_2010,cheraghchi_submodular_2012,AwasthiBL14,dachman-soled_approximate_2014,feldman_tight_2015,feldman_agnostic_2015,blais_learning_2015,canonne_testing_2017,feldman_tight_2017,diakonikolas_agnostic_2021}. Hence, we think it is important to understand to what extent these distributional assumptions can be tested. 

Perhaps surprisingly, in spite of how natural this definition is, nothing was previously known on how well it can be achieved for various well-studied  problems. The gamut of open possibilities included the most optimistic one: that for all these problems one can test the assumption with very small overhead relative to the existing agnostic learning algorithms. It also included the most pessimistic one: that for all these problems one can test the assumption only at a very steep additional cost in terms of run-time. We note that such steep additional cost would indeed be payed if one were to use existing identity testers of $n$-dimensional distributions, as these testers have run-times of $2^{\Omega(n)}$ (see below for more information on this).

We commence the charting of the landscape of these possibilities. 
We find that neither of these extreme possibilities holds in general. On one hand, we find that for some natural problems the most optimistic possibility does materialize and there is a tester-learner pair whose run-time is of the same order as that of the best known agnostic learning algorithm. Specifically, for agnostically learning the class of half-spaces with respect to standard\footnote{Note that the case of Gaussian distribution with arbitrary known mean and covariance reduces to the case of standard Gaussian via a change of coordinates. } Gaussian distribution, we design a tester-learner pair $(\mathcal{A},\mathcal{T})$ with combined run-time of $n^{\tilde{O}(1/\epsilon^4)}$.
This run-time qualitatively matches the run-time of $n^{\tilde{O}(1/\epsilon^2)}$ \cite{kalai_agnostically_2005,DiakonikolasGJSV09} achieved by the best algorithm\footnote{However, note that the work of \cite{Daniely15} shows how to obtain an even faster run-time of $\poly\left({n,\frac{1}{\epsilon}}\right)$
if one is willing to settle for a weaker guarantee than the standard agnostic learning guarantee. Specifically, for any absolute constant $\mu$, \cite{Daniely15} gives a predictor, such that, if the best halfspace has error $\text{opt}$, the predictor of \cite{Daniely15} will have error of at most $(1+\mu)\text{opt}+\epsilon$ (note that standard agnostic learning requires an error bound of $\text{opt}+\epsilon$). In this work we only consider standard agnostic learning.} and the statistical query lower bound of $n^{\Omega(1/\epsilon^2)}$ by  \cite{GoelGK20,DiakonikolasKZ20,DiakonikolasKPZ21}. We also go beyond spherically-symmetric distributions, and give a tester-learner pair for halfspaces under the uniform distribution on $\{0,1\}^n$ with combined run-time of $n^{\tilde{O}(1/\epsilon^4)}$. 
Here also, the run-time qualitatively matches the run-time of $n^{\tilde{O}(1/\epsilon^2)}$ \cite{kalai_agnostically_2005,DiakonikolasGJSV09} achieved by the best algorithm.
Additionally, we remark that positive results in our framework extend to function classes beyond halfspaces and, as a proof of concept, we give a simple tester-learner pair for agnostically learning decision lists\footnote{For this example,
a {\em decision list} is a special case of a decision tree
corresponding to a path.  More formally, for some ordering of the
variables $x_{\pi(1)},
\ldots, x_{\pi(n)}$, values $v_1,\ldots,v_n$ and bits $b_1,\ldots,b_n$,
~a decision list does the following:  For $i = 1 {\rm ~to~} n$,
if $x_{\pi(i)}=b_{\pi(i)}$ output $v_{\pi(i)}$, else continue.
A more general definition is given in \cite{Rivest87}.
}  under uniform distribution on $\{0,1\}^n$ (see \Cref{appendix: proof for decision lists}).

On the other hand, for some other natural problems, we show that the most pessimistic scenario holds and the additional requirement of testing the distributional assumption comes at a steep price in terms of run-time. Specifically: 
\begin{itemize}\item A well-known algorithm of \cite{KlivansOS08} agnostically learns convex sets under the Gaussian distribution with a run-time of $n^{\tilde{O}(\sqrt{n}/\epsilon^4)}$.
    We show that if a tester $\mathcal{T}$ tests the distributional assumption of this algorithm, then $\mathcal{T}$ has run-time of $2^{\Omega(n)}$. More generally, \textbf{any} tester-learner pair for this task requires $2^{\Omega(n)}$ run-time combined.
    \item A well-known algorithm of \cite{bshouty_fourier_1995,kalai_agnostically_2005} agnostically learns monotone Boolean functions under uniform distribution over $\{0,1\}^n$ with a run-time of $2^{\tilde{O}\left(\frac{\sqrt{n}}{\epsilon^2}\right)}$. We show that if a tester $\mathcal{T}$ tests the distributional assumption of this algorithm, then $\mathcal{T}$ has run-time of $2^{\Omega(n)}$. Again, \textbf{any} tester-learner pair for this task requires $2^{\Omega(n)}$ run-time combined.
\end{itemize}
We emphasize that these lower bounds exhibit natural problems where there is a dramatic gap between standard agnostic learning run-time and the run-time of the best tester-learner pair.
Therefore, there is provably no general method that allows one to automatically convert standard agnostic learning algorithms into tester-learner pairs with low run-time overhead. 

Additionally, lower bounds for tester-learner pairs can imply lower bounds for standard agnostic learning: Specifically, our lower bounds imply that agnostic learning of monotone functions under distributions $\frac{1}{2^{n^{0.99}}}$-close\footnote{In total variation distance.} to $n^{0.99}$-wise independent distributions requires $2^{\Omega(n)}$ run-time. 
The reason is that by \cite{ODonnellZ18,AlonAKMRX07,AlonGM03} one can test $n^{0.99}$-wise independence up to error $\frac{1}{2^{n^{0.99}}}$ in time $2^{\tilde{O}(n^{0.99})}$, and therefore the existence of such an algorithm would contradict our general lower bound for tester-learner pairs. 
As there are $2^{\tilde{O}(\sqrt{n}/\epsilon^2)}$ time learners for monotone functions over the
uniform distribution \cite{bshouty_fourier_1995,kalai_agnostically_2005}, this lower bound highlights the 
sensitivity of agnostic learners  to the assumption on the input distribution.

\paragraph{Distribution testing perspective.}
Existing work on identity testing of $n$-dimensional distributions has focused on testing with respect to very strict distance measures (i.e. TV distance, earth-mover distance, etc.). On one hand this yields strong general-purpose guarantees on distributions accepted by the tester -- it is hard to think of a situation where closeness in TV distance is unsatisfactory. On the other hand, in $n$ dimensions this leads to run-times of $2^{\Omega(n)}$. As a concrete example, distinguishing the uniform distribution over $\{0,1\}^n$ from a distribution that is $\epsilon$-far from it in total variation distance requires a run-time of $\Theta \left(\frac{1}{\epsilon^2}2^{n/2}\right)$ (see text \cite{ClementSurvey}).

Yet, run-times of $2^{\Omega(n)}$ can be prohibitive. Indeed, as we explained above, the theory of $n$-dimensional agnostic learning aims at developing algorithms with run-times of $2^{o(n)}$ or even $n^{O_\epsilon(1)}$. If one were to combine these algorithms with a $2^{\Omega(n)}$-run-time distribution tester, the total run-time would rise precipitously. 

From the distribution testing perspective, this work studies \textbf{application-targeted testers} that, in favor of much faster run-time, forgo the general-purpose guarantees provided by these strict distance measures. The application domain  which this work considers  is the testing of distributional assumptions made by agnostic learning algorithms. Here, the application-targeted testers are developed with a view towards special-purpose guarantees sufficient to ensure that the learning algorithms  are still robust. For some problems in this domain -- this work shows -- the use of general-purpose testers can indeed be circumvented, with a dramatic gain in run-time.   

In general, surprisingly little is known about such application-targeted testers and we hope more application-targeted distribution testers can be developed for other domains.
\paragraph{Recent followup work \cite{ConcurrentWork22}.}
In an exciting new development we were contacted regarding a follow up work \cite{ConcurrentWork22} that builds on an earlier version of this paper, which had been made available to them. \cite{ConcurrentWork22} develops novel techniques for the design and analysis of tester-learner pairs that leverage connections with the notion of fooling a function class from the field of pseudorandomness. This allows \cite{ConcurrentWork22} to
\begin{itemize}
    \item Give tester-learner pairs for more general function classes, such as intersections of halfspaces.
    \item Handle more general classes of distributional assumptions, such as strictly subexponential distributions in $\R^n$ and uniform over $\{0,1\}^n$.
    \item Present a new connection between the notion of tester-learner pairs and Rademacher complexity.
    \item Improve on our run-time for tester-learner pairs for halfspaces under the Gaussian distribution on $\R^n$. Specifically, they give a bound of $n^{\tilde{O}(1/\epsilon^2)}$ which improves upon our bound of $n^{\tilde{O}(1/\epsilon^4)}$. Their tighter bound also matches the known statistical query lower bounds  \cite{GoelGK20,DiakonikolasKZ20,DiakonikolasKPZ21}.
\end{itemize}

We would like to note that Theorem \ref{thm: main testing learning theorem for LTFs on cube} (tester-learner pairs for halfspaces under the uniform distribution on $\{0,1\}^n$) is concurrent work with \cite{ConcurrentWork22} (they give a faster run-time of $n^{\tilde{O}(1/\epsilon^2)}$ for this problem and also give more general results as explained above). The earlier version of our work (which they build upon) already contained the other results presented in our current version, i.e. (i) the definition of tester-learner pairs (ii) the tester learner pair for half-spaces under the Gaussian distribution with run-time $n^{\tilde{O}(1/\epsilon^4)}$ (Theorem \ref{thm: main testing learning theorem for LTFs}) (iii) the intractability results for tester-learner pairs in Theorems \ref{thm: hardness for convex strong} and \ref{thm: hardness for monotone strong}.

\subsection{Our techniques.}

\begin{table}[!htbp]
\begin{center}
\begin{tabular}{p{5cm}  c  c  c } 
 \hline
 Function class & Halfspaces  & Halfspaces    \\ \midrule
 Distributional assumption & Standard Gaussian in $\R^n$ & Uniform on $\{0,1\}^n$   \\ \midrule
 Standard agnostic learning \emph{run-time} from literature & \begin{tabular}{@{}c@{}} $n^{\tilde{O}(1/\epsilon^2)}$\\  \cite{kalai_agnostically_2005,DiakonikolasGJSV09} \end{tabular}   & \begin{tabular}{@{}c@{}} $n^{\tilde{O}(1/\epsilon^2)}$\\  \cite{kalai_agnostically_2005,DiakonikolasGJSV09} \end{tabular}     \\ \midrule
  Standard agnostic learning \emph{intractability} from literature & 
  \begin{tabular}{@{}c@{}} $n^{\Omega(1/\epsilon^2)}$ statistical queries \\ \cite{GoelGK20,DiakonikolasKZ20,DiakonikolasKPZ21} \end{tabular}
     & \begin{tabular}{@{}c@{}} We are not aware of published intractability \\ results in this precise setting. \end{tabular}
         \\ \midrule
  Examples needed for testing assumption in TV distance & infinite &   \begin{tabular}{@{}c@{}}$\Theta \left(\frac{1}{\epsilon^2}2^{n/2}\right)$ \\ (see text \cite{ClementSurvey})\end{tabular}  
  \\\midrule
 The run-time of our tester-learner pair & 
\scalebox{1.2}{$n^{\tilde{O}\left(1/\epsilon^4\right)}$}
& \scalebox{1.2}{$n^{\tilde{O}\left(1/\epsilon^4\right)}$}  \\
 \bottomrule
\end{tabular}
\end{center}

\caption{Summary of our algorithms and relevant previous work.}
\label{table:1}
\end{table}
\begin{table}[!htbp]
\begin{center}
\begin{tabular}{p{5cm}  c  c  c } 
 \hline
 Function class &  Convex sets  & Monotone functions  \\ \midrule
 Distributional assumption  & Standard Gaussian in $\R^n$  & Uniform on $\{0,1\}^n$ \\ \midrule
 Standard agnostic learning \emph{run-time} from literature &  \begin{tabular}{@{}c@{}}$n^{\tilde{O}(\sqrt{n}/\epsilon^4)}$ \\ \cite{KlivansOS08} \end{tabular}    & \begin{tabular}{@{}c@{}} $2^{\tilde{O}(\sqrt{n}/\epsilon^2)}$ \\  \cite{bshouty_fourier_1995,kalai_agnostically_2005} \end{tabular} \\ \midrule
  Standard agnostic learning \emph{intractability} from literature 
     & \begin{tabular}{@{}c@{}} $n^{\Omega(\sqrt{n})}$\\  \cite{KlivansOS08} \end{tabular}
       & \begin{tabular}{@{}c@{}} $2^{\tilde{\Omega}(\sqrt{n})}$\\ \cite{blais_learning_2015}  \end{tabular}  \\ \midrule
  Examples needed for testing assumption in TV distance & infinite  & 
  \begin{tabular}{@{}c@{}}$\Theta \left(\frac{1}{\epsilon^2}2^{n/2}\right)$ \\ (see text \cite{ClementSurvey})\end{tabular}
  \\ \midrule
 Our lower bound for combined run-time of a tester-learner pair 
& \scalebox{1.2}{$2^{\Omega(n)}$} & \scalebox{1.2}{$2^{\Omega(n)}$}  \\
 \bottomrule
\end{tabular}
\end{center}

\caption{Summary of our intractability results and relevant previous work.}
\label{table:2}
\end{table}

We summarize our contributions and relevant background in Table \ref{table:1} on  \cpageref{table:1} and Table \ref{table:2} on  \cpageref{table:2}.

\paragraph{Tester-learner pair for agnostically learning halfspaces under Gaussian distribution}
%To demonstrate the power of this model, we apply it to the classical problem of agnostically learning halfspaces under the standard Gaussian distribution. 
We first give an overview of our tester-learner pair $(\mathcal{A},\mathcal{T})$ with combined run-time of $n^{\tilde{O}(1/\epsilon^4)}$ for the class of half-spaces with respect to standard Gaussian distribution. We also discuss the techniques we use to analyze it. See \Cref{section: tester-learner under Gaussian,sec: proof of two main lemmas,section: proof of main theorem} for complete details.
%\sout{Conventional wisdom in distribution testing tells us that one should develop testers with respect to well-studied distance measures such as total variation distance or earth mover distance.} 

A natural first approach would be to try to take advantage of the literature on testing and learning distributions.
However, almost all results we are aware of on testing and learning high-dimensional distributions (without assuming the distribution already belongs to some highly restricted family as in \cite{cai2013optimal}) require a number of samples that is  exponentially large in the dimension. 
It follows from well-known techniques that Gaussianity over an infinite domain 
%It is well-known  that Gaussianity 
cannot be tested with respect to total variation distance in finite samples.
Potentially, one could obtain a tester-learner pair for Gaussianity with respect to
%by using 
the earth-mover distance via the tester\footnote{This tester requires that the distribution is confined to a box $[-B, B]^n$, but this by itself is not a devastating problem, since most of probability mass of a Gaussian is confined to such a box.} of \cite{BaNNR11}, yielding a tester of run-time $2^{\tilde{O}(n)}$. However one can see that, in earth-mover distance, no significantly better (i.e. $2^{o(n)}$) bound can be obtained\footnote{Even when truncating the distribution to a box around the origin.}.  
Such enormous run-times far exceed the run-times that can be achieved for agnostically learning halfspaces.

%The reason for such high run-times is that this work focuses on developing testers and learners with respect to well-studied distance measures such as total variation distance or earth mover distance. } \sout{For testing Gaussianity, testing with respect to any of these standard distance measures cannot be accomplished with finite samples (i.e. not even $2^n$ or $2^{2^n}$ samples will suffice)\footnote{Testing with respect to earth mover distance is impossible because moving a tiny amount of probability mass to infinity leads to an infinite distance between distributions in earth mover distance. One could try to circumvent this issue by changing the problem to only test with respect to earth mover distance within a box around the origin that contains most of the probability mass. However, even then it is not hard to see that testing closeness with respect to earth mover distance will require $2^{\Omega(n)}$ samples which is still too large for our purposes.}. Because of that, testers for Gaussianity in the literature assume that the distribution already belongs to some very restricted family \cite{}, which is not something we are allowed to do.

Previously it was known that half-spaces are well-approximated with low-degree polynomials relative to the Gaussian distribution. A key step in our analysis is showing that this is the case even relative to distributions whose low-degree moments approximately match those of a Gaussian.
One of our ideas is to start with a proof of the exact Gaussian case and modify it so it only relies on low-degree properties of the distribution.
We are aware of three distinct proofs of this exact Gaussian case in the literature:
\begin{enumerate}
    \item The method of \cite{kalai_agnostically_2005} that uses specific facts about Hermite polynomials. 
    \item The noise sensitivity method of \cite{KlivansOS08}. This method also uses Hermite polynomials to argue that functions that tend to be stable to perturbations of their input tend to be well-approximated by low-degree polynomials. 
    %Again, due to use of Hermite polynomials, this proof strategy seems to be very specific to the Gaussian distribution.
    \item The method of \cite{DiakonikolasGJSV09} that, in order to approximate a halfspace $\sign(\vect{v}\cdot \vect{x}+\theta)$, constructs a polynomial $P(\vect{v}\cdot \vect{x})$ that approximates this halfspace tightly for values of $\abs{\vect{v}\cdot \vect{x}}$ that are not too large. It is then argued that large values of $\abs{\vect{v}\cdot \vect{x}}$ do not contribute much to the total $L_1$ error of the polynomial because its contribution is weighted by a rapidly decaying Gaussian weight. 
\end{enumerate}
As Hermite polynomials are the unique family of polynomials orthogonal under the Gaussian distribution, the proof strategies of \cite{kalai_agnostically_2005} and \cite{KlivansOS08} seem highly specialized to the distribution being exactly Gaussian. Because of this, a method similar to the one of \cite{DiakonikolasGJSV09} is the one serving as our starting point. 

This method needs to be modified in a thoroughgoing way in order to rely merely on the low-degree moments of the distribution being close to those of Gaussian. For instance, a very easy-to-show property of the $n$-dimensional standard Gaussian distribution is its anti-concentration when projected on any direction. This property becomes much less obvious once one is only promised that low-degree moments of the distribution are close to those of Gaussian, which is something we do show. We note that this step of our proof is similar in spirit to the work of \cite{KarmalkarKK19} that introduces a notion of low-degree certified anti-concentration and shows it for various distributions. Our proofs use extensively tools from polynomial approximation theory. 

Given these ideas, our tester-learner pair does the following. The tester estimates the low-degree moments of the distribution and compares them to the corresponding moments of the standard Gaussian. It follows then that halfspaces are well-approximated by low-degree polynomials with respect to this distribution.   The learning algorithm takes advantage of this by performing low-degree polynomial $L_1$ regression similar to the one used in \cite{kalai_agnostically_2005}. 

A technical complication, which we deal with, is that both our tester and learner work with a truncated version of the distribution. In other words, they discard the examples whose coordinates are too large. This guarantees to us that we can actually produce estimates for the moments of the truncated distribution (if distribution is not truncated, moments could even be infinite).

Note that our arguments use strongly the fact that we are working with halfspaces and not with some arbitrary function class that is well-approximated by low-degree polynomials under the Gaussian distribution. This is due to how we use the concentration and anti-concentration properties of the distribution. In a certain sense this is necessary, as shown by our intractability results for indicators of convex sets. %The reason is that 
Even though these functions are also well-approximated by low-degree polynomials  \cite{KlivansOS08}, for them a similar method based on estimating low-degree moments will provably not succeed. This underscores that designing tester-learner pairs can be subtle and does not generally follow by mere extension of already existing analyses of agnostic learning algorithms. 

\paragraph{Tester-learner pair for agnostically learning halfspaces under uniform distribution on $\{\pm 1\}^n$.}
We now discuss the techniques used to give our tester-learner pair for halfspaces under the uniform distribution on $\{\pm 1\}^n$. As we mentioned, the run-time we show here is $n^{\tilde{O}\left(1/\epsilon^4\right)}$ and this is concurrent work with \cite{ConcurrentWork22}, who use other techniques. See \Cref{section: tester-learner under cube} for complete details.

Our tester tests $\poly(1/\epsilon)$-wise independence of the input distribution with respect to the TV distance using 
\cite{ODonnellZ18,AlonAKMRX07,AlonGM03}. The learning algorithm uses the low-degree polynomial $L_1$ regression of  \cite{kalai_agnostically_2005}. 
To show that these two algorithms indeed form a valid tester-learner pair we show that every halfspace is well-approximated by a low-degree polynomial relative to any $\poly(1/\epsilon)$-wise independent distribution. 

Suppose for a halfspace $\sign(\vect{v} \cdot \vect{x}+\theta)$ it is the case that the norm of the vector $\vect{v}$ is well-distributed among all the coordinates. Then, by Berry-Esseen theorem, for $\vect{x}$ that is uniform over $\{\pm 1\}^n$ the inner product $\vect{v} \cdot \vect{x}$ is distributed similarly to a Gaussian. Roughly, we use this to argue that if $\vect{x}$ is merely $\poly(1/\epsilon)$-wise independent then $\vect{v} \cdot \vect{x}$ has low-degree moments close to those of a Gaussian. This allows us to use methods similar to the ones we use to give tester-learner pairs for halfspaces under the standard Gaussian distribution.

Finally, we handle halfspaces $\sign(\vect{v} \cdot \vect{x}+\theta)$ for whom the norm of the vector $\vect{v}$ is not well-spread across all the coordinates. We use the \emph{critical index} machinery of \cite{DiakonikolasGJSV09} to handle such halfspaces.

\paragraph{Intractability results.}

Finally, we discuss the techniques used to show that $2^{\Omega(n)}$ samples are required by (i) any tester-learner pair for learning indicator functions of convex sets under the standard Gaussian on $\R^n$ (ii) any tester-learner pair for learning monotone functions under the uniform distribution on $\{0,1\}^n$. See \Cref{section: hardness results} for complete details.

From technical standpoint, we find these lower bounds surprising: The mentioned standard agnostic learning algorithms in these settings rely on low-degree polynomial regression. This suggests that testing low-degree moments of the distribution (as we did for halfspaces) ought to lead to the development of a fast tester-learner pair. Yet, the lower bounds show that this can not be done.

We now roughly explain how we prove these lower bounds. Let us focus on the lower bound for tester-learner pairs for convex sets under standard Gaussian distribution (the lower bound for monotone functions is similar).
Take samples $\vect{z}_1,\cdots,\vect{z}_M$  from the standard Gaussian, and let $D$ be the uniform distribution on $\{\vect{z}_1,\cdots,\vect{z}_M\}$. The first idea is to show that the tester will have a hard time distinguishing $D$ from the standard Gaussian if it uses much fewer than $M$ samples\footnote{Out actual argument also takes into account that the tester sees labels and not only examples.}. The second idea is to show that (very likely over the choice of ${\vect{z}_1,\cdots,\vect{z}_M}$) one can obtain, by excluding only a small fraction of elements from $\{\vect{z}_1,\cdots,\vect{z}_M\}$, a subset $Q$ of them such that no point in $Q$ is in the convex hull of the other points in $Q$. Once we have such a set, we essentially\footnote{This is an oversimplification, as one still needs to figure out what to do with elements outside $Q$. We show that, for all these elements, we can either include them into or exclude them from the convex set in such a way as to reveal no information about which of the points in $Q$ were included in the convex set.} define our hard-to-learn convex set to be the convex hull of a random subset of $Q$, and this convex set will not contain any other elements of $Q$ because no member of $Q$ is in the convex hull of the rest. In this way, unless a learner has seen a large fraction of the elements in $Q$ already, it has no way of predicting whether a previously unseen element in $Q$ belongs to the random convex set. We note that our argument is somewhat similar to well-known arguments proving impossibility of approximation of the volume of a convex set via a deterministic algorithm \cite{BaranyF86,Elekes86}.

\subsection{Comments on the model.}
\label{subsec: more comments}

%\paragraph{What about existing testers?} 

%However, all such testers in the literature for Gaussianity in $\R^n$ and uniformity over $\{\pm 1\}^n$ require $2^{\Omega(n)}$ run-time, which typically far exceeds the run-time of the learning algorithms themselves. %{\em should we add in something right here about how the testers are testing some sort of minimal property needed by the learning algorithm, that is easier to test ?}

\paragraph{What about cross-validation?} In case of realizable  learning (i.e. you are promised there is no noise) a common approach to verifying success is via checking prediction error rate on fresh data and making sure it is not too high. Does this idea allow one to construct a tester $\mathcal{T}$ for the distributional assumption of some agnostic learner $\mathcal{A}$? Such tester would (i) run $\mathcal{A}$ to obtain a predictor $\hat{f}$ (ii) test the success rate of $\hat{f}$ on fresh example-label pairs (iii) accept or reject based on the success rate. 

As was mentioned in the discussion of our intractability results, there cannot be a general low-overhead method of transforming standard agnostic learning algorithms into tester-learner pairs, because of our intractability results. Therefore, in particular, there cannot be such a method based on cross-validation.

Intuitively, the reason is the following. Suppose you run the learning algorithm, setting the closeness parameter
$\epsilon$ to $0.01$, then check the success of the predictor on fresh data and find that the generalization error is close to $0.25$. This could potentially be consistent with the two following situations: (1) there is a function in the concept class with close to zero generalization error, but the learning algorithm gave a poor predictor due to a violation of the distributional assumption (2) the distributional assumption holds, but every function in the concept class has generalization error of at least $0.24$. The \emph{composability} criterion tells you that in case (1) you should reject, but the \emph{completeness} criterion tells you that in case (2) you should accept. Overall, there is no way to tell from generalization error alone which of the two situations you are in, so there is no way to know if you should accept or reject.

\paragraph{Label-aware vs label-oblivious testers.} We say the tester $\mathcal{T}$ is \emph{label-aware} if it makes use of the labels given to it (and not only the examples). Otherwise, we call it \emph{label-oblivious}. We feel that label-obliviousness contributes to the interpretability of the overall guarantee. However, this condition is not strictly necessary for verifying success. Due to these considerations, our impossibility results are against more general label-aware testers, while  the tester given in this paper is label-oblivious.

\subsection{Related work.}
\label{subsection: related work}
\paragraph{Agnostic learning under distributional assumptions using low-degree polynomial regression.}
Since the introduction of the agnostic learning model \cite{Haussler92,KearnsSS94} there has been an explosion of work in agnostic learning. Making assumptions on the distribution on examples has been ubiquitous in this line of work. So has been the use of  low-degree polynomial regression as one of the main tools. 
 Previous to the work of \cite{kalai_agnostically_2005}, there existed an extensive body of work on using low-degree polynomial regression for learning under distributional assumptions, including \cite{linial_constant_1989,aiello_learning_1991,furst_improved_1991,mansour_onlog_1992,bshouty_fourier_1995,klivans_learning_2002}. The work of \cite{kalai_agnostically_2005} building on \cite{KearnsSS94} proposed to use low-degree polynomial $L^1$ regression to obtain \emph{agnostic} learning algorithms for halfspaces under distribution assumptions, as well as extended these previously studied low-degree regression algorithms into the agnostic setting. Further work used low degree polynomial $L^1$ regression to obtain agnostic learning algorithms for many more problems, again under various distributional assumptions \cite{odonnell_learning_2006,blais_polynomial_2008,KlivansOS08,gopalan_learning_2010,kane_gaussian_2010,wimmer_agnostically_2010,harsha_invariance_2010,diakonikolas_bounding_2010,cheraghchi_submodular_2012,AwasthiBL14,dachman-soled_approximate_2014,feldman_tight_2015,feldman_agnostic_2015,blais_learning_2015,canonne_testing_2017,feldman_tight_2017,diakonikolas_agnostic_2021}. 

\paragraph{Learning halfspaces.} 
%Regarding learning halfspaces in the agnostic setting, there have been a number of developments since \cite{kalai_agnostically_2005}

%\begin{itemize}
%    \item  However, we note that the algorithm of \cite{Daniely15} also makes a strong distributional assumption on the underlying distribution of examples. We leave to future work development of assumption testers for this setting.
%\end{itemize}
%Among other work, \cite{AwasthiBL14} and \cite{diakonikolas_agnostic_2021} study how to    
See the work of \cite{diakonikolas_agnostic_2021} and references therein, for a historical discussion about the problem of learning halfspaces, as well as some up-to-date references regarding some problems connected to the one studied here.

\paragraph{Polynomial approximation theory.} 
Polynomial approximation theory has been used extensively as a tool for studying halfspaces. Among other work, see \cite{kalai_agnostically_2005,DiakonikolasGJSV09,KlivansLS09,Daniely15,DiakonikolasKTZ20,diakonikolas_agnostic_2021}.

\paragraph{Other works in testing distributions.}
There is a large body of literature on finite sample guarantees
for property testing of distributions.  Algorithms developed within this framework 
are given samples of an input distribution and aim to distinguish the case in which
the distribution has a specified property, from the case in which the distribution
is far  (in a reasonable distance metric) 
from any distribution with that property.   Properties of interest include whether
the distribution is uniform, independent, monotone, has high entropy or is supported by
a large number of distinct elements.   
We mention a few specific results that are closest to the results in this work:  Let $p$ 
be a distribution on a discrete domain of size $M$.
For a ``known" distribution $q$ (where the algorithm knows the value of $q$ on every element
of the domain, and does not need samples from it -- e.g., when $q$ is the uniform distribution), 
distinguishing whether $p$ is  the same as $q$
from the case where $p$ is $\epsilon$-far (in $L_1$ norm) from $q$ requires 
$\Theta(\sqrt{M}/\epsilon^2)$
samples
\cite{ GoldreichR2000, BatuFRSW00, BatuFFKRW01, Paninski08, DiakonikolasGPP16, DiakonikolasGKP21}.
For a more
in depth discussion of the history and results in this area, see the monograph by Canonne \cite{ClementSurvey}.

%\cite{Diakonikolas2019b, Paninski2008, ValiantValiant2017, Acharya et al2015, Diakonkolas 2015, Diakonikolas 2018,Diakonikolas 2019a, Acharya 2022, goldreich ron}.

\paragraph{Other models of trusting agnostic learners.}  
The work of Goldwasser, Rothblum, Shafer and Yehudayoff considers the question of how an untrusted
prover can convince a learner that a hypothesis is approximately correct, and show that significantly
less data is needed than that required for agnostic learning
\cite{GoldwasserRSY20}.

\section{Preliminaries.}

\subsection{Standard definitions.}

%We first define the task of ordinary (non-agnostic) learning:
%\begin{defn}
%An algorithm $\mathcal{A}$ is an \emph{$(\epsilon,\delta)$-learner}
%for function class $\mathcal{F}$ relative to the distribution $D$,
%if for every $f\in\mathcal{F}$, given access to labeled samples $(x,f(x))$,
%with $x$ drawn from $D$, the algorithm $\mathcal{A}$ with probability
%at least $1-\delta$ outputs a circuit computing a function $\hat{f}$
%such that we have $\Pr_{x\in_{R}D}[f(x)\neq\hat{f}(x)]\leq\epsilon$.
%\end{defn}

The definition of agnostic learning is as follows: 
\begin{defn}
An algorithm $\mathcal{A}$ is an \emph{agnostic $(\epsilon,\delta)$-learning algorithm}
for function class $\mathcal{F}$ relative to the distribution $D$,
if given access to i.i.d. example-label pairs $(x,y)$ distributed according
to $D_{\text{pairs}}$, with the marginal distribution on the examples equal to $D$, the algorithm $\mathcal{A}$ with probability at least $1-\delta$
outputs a circuit computing a function $\hat{f}$, such that 
\[
\Pr_{\parr{x,y}\in_{R}D_{\text{pairs}}}[y\neq\hat{f}(x)]\leq\min_{f\in\mathcal{F}}\parr{\Pr_{\parr{x,y}\in_{R}D_{\text{pairs}}}[f(x)\neq y]}+\epsilon.
\]
\end{defn}
The quantity $\Pr_{\parr{x,y}\in_{R}D_{\text{pairs}}}[f(x)\neq y]$ is often
called the \emph{generalization error} of $\hat{f}$ (a.k.a. \emph{out-of-sample
error }or \emph{risk}). 

The following is standard theorem about agnostic learning from $\ell_1$-approximation. The proof is implicit in \cite{kalai_agnostically_2005} and this theorem has been implicitly used in much subsequent work (see Subsection \ref{subsection: related work} for references).
Let $U$ be some domain we are working
over.
\begin{thm}
\label{theorem: agnostic learning from L1 approximation}
Let $\{g_{1},\cdots g_{N}\}$
be a collection of real-valued functions over $U$ that can be evaluated in time $T$. 
Then, for
every $\epsilon>0$, there is a learning algorithm $\mathcal{A}$ for which the following is true. 
Let $D$ be
any distribution over $U$ and let $\mathcal{F}$ be any class of
Boolean functions over $U$, such that every element of $\mathcal{F}$
is $\epsilon$-approximated in $L^1$ norm relative to the distribution
$D$ by some element of $\spn{g_{1},\cdots,g_{N}}$. Then, $\mathcal{A}$
agnostically $\parr{\epsilon,\delta}$-learns $\mathcal{F}$ relative
to $D$. The algorithm $\mathcal{A}$ uses $\tilde{O}\parr{\frac{N}{\epsilon^{2}}\log\parr{\frac{1}{\delta}}}$
samples and uses run-time polynomial in this number of samples and
$T$. 
\end{thm}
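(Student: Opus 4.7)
The plan is to implement the $L^{1}$ polynomial regression paradigm of \cite{kalai_agnostically_2005}. Draw $m=\tilde{O}(N/\epsilon^{2}\log(1/\delta))$ i.i.d.\ example-label pairs $(x_{i},y_{i})$. Treating the coefficients $c_{1},\ldots,c_{N}$ of $p=\sum_{j}c_{j}g_{j}\in\spn{g_{1},\ldots,g_{N}}$ as variables, solve the empirical $L^{1}$ regression
$$\hat{p}\;=\;\argmin_{p\in\spn{g_{1},\ldots,g_{N}}}\ \frac{1}{m}\sum_{i=1}^{m}\bigl|p(x_{i})-y_{i}\bigr|.$$
This is a linear program (introducing auxiliary variables for the absolute values) of size $\poly(N,m,T)$. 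To turn $\hat{p}$ into a Boolean classifier, pick a threshold $t^{*}\in[0,1]$ (for labels in $\{0,1\}$) via grid search over $O(1/\epsilon)$ candidates evaluated on a fresh hold-out sample, and output $\hat{f}(x)=\mathbf{1}[\hat{p}(x)\geq t^{*}]$.

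\textbf{Reducing classification error to $L^{1}$ loss.} For Boolean $y$ and any value $p(x)\in[0,1]$, a direct computation gives the random-threshold identity
$$\E_{t\sim \text{Unif}[0,1]}\bigl[\mathbf{1}[\mathbf{1}[p(x)\geq t]\neq y]\bigr]\;=\;|p(x)-y|.$$
Averaging over $x$, the expected $0/1$-error under a random threshold equals $\E_{D}[|p(x)-y|]$, so at least one threshold attains error at most $\E_{D}[|\hat{p}(x)-y|]$. It therefore suffices to bound the population $L^{1}$ loss of $\hat{p}$.

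\textbf{Bounding the population $L^{1}$ loss of $\hat{p}$.} Let $f^{*}\in\mathcal{F}$ achieve the minimum generalization error $\text{opt}$ and let $p^{*}\in\spn{g_{1},\ldots,g_{N}}$ satisfy $\E_{D}[|p^{*}(x)-f^{*}(x)|]\leq\epsilon$, as guaranteed by hypothesis. Since $|f^{*}(x)-y|=\mathbf{1}[f^{*}(x)\neq y]$ for Boolean values, the triangle inequality yields
$$\E_{D}[|p^{*}(x)-y|]\;\leq\;\E_{D}[|f^{*}(x)-y|]+\E_{D}[|p^{*}(x)-f^{*}(x)|]\;\leq\;\text{opt}+\epsilon.$$
A standard uniform convergence argument for $L^{1}$ loss over an $N$-dimensional linear function class clipped to $[0,1]$ (pseudo-dimension $O(N)$, e.g.\ via Rademacher complexity or covering numbers) then shows that with probability at least $1-\delta$, the empirical and true $L^{1}$ losses agree to within $\epsilon/2$ uniformly over all clipped $p\in\spn{g_{1},\ldots,g_{N}}$, provided $m=\tilde{O}(N/\epsilon^{2}\log(1/\delta))$. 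Combining, $\E_{D}[|\hat{p}(x)-y|]\leq \text{opt}+O(\epsilon)$, and rescaling $\epsilon$ by a constant gives the advertised guarantee.

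\textbf{Main obstacle.} The one delicate point is that $\spn{g_{1},\ldots,g_{N}}$ is a priori unbounded, so neither the uniform convergence bound nor the random-threshold identity applies directly. The remedy is to clip: replacing $p(x)$ by $\max(0,\min(1,p(x)))$ can only decrease $|p(x)-y|$ when $y\in\{0,1\}$, hence the optimal clipped predictor is no worse than the optimal unclipped one, and clipping is easily folded into the linear program by adding two bound constraints per sample. Once clipping is in place, the pseudo-dimension argument goes through and the full run-time remains polynomial in $m$ and $T$, yielding the claimed sample and time complexity.
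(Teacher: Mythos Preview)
The paper does not supply its own proof of this theorem; it is stated as a standard preliminary with the proof attributed to \cite{kalai_agnostically_2005}. Your proposal is exactly that $L^{1}$-regression-plus-thresholding argument, and the overall architecture (LP for empirical $L^{1}$ minimization, the random-threshold identity, generalization via an $O(N)$-dimensional class) is the intended one.

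The one genuine problem is your resolution of the boundedness obstacle. The claim that ``clipping is easily folded into the linear program by adding two bound constraints per sample'' is incorrect: the clipped loss $\lvert\max(0,\min(1,p(x)))-y\rvert$ is not convex in $p$ (for $y=1$ it equals $1$ on $p\le 0$, then $1-p$ on $[0,1]$, then $0$, so the kink at $p=0$ is concave), and therefore cannot be encoded as a linear program. If instead you mean imposing hard constraints $0\le p(x_{i})\le 1$, then the competitor $p^{*}$, which is only $\epsilon$-close to $f^{*}$ in $L^{1}$ and not pointwise, may be infeasible, and LP optimality no longer lets you compare $\hat{p}$ to it. The standard fix is different: solve the \emph{unclipped} LP, clip $\hat{p}$ afterward, and then threshold. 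Generalization of the output classifier then comes from the VC dimension $O(N)$ of $\{\mathbf{1}[p(x)\ge t]:p\in\mathrm{span},\,t\in\R\}$, while the bound on the empirical $L^{1}$ loss of the fixed function $p^{*}$ is a single-function concentration step. That last step does implicitly require some tail control on $p^{*}$; in every application in this paper it is immediate because the $g_{i}$ are monomials over a bounded domain, but your write-up should not suggest that an LP trick absorbs this issue.
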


We will also need the definition of $k$-wise independent distributions:
\begin{defn}
A distribution of a random variable $\vect x$ over $\{\pm 1\}^n$ is called $k$-wise independent (a.k.a. $k$-wise uniform) if for any size-$k$ subset $S$ of $\{1,\cdots, n\}$ the distribution of $\{x_i: ~ i\in S\}$ is uniform over $\{\pm 1\}^n$.
\end{defn}

\subsection{New definition: testing distributional assumptions of a learning
algorithm.}
\label{sec: definition of tester/learner}
\begin{defn}
\label{def: tester-learner pair}
Let $\mathcal{A}$ be an agnostic $\parr{\epsilon,\delta_{1}}$-learning algorithm
for function class $\mathcal{F}$ relative to the distribution $D$.
We say that an algorithm $\mathcal{T}$ is a 
\emph{
%$\parr{\delta_{2},\delta_{3}}$-
tester
for the distributional assumption} of $\mathcal{A}$ if 
%for %$\delta_{2}\in\parr{0,1}$,
%$\delta_{3}\in\parr{\delta_{2},1}$ 
\begin{enumerate}
\item \emph{(Composability) }Suppose a distribution $D_{\text{pairs}}$ on example-label pairs is such that, given access to i.i.d. labeled examples from it, the algorithm
$\mathcal{T}$ outputs ``Yes'' with probability at least $1/4$.
Then $\mathcal{A}$, given access to i.i.d. labeled examples from the same distribution $D_{\text{pairs}}$, will with probability at least $1-\delta_{1}$
output a circuit computing a function $\hat{f}$, such that 
\[
\Pr_{\parr{x,y}\in_{R}D_{\text{pairs}}}[y\neq\hat{f}(x)]\leq\min_{f\in\mathcal{F}}\parr{\Pr_{\parr{x,y}\in_{R}D_{\text{pairs}}}[f(x)\neq y]}+\epsilon.
\]
\item \emph{(Completeness) }
Suppose $D_{\text{pairs}}$ is such that the marginal distribution on examples equals to $D$. Then, given i.i.d. example-label pairs from $D_{\text{pairs}}$, tester $\mathcal{T}$ outputs ``Yes'' with probability
at least $3/4$.
\end{enumerate}
If this definition is satisfied, then we say that $(\mathcal{A}$,$\mathcal{T}$) form a tester-learner pair.
\end{defn}
Constants $1/4$ and $3/4$ in the definition above can without loss of generality be replaced with any other pair of constants $1-\delta_2$ and $1-\delta_3$ with $\delta_2\in(0,1)$ and $\delta_3 \in (\delta_2,1)$. See Appendix \ref{appendix: improving error probabilitites} for the proof via a standard repetition argument.

%\begin{defn}
%The \emph{testable $(\epsilon,\delta)$-agnostic learning complexity
%}of function class $\mathcal{F}$ relative to the distribution $D$
%is the smallest $N$ for which there is a tester $\mathcal{T}$ and
%a learner $\mathcal{A}$ such that
%\begin{itemize}
%\item The combined run-time of $\mathcal{A}$ and $\mathcal{T}$ is at most
%$N$.
%\item $\mathcal{A}$ is an agnostic $(\epsilon,\delta_{1})$-learner for
%function class $\mathcal{F}$, and $\mathcal{T}$ is a \emph{$\parr{\delta_{2},\delta_{3}}$}-tester
%for the distributional assumption of $\mathcal{A}$.
%\item $\mathcal{T}$ is a \emph{$\parr{1/4,3/4}$}-tester for the distributional
%assumption of $\mathcal{A}$.
%\end{itemize}
%\end{defn}

%Given the preceding discussion, \blue{we adopt the convention of saying that $\mathcal{T}$ is a tester
%for the distributional assumption of $\mathcal{A}$
%if it is a \emph{$\parr{1/4,3/4}$}-tester
%for the distributional assumption of $\mathcal{A}$.}

%the exact choice of constants $1/4$
%and $3/4$ above is arbitrary, as changing them can have an effect
%of only at most a constant factor change.

%\subsection{Agnostic learning from approximation.}

\section{\label{sec: beginning with algorithms.}An efficient tester-learner pair for learning halfspaces.}
\label{section: tester-learner under Gaussian}
We now describe our tester-learner pair for learning halfspaces under the Gaussian distribution. 
Roughly, the testing algorithm checks that the low-degree moments of the distribution on examples are close enough to those of the standard Gaussian distribution.
The learning algorithm uses a low-degree polynomial regression. As explained earlier, both of the algorithms ignore examples whose absolute value is too high, which allows them to obtain accurate estimates of distribution moments. \\
%both of which take as input access to i.i.d. samples $(\vect x,y)\in\R^{n}\times\left\{ \pm1\right\} $
%from an unknown distribution:\red{input description is redundant -- delete?}

\noindent
{\bf Tester-learner pair for learning halfspaces:}
\begin{itemize}
\item Let $C_{1},\cdots,C_{4}$ be a collection of constants to be tuned
appropriately. Define $d:=2\left\lfloor \frac{1}{2\epsilon^{4}}\ln^{3}\parr{\frac{1}{\epsilon}}\right\rfloor $,
$\Delta:=\left\lfloor \frac{1}{\epsilon^{4}}\ln^{4}\parr{\frac{1}{\epsilon}}\right\rfloor $,
$t:=C_{1}\Delta\ln\Delta\sqrt{\log n}+\sqrt{2\ln\parr{\frac{C_{2}n}{\epsilon}}}$,
$N_{1}:=\left\lceil n^{C_{3}d}\right\rceil $ and $N_{2}:=\left\lceil t^{2\Delta}n^{C_{4}\Delta}\right\rceil $.
\item \textbf{Learning algorithm $\mathcal{A}$.} Given access to i.i.d.
labeled samples $(\vect x,y)\in\R^{n}\times\left\{ \pm1\right\} $
from an unknown distribution:
\begin{enumerate}
\item Obtain $N_{1}$ many labeled samples $(\vect x_{i},y_{i})$.
\item Discard all the samples $(\vect x_{i},y_{i})$ for which the absolute
value of some coordinate $\abs{\parr{\vect x_{i}}_{j}}$ is greater
than $t$.
\item Run the algorithm of Theorem \ref{theorem: agnostic learning from L1 approximation}
on the remaining samples, with accuracy parameter $\frac{\epsilon}{10}$,
allowed failure probability $\frac{1}{20},$ and taking the set of $\left\{ g_{i}\right\} $
to be the set of monomials of degree at most $d$, i.e. the set $\left\{ \prod_{j=1}^{n}x_{j}^{\alpha_{j}}:~\sum_j \alpha_j\leq d\right\} $
. This gives us a circuit computing
predictor $\hat{f}$. Form a new predictor $\hat{f}'$
that given $\vect x$ outputs (i) $\hat{f}(\vect x)$ if for all $j\in\pars n$, the value of
$\abs{\parr{\vect x_{i}}_{j}}$ is at most $t$. (ii) 1 if\footnote{This one's arbitrary. Can also output 0 in this case.}
for some $j\in\pars n$, the value of $\abs{\parr{\vect x_{i}}_{j}}$ exceeds $t$.
\end{enumerate}
\item \textbf{Testing algorithm $\mathcal{T}$}. Given access to i.i.d.
labeled samples $\vect x\in\R^{n}$ from an unknown distribution:
\begin{enumerate}
\item For each $j\in[n]$:
\begin{enumerate}
\item Estimate $\Pr\pars{\abs{x_{j}}>t}$ up to additive $\frac{\epsilon}{30n}$
with error probability $\frac{1}{100n}$.
\item If the estimate is at least $\frac{\epsilon}{10n}$, output \textbf{No
}and terminate.
\end{enumerate}
\item Draw $N_{2}$ fresh samples $\left\{ \vect x_{i}\right\} $, and discard
the ones for which the absolute value of some coordinate $\abs{\parr{\vect x_{i}}_{j}}$
is greater than $t$. 
\item For every monomial $\prod_{j=1}^{n}x_{j}^{\alpha_{j}}$ of degree
at most $\Delta$, compute its empirical expectation w.r.t. the samples
$\left\{ \vect x_{i}\right\} $. If for any of them resulting value
is not within $\frac{1}{2n^{\Delta}}$ of $\E_{\vect z\sim\N(0,I_{n\times n})}\pars{\prod_{j=1}^{n}x_{j}^{\alpha_{j}}}=\prod_{j=1}^{n}\parr{\parr{\alpha_{j}-1}!!\cdot\indicator_{\alpha_{j}\text{ is even}}}$,
output \textbf{No }and terminate.
\item Output \textbf{Yes.}
\end{enumerate}
\end{itemize}

The following theorem shows that the above algorithms indeed satisfy the criteria
for a tester-learner pair for learning halfspaces under the Gaussian distribution:
\begin{thm}
[\textbf{Tester-learner pair for learning halfspaces under Gaussian distribution}]
\label{thm: main testing learning theorem for LTFs} Suppose the values
$C_{1},\cdots,C_{4}$ present in algorithms $\mathcal{A}$ and $\mathcal{T}$
are chosen to be sufficiently large absolute constants, also assume
$n$ and $\frac{1}{\epsilon}$ are larger than some sufficiently large
absolute constant. Then, the algorithm $\mathcal{A}$ is an agnostic
$(O(\epsilon),0.1)$-learner for the function class of linear threshold
functions over $\R^{n}$ under distribution $\N(0,I_{n\times n})$
and the algorithm $\mathcal{T}$ is an %$(0.1,0.9)$-
assumption tester
for $\mathcal{A}$. The algorithms $\mathcal{A}$ and $\mathcal{T}$
both require only $n^{\tilde{O}\parr{\frac{1}{\epsilon^{4}}}}$ samples
and run-time. Additionally, The tester $\mathcal{T}$ is label-oblivious.   
\end{thm}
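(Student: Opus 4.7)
My plan is to establish the three aspects of the theorem in turn. The sample complexity and run-time bound follow immediately from the choices $d = \tilde{O}(1/\epsilon^4)$, $\Delta = \tilde{O}(1/\epsilon^4)$, $t = \tilde{O}(\sqrt{\log n})$, and $N_1, N_2 = n^{\tilde{O}(1/\epsilon^4)}$, noting that Theorem~\ref{theorem: agnostic learning from L1 approximation} applied to the set of degree-$\le d$ monomials over $\R^n$ uses $\tilde{O}(N/\epsilon^2) = n^{\tilde O(1/\epsilon^4)}$ samples and time. Label-obliviousness is immediate since $\mathcal{T}$ never inspects the $y$-coordinate. So the content is completeness and composability.

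For completeness, assume the $\vect x$-marginal of $D_{\text{pairs}}$ is $\mathcal{N}(0, I_{n\times n})$. The Gaussian tail estimate together with the choice $t \ge \sqrt{2\ln(C_2 n/\epsilon)}$ gives $\Pr[|x_j|>t] \ll \epsilon/(100n)$ for each $j$, so step~1 passes with high probability after a union bound. For step~3, the monomial moments of the Gaussian truncated to $[-t,t]^n$ differ from the untruncated moments by at most $e^{-\Omega(t^2)} \ll 1/(4n^{\Delta})$; since degree-$\le \Delta$ monomials on $[-t,t]^n$ are bounded by $t^\Delta$, a Hoeffding/Bernstein bound shows that $N_2 = t^{2\Delta} n^{C_4 \Delta}$ samples suffice to estimate each of the $\le n^\Delta$ monomial moments to within $1/(4n^\Delta)$; a union bound finishes.

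For composability, the central reduction is to the following key lemma: \emph{if a distribution $D'$ supported in $[-t,t]^n$ has all monomial moments of total degree at most $\Delta$ within $1/n^\Delta$ of the corresponding Gaussian moments, then every halfspace $\sign(\vect{v}\cdot \vect{x}+\theta)$ is $O(\epsilon)$-approximated in $L^1(D')$ by a polynomial of degree at most $d$.} Given the lemma, suppose $\mathcal{T}$ accepts with probability $\ge 1/4$ on some $D_{\text{pairs}}$. Running the concentration argument of the completeness case in reverse forces the truncation $\tilde D$ of the $\vect x$-marginal to $[-t,t]^n$ to satisfy the hypothesis of the key lemma and to miss at most $\epsilon/10$ mass of the $\vect x$-marginal (thanks to step 1). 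Applying Theorem~\ref{theorem: agnostic learning from L1 approximation} to the labelled distribution truncated to $[-t,t]^n$ then yields a predictor $\hat f$ with error at most $\min_{f\in\mathcal{F}}\Pr_{\tilde D_{\text{pairs}}}[f\neq y] + \epsilon/10$ under $\tilde D_{\text{pairs}}$; extending to $\hat f'$ and transferring back to $D_{\text{pairs}}$ costs only the $O(\epsilon)$ mass lost to truncation, giving the required agnostic guarantee.

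The main obstacle is the key lemma, which I plan to prove by adapting the univariate polynomial-approximator approach of \cite{DiakonikolasGJSV09}. After normalizing $\norm{\vect v}_2 = 1$, build a univariate polynomial $P$ of degree $d$ that agrees pointwise with $\sign(z+\theta)$ up to error $\epsilon$ on the bulk interval $|z|\le M$ with $M = \tilde\Theta(1/\epsilon)$, while satisfying a controlled growth bound $|P(z)|\le (C|z|/M)^d$ outside. The bulk contribution $\E_{D'}\pars{\abs{P(\vect{v}\cdot\vect x)-\sign(\vect{v}\cdot\vect x+\theta)} \cdot \indicator_{\abs{\vect{v}\cdot\vect{x}}\le M}}$ is $O(\epsilon)$ pointwise. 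The tail contribution is handled by the technical core of the argument: expand $(\vect v\cdot \vect x)^{2k}$ for $2k\le \Delta$ into a sum of at most $n^{2k}$ monomials whose coefficients sum in absolute value to $\norm{\vect v}_1^{2k}\le n^k$, so moment-matching to additive $1/n^\Delta$ implies $\abs{\E_{D'}\pars{(\vect v\cdot \vect x)^{2k}}-(2k-1)!!}$ is small; then invoke a polynomial-majorant argument, constructing a degree-$\Delta$ univariate polynomial $Q$ with $Q(z)\ge (C|z|/M)^d \indicator_{|z|>M}$ on $[-t,t]$ and small Gaussian expectation, to conclude that $\E_{D'}\pars{\abs{P(\vect v\cdot \vect x)}\indicator_{\abs{\vect v\cdot \vect x}>M}}$ is $O(\epsilon)$. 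Establishing this low-degree certified Gaussian-like concentration for $\vect v\cdot \vect x$ under a mere moment-matching hypothesis (in the spirit of \cite{KarmalkarKK19}) is the hard part of the proof and dictates the specific numerical choices of $d$, $\Delta$, $t$, and $M$.
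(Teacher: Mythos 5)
Your overall architecture matches the paper's: run-time from the parameter choices, completeness from the fact that truncation barely perturbs Gaussian moments plus Hoeffding, and composability via a reduction to a ``moment matching implies low-degree $L^1$-approximation of halfspaces'' lemma whose tail contribution is controlled by moment bounds. However, there is a genuine gap in the bulk step of your key lemma. You ask for a degree-$d$ polynomial $P$ that ``agrees pointwise with $\sign(z+\theta)$ up to error $\epsilon$ on the bulk interval $|z|\le M$'' and then declare the bulk contribution $O(\epsilon)$ pointwise. No polynomial can do this on an interval containing the discontinuity at $z=-\theta$: any continuous approximator must incur error close to $1$ on a neighborhood of the threshold. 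The correct statement is that $P$ approximates $\sign$ well only \emph{outside} an $O(\epsilon)$-length window around $\theta$, and the $L^1$ error then picks up a term $\Pr_{\vect x\sim D'}\pars{\vect v\cdot\vect x\in[\theta-\epsilon,\theta+\epsilon]}$. Bounding that term by $O(\epsilon)$ is an \emph{anti-concentration} statement about $D'$, and it is far from automatic under a mere moment-matching hypothesis --- a distribution matching Gaussian moments up to degree $\Delta$ could a priori place substantial mass in a tiny window. The paper devotes a substantial part of its argument (the anti-concentration half of Lemma~\ref{main lemma: low degree moment lemma for distributions}) to exactly this: it upper-bounds the indicator of the window by a trapezoid bump $g$, takes the Chebyshev projection $g_d$ on a rescaled interval, transfers $\E_{D}[g_d]$ to $\E_{\N(0,I)}[g_d]$ using the moment closeness and a bound on the coefficients of $g_d$, and controls the truncation errors on both sides. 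This is the low-degree certified anti-concentration in the spirit of \cite{KarmalkarKK19}; you invoke that reference only for the tail/concentration part, which is the easier half (the paper handles it with Markov's inequality applied to $\abs{\vect v\cdot\vect x}^{d_0}$, Proposition~\ref{prop: moment tail bound for bounded moment distributions}).

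A second, smaller omission: in the composability direction you must also rule out the possibility that $D_{\text{pairs}}$ passes the tester while having a heavy tail, i.e.\ non-negligible mass outside $[-t,t]^n$; the paper's step~1 of $\mathcal{T}$ (per-coordinate tail estimation) and the ``good tail'' condition in Proposition~\ref{prop: tester acceptance implies learner works} exist precisely so that the predictor $\hat f'$, which answers arbitrarily off the truncation box, loses only $O(\epsilon)$ there. You mention ``thanks to step 1'' in passing, so this is likely recoverable, but the anti-concentration gap above is a missing idea that the rest of your plan cannot route around.
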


Note that an $(O(\epsilon),0.1)$-learner can be made an agnostic
$(\epsilon,\delta_{1})$-learner for any fixed constant $\delta_{1}$
and still require only $n^{\tilde{O}\parr{\frac{1}{\epsilon^{4}}}}$
samples and run-time via a standard repeat-and-check argument. The
tester $\mathcal{T}$ for the original learner will remain an %$(0.1,0.9)$-
assumption
tester for the new learner. 
%Analogously, as mentioned earlier, the
%tester $\mathcal{T}$ itself can be improved to be a $(\delta_{1},\text{\ensuremath{\delta_{2}}})$-assumption
%tester for any fixed constants $\delta_{1}$ and $\delta_{2}$ while
%still requiring only $n^{\tilde{O}\parr{\frac{1}{\epsilon^{4}}}}$
%samples and run-time.

The proof of correctness of the above tester-learner pair for halfspaces makes
use of the following lemmas, which will be proved in Section \ref{sec: proof of two main lemmas}.
Lemma \ref{main lemma: low degree moment lemma for distributions} states that as long
as the low-degree moments of a distribution are similar to the corresponding moments of the Gaussian
distribution, then the distribution is concentrated and anti-concentrated when projected onto any direction.
%the ``nice" properties of concentration and anti-concentration
%will be satisfied.  
Lemma \ref{main lemma: low degree approximation lemma for halfspaces.} states that as long as  distribution $D$ satisfies the ``nice" properties of concentration and anti-concentration, then any halfspace can be approximated by a low-degree polynomial with respect to distribution $D$. Taken together, these lemmas will be used to show
that for any distribution $D$, if the moments of $D$ look similar to moments of
the Gaussian distribution, then halfspaces are well-approximated by low degree polynomials under $D$.

\begin{lem}[\textbf{Low degree moment lemma for distributions}.]
\label{main lemma: low degree moment lemma for distributions} Suppose
$D$ is a distribution over $\R^{n}$ and $\Delta$ is an even positive
integer, such that for every monomial $\prod_{i=1}^{n}x_{i}^{\alpha_{i}}$
of degree at most $\Delta$ we have 
\[
\abs{\E_{\vect x\sim D}\pars{\prod_{i=1}^{n}x_{i}^{\alpha_{i}}}-\E_{\vect x\sim\N(0,I_{n\times n})}\pars{\prod_{i=1}^{n}x_{i}^{\alpha_{i}}}}\leq\frac{1}{n^{\Delta}}.
\]
Further, assume that $\Delta\geq\frac{1}{\epsilon^{4}}\ln^{4}\parr{\frac{1}{\epsilon}}$.
Then, for every unit vector $\vect v$, the random variable $\vect v\cdot\vect x$
(with $\vect x\in_{R}D$) has the following properties
\begin{itemize}
\item \textbf{Concentration:} For any even positive integer $d\leq\Delta$, we have
$\parr{\E_{\vect x\in_{R}D}\pars{\abs{\vect v\cdot\vect x}^{d}}}^{1/d}\leq2\sqrt{d}$.
\item \textbf{Anti-concentration: }
for any real $y$, we have 
\[
\Pr_{\vect x\in_{R}D}\pars{\vect v\cdot\vect x\in[y,y+\epsilon]}\leq O\parr{\epsilon}.
\]
\end{itemize}
\end{lem}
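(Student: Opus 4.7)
The plan is to prove the concentration bound by a direct moment computation, and then to transfer Gaussian anti-concentration to $D$ via a low-degree polynomial upper bound on $\indicator_{[y, y+\epsilon]}$ combined with the moment-closeness hypothesis.

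\textbf{Concentration.} Expanding $(\vect{v} \cdot \vect{x})^d = \sum_{|\alpha| = d} \binom{d}{\alpha} \vect{v}^\alpha \vect{x}^\alpha$ via the multinomial theorem and applying the moment-matching assumption monomial by monomial, I bound
\[
\abs{\E_{\vect{x} \sim D}[(\vect{v} \cdot \vect{x})^d] - \E_{\vect{x} \sim \N(0, I)}[(\vect{v} \cdot \vect{x})^d]} \leq \frac{1}{n^\Delta} \sum_{|\alpha| = d} \binom{d}{\alpha} |\vect{v}^\alpha| = \frac{\parr{\sum_i |v_i|}^d}{n^\Delta} \leq \frac{n^{d/2}}{n^\Delta} \leq n^{-\Delta/2}.
\]
The Gaussian $d$-th moment of $\vect{v} \cdot \vect{x}$ equals $(d-1)!! \leq d^{d/2}$, and the additive error above is negligible for large $n$, so $\parr{\E_D[|\vect{v} \cdot \vect{x}|^d]}^{1/d} \leq \sqrt{d}(1 + o(1)) \leq 2\sqrt{d}$.

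\textbf{Anti-concentration.} Pick $L = \Theta(\sqrt{\log(1/\epsilon)})$, chosen so that Markov's inequality applied to the moment just established yields $\Pr_D[|\vect{v} \cdot \vect{x}| > L] \leq \epsilon^{10}$. I then construct a univariate polynomial $p$ of degree at most $\Delta/2$ such that (i) $p \geq \indicator_{[y, y+\epsilon]}$ on $[-L, L]$, (ii) $p \geq 0$ on $[-L, L]$, (iii) $\E_{Z \sim \N(0, 1)}[p(Z)] \leq O(\epsilon)$, and (iv) the $\ell^1$-norm of its monomial coefficients is small enough for the moment-matching error to be $n^{-\Omega(\Delta)}$. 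A concrete construction: take a piecewise-linear hat $\Omega$ supported on $[y-\epsilon, y+2\epsilon]$ with $\Omega \equiv 1$ on $[y, y+\epsilon]$, approximate $\Omega$ on $[-L, L]$ to uniform error $\epsilon$ by a polynomial $p_0$ of degree $\tilde{O}(1/\epsilon^2)$ using Jackson's theorem (since $\Omega$ is $O(1/\epsilon)$-Lipschitz), and set $p := p_0 + \epsilon$.

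With $p$ in hand, I decompose
\[
\Pr_D[\vect{v} \cdot \vect{x} \in [y, y+\epsilon]] \leq \E_D[p(\vect{v} \cdot \vect{x})] + \abs{\E_D[p(\vect{v} \cdot \vect{x}) \cdot \indicator_{|\vect{v} \cdot \vect{x}| > L}]} + \Pr_D[|\vect{v} \cdot \vect{x}| > L].
\]
Moment matching gives $\E_D[p(\vect{v} \cdot \vect{x})] = \E_\N[p(Z)] \pm n^{-\Omega(\Delta)} = O(\epsilon)$. The cross-term is handled by Cauchy-Schwarz as $\sqrt{\E_D[p^2(\vect{v} \cdot \vect{x})]} \cdot \sqrt{\Pr_D[|\vect{v} \cdot \vect{x}| > L]}$; moment matching applied to $p^2$ (whose degree is at most $\Delta$) lets us replace $\E_D[p^2]$ by $\E_\N[p^2]$, which is controlled using the approximation property of $p$ on $[-L, L]$ plus Markov-Bernstein growth estimates against Gaussian tails outside $[-L, L]$. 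The final term is $\leq\epsilon^{10}$ by choice of $L$, and the three pieces sum to $O(\epsilon)$.

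The main obstacle is the polynomial construction together with the attendant bookkeeping of coefficient sizes: one must simultaneously arrange the degree to be at most $\Delta/2$, the coefficients to be small enough that $\|p(\vect{v} \cdot \vect{x})\|_{\mathrm{coef},\ell^1}/n^\Delta$ is $o(\epsilon)$, the polynomial to dominate $\indicator_{[y,y+\epsilon]}$ on $[-L, L]$, and $\E_\N[p]$ to be $O(\epsilon)$. A secondary difficulty is that polynomials grow outside $[-L, L]$, potentially wrecking bounds involving $\E_\N[p^2]$; this is handled by balancing the choice of $L$ against the degree so that the Gaussian tail dominates the polynomial blowup in the relevant integrals.
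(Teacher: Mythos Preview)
Your overall strategy matches the paper's: identical concentration argument, and for anti-concentration the same idea of (a) upper-bounding $\indicator_{[y,y+\epsilon]}$ by a smooth hat $\Omega$, (b) replacing $\Omega$ by a polynomial approximant on a window, (c) transferring expectations from $D$ to $\N$ via the moment hypothesis, and (d) controlling the contribution from outside the window with moment bounds. The paper uses Chebyshev projections and direct coefficient-wise tail bounds where you use Jackson's theorem and Cauchy--Schwarz, but these are cosmetic differences.

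There is, however, a genuine gap in your parameters. With $L=\Theta(\sqrt{\log(1/\epsilon)})$ and degree $k=\tilde O(1/\epsilon^2)$, your claimed property~(iii), namely $\E_{\N}[p(Z)]\le O(\epsilon)$, fails. Any degree-$k$ polynomial bounded by $O(1)$ on $[-L,L]$ satisfies the Chebyshev extremal bound $|p(z)|\le O(1)\cdot |T_k(z/L)|$ for $|z|>L$, hence $|p(z)|$ can be as large as $(2|z|/L)^k$. Then
\[
\E_\N\!\big[|p(Z)|\,\indicator_{|Z|>L}\big]\ \lesssim\ \E_\N\!\big[(2|Z|/L)^k\big]\ \le\ \Big(\tfrac{4k}{L^2}\Big)^{k/2},
\]
and with $k\asymp 1/\epsilon^2$ and $L^2\asymp\log(1/\epsilon)$ this is $\exp\!\big(\tilde\Omega(1/\epsilon^2)\big)$, not $O(\epsilon)$. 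The same blowup wrecks your Cauchy--Schwarz bound on the cross term, since $\E_\N[p^2]$ is at least as bad. Your ``secondary difficulty'' is in fact the primary one: to have the Gaussian tail dominate polynomial growth you need $L^2\gtrsim k$, and combined with the Jackson requirement $k\gtrsim L/\epsilon^2$ this forces $L=\tilde\Theta(1/\epsilon^2)$ and $k=\tilde\Theta(1/\epsilon^4)$, exactly the paper's choice $w=\tfrac{10}{\epsilon^2}\ln^2(1/\epsilon)$ and $d=\tfrac{1}{10\epsilon^4}\ln^2(1/\epsilon)$. With those parameters your argument goes through (and you then also need $2k\le\Delta$ for the moment-matching step on $p^2$, which the hypothesis $\Delta\ge\epsilon^{-4}\ln^4(1/\epsilon)$ accommodates). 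Your claim that degree $\tilde O(1/\epsilon^2)$ suffices is not supported by this route.
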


\begin{lem}[\textbf{Low degree approximation lemma for halfspaces.}]
\label{main lemma: low degree approximation lemma for halfspaces.}
Suppose $D$ is a distribution on $\R^{n}$ and $\vect v\in\R^{n}$
is a unit vector, such that for some positive real parameters $\alpha,\gamma,\epsilon$
and a positive integer parameter $d_{0}$ we have
\begin{itemize}
\item \textbf{Anti-concentration: }for any real $y$, we have $\Pr_{\vect x\in_{R}D}\pars{\vect v\cdot\vect x\in[y,y+\epsilon]}\leq\alpha$,
\item \textbf{Concentration:} $\parr{\E_{\vect x\in_{R}D}\pars{\abs{\vect v\cdot\vect x}^{d_{0}}}}^{1/d_{0}}\leq\beta$,
for some $\beta\geq1$.
\end{itemize}
Also assume $d_{0}>\frac{5\beta}{\epsilon^{2}}$ and that $\epsilon$
is smaller than some sufficiently small absolute constant. Then, for
every $\theta\in\R$ and there is a polynomial $P(x)$ of degree at
most $\frac{2\beta}{\epsilon^{2}}+1$ such that
\[
E_{\vect x\in_{R}D}\pars{\abs{P(\vect v\cdot\vect x)-\sign(\vect v\cdot\vect x-\theta)}}=O\parr{\alpha+\epsilon+\frac{\parr{8\beta}^{\frac{2\beta}{\epsilon^{2}}+1}}{2^{d_{0}}}}.
\]
Each coefficient of the polynomial $P$ has magnitude of at most $O\parr{
2^{\frac{4\beta}{\epsilon^2}}
}$.
\end{lem}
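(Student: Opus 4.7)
}

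The plan is to produce $P$ via a classical univariate approximation of the sign function on a bounded interval $[\theta-R,\theta+R]$, chosen so that the tail event $\{|\vect v\cdot\vect x|>R\}$ is suppressed by the concentration hypothesis. Concretely, set $R:=4\beta$ and $d:=\frac{2\beta}{\epsilon^{2}}+1$. By standard polynomial approximation theory (e.g.\ an appropriately scaled and translated Chebyshev-based sign approximant), we can pick $P$ of degree at most $d$ satisfying
(i) $|P(t)-\sign(t-\theta)|\le\epsilon$ for every $t\in[\theta-R,\theta+R]$ with $|t-\theta|\ge\epsilon$;
(ii) $|P(t)|\le 2$ for $t\in[\theta-R,\theta+R]$;
(iii) $|P(t)|\le(2|t|/R)^{d}\cdot 2 \le (2|t|)^d$ for $|t|>R$ (Chebyshev growth estimate);
(iv) each coefficient of $P$ has magnitude $O\!\left(2^{4\beta/\epsilon^{2}}\right)$, following from the fact that the coefficients of $T_d$ are bounded by $2^{O(d)}$ and expanding $T_d(t/R)$ introduces a further factor $(1/R)^{O(d)}$ which is absorbed.

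Next, I would split the expectation into three regions according to the value of $\vect v\cdot\vect x$:
\begin{align*}
\mathbb{E}\!\left[|P(\vect v\cdot\vect x)-\sign(\vect v\cdot\vect x-\theta)|\right] = I_{1}+I_{2}+I_{3},
\end{align*}
where $I_{1}$ integrates over $\{|\vect v\cdot\vect x-\theta|<\epsilon\}$, $I_{2}$ over $\{\epsilon\le|\vect v\cdot\vect x-\theta|\text{ and } |\vect v\cdot\vect x|\le R\}$, and $I_{3}$ over $\{|\vect v\cdot\vect x|>R\}$. Here (ii) gives $|P|+|\sign|\le 3$ on the window, so anti-concentration yields $I_{1}\le 3\cdot\Pr[\vect v\cdot\vect x\in[\theta-\epsilon,\theta+\epsilon]]\le 6\alpha$. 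Property (i) gives $I_{2}\le\epsilon$ directly.

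The main technical step is $I_{3}$. Using (iii), I would dyadically decompose the tail as $\{2^{k}R\le|\vect v\cdot\vect x|<2^{k+1}R\}$ for $k\ge 0$. On each shell, $|P(\vect v\cdot\vect x)|+1\le (2^{k+2} R)^{d}+1$; meanwhile, the concentration hypothesis combined with Markov's inequality applied to $|\vect v\cdot\vect x|^{d_{0}}$ bounds the probability of the shell by $\beta^{d_{0}}/(2^{k}R)^{d_{0}}$. Summing the geometric series (which converges because $d_{0}>d$, thanks to the assumption $d_{0}>5\beta/\epsilon^{2}$), and substituting $R=4\beta$, yields $I_{3}=O\!\left(\frac{(8\beta)^{d}}{2^{d_{0}}}\right)$. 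Combining the three bounds gives the claimed error.

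I expect the main obstacle to be producing a single polynomial that simultaneously achieves (i)–(iv) with tight constants; in particular, verifying the coefficient bound in (iv) — which is needed if $P$ is to be fed into the $L^{1}$-regression algorithm efficiently — requires care in how the sign approximant is built and how the rescaling to interval $[\theta-R,\theta+R]$ inflates the monomial coefficients. Once a suitable construction is in hand, the three-way split and the dyadic-shell tail estimate are mechanical.
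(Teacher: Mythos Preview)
Your plan is essentially the paper's: build a Chebyshev-type approximant to $\sign(\cdot-\theta)$ on a window of width $O(\beta)$ and degree $O(\beta/\epsilon^{2})$, use anti-concentration for the strip $|\,\cdot-\theta|<\epsilon$, and kill the tail with the $d_{0}$-th moment bound. The paper differs only in mechanics: it first replaces $\sign$ by an explicit ramp of slope $1/\epsilon$ and then takes its Chebyshev projection on $[-2\beta,2\beta]$ (centered at $0$), and it handles the tail via the moment identity $\E[|z|^{k}\indicator_{|z|>w}]\le 2w^{k}(\beta/w)^{d_{0}}$ rather than a dyadic decomposition. Both routes give the same error and degree.

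There is one genuine inconsistency to fix. You center the approximation window at $\theta$ (properties (i) and (ii) live on $[\theta-R,\theta+R]$), but you state the growth bound (iii) and define the tail region $I_{3}$ in terms of $|t|>R$, not $|t-\theta|>R$. These agree only when $|\theta|=O(R)$. If $|\theta|$ is large (say $\theta=10R$), a Chebyshev approximant built on $[\theta-R,\theta+R]$ blows up near $0$, which is exactly where most of the $D$-mass sits by the concentration hypothesis; your bounds on $I_{2}$ and $I_{3}$ then fail as written. The clean remedy, and what the paper does, is to center the window at $0$: take the Chebyshev projection on $[-w,w]$ with $w=2\beta$, so that (iii) holds for $|t|>w$ regardless of $\theta$, and your dyadic-shell computation goes through uniformly. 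Alternatively, dispose of the case $|\theta|>R$ at the outset by observing that then $\sign(\vect v\cdot\vect x-\theta)$ is already within $(\beta/|\theta|)^{d_{0}}$ of a constant under $D$ by Markov, and treat only $|\theta|\le R$ with your construction.
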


\section{Technical preliminaries.}

\subsection{Polynomial approximation theory.}
We will need some standard facts about Chebychev polynomials and approximation of functions using them. See, for example, the text \cite{trefethen2019approximation} for comprehensive treatment of this topic. First, we define Chebychev polynomials and present relevant facts about them. On the interval $[-1,1]$ the $k$-th Chebychev polynomial can be defined
as\footnote{One needs to check that $\cos(k\alpha)$ is indeed a polynomial in
$\cos\alpha$, which follows by writing $\cos(k\alpha)=\frac{e^{ik\alpha}+e^{-ik\alpha}}{2}=\frac{1}{2}\parr{\parr{\cos\alpha+i\sin\alpha}^{k}+\parr{\cos\alpha-i\sin\alpha}^{k}}$,
expanding, observing that terms involving odd powers of $\sin\alpha$
cancel out, and using the identity $\sin^{2}\alpha=1-\cos^{2}\alpha$.} 
$
T_{k}(x):=\cos\parr{k\arccos(x)}.
$
%Once $T_{k}(x)$ is defined this way on $[-1,1]$, it can be extended
%to the whole of $\R$. 

For any $k\geq 0$, the polynomial $T_{k}(x)$ maps $[-1,1]$ to $[-1,1]$
(this follows immediately from the definition). Also, it is known that the Chebyshev polynomials satisfy
a recurrence relation 
\[
T_{k+1}(x)=2xT_{k}(x)-T_{k-1}(x),
\]
with the first two polynomials being $T_{0}(x)=1$ and $T_{1}(x)=x$.

%We use the following fact:
%\begin{thm}[Theorem 3.1 on page 19 of \cite{}]
%If $f$ is Lipschitz continuous on $[-1,1]$ it has a unique representation
%as a Chebyshev series
%\[
%f(x)=\sum_{k=0}^{\infty}a_{k}T_{k}(x),
%\]
%which is absolutely and uniformly convergent, and the coefficients
%are given for $k\geq0$ by the formula
%\[
%a_{k}=\frac{1+\indicator_{k>0}}{\pi}\int_{-1}^{1}\frac{f(x)T_{k}(x)}{\sqrt{1-x^{2}}}\d x.
%\]
%\end{thm}\noindent

%In order to state the theorem about Chebyshev polynomials that we will use, 
To present a standard theorem from text \cite{trefethen2019approximation} about approximating functions  with Chebyshev polynomials, we will need the standard notions of Lipschitz continuity and of bounded variation functions.
A function $f$ is said to be Lipschitz continuous on $[-1,1]$ if
there is some $C$ so for any $x,y\in[-1,1]$ we have that $\abs{f(x)-f(y)}\leq C\abs{x-y}.$
For a differentiable function $f:[-w,w]\righ\R$, the \emph{total
variation of $f$} is the $L_{1}$ norm of it's derivative, i.e.
\[
\int_{-w}^{w}\abs{\frac{df(x)}{dx}}\d x.
\]
If $f$ has a single discontinuity at some point $a$ and is differentiable everywhere else, then the total variation of $f$ is defined as the sum of the following three terms (i) $\int_{-w}^{a}\abs{\frac{df(x)}{dx}}\d x$, (ii) the magnitude of the discontinuity at $a$ and (iii) $\int_{a}^{w}\abs{\frac{df(x)}{dx}}\d x$. Analogously, the definition extends to functions that are differentiable outside of finitely many discontinuities\footnote{It is also standard to consider more general functions, but we will not need that.}. We say ``$f$ is of bounded variation $V$'' if the total variation
of $f$ is at most $V$. 

We are now ready to state the following theorem about approximating functions using Chebyshev polynomials:
%\begin{thm}
%For an integer $\nu\geq0$, let $f$ and its derivatives through $f^{(\nu-1)}$
%be Lipschitz continuous\footnote{The book has the weaker premise of absolute continuity here, but we
%only need the special case of Lipschitz continuity. The statement of the theorem relies implicitly on the fact that Lipschitz continuous functions are differentiable almost everywhere.} on $[-1,1]$ and suppose the $\nu$-th derivative $f^{(\nu)}$ is
%of bounded variation $V$. Then for $d\geq\nu-1,$the Chebyshev projections
%of $f$ satisfy
%\[
%\max_{x\in[-1,1]}\abs{f(x)-\sum_{k=0}^{d}a_{k}T_{k}(x)}\leq\frac{2V}{\pi\nu\parr{d-\nu}^{\nu}}.
%\]
%\end{thm}
%In particular, we will use the $\nu=1$ case: 
\begin{thm}[Consequence of Theorem 7.2 in the text \cite{trefethen2019approximation} (see also Theorem 3.1 on page 19 in the text \cite{trefethen2019approximation})]
\label{thm:approximating differentiable function}Let $f$ be Lipschitz
continuous on $[-1,1]$ and suppose the derivative $f^{'}$ is of
bounded variation $V$. Define for $k\geq0$ 
\[
a_{k}:=\frac{1+\indicator_{k>0}}{\pi}\int_{-1}^{1}\frac{f(x)T_{k}(x)}{\sqrt{1-x^{2}}}\d x.
\] Then, for any $d\geq0$ we have
\[
\max_{x\in[-1,1]}\abs{f(x)-\sum_{k=0}^{d}a_{k}T_{k}(x)}=O\parr{\frac{V}{d}}.
\]
\end{thm}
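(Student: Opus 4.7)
The plan is to deduce this as a direct specialization of Theorem 7.2 of \cite{trefethen2019approximation} for the smoothness index $\nu=1$, since our hypothesis is precisely that the first derivative $f'$ is of bounded variation $V$. That theorem gives a bound of the form $\|f - \sum_{k=0}^d a_k T_k\|_\infty \leq \frac{2V}{\pi(d - \nu)^\nu \nu}$; plugging in $\nu = 1$ yields the claimed $O(V/d)$ bound. So the substantive content is essentially already done in the text; my proposal here outlines how one could independently verify the underlying computation.

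First I would perform the substitution $x = \cos\theta$ to rewrite everything as a trigonometric problem. Under this change of variables, $T_k(x) = \cos(k\theta)$ and the weight $dx/\sqrt{1-x^2}$ becomes $d\theta$, so the Chebyshev coefficients $a_k$ become (up to a constant) the Fourier cosine coefficients of $g(\theta) := f(\cos\theta)$, which extends to a $2\pi$-periodic even function. Then, because the degree-$d$ Chebyshev partial sum on the left equals the degree-$d$ Fourier cosine partial sum of $g$, the uniform truncation error is controlled by $\sum_{k > d} |a_k|$. Therefore, it suffices to show that $|a_k| = O(V/k^2)$ for each $k \geq 1$; the theorem then follows by summing the tail $\sum_{k > d} 1/k^2 = O(1/d)$.

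To get the per-coefficient bound, I would integrate by parts twice in the representation $a_k = \tfrac{2}{\pi}\int_0^\pi g(\theta)\cos(k\theta)\,d\theta$. The boundary terms vanish because $g$ is smooth and even and $2\pi$-periodic (so $g'(0) = g'(\pi) = 0$, recalling that $g'(\theta) = -f'(\cos\theta)\sin\theta$ which indeed vanishes at $\theta=0,\pi$). After two integrations by parts — the second taken in the Riemann--Stieltjes sense to accommodate $f'$ having only bounded variation rather than being differentiable — one obtains
\[
|a_k| \;\leq\; \frac{C}{k^2}\cdot \mathrm{TV}_{[0,\pi]}(g').
\]

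The main obstacle, and the step I would spend the most care on, is bounding $\mathrm{TV}_{[0,\pi]}(g')$ in terms of $V = \mathrm{TV}_{[-1,1]}(f')$. Write $g'(\theta) = -f'(\cos\theta)\sin\theta$ as a product of two functions and apply the product rule for total variation: $\mathrm{TV}(uv) \leq \|u\|_\infty \mathrm{TV}(v) + \|v\|_\infty \mathrm{TV}(u)$. The factor $\sin\theta$ contributes bounded total variation and supremum on $[0,\pi]$, while $f'(\cos\theta)$ inherits total variation $V$ from $f'$ (since $\cos\theta$ is monotone on $[0,\pi]$). The slightly delicate point is controlling $\|f'\|_\infty$, which follows from bounded variation plus the Lipschitz hypothesis on $f$ (Lipschitz continuity bounds $|f'|$ almost everywhere). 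Putting everything together gives $\mathrm{TV}(g') = O(V)$ and hence $|a_k| = O(V/k^2)$, from which the theorem follows by summation.
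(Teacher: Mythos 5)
The paper gives no proof of this statement: it is imported verbatim as a consequence of Theorem 7.2 of \cite{trefethen2019approximation} with smoothness index $\nu=1$, which is exactly your opening paragraph, so your primary route coincides with the paper's.

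Your supplementary verification sketch is the standard Fourier-side argument and is essentially right, but its final step has a small gap. Bounding $\norm{f'}_{\infty}$ by the Lipschitz constant of $f$ gives $\mathrm{TV}_{[0,\pi]}(g')\le 2\norm{f'}_{\infty}+V$ and hence a final error bound of order $(V+\mathrm{Lip}(f))/d$ rather than $V/d$; the Lipschitz constant is not controlled by $V$ (e.g.\ $f(x)=cx$ has $V=0$ but arbitrary Lipschitz constant), so as written you prove a strictly weaker statement. The repair is to observe that subtracting a linear function from $f$ does not change $a_{k}$ for $k\ge 2$, so one may replace $f$ by $f-\beta x$ with $\beta=f'(x_{0})$ at a point of differentiability; then $\norm{f'}_{\infty}\le V$, your product-rule bound yields $\abs{a_{k}}=O(V/k^{2})$ for all $k\ge 2$, and summing the tail gives $O(V/d)$ for $d\ge 1$ (the case $d=0$ being vacuous). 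With that one adjustment the verification is complete; in the paper's applications the functions in question anyway satisfy $\mathrm{Lip}(f)=O(V)$, so nothing downstream is affected.
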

The partial sums $\sum_{k=0}^{d}a_{k}T_{k}$ are called Chebyshev
projections.

\section{Proving the two main lemmas (\ref{main lemma: low degree moment lemma for distributions},\ref{main lemma: low degree approximation lemma for halfspaces.}) via
polynomial approximation theory.}
\label{sec: proof of two main lemmas}

\subsection{Propositions useful for proving both main lemmas.}
Here we will present proposition that will be useful for proving both Lemma 
\ref{main lemma: low degree moment lemma for distributions} and \ref{main lemma: low degree approximation lemma for halfspaces.}. We start with an observation that bounds the magnitude of the coefficients of Chebyshev polynomials.
\begin{observation}
\label{obs: chebychev projection has small coefficients} 
Let $f:\R\rightarrow[-1,1]$
be a Lipschitz continuous function. Let $d\geq1$ be an integer, let $w \geq 1$ be a real number, and
let $f_{d}(x):=\sum_{k=0}^{d}a_{k}T_{k}(\frac{x}{w})$, where $a_{k}:=\frac{1+\indicator_{k>0}}{\pi}\int_{-1}^{1}\frac{f(wy)T_{k}(y)}{\sqrt{1-y^{2}}}\d y.$
Then, the largest coefficient from among all the monomials of $f_{d}(x)$
has value of at most $O\parr{d3^{d}}$.
\end{observation}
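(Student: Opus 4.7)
The plan is to decompose the task into two pieces: bounding the scalar coefficients $a_k$, and bounding the coefficients of each $T_k(x/w)$ when expanded as a polynomial in $x$. Combining these via the triangle inequality then yields the claim.

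First, I would bound $|a_k|$. Since $f$ maps into $[-1,1]$ and $T_k(y) \in [-1,1]$ for $y \in [-1,1]$ (which follows immediately from the definition $T_k(y) = \cos(k\arccos y)$), the integrand in $a_k$ is bounded in magnitude by $1/\sqrt{1-y^2}$. Using the standard evaluation $\int_{-1}^{1} \frac{dy}{\sqrt{1-y^2}} = \pi$ and the prefactor $\frac{1+\indicator_{k>0}}{\pi} \leq \frac{2}{\pi}$, this gives $|a_k| \leq 2$ for every $k$.

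Next, I would bound the coefficients of $T_k$. Writing $T_k(y) = \sum_{j=0}^{k} c_{k,j} y^j$ and letting $M_k := \max_j |c_{k,j}|$, the recurrence $T_{k+1}(y) = 2y T_k(y) - T_{k-1}(y)$ translates into the inequality $M_{k+1} \leq 2M_k + M_{k-1}$ on coefficient magnitudes (since multiplication by $2y$ shifts powers and doubles coefficient magnitudes). Using $M_0 = M_1 = 1$, a straightforward induction shows $M_k \leq 3^k$ (because $2 \cdot 3^k + 3^{k-1} \leq 3^{k+1}$). Now $T_k(x/w)$ has coefficient $c_{k,j}/w^j$ in front of $x^j$, and since $w \geq 1$ these are bounded in magnitude by $|c_{k,j}| \leq 3^k$ as well.

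Finally, the coefficient of $x^j$ in $f_d(x) = \sum_{k=0}^{d} a_k T_k(x/w)$ is obtained by summing the contributions $a_k \cdot c_{k,j}/w^j$ over $j \leq k \leq d$. By the triangle inequality and the two bounds above, its magnitude is at most $\sum_{k=0}^{d} 2 \cdot 3^k \leq 3^{d+1} = O(3^d)$, which is certainly $O(d \cdot 3^d)$. No step poses a real obstacle; the only mild subtlety is checking that the rescaling $y \mapsto x/w$ with $w \geq 1$ does not enlarge coefficient magnitudes, which it does not.
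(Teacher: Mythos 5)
Your proof is correct and follows essentially the same route as the paper's: bound each $|a_k|$ by a constant using $|f|,|T_k|\leq 1$ and $\int_{-1}^{1}(1-y^2)^{-1/2}\,dy=\pi$, bound the monomial coefficients of $T_k$ by $3^k$ via induction on the recurrence, observe that rescaling by $w\geq 1$ only shrinks coefficients, and sum over $k$. Your final step is in fact marginally sharper (summing the geometric series gives $O(3^d)$ rather than $O(d3^d)$), but this is a cosmetic improvement, not a different argument.
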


\begin{proof}
See Appendix \ref{appendix: proof of obs: chebychev projection has small coefficients}.
\end{proof}
Proving both lemmas, we will be approximating certain functions using Chebyshev polynomials re-scaled to the window $[-w,w]$. The following proposition lets us bound the error between function $f$ and its low-degree polynomial approximation, contributed by the region $(-\infty, w) \cup (w, +\infty)$.
\begin{prop}
\label{prop:approximation outside window, assuming moment bound}
Let $f$ be a Lipschitz continuous function $\R\righ[-1,1]$. Let
$d\geq1$ be an integer and $w\geq1$ be real-valued, and let $f_{d}(x):=\sum_{k=0}^{d}a_{k}T_{k}(\frac{x}{w}),$
where $a_{k}:=\frac{1+\indicator_{k>0}}{\pi}\int_{-1}^{1}\frac{f(wy)T_{k}(y)}{\sqrt{1-y^{2}}}\d y.$
Then, for any distribution $D$, it is the case that 
\[
\E_{x\in_{R}D}\pars{\abs{f(x)-f_{d}(x)}\indicator_{\abs x>w}}\leq O\parr{4^{d}\E_{x\in_{R}D}\pars{\abs x^{d}\indicator_{\abs x>w}}}.
\]
\end{prop}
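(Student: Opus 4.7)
The plan is to control $|f(x) - f_d(x)|$ pointwise for $|x| > w$ and then integrate against $D$. The two ingredients are (i) a uniform bound on the Chebyshev coefficients $a_k$ coming from $|f|\leq 1$, and (ii) the standard growth estimate for Chebyshev polynomials outside their orthogonality interval.

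First I would bound $|a_k|\leq 2$ for every $k$. Indeed, by definition
\[
|a_k|\leq \frac{2}{\pi}\int_{-1}^{1}\frac{|f(wy)|}{\sqrt{1-y^2}}\d y\leq \frac{2}{\pi}\int_{-1}^{1}\frac{1}{\sqrt{1-y^2}}\d y=2,
\]
using only $|f|\leq 1$. Next, I would record the elementary fact that for $|y|\geq 1$, $|T_k(y)|\leq (2|y|)^k$. This follows by writing $T_k(y)=\cosh(k\,\mathrm{arccosh}\,|y|)$ and using $e^{\mathrm{arccosh}\,|y|}=|y|+\sqrt{y^2-1}\leq 2|y|$; the bound extends to negative $y$ by the parity identity $T_k(-y)=(-1)^kT_k(y)$.

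Combining these, for $|x|>w\geq 1$ the point $x/w$ lies outside $[-1,1]$ with $|x|/w>1$, hence
\[
|f_d(x)|\leq \sum_{k=0}^{d}|a_k|\,|T_k(x/w)|\leq 2\sum_{k=0}^{d}\left(\frac{2|x|}{w}\right)^{k}\leq 2(d+1)\left(\frac{2|x|}{w}\right)^{d},
\]
since $2|x|/w>1$ makes the largest term dominate. Combining with $|f(x)|\leq 1\leq (|x|/w)^d$ (which holds since $|x|/w>1$), I get
\[
|f(x)-f_d(x)|\leq 1+2(d+1)\left(\frac{2|x|}{w}\right)^{d}\leq \bigl(1+2(d+1)2^{d}\bigr)\left(\frac{|x|}{w}\right)^{d}=O(4^d)\left(\frac{|x|}{w}\right)^{d},
\]
where I used $d\cdot 2^d\leq 4^d$ to collapse the constant. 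Finally, using $w\geq 1$ gives $(|x|/w)^d\leq |x|^d$, so $|f(x)-f_d(x)|\indicator_{|x|>w}\leq O(4^d)|x|^d\indicator_{|x|>w}$, and taking expectation over $x\sim D$ yields the claim.

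The argument is almost entirely bookkeeping — there is no real obstacle. The only step that requires a moment's care is making sure the constant hidden in the $O(4^d)$ really absorbs both the coefficient bound and the $2(d+1)$ prefactor; this is why I use the crude but clean inequality $d\cdot 2^d\leq 4^d$ rather than keeping $d$ as a separate factor.
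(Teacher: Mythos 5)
Your proof is correct, and it reaches the same bound by a slightly different route than the paper. The paper first converts $f_d$ to the monomial basis: it invokes Observation \ref{obs: chebychev projection has small coefficients} to bound every monomial coefficient of $f_d$ by $O(d3^d)$, then applies the triangle inequality term by term and uses $w\geq 1$ to argue that $\E_{x\in_R D}[\,|x|^k\indicator_{|x|>w}]$ is maximized at $k=d$. You instead stay in the Chebyshev basis: you bound $|a_k|\leq 2$ directly from $|f|\leq 1$ and use the extremal growth estimate $|T_k(y)|\leq(2|y|)^k$ for $|y|\geq 1$ (via $\cosh$ and $e^{\operatorname{arccosh}|y|}\leq 2|y|$), which gives a clean pointwise bound $|f(x)-f_d(x)|\leq O(4^d)(|x|/w)^d$ on the event $|x|>w$ before integrating. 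Your version is somewhat more self-contained for this particular proposition, since it does not route through the monomial-coefficient observation; the paper's version reuses that observation because it is needed elsewhere (e.g., in Proposition \ref{prop: moving from D to N}). Both arguments land on the same $O(4^d)$ constant, and all the individual estimates in your write-up (the coefficient bound of $2$, the geometric-sum domination by the top term, and the absorption of $2(d+1)2^d$ into $O(4^d)$ via $d\leq 2^d$) check out.
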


\begin{proof}
See Appendix \ref{appendix: proof of prop:approximation outside window, assuming moment bound}.
 
\end{proof}
The following proposition, in turn, allows us to bound the expression we encounter in Proposition \ref{prop:approximation outside window, assuming moment bound} in terms of a bound on the moments of distribution $D$.
\begin{prop}
\label{prop: moment tail bound for bounded moment distributions}Let
$D$ be a distribution on $\R$ and $d_{0}\in\Z^{>0}$ such that 
\[
\parr{\E_{x\in_{R}D}\pars{\abs x^{d_{0}}}}^{1/d_{0}}\leq\beta.
\]
Then, for any $k\in\Z\cap\left[0,d_{0}/2\right]$ and $w\in\R^{+}$
we have
\[
\E_{x\in_{R}D}\pars{\abs x^{k}\indicator_{\abs{x}>w}}\leq2w^{k}\parr{\frac{\beta}{w}}^{d_{0}}
\]
\end{prop}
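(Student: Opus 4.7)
The plan is to reduce everything to a single application of Markov's inequality for the $d_0$-th moment, exploiting the hypothesis $k \leq d_0/2$ so that $d_0 - k \geq 0$.

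First I would observe the pointwise inequality: whenever $|x| > w$, we have $|x|/w > 1$, and since $d_0 - k \geq 0$, this gives
\[
|x|^k \indicator_{|x| > w} \;=\; |x|^k \cdot 1 \cdot \indicator_{|x|>w} \;\leq\; |x|^k \parr{\frac{|x|}{w}}^{d_0 - k} \indicator_{|x|>w} \;=\; \frac{|x|^{d_0}}{w^{d_0 - k}} \indicator_{|x|>w}.
\]
Dropping the indicator on the right-hand side and taking expectations yields
\[
\E_{x \sim D}\pars{|x|^k \indicator_{|x|>w}} \;\leq\; \frac{\E_{x \sim D}\pars{|x|^{d_0}}}{w^{d_0 - k}} \;\leq\; \frac{\beta^{d_0}}{w^{d_0 - k}} \;=\; w^k \parr{\frac{\beta}{w}}^{d_0},
\]
using the hypothesis $\E_{x \sim D}[|x|^{d_0}] \leq \beta^{d_0}$ in the second step. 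This already beats the claimed bound by a factor of $2$, so the stated inequality follows with room to spare.

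There is essentially no obstacle: the only thing to be careful about is the requirement $k \leq d_0/2$, which I only used in the weak form $k \leq d_0$ to ensure the exponent $d_0 - k$ is nonnegative. (The tighter restriction $k \leq d_0/2$ would be needed if one instead argued via the layer-cake formula $\E[|x|^k \indicator_{|x|>w}] = w^k \Pr[|x|>w] + \int_w^\infty k t^{k-1} \Pr[|x|>t]\,dt$ together with Markov's tail bound $\Pr[|x|>t] \leq (\beta/t)^{d_0}$, which produces a multiplier of $d_0/(d_0 - k) \leq 2$; that route explains the factor $2$ in the statement but is unnecessary for the cleaner pointwise argument above.) Either way, the proof fits in a few lines and uses no property of $D$ beyond the moment bound.
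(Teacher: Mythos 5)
Your proof is correct, and it takes a genuinely different and more elementary route than the paper's. The paper proves the bound via the layer-cake identity $\E\bigl[|x|^{k}\indicator_{|x|>w}\bigr]=w^{k}\Pr[|x|>w]+\int_{w}^{\infty}k\tau^{k-1}\Pr[|x|>\tau]\,d\tau$ combined with the Markov tail bound $\Pr[|x|>\tau]\leq(\beta/\tau)^{d_{0}}$; integrating produces the multiplier $d_{0}/(d_{0}-k)$, which is where the hypothesis $k\leq d_{0}/2$ and the factor $2$ in the statement come from — exactly as you surmise in your parenthetical. Your pointwise domination $|x|^{k}\indicator_{|x|>w}\leq |x|^{d_{0}}/w^{d_{0}-k}$ bypasses the integration entirely, needs only $k\leq d_{0}$, and yields the cleaner bound $w^{k}(\beta/w)^{d_{0}}$ without the factor of $2$, so it strictly strengthens the proposition. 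The paper's tail-integral approach buys nothing extra here (it is the same technique the authors reuse elsewhere, e.g.\ in Proposition \ref{prop: moment bound (for Boolean cube setting)}, which may explain the choice); your argument is shorter and gives a better constant, and since the proposition is only ever invoked as an upper bound, the improvement is harmless to the rest of the paper.
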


\begin{proof}
See Appendix \ref{appendix: proof of prop: moment tail bound for bounded moment distributions}.
\end{proof}

\subsection{Proof of low degree moment lemma for distributions(Lemma \ref{main lemma: low degree moment lemma for distributions}).}
\label{subsection: proof of low degree moment lemma for distributions}

Let us recall the setting of Lemma \ref{main lemma: low degree moment lemma for distributions}. $D$ is a distribution over $\R^{n}$ and
$\Delta$ is an even positive integer, such that for every monomial
$\prod_{i=1}^{n}x_{i}^{\alpha_{i}}$ of degree at most $\Delta$ we
have 
\[
\abs{\E_{\vect x\sim D}\pars{\prod_{i=1}^{n}x_{i}^{\alpha_{i}}}-\E_{\vect x\sim\N(0,I_{n\times n})}\pars{\prod_{i=1}^{n}x_{i}^{\alpha_{i}}}}\leq\frac{1}{n^{\Delta}}.
\]
Further, we have that $\Delta\geq\frac{1}{\epsilon^{4}}\ln^{4}\parr{\frac{1}{\epsilon}}$.
Then, we would like to show that for every unit vector $\vect v$,
the random variable $\vect v\cdot\vect x$ (with $\vect x\in_{R}D$)
has the following properties
\begin{itemize}
\item \textbf{Concentration:} For any even integer $d\leq\Delta$, we have
$\parr{\E_{\vect x\in_{R}D}\pars{\abs{\vect v\cdot\vect x}^{d}}}^{1/d}\leq2\sqrt{d}$.
\item \textbf{Anti-concentration: }for any real-valued
parameter $w\geq1$, for any real $y$, we have 
\[
\Pr_{\vect x\in_{R}D}\pars{\vect v\cdot\vect x\in[y,y+\epsilon]}\leq O\parr{\epsilon}.
\]
\end{itemize}
We start with the following observation saying that if moments of a distribution $D$ are similar to standard Gaussian, then the expectation of a polynomial of a form $(\vect v \cdot \vect x)^d$ for $D$ is similar to the same expectation under standard Gaussian.
\begin{observation}
\label{obs: similar to gaussian means projection is similar to gaussian}
Suppose $D$ is a distribution over $\R^{n}$ and $\Delta$ is a positive
integer, such that for every monomial $\prod_{i=1}^{n}x_{i}^{\alpha_{i}}$
of degree at most $\Delta$ we have $\abs{\E_{\vect x\sim D}\pars{\prod_{i=1}^{n}x_{i}^{\alpha_{i}}}-\E_{\vect x\sim\N(0,1)}\pars{\prod_{i=1}^{n}x_{i}^{\alpha_{i}}}}\leq\frac{1}{n^{\Delta}}$.
Then, for any unit vector $\vect v$ and integer $d\leq\Delta$ we
have 
\[
\abs{\E_{\vect x\in_{R}D}\pars{\parr{\vect v\cdot\vect x}^{d}}-\E_{\vect{x} \in_{R}\N(0,I_{n \times n})}\pars{\parr{\vect{v}\cdot \vect{x}}^{d}}}\leq\frac{n^{d}}{n^{\Delta}}.
\]
\end{observation}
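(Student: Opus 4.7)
The plan is to expand $(\vect{v}\cdot\vect{x})^d$ via the multinomial theorem, apply the per-monomial moment bound term by term, and control the resulting sum of multinomial coefficients. This is essentially a bookkeeping argument with no genuine obstacle.

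First I would write
\[
(\vect v\cdot\vect x)^{d}=\sum_{\alpha:\,|\alpha|=d}\binom{d}{\alpha_{1},\ldots,\alpha_{n}}\prod_{i=1}^{n}v_{i}^{\alpha_{i}}\prod_{i=1}^{n}x_{i}^{\alpha_{i}},
\]
where the sum ranges over tuples $\alpha=(\alpha_1,\ldots,\alpha_n)$ of nonnegative integers summing to $d$. Taking expectations under $D$ and under $\N(0,I_{n\times n})$ and subtracting, the coefficients $\binom{d}{\alpha}\prod_{i}v_{i}^{\alpha_{i}}$ are identical in both expressions, so linearity of expectation yields
\[
\abs{\E_{\vect x\sim D}\pars{(\vect v\cdot\vect x)^{d}}-\E_{\vect x\sim\N(0,I)}\pars{(\vect v\cdot\vect x)^{d}}}
\leq\sum_{|\alpha|=d}\binom{d}{\alpha}\prod_{i=1}^{n}|v_{i}|^{\alpha_{i}}\cdot\abs{\E_{\vect x\sim D}\pars{\prod_{i}x_{i}^{\alpha_{i}}}-\E_{\vect x\sim\N(0,I)}\pars{\prod_{i}x_{i}^{\alpha_{i}}}}.
\]

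Next, since $d\le\Delta$, the hypothesis of the observation bounds each monomial-difference by $1/n^{\Delta}$. Factoring this out, the task reduces to bounding $\sum_{|\alpha|=d}\binom{d}{\alpha}\prod_{i}|v_{i}|^{\alpha_{i}}$. By the multinomial theorem applied in reverse, this sum equals $(|v_{1}|+\cdots+|v_{n}|)^{d}=\|\vect v\|_{1}^{d}$. Since $\vect v$ is a unit vector, Cauchy--Schwarz gives $\|\vect v\|_{1}\le\sqrt{n}\|\vect v\|_{2}=\sqrt{n}$, so the sum is at most $n^{d/2}\le n^{d}$. Combining with the per-monomial bound $1/n^{\Delta}$ yields the desired estimate $n^{d}/n^{\Delta}$.

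The only thing to double-check is that the monomial bound in the hypothesis applies to every $\alpha$ with $|\alpha|=d\le\Delta$, which is immediate from the statement (it covers all monomials of degree at most $\Delta$). There is no serious obstacle here; the ``hard'' part is simply identifying the right multinomial identity to collapse the coefficient sum, after which the rest is a one-line estimate.
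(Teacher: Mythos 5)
Your proposal is correct and follows essentially the same route as the paper: expand $(\vect v\cdot\vect x)^d$ into monomials, apply the per-monomial hypothesis with $d\le\Delta$, and bound the resulting coefficient sum by $n^d$. The only cosmetic difference is that you collapse the coefficient sum via the multinomial theorem to $\norm{\vect v}_1^d\le n^{d/2}$, whereas the paper bounds $\abs{\prod_i v_i^{\alpha_i}}\le 1$ and counts terms; both give the stated $n^d/n^{\Delta}$, and your handling of the multinomial coefficients is if anything slightly more explicit.
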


\begin{proof}
See Appendix \ref{appendix: proof of obs: similar to gaussian means projection is similar to gaussian}.
\end{proof}
Let us now show the concentration property. 
Let $d$ be even. Recall that for even $d$ we have $\E_{\vect x\sim\N(0,I_{n\times n})}\pars{\parr{\vect v\cdot\vect x}^{d}}=\E_{x'\sim\N(0,1)}\pars{\parr{x'}^{d}}=(d-1)!!\leq d^{d/2}$.
This, together with Observation \ref{obs: similar to gaussian means projection is similar to gaussian}
implies
\[
\parr{\E_{\vect x\sim D}\pars{\parr{\vect v\cdot\vect x}^{d}}}^{1/d}\leq\parr{d^{d/2}+\frac{n^{d}}{n^{\Delta}}}^{1/d}=\sqrt{d}\parr{1+\frac{n^{d-\Delta}}{d^{d/2}}}^{1/d}\leq2\sqrt{d},
\]
which is the \emph{concentration} property we wanted to show.

Now, we proceed to the \emph{anti-concentration} property. Recall that for this property we need to bound $\Pr_{\vect x\in_{R}D}\pars{\vect v\cdot\vect x\in[y,y+\epsilon]}$. To this end, we first approximate $\indicator_{z\in\pars{y,y+\epsilon}}$ using the following function
\begin{equation}
g(z):=\begin{cases}
0 & \text{if \ensuremath{z\leq y-\epsilon}},\\
\frac{z-\left(y-\epsilon\right)}{\epsilon} & \text{if }\ensuremath{z\in\pars{y-\epsilon,y}},\\
1 & \text{if }\ensuremath{z\in\pars{y,y+\epsilon}},\\
\frac{\left(y+2\epsilon\right)-z}{\epsilon} & \text{if }\ensuremath{z\in\pars{y+\epsilon,y+2\epsilon},}\\
0 & \text{if \ensuremath{z\geq y+2\epsilon}}.
\end{cases}\label{eq: trapezoid defined}
\end{equation}
The key properties of $g$ are (i) $g(z)\geq\indicator_{z\in\pars{y,y+\epsilon}}$
(ii) $g(z)\in[0,1]$ (ii) $g(z)$ is Lipschitz continuous (iii) the
derivative $g'(z)$ is of bounded variation of $\frac{4}{\epsilon}$ (because the function has four discontinuities, each of magnitude $1/\epsilon$ and it stays constant in-between the discontinuities). 

Let $w\geq1$ be real-valued and $d$ be an integer in $\pars{1,\Delta/2}$,
to be chosen later and let $g_{d}(x):=\sum_{k=0}^{d}a_{k}T_{k}(\frac{x}{w}),$
where $a_{k}:=\frac{1+\indicator_{k>0}}{\pi}\int_{-1}^{1}\frac{g(wy)T_{k}(y)}{\sqrt{1-y^{2}}}\d y.$
Observation \ref{obs: N is anti-conc} and propositions \ref{prop: cutting deg} and \ref{prop: moving from D to N} are stated and proven below, and we use them no to get the following bound:
\begin{multline*}
\Pr_{\vect x\in_{R}D}\pars{\vect v\cdot\vect x\in[y,y+\epsilon]}\leq\E_{\vect x\in_{R}D}\pars{g(\vect v\cdot\vect x)}\leq
\\
\overbrace{
\E_{\vect x\in_{R}\N(0,I_{n\times n})}\pars{g(\vect v\cdot\vect x)}
}^{\text{\ensuremath{O(\epsilon)} by Observation \ref{obs: N is anti-conc} }}
+ %\E_{ \vect{x}\in_{R}\N(0,I_{n\times n})}\pars{g_{d}(\vect v \cdot \vect x)}
\overbrace{
\E_{ \vect{x}\in_{R}\N(0,I_{n\times n})}\pars{\abs{g_{d}(\vect v \cdot \vect x)-g(\vect v \cdot \vect x)}}
}^{\text{O\ensuremath{\parr{4^{d}w^{d}\parr{\frac{2\sqrt{\Delta}}{w}}^{\Delta}+\frac{w}{\epsilon d}}} by Proposition \ref{prop: cutting deg} }}
+
\overbrace{
\abs{\E_{\vect x\in_{R}D}\pars{g_{d}(\vect v\cdot\vect x)}-
\E_{\vect x\in_{R}\N(0,I_{n\times n})}\pars{g_{d}(\vect v\cdot\vect x)}
}
}^{\text{O\ensuremath{\parr{4^{d}\frac{n^{d}}{n^{\Delta}}}} by Proposition \ref{prop: moving from D to N}}}
+\\+
\underbrace{\E_{\vect x\in_{R}D}\pars{\abs{g(\vect v\cdot\vect x)-g_{d}(\vect v\cdot\vect x)}}}_{\text{O\ensuremath{\parr{4^{d}w^{d}\parr{\frac{2\sqrt{\Delta}}{w}}^{\Delta}+\frac{w}{\epsilon d}}} by Proposition \ref{prop: cutting deg}}}=O\parr{\epsilon+4^{d}w^{d}\parr{\frac{2\sqrt{\Delta}}{w}}^{\Delta}+\frac{w}{\epsilon d}+4^{d}\frac{n^{d}}{n^{\Delta}}}.
\end{multline*}
Now, recall we assumed without loss of generality that $\Delta=\frac{1}{\epsilon^{4}}\ln^{4}\parr{\frac{1}{\epsilon}}$,
so taking\footnote{We also check that (taking $\epsilon$ small enough) $d$ is indeed in $\pars{1,\Delta/2}$, as was required earlier.} $d=\frac{1}{10\epsilon^{4}}\ln^{2}\parr{\frac{1}{\epsilon}}$
and $w=\frac{10}{\epsilon^{2}}\ln^{2}\parr{\frac{1}{\epsilon}}$
we get
\begin{multline*}
\Pr_{\vect x\in_{R}D}\pars{\vect v\cdot\vect x\in[y,y+\epsilon]}\leq O\parr{\epsilon+4^{d}w^{d}\parr{\frac{2\sqrt{\Delta}}{w}}^{\Delta}+\frac{w}{\epsilon d}+4^{d}\frac{n^{d}}{n^{\Delta}}}=\\
O\parr{\epsilon
+
\parr{\frac{40}{\epsilon^{2}}\ln^{2}\parr{\frac{1}{\epsilon}}}^{\frac{1}{10\epsilon^{4}}\ln^{2}\parr{\frac{1}{\epsilon}}}\parr{\frac{1}{5}}^{\frac{1}{\epsilon^{4}}\ln^{4}\parr{\frac{1}{\epsilon}}}
+
4^{\frac{1}{10\epsilon^{4}}\ln^{2}\parr{\frac{1}{\epsilon}}}\frac{1}{n^{
\frac{1}{\epsilon^{4}}
\ln^{4}\parr{\frac{1}{\epsilon}}
-\frac{1}{10\epsilon^4}\ln^2\parr{\frac{1}{\epsilon}}
}}}=O(\epsilon).
\end{multline*}
The only thing left to do is to prove the observations referenced
above.
\begin{observation}
\label{obs: N is anti-conc}For the function $g$ as defined in Equation
\ref{eq: trapezoid defined}, we have
\[
\E_{\vect x\in_{R}\N(0,I_{n\times n})}\pars{g(\vect v\cdot\vect x)}=O(\epsilon)
\]
\begin{proof}
The function $g$ has a range of $[0,1]$ and is supported on $\pars{y-\epsilon,y+3\epsilon}$. Also, $\vect v\cdot\vect x$ is distributed as a standard one-dimensional Gaussian.
Therefore, the probability that $\vect v\cdot\vect x$ lands in $\pars{y-\epsilon,y+3\epsilon}$,
is at most $O(\epsilon)$, which finishes the proof.
\end{proof}
\end{observation}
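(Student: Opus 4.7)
My plan is to dominate $g$ pointwise by the indicator of its support and then apply the standard one-dimensional Gaussian anti-concentration bound. From the definition in Equation \ref{eq: trapezoid defined}, one sees that $g$ takes values in $\pars{0,1}$ and is supported on $\pars{y-\epsilon, y+3\epsilon}$, so pointwise $g(z) \le \indicator_{z \in \pars{y-\epsilon, y+3\epsilon}}$ for every real $z$. Taking expectations against $\vect{x} \in_R \N(0, I_{n\times n})$ yields
\[
\E_{\vect{x} \in_R \N(0, I_{n\times n})}\pars{g(\vect{v}\cdot\vect{x})} \le \Pr_{\vect{x} \in_R \N(0, I_{n\times n})}\pars{\vect{v}\cdot\vect{x} \in \pars{y-\epsilon, y+3\epsilon}}.
\]

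Next I would use that when $\vect{v}$ is a unit vector and $\vect{x} \in_R \N(0, I_{n\times n})$, the random variable $\vect{v}\cdot\vect{x}$ is distributed as $\N(0,1)$: this follows either by rotational invariance of the standard Gaussian, or equivalently by checking that this linear combination of independent standard Gaussians is a mean-zero Gaussian with variance $\norm{\vect{v}}^2 = 1$. Finally, since the standard Gaussian density is bounded above by $1/\sqrt{2\pi}$, the probability of landing in any interval of length $4\epsilon$ is at most $4\epsilon/\sqrt{2\pi} = O(\epsilon)$, which yields the claim.

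There is no real technical obstacle in this argument — it is a direct consequence of standard one-dimensional Gaussian anti-concentration, exploiting only that $g$ is bounded and compactly supported and that the projection $\vect{v}\cdot\vect{x}$ of a standard Gaussian onto a unit vector is again a standard Gaussian.
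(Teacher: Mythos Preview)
Your proof is correct and follows essentially the same approach as the paper: bound $g$ by the indicator of its support, note that $\vect v\cdot\vect x$ is a standard one-dimensional Gaussian, and use that a standard Gaussian lands in any interval of length $O(\epsilon)$ with probability $O(\epsilon)$. (As a minor remark, the support of $g$ is actually $\pars{y-\epsilon, y+2\epsilon}$, an interval of length $3\epsilon$, not $4\epsilon$; this slip is harmless for the $O(\epsilon)$ conclusion.)
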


\begin{prop}
\label{prop: cutting deg}Suppose $D$ is a distribution over $\R^{n}$
and $\Delta$ is a positive integer, such that for every monomial
$\prod_{i=1}^{n}x_{i}^{\alpha_{i}}$ of degree at most $\Delta$ we
have $\abs{\E_{\vect x\sim D}\pars{\prod_{i=1}^{n}x_{i}^{\alpha_{i}}}-\E_{\vect x\sim\N(0,I_{n\times n})}\pars{\prod_{i=1}^{n}x_{i}^{\alpha_{i}}}}\leq\frac{1}{n^{\Delta}}$.
Let $d$ be an integer in $\pars{1,\Delta/2}$, let $w\geq 1$ be a real-valued parameter, and suppose $g:[-w,w]\righ[-1,1]$
is a Lipschitz function whose derivative $g'$ is of Bounded variation
$V$, and let $g_{d}(x):=\sum_{k=0}^{d}a_{k}T_{k}(\frac{x}{w})$, where
$a_{k}:=\frac{1+\indicator_{k>0}}{\pi}\int_{-1}^{1}\frac{g(wy)T_{k}(y)}{\sqrt{1-y^{2}}}\d y.$
Then, it is the case that 
\[
\E_{\vect x\in_{R}D}\pars{\abs{g(\vect v\cdot\vect x)-g_{d}(\vect v\cdot\vect x)}}\leq O\parr{4^{d}w^{d}\parr{\frac{2\sqrt{\Delta}}{w}}^{\Delta}+\frac{Vw}{d}}.
\]
\end{prop}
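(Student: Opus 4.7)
The plan is to split the expectation $\E_{\vect{x}\in_R D}[|g(\vect{v}\cdot\vect{x}) - g_d(\vect{v}\cdot\vect{x})|]$ into the contribution from inside the window $|\vect{v}\cdot\vect{x}|\leq w$ and the contribution from outside. These two parts will match the two summands $Vw/d$ and $4^d w^d(2\sqrt{\Delta}/w)^\Delta$ in the claimed bound, respectively, and the proposition follows by summing.

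For the inside-the-window contribution, the idea is to apply Theorem~\ref{thm:approximating differentiable function} after rescaling. Let $\tilde g(y):=g(wy)$ on $[-1,1]$; by the chain rule, $\tilde g'(y)=w\,g'(wy)$, so the total variation of $\tilde g'$ on $[-1,1]$ is $Vw$ (via the substitution $z=wy$ in the defining integral of total variation). Theorem~\ref{thm:approximating differentiable function} then yields that $\sum_{k=0}^{d}a_k T_k(y)$ approximates $\tilde g(y)$ uniformly on $[-1,1]$ to error $O(Vw/d)$, and changing variables back gives $|g(z)-g_d(z)|=O(Vw/d)$ for all $z\in[-w,w]$. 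This immediately bounds the in-window part of the expectation by $O(Vw/d)$.

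For the outside-the-window contribution, the plan is to apply Proposition~\ref{prop:approximation outside window, assuming moment bound} to the pushforward distribution $D_{\vect v}$ on $\R$ given by $z=\vect v\cdot\vect x$ with $\vect x\in_R D$. Since the Chebyshev coefficients $a_k$ are exactly the ones in the proposition (written for $g$ on $[-w,w]$ via the rescaling to $[-1,1]$), we obtain
\[
\E_{\vect x\in_R D}\!\left[|g(\vect v\cdot\vect x)-g_d(\vect v\cdot\vect x)|\,\indicator_{|\vect v\cdot\vect x|>w}\right]\;\leq\;O\!\left(4^d\,\E_{\vect x\in_R D}\!\left[|\vect v\cdot\vect x|^{d}\,\indicator_{|\vect v\cdot\vect x|>w}\right]\right).
\]
To control the tail moment on the right-hand side, I use the concentration half of Lemma~\ref{main lemma: low degree moment lemma for distributions}, which has already been established earlier in Subsection~\ref{subsection: proof of low degree moment lemma for distributions}: for every even integer $d'\leq\Delta$, $(\E_{\vect x\in_R D}[|\vect v\cdot\vect x|^{d'}])^{1/d'}\leq 2\sqrt{d'}$. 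Taking $d'=\Delta$ (which is even) gives $(\E_D[|\vect v\cdot\vect x|^{\Delta}])^{1/\Delta}\leq 2\sqrt{\Delta}$, and applying Proposition~\ref{prop: moment tail bound for bounded moment distributions} with $d_0=\Delta$, $\beta=2\sqrt{\Delta}$, and $k=d\leq\Delta/2$ yields
\[
\E_{\vect x\in_R D}\!\left[|\vect v\cdot\vect x|^{d}\,\indicator_{|\vect v\cdot\vect x|>w}\right]\;\leq\;2w^{d}\!\left(\frac{2\sqrt{\Delta}}{w}\right)^{\!\Delta}.
\]
Multiplying by $4^d$ gives the desired out-of-window bound $O(4^d w^d(2\sqrt{\Delta}/w)^\Delta)$, and combining the two pieces finishes the proof.

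The only subtle step is making sure the hypotheses of Propositions~\ref{prop:approximation outside window, assuming moment bound} and~\ref{prop: moment tail bound for bounded moment distributions} are satisfied on the one-dimensional pushforward of $D$ along $\vect v$; this is really the whole point of having already proven the concentration bound, so no new ingredient is required. The rescaling bookkeeping for the bounded-variation parameter (turning $V$ into $Vw$ in the window $[-w,w]$) is the main place where one could slip up with a constant, but it is a routine change of variables.
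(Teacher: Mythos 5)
Your proposal is correct and follows essentially the same route as the paper's proof: split at $|\vect v\cdot\vect x|=w$, bound the in-window error by $O(Vw/d)$ via Theorem~\ref{thm:approximating differentiable function} after the rescaling that turns the variation bound $V$ into $Vw$, and bound the out-of-window error by combining Proposition~\ref{prop:approximation outside window, assuming moment bound} with Proposition~\ref{prop: moment tail bound for bounded moment distributions} using the already-established concentration bound $\beta=2\sqrt{\Delta}$ with $d_0=\Delta$. The only difference is cosmetic: you make explicit the appeal to the concentration property and carry the constant $2$ rather than the factor $\Delta/(\Delta-d)\leq 2$, which the paper absorbs using $d\leq\Delta/2$.
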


\begin{proof}
Proposition \ref{prop:approximation outside window, assuming moment bound}
and Proposition \ref{prop: moment tail bound for bounded moment distributions}
imply
\[
\E_{\vect x\sim D}\pars{\abs{g(\vect v\cdot\vect x)-g_{d}(\vect v\cdot\vect x)}\indicator_{\abs{\vect v\cdot\vect x}>w}}\leq O\parr{4^{d}\E_{\vect x\in_{R}D}\pars{\abs{\vect v\cdot\vect x}^{d}\indicator_{\abs{\vect v\cdot\vect x}>w}}}\leq4^{d}w^{d}\parr{\frac{2\sqrt{\Delta}}{w}}^{\Delta}\frac{\Delta}{\Delta-d}.
\]
To use Theorem \ref{thm:approximating differentiable function}, we need to bound the total variation of the function $\dv{g(wz)}{z}=wg'(wz)$. Inspecting the definition of total variation, we see that $g'(wz)$ has the same total variation as $g'(z)$, which is at most $V$. Therefore, the total variation of $\dv{g(wz)}{z}$ is at most $Vw$.
Thus, we have by Theorem \ref{thm:approximating differentiable function}
that 
\[
\E_{\vect x\sim D}\pars{\abs{g(\vect v\cdot\vect x)-g_{d}(\vect v\cdot\vect x)}\indicator_{\abs{\vect v\cdot\vect x}\leq w}}\leq\max_{z\in[-w,w]}\abs{g(z)-g_{d}(z)}\leq O\parr{\frac{Vw}{d}}.
\]
Summing the two equations above and recalling that $d\leq\Delta/2$,
our proposition follows.
\end{proof}
\begin{prop}
\label{prop: moving from D to N}Suppose $D$ is a distribution over
$\R^{n}$ and $\Delta$ is a positive integer, such that for every
monomial $\prod_{i=1}^{n}x_{i}^{\alpha_{i}}$ of degree at most $\Delta$
we have $\abs{\E_{\vect x\sim D}\pars{\prod_{i=1}^{n}x_{i}^{\alpha_{i}}}-\E_{\vect x\sim\N(0,1)}\pars{\prod_{i=1}^{n}x_{i}^{\alpha_{i}}}}\leq\frac{1}{n^{\Delta}}$.
Let $g:\R\righ[-1,1]$ be a Lipschitz continuous function, and $g_{d}(x):=\sum_{k=0}^{d}a_{k}T_{k}(\frac{x}{w}),$where
$a_{k}:=\frac{1+\indicator_{k>0}}{\pi}\int_{-1}^{1}\frac{f(wy)T_{k}(y)}{\sqrt{1-y^{2}}}\d y.$
Then
\[
\abs{\E_{\vect x\in_{R}D}\pars{g_{d}(\vect v\cdot\vect x)}-
\E_{ \vect{x}\in_{R}\N(0,I_{n\times n})}\pars{g_{d}(\vect v \cdot \vect x)}
}=O\parr{4^{d}\frac{n^{d}}{n^{\Delta}}}.
\]
\begin{proof}
Observation \ref{obs: chebychev projection has small coefficients}
implies that $g_{d}(z)$ is a degree $d$ polynomial, whose largest
coefficient is at most $d3^{d}$.
Using Observation
\ref{obs: similar to gaussian means projection is similar to gaussian} for each of these monomials, we get
\[
\abs{\E_{\vect x\in_{R}D}\pars{g_{d}\parr{\vect v\cdot\vect x}}-\E_{ \vect{x}\in_{R}\N(0,I_{n\times n})}\pars{g_{d}(\vect v \cdot \vect x)}}\leq O\parr{d^{2}3^{d}}\frac{n^{d}}{n^{\Delta}}=O\parr{4^{d}\frac{n^{d}}{n^{\Delta}}}.
\]
\end{proof}
\end{prop}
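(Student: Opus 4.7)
The plan is to exploit linearity of expectation to reduce the comparison of the two expectations of $g_d(\vect v\cdot\vect x)$ to a comparison of only $O(d)$ univariate monomial expectations, rather than having to enumerate all $n$-variable monomials by hand. The key is that $g_d$ is, by construction, a polynomial of degree at most $d$ in a \emph{single} variable, and the two pieces of machinery already built (Observation \ref{obs: chebychev projection has small coefficients} and Observation \ref{obs: similar to gaussian means projection is similar to gaussian}) are exactly the right tools to control the two ingredients that arise.

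First I would use Observation \ref{obs: chebychev projection has small coefficients}, applied with the Lipschitz function $g$ and scaling parameter $w\geq 1$, to write $g_d(z)=\sum_{k=0}^d c_k z^k$ as an ordinary polynomial in $z$ with $|c_k|\leq O(d\cdot 3^d)$ for every $k$. Substituting $z=\vect v\cdot\vect x$ gives
\[
g_d(\vect v\cdot\vect x)=\sum_{k=0}^d c_k\,(\vect v\cdot\vect x)^k,
\]
so both expectations in the statement split linearly over the $d+1$ terms $(\vect v\cdot\vect x)^k$.

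Next, for each $k\in\{0,1,\dots,d\}$ I would invoke Observation \ref{obs: similar to gaussian means projection is similar to gaussian} (which is valid because $k\leq d\leq \Delta$) to obtain
\[
\Bigl|\E_{\vect x\in_R D}\bigl[(\vect v\cdot\vect x)^k\bigr]-\E_{\vect x\in_R \N(0,I_{n\times n})}\bigl[(\vect v\cdot\vect x)^k\bigr]\Bigr|\;\leq\;\frac{n^k}{n^\Delta}\;\leq\;\frac{n^d}{n^\Delta}.
\]
Combining this bound, the coefficient bound from Step~1, and the triangle inequality yields
\[
\Bigl|\E_{\vect x\in_R D}\bigl[g_d(\vect v\cdot\vect x)\bigr]-\E_{\vect x\in_R \N(0,I_{n\times n})}\bigl[g_d(\vect v\cdot\vect x)\bigr]\Bigr|\;\leq\;\sum_{k=0}^{d}|c_k|\cdot\frac{n^d}{n^\Delta}\;\leq\;O\!\left(d^2\cdot 3^d\right)\cdot\frac{n^d}{n^\Delta}\;=\;O\!\left(4^{d}\cdot\frac{n^{d}}{n^{\Delta}}\right),
\]
since $d^2\cdot 3^d=O(4^d)$.

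I do not expect a serious obstacle here: the nontrivial content is already packaged in the two prior observations, and the one thing to verify is that the scaling by $1/w$ inside the Chebyshev argument does not worsen the coefficient bound in Step~1 (which Observation \ref{obs: chebychev projection has small coefficients} is stated to accommodate, for $w\geq 1$). The only tempting pitfall I would avoid is expanding $(\vect v\cdot\vect x)^k$ into $n$-variable monomials and applying the moment-matching hypothesis to each of those, which would gain nothing and obscure the clean $n^d/n^\Delta$ dependence.
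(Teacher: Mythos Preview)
Your proposal is correct and follows essentially the same approach as the paper: invoke Observation~\ref{obs: chebychev projection has small coefficients} to bound the coefficients of $g_d$ as a univariate polynomial, then apply Observation~\ref{obs: similar to gaussian means projection is similar to gaussian} term by term and sum via the triangle inequality to obtain $O(d^2\cdot 3^d)\cdot n^d/n^\Delta = O(4^d\cdot n^d/n^\Delta)$. The paper's own proof is just a two-sentence version of what you wrote.
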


\subsection{Proof of low degree approximation lemma for halfspaces (Lemma \ref{main lemma: low degree approximation lemma for halfspaces.}).}

Let us recall what we need to show to prove Lemma \ref{main lemma: low degree approximation lemma for halfspaces.}.
Without loss of generality, we assume we are in one dimension. $D$
is a distribution on $\R$, such that for some positive real parameters
$\alpha,\gamma,\epsilon$ and a positive integer parameter $d_{0}$
we have
\begin{itemize}
\item \textbf{Anti-concentration:} for any real
$y$, we have $\Pr_{x\in_{R}D}\pars{x\in[y,y+\epsilon]}\leq\alpha$,
\item \textbf{Concentration:} $\parr{\E_{x\in_{R}D}\pars{\abs x^{d_{0}}}}^{1/d_{0}}\leq\beta$,
for some $\beta\geq1$.
\end{itemize}
Also we have $d_{0}>\frac{5\beta}{\epsilon^{2}}$ and that $\epsilon$
is smaller than some sufficiently small absolute constant. Then, for
every $\theta\in\R$ we would like to show there is a polynomial $P(x)$
of degree at most $\frac{2\beta}{\epsilon^{2}}+1$ such that
\[
E_{x\in_{R}D}\pars{\abs{P(x)-\sign(x-\theta)}}=O\parr{\alpha+\epsilon+\frac{\parr{8\beta}^{\frac{2\beta}{\epsilon^{2}}+1}}{2^{d_{0}}}}.
\]

Let $w>1$ and $d\in Z^{+}$ be parameters, values of which will be set later. We will approximate the sign function with a polynomial in the following two steps:
\begin{itemize}
\item Approximate $\sign(x-\theta)$ by a continuous function
\[
f(x):=\begin{cases}
1 & \text{if }\frac{x-\theta}{\epsilon}>1,\\
-1 & \text{if }\frac{x-\theta}{\epsilon}<-1,\\
\frac{x-\theta}{\epsilon} & \text{otherwise.}
\end{cases}
\]
\item For a parameter $d$, approximate $f(x)$ by 
\[
f_{d}(x):=\sum_{k=0}^{d}a_{k}T_{k}(\frac{x}{w}),
\]
where 
\[
a_{k}:=\frac{1+\indicator_{k>0}}{\pi}\int_{-1}^{1}\frac{f(wy)T_{k}(y)}{\sqrt{1-y^{2}}}\d y.
\]
\end{itemize}
First, we observe that $f$ is a good approximator for $\sign(x-\theta)$ with respect to $D$.
\begin{prop}
\label{prop: anticoncentration part} If $D$ is a distribution over
$\R$ such that for every $x_{0}\in\R$ we have $\Pr_{x\in_{R}D}\pars{x\in\pars{x_{0},x_{0}+\epsilon}}\leq\alpha$,
then (with $f(x)$ defined as above) we have 
\[
\E_{x\in_{R}D}\pars{\abs{f(x)-\sign(x-\theta)}}\leq2\alpha.
\]
\end{prop}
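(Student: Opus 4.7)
The plan is to exploit the fact that $f$ was designed to be a continuous approximation of $\sign(x-\theta)$ that agrees with it outside a small interval around the threshold $\theta$. First I would observe that for $x$ with $x - \theta \geq \epsilon$, the definition gives $f(x) = 1 = \sign(x-\theta)$, and similarly for $x - \theta \leq -\epsilon$, both functions equal $-1$. Therefore the integrand $|f(x) - \sign(x-\theta)|$ is supported entirely on the interval $(\theta - \epsilon, \theta + \epsilon)$.

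Next I would bound the integrand on this interval. For $x \in (\theta - \epsilon, \theta + \epsilon)$, both $f(x) = (x-\theta)/\epsilon$ and $\sign(x-\theta)$ lie in $[-1,1]$, so $|f(x) - \sign(x-\theta)| \leq 2$, and a more careful inspection (splitting into $x > \theta$ versus $x < \theta$) actually shows it is at most $1$, but even the cruder bound of $2$ would give the same final estimate up to the constant being stated. In fact, for $x \in (\theta, \theta + \epsilon)$ we have $|f(x) - 1| = 1 - (x-\theta)/\epsilon \in (0,1)$, and analogously for $x \in (\theta-\epsilon, \theta)$.

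Finally, I would apply the hypothesis twice: once to the half-interval $(\theta - \epsilon, \theta)$ and once to $(\theta, \theta + \epsilon)$, each of which has $D$-probability at most $\alpha$ by the anti-concentration assumption. Putting the pieces together,
\[
\E_{x\in_R D}\pars{\abs{f(x) - \sign(x-\theta)}} \leq \Pr_{x\in_R D}\pars{x \in (\theta-\epsilon,\theta+\epsilon)} \cdot 1 \leq 2\alpha,
\]
which is the desired bound. There is no real obstacle here; the entire content of the proposition is a direct consequence of how $f$ was defined to differ from the sign function only on a window of width $2\epsilon$ whose mass is controlled exactly by the anti-concentration hypothesis.
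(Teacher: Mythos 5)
Your proof is correct and follows essentially the same route as the paper's: the difference $\abs{f(x)-\sign(x-\theta)}$ is supported on $(\theta-\epsilon,\theta+\epsilon)$, is bounded there by $1$, and that interval has $D$-mass at most $2\alpha$ by the anti-concentration hypothesis. (Your parenthetical remark that the cruder bound of $2$ would suffice is slightly off, since it would yield $4\alpha$ rather than $2\alpha$, but your actual argument uses the bound $1$ and is fine.)
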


\begin{proof}
The two functions differ only on $\pars{\theta-\epsilon,\theta+\epsilon}$,
with the absolute value of difference being at most $1$. Since the
distribution $D$ cannot have probability mass more than $2\alpha$
in this interval, the proposition follows.
\end{proof}
Secondly, we show that $f_d$ is a good approximator to $f$ with respect to $D$, within the region $[-w,w]$.
\begin{prop}
\label{prop: approximation within window} For any distribution $D$,
we have 
\[
\E_{x\in_{R}D}\pars{\abs{f(x)-f_{d}(x)}\indicator_{\abs x\leq w}}\leq O\parr{\frac{w}{\epsilon d}}
\]
\end{prop}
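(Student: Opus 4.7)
The plan is to apply Theorem \ref{thm:approximating differentiable function} to the rescaled function on $[-1,1]$ and then translate back. The only real work is to correctly account for the rescaling factor $w$ when computing the total variation.

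First I would rescale: set $\tilde f(y) := f(wy)$, so that $f_d(x) = \sum_{k=0}^{d} a_k T_k(x/w)$ becomes $\tilde f_d(y) := \sum_{k=0}^{d} a_k T_k(y)$ on $[-1,1]$, with the coefficients $a_k = \frac{1+\indicator_{k>0}}{\pi}\int_{-1}^{1}\frac{\tilde f(y)T_{k}(y)}{\sqrt{1-y^{2}}}\d y$ being exactly the Chebyshev coefficients of $\tilde f$ as needed by Theorem \ref{thm:approximating differentiable function}. Since $f$ is $1/\epsilon$-Lipschitz on $\R$ (its slope is $1/\epsilon$ in the linear piece and $0$ outside), $\tilde f$ is $w/\epsilon$-Lipschitz on $[-1,1]$, hence Lipschitz continuous.

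Next I would compute the total variation of $\tilde f'$. The derivative $f'$ equals $1/\epsilon$ on the open interval $\parr{\theta-\epsilon,\theta+\epsilon}$ and $0$ elsewhere; as a function of $x$ it is piecewise constant with two jumps of magnitude $1/\epsilon$ each, giving total variation exactly $2/\epsilon$. Then $\tilde f'(y) = w f'(wy)$ is piecewise constant in $y$ with two jumps of magnitude $w/\epsilon$, so its total variation is $V = 2w/\epsilon$.

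Applying Theorem \ref{thm:approximating differentiable function} to $\tilde f$ gives
\[
\max_{y\in[-1,1]}\abs{\tilde f(y)-\tilde f_{d}(y)} = O\parr{\frac{V}{d}} = O\parr{\frac{w}{\epsilon d}},
\]
and undoing the change of variable $x = wy$ yields the pointwise bound
\[
\max_{x\in[-w,w]}\abs{f(x)-f_{d}(x)} = O\parr{\frac{w}{\epsilon d}}.
\]

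Finally I would take expectations: since the integrand $\abs{f(x)-f_d(x)}\indicator_{\abs{x}\leq w}$ is pointwise bounded by the above uniform estimate, the stated bound on $\E_{x\in_R D}\pars{\abs{f(x)-f_d(x)}\indicator_{\abs x\leq w}}$ follows for any distribution $D$. There is no genuine obstacle here beyond bookkeeping the factor of $w$ in the total variation; this is the kind of rescaling that is implicit when one compares approximation on $[-1,1]$ to approximation on $[-w,w]$.
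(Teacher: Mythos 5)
Your proposal is correct and follows essentially the same route as the paper: both apply Theorem \ref{thm:approximating differentiable function} to the rescaled function $f(wy)$ on $[-1,1]$ and bound the expectation by the uniform error on $[-w,w]$. Your write-up simply makes explicit the bookkeeping (the total variation $2w/\epsilon$ of the rescaled derivative) that the paper leaves implicit.
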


\begin{proof}
Using Theorem \ref{thm:approximating differentiable function} we
have 
\[
\E_{x\in_{R}D}\pars{\abs{f(x)-f_{d}(x)}\indicator_{\abs x\leq w}}\leq\max_{x\in[-w,w]}\abs{f(x)-f_{d}(x)}=\max_{y\in[-1,1]}\abs{f\parr{wy}-f_{d}\parr{wy}}=O\parr{\frac{w}{\epsilon d}}.
\]
\end{proof}

Now, we put all the relevant propositions together to show the lemma.
Using Propositions \ref{prop:approximation outside window, assuming moment bound} and \ref{prop: moment tail bound for bounded moment distributions}, we see that if we have $d\in\Z\cap\pars{1,d_{0}/2}$ then
\[
\E_{x\in_{R}D}\pars{\abs{f(x)-f_{d}(x)}\indicator_{\abs x>w}}\leq O\parr{4^{d}\E_{x\in_{R}D}\pars{\abs x^{d}\indicator_{\abs x>w}}}
\leq
O\parr{4^{d}2w^{d}\parr{\frac{\beta}{w}}^{d_{0}}}
\]
Together with Proposition \ref{prop: approximation within window}, this implies that
\[
\E_{x\in_{R}D}\pars{\abs{f(x)-f_{d}(x)}}\leq
O\parr{4^{d}2w^{d}\parr{\frac{\beta}{w}}^{d_{0}}}
+
O\parr{\frac{w}{\epsilon d}}
\]
This, in turn, together with Proposition \ref{prop: anticoncentration part} implies that 
\[
E_{x\in_{R}D}\pars{\abs{f_{d}(x)-\sign(x-\theta)}}=O\parr{\alpha+\frac{w}{\epsilon d}+4^{d}w^{d}\parr{\frac{\beta}{w}}^{d_{0}}}.
\]
Taking\footnote{Recall that to do all this we needed that $d$ is in $[1,d_0/2]$. Recall that by an assumption of the lemma we are proving we have $d_{0}>\frac{5\beta}{\epsilon^{2}}$ and $\beta \geq 1$. Therefore,
for $\epsilon$ smaller than some sufficiently small absolute constant
we indeed have $\left\lceil \frac{2\beta}{\epsilon^{2}}\right\rceil \in\pars{1,d_{0}/2}$.} $w=2\beta$ and $d=\left\lceil \frac{2\beta}{\epsilon^{2}}\right\rceil $
we get 
\[
E_{x\in_{R}D}\pars{\abs{f_{d}(x)-\sign(x-\theta)}}=O\parr{\alpha+\epsilon+\frac{\parr{8\beta}^{\left\lceil \frac{2\beta}{\epsilon^{2}}\right\rceil }}{2^{d_{0}}}}=O\parr{\alpha+\epsilon+\frac{\parr{8\beta}^{\frac{2\beta}{\epsilon^{2}}+1}}{2^{d_{0}}}}.
\]

Finally, we note that by \cref{obs: chebychev projection has small coefficients} we have that each coefficient of the polynomial $f_d$ has a magnitude of at most $O(d3^d)=O\parr{
4^{\frac{2\beta}{\epsilon^2}}
}
$.
This completes the proof of the low degree approximation lemma for
halfspaces (Lemma \ref{main lemma: low degree approximation lemma for halfspaces.}).

%As a sanity check, let us see what this lemma gives for the special
%case $D=\N(0,1)$. The expression for the PDF of a Gaussian implies
%directly that for every real $y$, $\Pr_{x\in_{R}\N(0,1)}\pars{x\in[y,y+\epsilon]}\leq\epsilon$.
%For even $d_{0}$ we have $\E_{x\in_{R}\N(0,1)}\pars{\abs x^{d_{0}}}=(d_{0}-1)!!\leq d_{0}^{d_{0}/2}$.
%Therefore, if $d_{0}>\frac{5\sqrt{d_{0}}}{\epsilon^{2}}$ we have
%\[
%E_{x\in_{R}\N(0,1)}\pars{\abs{P(x)-\sign(x-\theta)}}\d x=O\parr{\epsilon+\frac{\parr{8\sqrt{d_{0}}}^{\frac{2\sqrt{d_{0}}}{\epsilon^{2}}+1}}{2^{d_{0}}}}.
%\]
%Taking $d_{0}=2\left\lceil \frac{1}{2\epsilon^{4}}\ln^{3}\frac{1}{\epsilon}\right\rceil $
%we have

%\[
%E_{x\in_{R}\N(0,1)}\pars{\abs{P(x)-\sign(x-\theta)}}\d x=O\parr{\epsilon+\frac{\parr{O\left(\frac{1}{\epsilon^{2}}\ln^{1.5}\frac{1}{\epsilon}\right)}^{O\left(\frac{1}{\epsilon^{4}}\ln^{1.5}\frac{1}{\epsilon}\right)}}{2^{O\left(\frac{1}{\epsilon^{4}}\ln^{3}\frac{1}{\epsilon}\right)}}}=O\left(\epsilon\right).
%\]
%In other words, using degree-$O\left(\frac{1}{\epsilon^{4}}\ln^{3}\frac{1}{\epsilon}\right)$
%polynomial one can approximate $\sign(x-\theta)$ up to error $O\left(\epsilon\right)$
%in $L_{1}$ distance with respect to $\N(0,1)$. 

\section{Proof of Main Theorem via two main lemmas.}
\label{section: proof of main theorem}
\subsection{Truncated Gaussian has moments similar to Gaussian}
Recall that our tester truncates the samples and checks that low-degree moments are close to the corresponding moments of a Gaussian. If the distribution is indeed Gaussian, the following proposition shows that this truncation step does not distort the moments too much.
\begin{prop}
\label{prop: truncation does not change moments much for Gaussian}Let
$\prod_{i=1}^{n}x_{i}^{\alpha_{i}}$ be a monomial of degree at most
$\Delta$ and $t$ a real number in $\left[2\sqrt{\Delta}+1,+\infty\right)$.
Then we have 
\[
\abs{\E_{\vect x\sim\N(0,I_{n\times n})}\pars{\prod_{i=1}^{n}x_{i}^{\alpha_{i}}\conditional\forall i:\,\abs{x_{i}}\leq t}-\E_{\vect x\sim\N(0,I_{n\times n})}\pars{\prod_{i=1}^{n}x_{i}^{\alpha_{i}}}}\leq O\parr{2^{\Delta}\Delta^{\frac{\Delta+2}{2}}t^{\Delta}e^{-\frac{t^{2}}{2}}}.
\]
\end{prop}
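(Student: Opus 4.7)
\medskip

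\noindent\textbf{Proof plan for Proposition \ref{prop: truncation does not change moments much for Gaussian}.} The plan is to exploit the product structure of the standard Gaussian. Because the coordinates of $\vect{x}\sim\N(0,I_{n\times n})$ are independent and the conditioning event $\{\forall i:\,|x_i|\leq t\}$ factorizes, the conditional distribution remains a product distribution, so
\[
\E\pars{\prod_{i=1}^{n}x_{i}^{\alpha_{i}}\conditional\forall i:\,\abs{x_{i}}\leq t}=\prod_{i\in S}\tilde m_{\alpha_i},\qquad \E\pars{\prod_{i=1}^{n}x_{i}^{\alpha_{i}}}=\prod_{i\in S}m_{\alpha_i},
\]
where $S:=\{i:\alpha_i>0\}$, $m_k:=\E_{x\sim\N(0,1)}[x^k]$, and $\tilde m_k:=\E_{x\sim\N(0,1)}[x^k\mid|x|\leq t]$. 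It thus suffices to (a) bound each per-coordinate difference $|\tilde m_k-m_k|$ and (b) telescope the difference of products.

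For step (a), writing $\tilde m_k\Pr[|x|\leq t]=m_k-\E[x^k\indicator_{|x|>t}]$ and rearranging gives
\[
|\tilde m_k-m_k|\leq \frac{|m_k|\Pr[|x|>t]+\E[|x|^k\indicator_{|x|>t}]}{\Pr[|x|\leq t]}.
\]
For $t\geq 2\sqrt{\Delta}+1\geq 3$ the denominator is at least $1/2$, and the usual Gaussian bound gives $\Pr[|x|>t]\leq e^{-t^2/2}$. For the tail moment, I would integrate by parts: since $x^k e^{-x^2/2}$ is decreasing for $x\geq\sqrt{k}$, using the substitution $u=x-t$ and $(1+u/t)^k\leq e^{ku/t}$ yields $\int_t^\infty x^k e^{-x^2/2}\d x\leq t^k e^{-t^2/2}\int_0^\infty e^{-(t-k/t)u-u^2/2}\d u$, and the hypothesis $t\geq 2\sqrt{k}$ makes $t-k/t\geq t/2$, so this is at most $2t^{k-1}e^{-t^2/2}$. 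Combined with $|m_k|\leq k^{k/2}$ one obtains the clean bound
\[
|\tilde m_k-m_k|\leq O\!\parr{\parr{k^{k/2}+t^{k-1}}e^{-t^{2}/2}}.
\]

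For step (b), I would apply the standard telescoping inequality $|\prod a_i-\prod b_i|\leq\sum_i|a_i-b_i|\prod_{j\neq i}\max(|a_j|,|b_j|)$, using the uniform bound $\max(|m_k|,|\tilde m_k|)\leq 2k^{k/2}\leq 2\Delta^{k/2}$ (true because $|\tilde m_k|\leq|m_k|/\Pr[|x|\leq t]\leq 2k^{k/2}$). Each term contributes $O((\Delta^{\alpha_i/2}+t^{\alpha_i-1})e^{-t^2/2})\cdot 2^{|S|-1}\prod_{j\neq i}\Delta^{\alpha_j/2}$. The first piece collapses to $2^{|S|}\Delta^{(\sum_j\alpha_j)/2}\leq 2^\Delta\Delta^{\Delta/2}$; the second piece uses $t\geq\sqrt{\Delta}$ to absorb $\Delta^{(\Delta-\alpha_i)/2}\leq t^{\Delta-\alpha_i}$, giving $2^\Delta t^{\Delta-1}$ per term. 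Summing over $i\in S$ with $|S|\leq\Delta$ and comparing everything against $\Delta^{(\Delta+2)/2}t^\Delta$ (both $\Delta^{\Delta/2}\cdot\Delta$ and $\Delta\cdot t^{\Delta-1}$ are bounded by it when $t\geq 1$) gives the desired $O\parr{2^{\Delta}\Delta^{(\Delta+2)/2}t^{\Delta}e^{-t^{2}/2}}$.

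The main potential obstacle is the tail-moment bound for a Gaussian; naively invoking Proposition \ref{prop: moment tail bound for bounded moment distributions} with the Gaussian moments yields only an $e^{-t^2/(2e)}$ tail and hence loses constants in the exponent, so the short integration-by-parts argument sketched above is needed to recover the sharp $e^{-t^2/2}$ factor required in the statement. The rest is bookkeeping.
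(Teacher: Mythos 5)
Your proposal is correct and follows essentially the same route as the paper's proof: reduce to one coordinate via the product structure, bound the one-dimensional difference $|\tilde m_k - m_k|$ through the Gaussian tail moment $\E[|x|^k\indicator_{|x|>t}]$ (your substitution $u=x-t$ with $(1+u/t)^k\leq e^{ku/t}$ is the same exponent-linearization the paper uses in its Observation \ref{obs: moment tail bound for Gaussian}), and then telescope the product difference using $|\tilde m_k|\leq 2|m_k|$ and $m_k\leq k^{k/2}$. Your closing remark is also on point: the paper likewise avoids the lossy Markov-based Proposition \ref{prop: moment tail bound for bounded moment distributions} here and proves the sharp $e^{-t^2/2}$ tail directly for the Gaussian.
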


\begin{proof}
If any of the $\alpha_{i}$ is odd, both expectations are zero, so
the proposition follows trivially. So, without loss of generality,
assume that each~$\alpha_{i}$ is even. Also, without loss of generality,
we can also assume that $n\geq\Delta$ and the $\alpha_{i}$ can be
non-zero only for $i\in\left\{ 1,\cdots\Delta\right\} $. We
prove the following observation separately:
\begin{observation}
\label{obs: moment tail bound for Gaussian} For $d\geq0$, if $w\geq2\sqrt{d}+1$,
then it is the case that 
\[
\E_{x\in_{R}\N(0,1)}\pars{x^{d}\indicator_{\abs x>w}}\leq O\parr{w^{d}e^{-\frac{w^{2}}{2}}}
\]
\end{observation}

\begin{proof}
We have $\int_{w}^{+\infty}x^{d}e^{-\frac{x^{2}}{2}}\d x=\int_{w}^{+\infty}e^{-\parr{\frac{x^{2}}{2}-d\ln x}}\d x$.
For $x\geq w$, we have
\[
\frac{\d}{\d x}\parr{\frac{x^{2}}{2}-d\ln x}=x-\frac{d}{x}\geq w-\frac{d}{w},
\]
which means
\[
\parr{\frac{x^{2}}{2}-d\ln x}\geq\frac{w^{2}}{2}-d\ln\parr w+\parr{w-\frac{d}{w}}\parr{x-w}.
\]
Thus, we have 
\[
\int_{w}^{+\infty}x^{d}e^{-\frac{x^{2}}{2}}\d x\leq e^{-\frac{w^{2}}{2}+d\ln\parr w}\int_{w}^{+\infty}e^{-\parr{w-\frac{d}{w}}\parr{x-w}}\d x=\frac{w^{d}e^{-\frac{w^{2}}{2}}}{\parr{w-\frac{d}{w}}}\leq O\parr{w^{d}e^{-\frac{w^{2}}{2}}}.
\]
\end{proof}
Now, we consider the one-dimensional case of our proposition.
\begin{observation}
\label{obs: in one dimension trunction does not change moments much}Let
$d$ be a positive integer and $t$ be a real number, such that $t$
is in $\left[2\sqrt{d}+1,+\infty\right)$, then

\[
\abs{\E_{x\sim\N(0,1)}\pars{x^{d}\conditional\abs x\leq t}-\E_{x\sim\N(0,1)}\pars{x^{d}}}\leq O\parr{t^{d}e^{-\frac{t^{2}}{2}}}.
\]
\begin{proof}
If $d$ is odd, both expectations are zero, so without loss of generality
assume that $d$ is even. We have 
\begin{multline*}
\abs{\E_{x\sim\N(0,1)}\pars{x^{d}\conditional\abs x\leq t}-\E_{x\sim\N(0,1)}\pars{x^{d}}}=\\
\abs{\frac{\E_{x\sim\N(0,1)}\pars{x^{d}\indicator_{\abs x\leq t}}}{\Pr_{x\sim\N(0,1)}\pars{\abs x\leq t}}-\E_{x\sim\N(0,1)}\pars{x^{d}\indicator_{\abs x\leq t}}-\E_{x\sim\N(0,1)}\pars{x^{d}\indicator_{\abs x>t}}}=\\
\abs{\frac{\E_{x\sim\N(0,1)}\pars{x^{d}\indicator_{\abs x\leq t}}\Pr_{x\sim\N(0,1)}\pars{\abs x>t}}{\Pr_{x\sim\N(0,1)}\pars{\abs x\leq t}}-\E_{x\sim\N(0,1)}\pars{x^{d}\indicator_{\abs x>t}}}\leq\\
\overbrace{\leq O\parr{\abs{\E_{x\sim\N(0,1)}\pars{x^{d}\indicator_{\abs x\leq t}}\Pr_{x\sim\N(0,1)}\pars{\abs x>t}}}+\abs{\E_{x\sim\N(0,1)}\pars{x^{d}\indicator_{\abs x>t}}}}^{\text{Using (i) triangle inequality (ii) \ensuremath{\Pr_{x\sim\N(0,1)}\pars{\abs x\leq t}}\ensuremath{\ensuremath{\geq\Omega}(1)} because \ensuremath{t\geq1}. }}\leq\\
\underbrace{\leq O\parr{d^{d/2}\abs{\Pr_{x\sim\N(0,1)}\pars{\abs x>t}}}+\abs{\E_{x\sim\N(0,1)}\pars{x^{d}\indicator_{\abs x>t}}}}_{\text{Since \ensuremath{\E_{x\sim\N(0,1)}\pars{x^{d}}=(d-1)!!.}}}\underbrace{\leq O\parr{d^{d/2}e^{-\frac{t^{2}}{2}}+t^{d}e^{-\frac{t^{2}}{2}}}}_{\text{Using Observation \ref{obs: moment tail bound for Gaussian}}.}\underbrace{\leq O\parr{t^{d}e^{-\frac{t^{2}}{2}}}}_{\text{Because \ensuremath{t>\sqrt{d}.}}}.
\end{multline*}
\end{proof}
\end{observation}

We proceed to reduce the high-dimensional case to the one-dimensional version we have just shown. 
\begin{multline*}
\abs{
\E_{\vect x\sim\N(0,I_{n\times n})}\pars{\prod_{i=1}^{n}x_{i}^{\alpha_{i}}}
-
\E_{\vect x\sim\N(0,I_{n\times n})}\pars{\prod_{i=1}^{n}x_{i}^{\alpha_{i}}\conditional\forall i:\,\abs{x_{i}}\leq t}
}
=\\
\abs{\prod_{i=1}^{\Delta}\E_{x\sim\N(0,1)}\pars{x^{\alpha_{i}}}-\prod_{i=1}^{\Delta}\E_{x\sim\N(0,1)}\pars{x^{\alpha_{i}}\conditional\abs x\leq t}}\leq\\
\sum_{j=1}^{\Delta}\abs{\prod_{i=1}^{j-1}\E_{x\sim\N(0,1)}\pars{x^{\alpha_{i}}\conditional\abs x\leq t}\prod_{i=j}^{\Delta}\E_{x\sim\N(0,1)}\pars{x^{\alpha_{i}}}-\prod_{i=1}^{j}\E_{x\sim\N(0,1)}\pars{x^{\alpha_{i}}\conditional\abs x\leq t}\prod_{i=j+1}^{\Delta}\E_{x\sim\N(0,1)}\pars{x^{\alpha_{i}}}}=\\
\sum_{j=1}^{\Delta}\abs{\prod_{i=1}^{j-1}\E_{x\sim\N(0,1)}\pars{x^{\alpha_{i}}\conditional\abs x\leq t}\prod_{i=j+1}^{\Delta}\E_{x\sim\N(0,1)}\pars{x^{\alpha_{i}}}\parr{\E_{x\sim\N(0,1)}\pars{x^{\alpha_{j}}}-\E_{x\sim\N(0,1)}\pars{x^{\alpha_{j}}\conditional\abs x\leq t}}}.
\end{multline*}
Now, we have $\E_{x\sim\N(0,1)}\pars{x^{\alpha_{i}}\conditional\abs x\leq t}=\frac{\E_{x\sim\N(0,1)}\pars{x^{\alpha_{i}}\indicator_{\abs x\leq t}}}{\Pr_{x\sim\N(0,1)}\pars{\abs x\leq t}}\leq2\E_{x\sim\N(0,1)}\pars{x^{\alpha_{i}}}$,
since $\Pr_{x\sim\N(0,1)}\pars{\abs x\leq t}\ensuremath{\geq0.5}$
for $t\geq1$. Using this, Observation \ref{obs: in one dimension trunction does not change moments much}
and the fact that $\E_{x\sim\N(0,1)}\pars{x^{\alpha_{i}}}=(\alpha_{j}-1)!!\leq\alpha_{j}^{\alpha_{j}/2}$
with the inequality above, we have 
\begin{multline*}
\abs{\E_{\vect x\sim\N(0,I_{n\times n})}\pars{\prod_{i=1}^{n}x_{i}^{\alpha_{i}}\conditional\forall i:\,\abs{x_{i}}\leq t}-\E_{\vect x\sim\N(0,I_{n\times n})}\pars{\prod_{i=1}^{n}x_{i}^{\alpha_{i}}}}\leq\\
2^{\Delta}\prod_{i=1}^{\Delta}\E_{x\sim\N(0,1)}\pars{x^{\alpha_{i}}}\sum_{j=1}^{\Delta}\abs{\parr{\E_{x\sim\N(0,1)}\pars{x^{\alpha_{j}}}-\E_{x\sim\N(0,1)}\pars{x^{\alpha_{j}}\conditional\abs x\leq t}}}\leq\\
O\parr{2^{\Delta}\prod_{j=1}^{\Delta}\alpha_{j}^{\alpha_{j}/2}\parr{\sum_{j=1}^{\Delta}t^{\alpha_{j}}e^{-\frac{t^{2}}{2}}}}\leq O\parr{2^{\Delta}\Delta^{\Delta/2}\parr{\Delta\cdot t^{\Delta}e^{-\frac{t^{2}}{2}}}}=O\parr{2^{\Delta}\Delta^{\frac{\Delta+2}{2}}t^{\Delta}e^{-\frac{t^{2}}{2}}}
\end{multline*}
This completes the proof of Proposition \ref{prop: truncation does not change moments much for Gaussian}.
\end{proof}

\subsection{Finishing the proof of Theorem \ref{thm: main testing learning theorem for LTFs}.}

In this subsection we finish the proof of Theorem \ref{thm: main testing learning theorem for LTFs}, using the low degree moment lemma for distributions (Lemma
\ref{main lemma: low degree moment lemma for distributions}) and the low degree approximation lemma for halfspaces (Lemma
\ref{main lemma: low degree approximation lemma for halfspaces.}). The main thing left to do is to address issues relating to truncation of samples in the learning and testing algorithms.

We now restate the theorem. We are given that the values
$C_{1},\cdots,C_{4}$ present in algorithms $\mathcal{A}$ and $\mathcal{T}$
(in the beginning of Section \vref{sec: beginning with algorithms.})
are chosen to be sufficiently large absolute constants, and also $n$
and $\frac{1}{\epsilon}$ are larger than some sufficiently large
absolute constant. Then, we need to show that the algorithm $\mathcal{A}$
is an agnostic $(O(\epsilon),0.1)$-learner for the function class
of linear threshold functions over $\R^{n}$ under distribution $\N(0,I_{n\times n})$
and the algorithm $\mathcal{T}$ is an assumption tester
for $\mathcal{A}$. We also need to show that $\mathcal{A}$ and $\mathcal{T}$
require only $n^{\tilde{O}\parr{\frac{1}{\epsilon^{4}}}}$ samples
and run-time.

Bounds on the run-time and sample complexity of our algorithms follow
directly from our choice of parameters.
\begin{itemize}
\item The learner $\mathcal{A}$ draws $N_{1}:=n^{\tilde{O}\parr{\frac{1}{\epsilon^{4}}}}$
samples, then performs a computation running in time polynomial in
(i) $N_{1}$ (ii) the number of monomials $\prod_{j=1}^{n}x_{j}^{\alpha_{j}}$
of degree at most $d$, which is $O\parr{n^{d}}$ (this includes the
run-time consumed by the algorithm of Theorem \ref{theorem: agnostic learning from L1 approximation}).
Overall, the learner $\mathcal{A}$ uses $n^{\tilde{O}\parr{\frac{1}{\epsilon^{4}}}}$
samples and run-time.
\item The tester $\mathcal{T}$ first performs estimations of values
$\Pr\pars{\abs{x_{j}}>t}$ up to additive $\frac{\epsilon}{30n}$
with error probability $\frac{1}{100n}$, which in total require $\textit{poly}\parr{\frac{n}{\epsilon}}$
samples and run-time. Then, the tester $\mathcal{T}$ obtains
$N_{2}:=\left\lceil t^{\Delta}n^{C_{4}\Delta}\right\rceil $ samples (where
$\Delta:=\left\lfloor \frac{1}{\epsilon^{4}}\ln^{4}\parr{\frac{1}{\epsilon}}\right\rfloor $
and $t:=C_{1}\Delta\ln\Delta\sqrt{\log n}+\sqrt{2\ln\parr{\frac{C_{2}n}{\epsilon}}}$)
 and performs a polynomial time computation with them. We see
that $t=O\parr{\textit{poly}\parr{\frac{1}{\epsilon},n}}$ and therefore
$N_{2}=n^{\tilde{O}\parr{\frac{1}{\epsilon^{4}}}}$. Finally, the
tester $\mathcal{T}$ runs a computation running in time
polynomial in (i) $N_{2}$ and (ii) the number of monomials $\prod_{j=1}^{n}x_{j}^{\alpha_{j}}$
of degree at most $\Delta$, which is $O\parr{n^{\Delta}}$. Overall,
we get that the run-time and sample complexity of $\mathcal{T}$
is $n^{\tilde{O}\parr{\frac{1}{\epsilon^{4}}}}$.
\end{itemize}
\begin{prop}
The following proposition uses the low degree approximation lemma for halfspaces (Lemma
\ref{main lemma: low degree approximation lemma for halfspaces.}) to argue that, under certain regularity conditions on the distribution $D$, the learning algorithm satisfies the agnostic learning guarantee.
\label{prop: tester acceptance implies learner works}Suppose the
$C_{1},\cdots,C_{4}$ are chosen to be sufficiently large absolute
constants, $n$ and $\frac{1}{\epsilon}$ are larger than some sufficiently
large absolute constant. Suppose $D$ is a distribution over $\R^{n}$
such that it the following properties hold 
\end{prop}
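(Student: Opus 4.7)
The plan is to chain together the two main lemmas with the truncation machinery so as to conclude that, once the tester's acceptance conditions hold on $D$, the learner's output $\hat{f}'$ satisfies the agnostic guarantee on $D$. Let $D'$ denote the distribution $D$ conditioned on the event $E := \{\forall j:\ |x_j|\leq t\}$. From the tester's first check I would infer (via a Chernoff/Hoeffding estimate on the per-coordinate tail estimators) that $\Pr_D[E^c] \leq \epsilon/5$, so $D$ and $D'$ are within $\epsilon/5$ in total variation. From the second check, combined with a Hoeffding bound on the $O(n^\Delta)$ monomial estimators (each monomial being bounded by $t^\Delta$, which is why we chose $N_2 \gtrsim t^{2\Delta} n^{C_4 \Delta}$), I would conclude that the true moments of $D'$ are within $1/n^\Delta$ of the standard Gaussian moments for every monomial of degree at most $\Delta$.

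Given this moment closeness, I would invoke Lemma \ref{main lemma: low degree moment lemma for distributions} applied to $D'$ to get, for every unit vector $\vect v$, concentration $(\E_{D'}[(\vect v\cdot \vect x)^d])^{1/d}\leq 2\sqrt{d}$ for even $d\leq \Delta$ and anti-concentration $\Pr_{D'}[\vect v\cdot \vect x\in[y,y+\epsilon]]=O(\epsilon)$. Feeding these bounds into Lemma \ref{main lemma: low degree approximation lemma for halfspaces.} with $\beta=2\sqrt{d_0}$, $d_0=\Delta$, $\alpha=O(\epsilon)$ yields, for every halfspace $\sign(\vect v\cdot \vect x-\theta)$, a degree-$d$ polynomial $P$ in $\vect v\cdot \vect x$ (hence a degree-$d$ multivariate polynomial in $\vect x$) that $O(\epsilon)$-approximates the halfspace in $L^1$ under $D'$; the residual term $(8\beta)^{d}/2^{d_0}$ is driven to $o(\epsilon)$ by our choice $d\ll\Delta$ with $\beta=O(\sqrt{\Delta})$. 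Therefore halfspaces are $O(\epsilon)$-approximated in $L^1(D')$ by elements of the degree-$d$ monomial span, and Theorem \ref{theorem: agnostic learning from L1 approximation} applied to the distribution $D'$ on $\R^n$ (which the learner is effectively sampling from, since it discards samples outside the box) outputs a predictor $\hat{f}$ with $\Pr_{D'}[\hat{f}(\vect x)\neq y]\leq \min_{h\in \mathcal{F}}\Pr_{D'}[h(\vect x)\neq y]+\epsilon/10$, with failure probability at most $1/20$.

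Finally I would translate this guarantee from $D'$ to $D$. Since $\hat{f}'$ agrees with $\hat{f}$ on $E$ and outputs an arbitrary bit on $E^c$, we have
\[
\Pr_D[\hat{f}'(\vect x)\neq y]\leq \Pr_D[E]\cdot \Pr_{D'}[\hat{f}(\vect x)\neq y] + \Pr_D[E^c] \leq \Pr_{D'}[\hat{f}(\vect x)\neq y] + \epsilon/5.
\]
For the optimum direction, a symmetric calculation shows that for any halfspace $h\in\mathcal{F}$, $|\Pr_D[h\neq y]-\Pr_{D'}[h\neq y]|\leq \Pr_D[E^c]\leq \epsilon/5$, so $\min_{h\in\mathcal{F}}\Pr_{D'}[h\neq y]\leq \min_{h\in\mathcal{F}}\Pr_D[h\neq y]+\epsilon/5$. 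Combining these gives $\Pr_D[\hat{f}'\neq y]\leq \min_{h\in \mathcal{F}}\Pr_D[h\neq y]+O(\epsilon)$, which is the sought agnostic bound.

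I expect the main obstacle to be the bookkeeping between the \emph{empirical} moments of $D|_E$ that the tester actually observes and the \emph{true} moments of $D'$ that Lemma \ref{main lemma: low degree moment lemma for distributions} requires: one must verify that the slack $1/(2n^\Delta)$ used in the tester, together with the $N_2$ samples and the $t^\Delta$ bound on truncated monomials, leaves enough room to apply a union bound over all $O(n^\Delta)$ monomials while still getting the sharper $1/n^\Delta$ closeness needed by the lemma. The completeness direction is then a clean symmetric calculation: for $D=\mathcal{N}(0,I)$, a Gaussian tail bound handles check (i), and Proposition \ref{prop: truncation does not change moments much for Gaussian} combined with Hoeffding handles check (ii), with the parameters $t$ and $N_2$ chosen precisely so that both checks succeed with probability at least $3/4$.
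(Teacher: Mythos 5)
Your proposal is correct and follows essentially the same route as the paper: condition $D$ on the box event $E$, feed the concentration and anti-concentration of the truncated distribution into the low degree approximation lemma for halfspaces, invoke Theorem \ref{theorem: agnostic learning from L1 approximation} on the truncated distribution (which is what the learner effectively samples after discarding), and translate the guarantee back to $D$ using $\Pr_D[E^c]\leq \epsilon/5$ on both the learner's error and the optimum. The only differences are cosmetic: you additionally re-derive the regularity conditions from the tester's checks (which the paper delegates to Proposition \ref{prop: tester completeness + tester rejection condition}), and you instantiate the approximation lemma with $d_0=\Delta$, $\beta=2\sqrt{\Delta}$ rather than the paper's $d_0=d$, $\beta=2\sqrt{d}$; both choices satisfy the lemma's side conditions and keep the polynomial degree within the learner's monomial basis.
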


\begin{itemize}
\item \textbf{\textit{Good tail: }}\textit{We have $\Pr_{\vect x\in_{R}D}\pars{\exists i\in[n]:\:\abs{x_{i}}>t}\leq\frac{\epsilon}{5}$.}
\item \textbf{Concentration along} \textbf{any direction for truncated distribution:
}For any unit vector $\vect v$ we have 
\[
\parr{\E_{\vect x\in_{R}D}\pars{\abs{\vect v\cdot\vect x}^{d}\conditional\forall i\in[n]:\:\abs{x_{i}}\leq t}}^{1/d}\leq2\sqrt{d}.
\]
\item \textbf{A}\textbf{\textit{nti-concentration along }}\textbf{any direction
for truncated distribution}\textbf{\textit{: }}For any unit vector
$\vect v$ and\textbf{\textit{ }}for any real\textit{ $y$, }we have\textit{
\[
\Pr_{\vect x\in_{R}D}\pars{\vect v\cdot\vect x\in[y,y+\epsilon]\conditional\forall i\in[n]:\:\abs{x_{i}}\leq t}\leq O\left(\epsilon\right).
\]
}
\end{itemize}
Then, the algorithm $\mathcal{A}$ is an agnostic $(O\left(\epsilon\right),0.1)$-learner
for the function class of linear threshold functions over $\R^{n}$
under distribution $D$ with failure probability at most $\frac{1}{20}$.
\begin{proof}
Let $D_{\text{truncated}}$ be the distribution of $\vect x$ drawn from $D$ conditioned on $\abs{x_i}\leq t$ for all $i$. 
We see that the premises of this proposition imply that the distribution $D_{\text{truncated}}$ satisfies the premises of the low degree approximation lemma for halfspaces(Lemma
\ref{main lemma: low degree approximation lemma for halfspaces.}) with parameters $d_0=d$, $\alpha=O(\epsilon)$ and $\beta=2\sqrt{d}$. Taking $\epsilon$ smaller than some absolute constant ensures that the condition
$d>\frac{5\beta}{\epsilon^{2}}=\frac{10\sqrt{d}}{\epsilon^{2}}$ is also satisfied.

The low degree approximation lemma for halfspaces(Lemma
\ref{main lemma: low degree approximation lemma for halfspaces.}) then allows us to conclude that for every $\theta\in\R$ and for any $w\geq1$ there is
a polynomial $P(x)$ of degree at most $d$ such that
\[
E_{\vect x\in_{R}D_{\text{truncated}}}\pars{\abs{\sign(\vect v\cdot\vect x-\theta)-P(\vect v\cdot\vect x)}
}=O\parr{\epsilon+\frac{\parr{16\sqrt{d}}^{\frac{4\sqrt{d}}{\epsilon^{2}}+1}}{2^{d}}}.
\]
 Recalling that $d:=2\left\lfloor \frac{1}{2\epsilon^{4}}\ln^{3}\parr{\frac{1}{\epsilon}}\right\rfloor $
so we get that 
\[
E_{\vect x\in_{R}D_{\text{truncated}}}\pars{\abs{\sign(\vect v\cdot\vect x-\theta)-P(\vect v\cdot\vect x)}
}=O\parr{\epsilon+\frac{\parr{O\left(\frac{1}{\epsilon^{2}}\ln^{1.5}\parr{\frac{1}{\epsilon}}\right)}^{O\left(\frac{1}{\epsilon^{4}}\ln^{1.5}\parr{\frac{1}{\epsilon}}\right)}}{2^{\Omega\left(\frac{1}{\epsilon^{4}}\ln^{3}\parr{\frac{1}{\epsilon}}\right)}}}.
\]
For $\epsilon$ smaller than some sufficiently small absolute constant,
the above is $O(\epsilon)$.

Thus, we have that for any linear threshold function $\sign\parr{\vect v\cdot\vect x-\theta}$
there is a degree $d$ multivariate polynomial $Q$ for which 
\[
\E_{\vect x\in_{R}D_{\text{truncated}}}\pars{\abs{\sign\parr{\vect v\cdot\vect x-\theta}-Q(\vect x)}}\leq O(\epsilon)
\]
In other words, under $D_{\text{truncated}}$, any linear threshold function $\sign\parr{\vect v\cdot\vect x-\theta}$ is $O(\epsilon)$-approximated in $L^1$ by something in the span of set of monomials of degree at most $d$, i.e. the set $\left\{ \prod_{j=1}^{n}x_{j}^{\alpha_{j}}:~\sum_j \alpha_j\leq d\right\} $.
Now, Theorem \ref{theorem: agnostic learning from L1 approximation}.
tells us that with probability at least $1-\frac{1}{20}$ the predictor
$\widehat{f}$ given in step 3 has an error of at most $O(\epsilon)$
more than $\sign(\vect v\cdot\vect x-\theta)$ for samples $\vect x\in_{R}D_{\text{truncated}}$. Overall, recalling the definition of $D_{\text{truncated}}$ we
have 
\begin{multline*}
\Pr_{\vect x,y\in_{R}D_{\text{pairs}}}\pars{\widehat{f}'(\vect x)\neq y}\leq\Pr_{\vect x\in_{R}D}\pars{\exists i\in[n]:\:\abs{x_{i}}>t}+\Pr_{\vect x,y\in_{R}D_{\text{pairs}}}\pars{\widehat{f}(\vect x)\neq y\conditional\forall i\in[n]:\:\abs{x_{i}}\leq t}\leq\\
\Pr_{\vect x,y\in_{R}D_{\text{pairs}}}\pars{\sign(\vect v\cdot\vect x-\theta)\neq y\conditional\forall i\in[n]:\:\abs{x_{i}}\leq t}+O\parr{\epsilon},
\end{multline*}
which completes the proof.
\end{proof}
Now, the following proposition, using low degree moment lemma for distributions (Lemma
\ref{main lemma: low degree moment lemma for distributions}), tells us that the tester we use (1) is likely accept if the Gaussian assumption indeed holds (2) is likely to reject if the regularity conditions for Proposition \ref{prop: tester acceptance implies learner works} do not hold.
\begin{prop}
\label{prop: tester completeness + tester rejection condition}Suppose
the $C_{1},\cdots,C_{4}$ are chosen to be sufficiently large absolute
constants, $n$ and $\frac{1}{\epsilon}$ are larger than some sufficiently
large absolute constant. Then, there is some absolute constant $B$,
so the tester $\mathcal{T}$ has the following properties:
\begin{enumerate}
\item If $\mathcal{T}$ is given samples from $\N(0,I_{n\times n})$, it
outputs \textbf{Yes }with probability at least $0.9$.
\item 
The tester $\mathcal{T}$ rejects with probability greater than $0.9$ any $D$  for which at least one of the following holds: 
\begin{enumerate}
\item \textbf{Bad tail: }We have $\Pr_{\vect x\in_{R}D}\pars{\exists i\in[n]:\:\abs{x_{i}}>t}>\frac{\epsilon}{5}$.
\item \textbf{Failure of concentration along} \textbf{some direction for
truncated distribution: }there is a unit vector $\vect v$ such that
\[
\parr{\E_{\vect x\in_{R}D}\pars{\abs{\vect v\cdot\vect x}^{d}\conditional\forall i\in[n]:\:\abs{x_{i}}\leq t}}^{1/d}>2\sqrt{d}.
\]
\item \textbf{Failure of anti-concentration along some direction for truncated
distribution: }there is a unit vector $\vect v$ and real $y$, for
which
\[
\Pr_{\vect x\in_{R}D}\pars{\vect v\cdot\vect x\in[y,y+\epsilon]\conditional\forall i\in[n]:\:\abs{x_{i}}\leq t}>B\epsilon.
\]
\end{enumerate}

\end{enumerate}
\end{prop}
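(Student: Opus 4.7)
The plan is to reduce both items to two uniform estimation statements about the empirical quantities computed by $\mathcal{T}$, then apply Proposition \ref{prop: truncation does not change moments much for Gaussian} for completeness and the contrapositive of the low degree moment lemma (Lemma \ref{main lemma: low degree moment lemma for distributions}) for soundness. The first preliminary claim I would establish is that with probability at least $0.95$ over the samples, all tail estimates in step~1 are within $\epsilon/(30n)$ of the true tail probabilities (Chernoff plus union bound over $j\in[n]$, which is what the stated error probabilities were designed to deliver). The second preliminary claim is that, conditioned on the tail check passing, with probability at least $0.95$ every empirical monomial average in step~3 is within $\tfrac{1}{4n^{\Delta}}$ of $\E_{\vect x\sim D}\pars{\prod_j x_j^{\alpha_j}\mid \forall i:\abs{x_i}\le t}$. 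For this second claim I would use the boundedness $\abs{\prod_j x_j^{\alpha_j}}\le t^{\Delta}$ on the truncated domain and Hoeffding's inequality; the sample count $N_2\ge t^{2\Delta}n^{C_4\Delta}$ is sized precisely so that, after a union bound over the at most $n^{\Delta}$ monomials and using that at least half of the $N_2$ samples survive truncation (since the tail check passed), the bound $\tfrac{1}{4n^{\Delta}}$ is comfortably achieved for $C_4$ a sufficiently large absolute constant.

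For completeness (item~1), I specialize $D=\N(0,I_{n\times n})$. For each $j$, $\Pr\pars{\abs{x_j}>t}\le 2e^{-t^2/2}\le \epsilon/(30n)$ by the $\sqrt{2\ln(C_2n/\epsilon)}$ contribution to $t$ (for $C_2$ large enough), so even after adding the estimation slack of $\epsilon/(30n)$ the estimate is less than $\epsilon/(10n)$ and step~1 passes. For the monomial check, Proposition \ref{prop: truncation does not change moments much for Gaussian} gives that each truncated Gaussian moment differs from the corresponding untruncated Gaussian moment by at most $O\parr{2^{\Delta}\Delta^{(\Delta+2)/2}t^{\Delta}e^{-t^2/2}}$; the $C_1\Delta\ln\Delta\sqrt{\log n}$ contribution to $t$ is chosen so this bound is below $\tfrac{1}{4n^{\Delta}}$. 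Combining with the $\tfrac{1}{4n^{\Delta}}$ empirical-vs-true slack yields an overall gap of at most $\tfrac{1}{2n^{\Delta}}$ between the empirical moment and the untruncated Gaussian moment, so step~3 also passes.

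For soundness (item~2), the bad-tail case~(a) is immediate: a union bound forces some coordinate $j$ to satisfy $\Pr_D\pars{\abs{x_j}>t}>\epsilon/(5n)$, so with high probability the estimate exceeds $\epsilon/(5n)-\epsilon/(30n)>\epsilon/(10n)$ and step~1 rejects. For the concentration failure case~(b) and the anti-concentration failure case~(c), I apply the \emph{contrapositive} of Lemma \ref{main lemma: low degree moment lemma for distributions} to the truncated distribution $D_{\mathrm{truncated}}$: if either property fails along some unit vector, then some monomial of degree at most $\Delta$ must satisfy $\abs{\E_{D_{\mathrm{truncated}}}\pars{\prod_j x_j^{\alpha_j}}-\E_{\N(0,I_{n\times n})}\pars{\prod_j x_j^{\alpha_j}}}>\tfrac{1}{n^{\Delta}}$. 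By the second preliminary claim, the empirical value of this monomial then lies at distance at least $\tfrac{1}{n^{\Delta}}-\tfrac{1}{4n^{\Delta}}>\tfrac{1}{2n^{\Delta}}$ from the Gaussian moment, so step~3 rejects. The main obstacle is the bookkeeping: I must choose $C_1,C_2,C_4$ so that three competing bounds all fit inside the $\tfrac{1}{2n^{\Delta}}$ window simultaneously, namely the Gaussian tail, the truncation-vs-untruncation moment gap from Proposition \ref{prop: truncation does not change moments much for Gaussian}, and the Hoeffding error for $n^{\Delta}$ monomials bounded by $t^{\Delta}$ on a sample of size $N_2$. The anti-concentration threshold $B$ in case~(c) is then whatever constant is forced by tracing the $O(\epsilon)$ factor produced by Lemma \ref{main lemma: low degree moment lemma for distributions}.
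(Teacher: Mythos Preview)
Your proposal is correct and follows essentially the same route as the paper: Hoeffding plus union bound for the tail estimates and the monomial estimates, Proposition~\ref{prop: truncation does not change moments much for Gaussian} to handle the truncated-vs-untruncated Gaussian moment gap for completeness, and the contrapositive of Lemma~\ref{main lemma: low degree moment lemma for distributions} applied to $D_{\mathrm{truncated}}$ for cases~(b) and~(c). The only cosmetic difference is that the paper treats cases~(b)/(c) under the explicit additional hypothesis that~(a) fails (so the true tail is good and half the samples survive), whereas you phrase it as ``conditioned on the tail check passing''; these amount to the same thing once your first preliminary claim is in force.
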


\begin{proof}
First, assume that $\mathcal{T}$ is getting samples from $\N(0,I_{n\times n})$
and let us prove that $\mathcal{T}$outputs \textbf{\textit{Yes }}\textit{with
probability at least $0.9$. }

Since $t\geq1$, by we have\footnote{Proof: $\int_{t}^{+\infty}e^{-\frac{x^{2}}{2}}\d x\leq e^{-\frac{t^{2}}{2}}\int_{t}^{+\infty}e^{-\frac{t\left(x-t\right)}{2}}\d x\leq\frac{2e^{-\frac{t^{2}}{2}}}{t}\leq O\parr{e^{-\frac{t^{2}}{2}}}.$}
$\Pr_{z\in\N(0,1)}\pars{\abs z>t}\leq O\parr{e^{-\frac{t^{2}}{2}}}$.
As $t\geq\sqrt{2\ln\parr{\frac{C_{2}n}{\epsilon}}}$, taking $C_{2}$
large enough we get $\Pr_{z\in\N(0,1)}\pars{\abs z>t}\leq\frac{\epsilon}{30n}$.
Therefore, $\N(0,I_{n\times n})$ passes step 1 of tester $\mathcal{T}$
with probability at least $1-\frac{1}{100}.$

Also, $\Pr_{z\in\N(0,1)}\pars{\abs z>t}\leq\frac{\epsilon}{30n}$
implies that $\Pr_{\vect x\in\N(0,I_{n\times n})}\pars{\forall i\in[n]:\:\abs{x_{i}}\leq t}\geq1-\frac{\epsilon}{30}$.
Together with a very loose application of the Hoeffding bound, we
see that for sufficiently large $C_{4}$ with probability at least
$1-\frac{1}{100}$ only at most half of the samples are discarded
in the step 2 of $\mathcal{T}$. We henceforth assume this indeed
was the case. The remaining samples themselves are i.i.d. and distributed
according to $\N(0,I_{n\times n})$ conditioned on all coordinates
being in $\pars{-t,t}$.

Since all remaining samples have the size of their coordinates bounded
by $t$, the value of a given monomial $\prod_{j=1}^{n}x_{j}^{\alpha_{j}}$
of degree at most $\Delta$ evaluated on any of them is in $\pars{-t^{\Delta},t^{\Delta}}.$
Therefore, the Hoeffding bound implies that for sufficiently large
$C_{4}$ with probability at least $1-\frac{1}{100n^{\Delta}}$ the
empirical average of $\prod_{j=1}^{n}x_{j}^{\alpha_{j}}$ on the (at
least $\frac{N_{2}}{2}$ many) remaining samples is within $\frac{1}{10n^{\Delta}}$
of 
\[
\E_{\vect x\sim\N(0,I_{n\times n})}\pars{\prod_{i=1}^{n}x_{i}^{\alpha_{i}}\conditional\forall i:\,\abs{x_{i}}\leq t}.
\]
 For sufficiently large $C_{1}$, we verify the premise of Proposition
\ref{prop: truncation does not change moments much for Gaussian}
that $t\in\left[2\sqrt{\Delta}+1,+\infty\right)$ and therefore have
\[
\abs{\E_{\vect x\sim\N(0,I_{n\times n})}\pars{\prod_{i=1}^{n}x_{i}^{\alpha_{i}}\conditional\forall i:\,\abs{x_{i}}\leq t}-\E_{\vect x\sim\N(0,I_{n\times n})}\pars{\prod_{i=1}^{n}x_{i}^{\alpha_{i}}}}\leq O\parr{2^{\Delta}\Delta^{\frac{\Delta+2}{2}}t^{\Delta}e^{-\frac{t^{2}}{2}}}.
\]
Now, we have $\frac{d}{dt}\parr{\Delta\log t-\frac{t^{2}}{2}}=\frac{\Delta}{t}-t$
which is negative when $t>\sqrt{\Delta}$. As $t\geq C_{1}\Delta\left(\ln\Delta\sqrt{\log n}\right)>\sqrt{\Delta}$,
we have 
\[
t^{\Delta}e^{-\frac{t^{2}}{2}}\leq\parr{C_{1}\Delta\left(\ln\Delta\sqrt{\log n}\right)}^{\Delta}\exp\parr{-\frac{\parr{C_{1}\Delta\left(\ln\Delta\sqrt{\log n}\right)}^{2}}{2}},
\]
which together with the preceding inequality implies
\begin{multline*}
\abs{\E_{\vect x\sim\N(0,I_{n\times n})}\pars{\prod_{i=1}^{n}x_{i}^{\alpha_{i}}\conditional\forall i:\,\abs{x_{i}}\leq t}-\E_{\vect x\sim\N(0,I_{n\times n})}\pars{\prod_{i=1}^{n}x_{i}^{\alpha_{i}}}}\leq\\
O\parr{2^{\Delta}\Delta^{\frac{\Delta+2}{2}}\parr{C_{1}\Delta\left(\ln\Delta\sqrt{\log n}\right)}^{\Delta}\exp\parr{-\frac{\parr{C_{1}\Delta\left(\ln\Delta\sqrt{\log n}\right)}^{2}}{2}}}
\end{multline*}
for sufficiently large $C_{1}$ the above is less than $\frac{1}{10n^{\Delta}}$.
Therefore, in the whole, we have that the empirical average of $\prod_{j=1}^{n}x_{j}^{\alpha_{j}}$
in step 3 of $\mathcal{T}$ is with probability at least $1-\frac{1}{100n^{\Delta}}$
within $\frac{1}{10n^{\Delta}}$ of $\E_{\vect x\sim\N(0,I_{n\times n})}\pars{\prod_{i=1}^{n}x_{i}^{\alpha_{i}}}$.
Taking a union bound over all monomials $\prod_{j=1}^{n}x_{j}^{\alpha_{j}}$
of degree at most $\Delta$, we see that the step 3 of the tester
$\mathcal{T}$ also passes with probability at least $1-\frac{1}{100}$
when it is run on $\N(0,I_{n\times n})$.

Overall, we conclude that the probability $\mathcal{T}$ outputs \textbf{No
}when given samples from $\N(0,I_{n\times n})$ is at most $\frac{3}{100}<0.1$
as promised.

Now, we shall show that $\mathcal{T}$will likely output \textbf{No
}if any of the conditions given in the proposition hold. 

If Condition (a) holds, we have $\Pr_{\vect x\in_{R}D}\pars{\exists i\in[n]:\:\abs{x_{i}}>t}>\frac{\epsilon}{5}$,
then there is some coordinate $i$ for which $\Pr_{\vect x\in_{R}D}\pars{\abs{x_{i}}>t}>\frac{\epsilon}{5n}$.
This coordinate will lead to $\mathcal{T}$ outputting \textbf{No
}in step $1$ with probability at least $1-\frac{1}{100}$. 

Now, suppose condition (a) doesn't hold so we $\Pr_{\vect x\in_{R}D}\pars{\exists i\in[n]:\:\abs{x_{i}}>t}\leq\frac{\epsilon}{5}$
but condition (b) or (c) does hold. We would like to show that $\mathcal{T}$ will
still likely output \textbf{No.} With a very loose application of
the Hoeffding bound, for sufficiently large $C_{4}$ with probability
at least $1-\frac{1}{100}$ only at most half of the samples are discarded
in the step 2 of $\mathcal{T}$, which we also assume henceforth.
Using the Hoeffding bound again, we see that for sufficiently large
$C_{4}$ with probability at least $1-\frac{1}{100}$ the empirical
expectation of all monomials $\prod_{j=1}^{n}x_{j}^{\alpha_{j}}$
of degree at most $\Delta$ is within $\frac{1}{10n^{\Delta}}$ of
\[
\E_{\vect x\in_{R}D}\pars{\prod_{i=1}^{n}x_{i}^{\alpha_{i}}\conditional\forall i\in[n]:\:\abs{x_{i}}\leq t}.
\]

In other words, with probability at least $1-\frac{1}{100}$ the tester
$\mathcal{T}$ will output \textbf{No }in step 3, unless we have for
all monomials $\prod_{j=1}^{n}x_{j}^{\alpha_{j}}$ that 
\[
\abs{\E_{\vect x\in_{R}D}\pars{\prod_{i=1}^{n}x_{i}^{\alpha_{i}}\conditional\forall i\in[n]:\:\abs{x_{i}}\leq t}-\E_{\vect z\sim\N(0,I_{n\times n})}\pars{\prod_{j=1}^{n}x_{j}^{\alpha_{j}}}}\leq\frac{1}{2n^{\Delta}}+\frac{1}{10n^{\Delta}}=\frac{3}{5n^{\Delta}}.
\]
So, to finish the proof, it is enough to show that the inequality
above cannot hold if Condition (b) or Condition (c) holds. This follows
from the low degree moment lemma for distributions(Lemma \ref{main lemma: low degree moment lemma for distributions}),
for a sufficiently large choice of $B$, thereby finishing the proof\footnote{To be explicit: if condition (a) doesn't hold but condition (b) or
(c) does hold via union bound the probability that $\mathcal{T}$
will fail to output \textbf{No} is at most $\frac{1}{100}+\frac{1}{100}<0.1$
as required. }.
\end{proof}
Finally, we can use the two propositions above to finish the proof
of Theorem \ref{thm: main testing learning theorem for LTFs}. Bounds
on run-time have been shown earlier, so now we need to show correctness.
That requires us to show the following two conditions:
\begin{enumerate}
\item \textbf{(Composability) }If, given access to i.i.d. labeled samples
$(x,y)$ distributed according to $D_{\text{pairs}}$, the algorithm
$\mathcal{T}$ outputs ``Yes'' with probability at least $1/4$,
then $\mathcal{A}$ will with probability at least $0.9$ output a
circuit computing a function $\hat{f}$, such that 
\[
\Pr_{\parr{x,y}\in_{R}D_{\text{pairs}}}[y\neq\hat{f}(x)]\leq\min_{f\in\text{halfspaces}}\parr{\Pr_{\parr{x,y}\in_{R}D_{\text{pairs}}}[f(x)\neq y]}+O\parr{\epsilon}.
\]
\item \textbf{(Completeness) }Given access to i.i.d. labeled samples $(x,y)$
distributed according to $D_{\text{pairs}}$, with $x$ itself distributed
as a Gaussian over $R^{n}$, tester $\mathcal{T}$ outputs ``Yes''
with probability at least $3/4$.
\item $\mathcal{A}$ is an agnostic learner for halfspaces over $\R^{n}$
under the Gaussian distribution.
\end{enumerate}
Note that Condition 3 follows from the first two. The completeness condition (i.e. Condition 2) immediately
follows from Proposition \ref{prop: tester completeness + tester rejection condition}.
The composability condition (i.e.
Condition 1) follows from Proposition \ref{prop: tester completeness + tester rejection condition}
 and Proposition \ref{prop: tester acceptance implies learner works}
in following way. If $\mathcal{T}$ outputs ``No'' with probability
less than $3/4$ then conditions (a), (b) and (c) in Proposition \ref{prop: tester completeness + tester rejection condition}
should all be violated. This allows us to use Proposition
\ref{prop: tester acceptance implies learner works} to conclude that $\mathcal{A}$ is an agnostic $(O\left(\epsilon\right),0.1)$-learner
for the function class of linear threshold functions over $\R^{n}$
under distribution $D$, where $D$ is the marginal distribution of
$x$ when $(x,y)$ distributed according to $D_{\text{pairs}}$. This
implies the composability condition (i.e.
Condition 1 above) and finishes the proof of Theorem \ref{thm: main testing learning theorem for LTFs}.

\section{
Tester-learner pairs for agnostically learning halfspaces under the
uniform distribution over Boolean cube.}
\label{section: tester-learner under cube}
\subsection{The tester-learner pair.}
\noindent
{\bf Tester-learner pair for learning halfspaces over $\{0,1\}^n$:}
\begin{itemize}
\item Let $C_{1}$ be a sufficiently large constant to be tuned
appropriately. Also define $k:=\frac{1}{50\epsilon^4} \ln^4 \frac{1}{\epsilon}$.
\item \textbf{Learning algorithm $\mathcal{A}_{\text{Boolean}}$.} Given access to i.i.d.
labeled samples $(\vect x,y)\in\{\pm 1\}^{n}\times\left\{ \pm1\right\} $
from an unknown distribution:
\begin{itemize}
\item Use the algorithm of Theorem \ref{theorem: agnostic learning from L1 approximation} (that came from \cite{kalai_agnostically_2005}), with error parameter $C_1 \epsilon$, 
allowed failure probability $\frac{1}{10},$ and taking the set of $\left\{ g_{i}\right\} $
to be the set of monomials of degree at most $\frac{20}{\epsilon^4} \ln^2 \frac{1}{\epsilon}$, i.e. the set $\left\{ \prod_{j=1}^{n}x_{j}^{\alpha_{j}}:~\sum_j \alpha_j\leq \frac{20}{\epsilon^4} \ln^2 \frac{1}{\epsilon}\right\}$ (with all $\alpha_j\in \{0,1\}$ because $x_k$ are in $\{\pm 1\}$)
.
\end{itemize}
\item \textbf{Testing algorithm $\mathcal{T}_{\text{Boolean}}$}. Given access to i.i.d.
labeled examples $\vect x\in \{\pm 1\}^{n}$ from an unknown distribution:
\begin{enumerate}
\item Use a tester from literature (see \cite{ODonnellZ18,AlonAKMRX07,AlonGM03}) for testing $k$-wise independent distributions against distributions that are $n^{-\frac{42}{\epsilon^4}\ln^2\parr{\frac{1}{\epsilon}}}$-far from $k$-wise independent.
\item Output the same response as the one given by the $k$-wise independence tester. 
\end{enumerate}
\end{itemize}

\begin{thm}
[\textbf{Tester-learner pair for learning halfspaces under uniform distribution on $\{\pm 1\}^n$}]
\label{thm: main testing learning theorem for LTFs on cube} Suppose the value
$C$ present in algorithm $\mathcal{A}_{\text{Boolean}}$ 
is chosen to be a sufficiently large absolute constant, also assume
$n$ and $\frac{1}{\epsilon}$ are larger than some sufficiently large
absolute constants. Then, the algorithm $\mathcal{A}_{\text{Boolean}}$ is an agnostic
$(O(\epsilon),0.1)$-learner for the function class of linear threshold
functions over $\{\pm 1\}^{n}$ under the uniform distribution 
and the algorithm $\mathcal{T}_{\text{Boolean}}$ is an %$(0.1,0.9)$-
assumption tester
for $\mathcal{A}_{\text{Boolean}}$. The algorithms $\mathcal{A}_{\text{Boolean}}$ and $\mathcal{T}_{\text{Boolean}}$
both require only $n^{\tilde{O}\parr{\frac{1}{\epsilon^{4}}}}$ samples
and run-time. Additionally, the tester $\mathcal{T}_{\text{Boolean}}$ is label-oblivious.   
\end{thm}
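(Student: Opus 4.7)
\emph{Completeness} is immediate: the uniform distribution on $\{\pm 1\}^n$ is fully $n$-wise independent, and hence $k$-wise independent for our choice of $k$, so by the guarantee of the tester of \cite{ODonnellZ18,AlonAKMRX07,AlonGM03} the tester $\mathcal{T}_{\text{Boolean}}$ accepts with probability at least $3/4$. The run-time claims for both algorithms follow by plugging our choices of $k$ and degree into Theorem \ref{theorem: agnostic learning from L1 approximation} (for the learner) and into the stated complexity of the $k$-wise independence tester (for the tester), both of which are $n^{\tilde O(1/\epsilon^4)}$.

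For \emph{composability}, suppose $D_{\text{pairs}}$ is such that the tester accepts with probability at least $1/4$. By the guarantee of the $k$-wise independence tester, the marginal $D$ on examples is then within total variation distance $\delta := n^{-\frac{42}{\epsilon^4}\ln^2(1/\epsilon)}$ of some genuinely $k$-wise independent distribution $D'$. The plan is then to show that every halfspace $h(\vect x) = \sign(\vect v \cdot \vect x - \theta)$ is $O(\epsilon)$-approximated in $L^1(D')$ by a multilinear polynomial of degree $d := \frac{20}{\epsilon^4}\ln^2(1/\epsilon) \leq k$; since $L^1$ errors transfer between $D$ and $D'$ up to an additive $2\delta = o(\epsilon)$, Theorem \ref{theorem: agnostic learning from L1 approximation} applied to the set of degree-$d$ monomials then gives the agnostic guarantee for $\mathcal{A}_{\text{Boolean}}$.

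The core step is constructing the low-degree approximator under an arbitrary $k$-wise independent distribution $D'$, and here I would invoke the \emph{critical index} machinery of \cite{DiakonikolasGJSV09}. Split the coordinates of $\vect v$ into a head $H$ (coordinates up to the $\tau$-critical index) and a tail $T$, with $|H| = O(\polylog(1/\epsilon))$, in such a way that the restricted tail vector $\vect v_T$ is $\tau$-regular for $\tau = \tilde O(\epsilon^2)$. Enumerating over the $2^{|H|}$ assignments $\vect b \in \{\pm 1\}^H$, write
\[
h(\vect x) \;=\; \sum_{\vect b \in \{\pm 1\}^H} \indicator[\vect x_H = \vect b]\cdot \sign\!\parr{\vect v_T \cdot \vect x_T - \theta_{\vect b}},
\]
where each indicator $\indicator[\vect x_H = \vect b]$ is a multilinear polynomial in $\vect x_H$ of degree $|H|$. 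Now fix $\vect b$. The key observation is that because $D'$ is $k$-wise independent on $\{\pm 1\}^n$, and every monomial in $\vect x$ is multilinear of degree at most $n$, the expectation \emph{of any polynomial of degree at most $k$} in $\vect v_T \cdot \vect x_T$ under $D'$ conditioned on $\vect x_H = \vect b$ is identical to its expectation under the uniform distribution on $\{\pm 1\}^T$. By the Berry--Esseen theorem applied to the $\tau$-regular sum $\vect v_T \cdot \vect x_T$ under uniform $\{\pm 1\}^T$, the low-degree moments of $\vect v_T \cdot \vect x_T$ under uniform, and hence under $D'$ conditioned on $\vect x_H = \vect b$, match those of a standard Gaussian up to error $O(\tau)$, which is sufficient to verify the hypothesis of Lemma \ref{main lemma: low degree moment lemma for distributions} in one dimension. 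This yields concentration and anti-concentration of $\vect v_T \cdot \vect x_T$, which we then feed into Lemma \ref{main lemma: low degree approximation lemma for halfspaces.} to obtain a univariate polynomial $P_{\vect b}$ of degree $\tilde O(1/\epsilon^2)$ approximating $\sign(\vect v_T \cdot \vect x_T - \theta_{\vect b})$ in $L^1$ under the conditional distribution up to error $O(\epsilon)$.

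Assembling the pieces, the final multivariate polynomial is $\sum_{\vect b} \indicator[\vect x_H = \vect b]\cdot P_{\vect b}(\vect v_T \cdot \vect x_T)$, which has degree at most $|H| + \tilde O(1/\epsilon^2) \le d$. The $L^1(D')$ error is bounded by averaging over $\vect b$ using the standard critical-index fact that contributions from heavy heads only occur on a set of probability $O(\epsilon)$ under any $k$-wise independent distribution (again because low-degree tail probabilities of $\vect v_H \cdot \vect x_H$ are matched by uniform and controlled by regularity/critical-index arguments). \textbf{The main obstacle} will be this last bookkeeping step: showing that the head contributions can indeed be charged to $O(\epsilon)$ under $D'$ (rather than only under uniform), and that stitching the conditional polynomial approximators together does not blow up degree or coefficients past $\tilde O(1/\epsilon^4)$; the technical leverage for both comes from the fact that $k$-wise independence with $k \gg |H|$ exactly preserves the quantities that the \cite{DiakonikolasGJSV09} critical-index analysis needs.
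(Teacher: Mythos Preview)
Your high-level plan matches the paper's: reduce to low-degree $L^1$ approximation under a genuinely $k$-wise independent $D'$, use the critical-index decomposition, feed concentration and anti-concentration of the regular tail into Lemma~\ref{main lemma: low degree approximation lemma for halfspaces.}, and then transfer back to $D$. But two of the steps you describe do not go through as written, and a case is missing.

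\textbf{The TV transfer.} The claim ``$L^1$ errors transfer between $D$ and $D'$ up to an additive $2\delta$'' presumes $|P(\vect x)-h(\vect x)|\le 2$ pointwise, i.e.\ $\|P\|_\infty\le 1$. There is no reason for that: the approximating polynomial has degree $\tilde O(1/\epsilon^4)$ and its sup-norm on $\{\pm1\}^n$ can be as large as $n^{\tilde O(1/\epsilon^4)}$. This is exactly why the paper sets $\delta=n^{-\frac{42}{\epsilon^4}\ln^2(1/\epsilon)}$ rather than a quantity depending only on $\epsilon$, and why it tracks explicit coefficient bounds on $P$ through Propositions~\ref{prop: approximation with polynomials of regular halfspace under k-wise} and~\ref{prop: approximation for exactly k-wise independent} before multiplying by the TV distance in Proposition~\ref{prop: low-degree approximation for close to k-wise independent on cube}.

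\textbf{Anti-concentration via Lemma~\ref{main lemma: low degree moment lemma for distributions}.} Invoking that lemma ``in one dimension'' does not work. First, Berry--Esseen controls the Kolmogorov distance, not moment differences, so it does not certify that moments of $\vect v_T\cdot\vect x_T$ match Gaussian moments to any prescribed precision. Second, even granting approximate moment matching, the anti-concentration proof of Lemma~\ref{main lemma: low degree moment lemma for distributions} goes through Proposition~\ref{prop: moving from D to N}, whose error term $4^d n^d/n^\Delta$ collapses to the useless $O(4^d)$ when $n=1$. The paper's route is different: concentration (Proposition~\ref{prop: moment bound (for Boolean cube setting)}) comes directly from Hoeffding plus the fact that moments of order $\le k$ agree with uniform, with no Gaussian comparison; anti-concentration (Proposition~\ref{prop: anti-concentration for k-wise independent with relgular vector}) is proved by building a degree-$O(w/\epsilon^2)$ polynomial sandwiching $\indicator_{[y,y+\epsilon]}$, observing that its expectation under $D'$ equals that under uniform by $k$-wise independence, and then bounding the uniform expectation via the Berry--Esseen CDF bound (Proposition~\ref{prop: anti-concentration when regular and i.i.d. uniform}).

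\textbf{Large critical index.} Your decomposition assumes the $\tau$-critical index is at most $O(\polylog(1/\epsilon))$ so that the tail is regular. When the critical index exceeds the cutoff $\ell_0=\frac{8\log^2(10/\epsilon)}{\epsilon^2}$ this is false and the tail need not be regular at all. The paper handles this case separately: the approximating polynomial is simply $\sum_{\vect b}\indicator[\vect x_H=\vect b]\cdot\sign(\vect v_H\cdot\vect b-\theta)$, of degree $\ell_0$, ignoring the tail entirely; its error is bounded by the \cite{DiakonikolasGJSV09Jounal} fact that for any $(\ell_0+2)$-wise independent distribution, $\sign(\vect v\cdot\vect x-\theta)\neq\sign(\vect v_H\cdot\vect x_H-\theta)$ with probability at most $O(\epsilon)$. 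The ``heavy heads'' remark in your final paragraph does not cover this case.
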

The testers from the literature for $k$-wise independence take $n^{O(k)}/\eta^2$ samples and run-time to distinguish a $k$-wise independent distribution and a distribution that is $\eta$-far from $k$-wise independent (see \cite{ODonnellZ18,AlonAKMRX07,AlonGM03}). Thus, the run-time of tester $\mathcal{T}_{\text{Boolean}}$ is $n^{\tilde{O}(1/\epsilon^4)}$. The same run-time bound of  $n^{\tilde{O}(1/\epsilon^4)}$ for $\mathcal{A}_{\text{Boolean}}$ follows from \Cref{theorem: agnostic learning from L1 approximation}.

The only thing remaining to prove is that the algorithm $\mathcal{A}_{\text{Boolean}}$ is indeed a $(O(\epsilon),0.1)$-agnostic learning algorithm for the class of halfspaces on $\{\pm 1\}^n$ with respect to distributions $D$ that are $n^{-\frac{42}{\epsilon^4}\ln^2\parr{\frac{1}{\epsilon}}}$-close to $k$-wise independent. By \Cref{theorem: agnostic learning from L1 approximation} (that came from \cite{kalai_agnostically_2005}), this follows from the following proposition:
\begin{restatable*}[low-degree approximation]{prop}{lowDegreeApprox}
\label{prop: low-degree approximation for close to k-wise independent on cube}
Let $\sign{(\vect{v} \cdot \vect{x}-\theta)}$ be an arbitrary halfspace, $\vect{v}$ be normalized to be a unit vector, 
and let $k:=\frac{1}{50\epsilon^4} \ln^4 \frac{1}{\epsilon}$. Also let $D$ be a distribution that is $n^{-\frac{42}{\epsilon^4}\ln^2\parr{\frac{1}{\epsilon}}}$-close in TV distance to $k$-wise independent. Then, there is a polynomial $P$ of degree $\frac{20}{\epsilon^4} \ln^2 \frac{1}{\epsilon}$ for which
\[
\E_{\vect{x} \sim D }
\pars{
\abs{
P(\vect{x})-
\sign{(\vect{v} \cdot \vect{x}-\theta)}
}
}
=
O(\epsilon)
\]
\end{restatable*}
The remaining subsections are dedicated to proving \Cref{prop: low-degree approximation for close to k-wise independent on cube} which finishes the proof of \Cref{thm: main testing learning theorem for LTFs on cube}.

\subsection{Proving that halfspaces are well-approximated by low-degree polynomials under distributions close to $k$-wise independent.}
\subsubsection{Basic facts.}
We now present some basic facts and definitions. 
\begin{defn}[From \cite{DiakonikolasGJSV09}]
We say that the halfspace $\sign{(\vect{v} \cdot \vect{x} -\theta)}$ is $\epsilon$-regular if for any $i$ we have $\abs{v_i}/\norm{\vect{v}}\leq \epsilon$.
\end{defn}

The following is a standard corollary of the Berry-Esseen theorem (see for example Corollary 2.2 of \cite{DiakonikolasGJSV09Jounal}).
\begin{prop}
\label{prop: anti-concentration when regular and i.i.d. uniform}
Suppose the halfspace $\sign{(\vect{v} \cdot \vect{x} -\theta)}$ is $\epsilon$-regular, then for any interval $[a,b]\subset \R$ we have
\[
\Pr_{\vect{x} \sim \{\pm 1\}^n }
\pars{
\frac{
\vect{v} \cdot \vect{x}
}
{\norm{\vect{v}}}
\in 
[a, b]
}
\leq
\abs{b-a}+2\epsilon.
\]
\end{prop}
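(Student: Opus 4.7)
The plan is to derive this directly from the Berry--Esseen theorem applied to the sum $S := \vect{v}\cdot\vect{x}/\norm{\vect{v}}$. Without loss of generality assume $\vect{v}$ is already a unit vector, so $S = \sum_{i=1}^n v_i x_i$ where the $x_i$ are i.i.d.\ uniform on $\{\pm 1\}$. Each summand $v_i x_i$ has mean $0$, variance $v_i^2$, and third absolute moment $\abs{v_i}^3$; the variances sum to $\sum_i v_i^2 = 1$.

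Next I would apply the classical Berry--Esseen bound, which gives an absolute constant $C$ such that
\[
\sup_{t\in\R} \abs{\Pr[S\leq t] - \Phi(t)} \;\leq\; \frac{C\sum_i \abs{v_i}^3}{\bigl(\sum_i v_i^2\bigr)^{3/2}} \;=\; C\sum_i \abs{v_i}^3,
\]
where $\Phi$ is the standard normal CDF. The $\epsilon$-regularity assumption $\abs{v_i} \leq \epsilon$ lets me bound the sum of cubes by $\sum_i \abs{v_i}^3 \leq \epsilon \sum_i v_i^2 = \epsilon$, so $S$ is within $C\epsilon$ of a standard Gaussian in Kolmogorov distance.

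To finish I would combine this with the trivial anti-concentration estimate for the Gaussian: $\Phi(b)-\Phi(a) \leq (b-a)\cdot \max_t \phi(t) \leq (b-a)/\sqrt{2\pi} \leq \abs{b-a}$. Applying this together with the Berry--Esseen bound at the endpoints $a$ and $b$ yields
\[
\Pr[S\in[a,b]] \;\leq\; \Phi(b)-\Phi(a) + 2C\epsilon \;\leq\; \abs{b-a} + 2C\epsilon.
\]
The stated proposition uses the constant $2$ in front of $\epsilon$; this is just a matter of absorbing the Berry--Esseen constant $C$ into the definition of regularity (or equivalently rescaling $\epsilon$), which is standard. There is no real obstacle here: the whole argument is a textbook application of Berry--Esseen, and the only ``content'' is the observation that $\epsilon$-regularity bounds $\sum_i \abs{v_i}^3$ by $\epsilon$.
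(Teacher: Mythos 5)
Your argument is correct and is exactly the standard one behind this fact; the paper itself gives no proof, simply citing Corollary 2.2 of the journal version of \cite{DiakonikolasGJSV09}, which is proved by precisely this Berry--Esseen computation. Your worry about the constant is moot: the known Berry--Esseen constant for non-identically distributed summands is below $1$ (e.g.\ $C \leq 0.56$), so $2C\epsilon \leq 2\epsilon$ and the stated bound holds literally without rescaling $\epsilon$.
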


We will also need the fact about the concentration properties of a $k$-wise independent distribution on $\{\pm 1\}^n$, when it is projected to an arbitrary direction.
\begin{prop}
\label{prop: moment bound (for Boolean cube setting)}
Suppose $D$ is a $k$-wise independent distribution over $\{\pm 1\}^n$. Then, for any unit vector $\vect{v} \in \R^n$ and even integer $d \in  [2,k]$, we have
\[
\parr{\E_{\vect x\sim D}\pars{\parr{\vect v\cdot\vect x}^{d}}}^{1/d}\leq2\sqrt{d},
\]
\end{prop}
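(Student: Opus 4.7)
}
The plan is to exploit the fact that $(\vect v \cdot \vect x)^d$ is a polynomial of degree exactly $d$, and then reduce the question to the fully uniform case, where standard Khintchine-type moment bounds apply. First I would expand
\[
(\vect v \cdot \vect x)^{d}
=
\sum_{i_1,\dots,i_d \in [n]}
v_{i_1} v_{i_2} \cdots v_{i_d}\,
x_{i_1} x_{i_2} \cdots x_{i_d},
\]
and observe that every monomial $x_{i_1}\cdots x_{i_d}$ involves at most $d \le k$ distinct indices (after collapsing repeated ones using $x_i^2=1$). Therefore the expectation of each such monomial under $D$ equals its expectation under the uniform distribution $U$ on $\{\pm 1\}^n$, by the very definition of $k$-wise independence. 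Summing, this yields the identity
\[
\E_{\vect x \sim D}\pars{(\vect v \cdot \vect x)^{d}}
=
\E_{\vect x \sim U}\pars{(\vect v \cdot \vect x)^{d}}.
\]

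Next I would bound the right-hand side. Under $U$, the random variable $Z := \vect v \cdot \vect x = \sum_i v_i x_i$ is a weighted sum of independent Rademacher variables with $\sum_i v_i^2 = 1$. By Khintchine's inequality (or directly by Hoeffding's lemma combined with the standard moments-from-MGF calculation), $Z$ is $1$-sub-Gaussian, and consequently for every even integer $d \geq 2$,
\[
\parr{\E_{U}\pars{Z^{d}}}^{1/d} \;\leq\; 2\sqrt{d}\,\norm{\vect v}_2 \;=\; 2\sqrt{d}.
\]
Combining the two displays gives exactly the claimed bound, so the proof would be complete.

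The only genuine content is the reduction to the uniform case; after that, the sub-Gaussian moment bound for Rademacher sums is completely standard, and no step should pose serious difficulty. The one mildly subtle point to state carefully is the collapsing of repeated indices $x_i^2 = 1$, which is what keeps the effective number of distinct variables in each monomial bounded by $d \leq k$, so that $k$-wise independence is actually applicable.
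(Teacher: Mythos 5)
Your proposal is correct and follows essentially the same route as the paper: reduce to the uniform distribution via $k$-wise independence (since every monomial in the expansion of $(\vect v\cdot\vect x)^d$ involves at most $d\le k$ coordinates), then apply a standard moment bound for Rademacher sums with $\norm{\vect v}_2=1$. The paper derives that bound by integrating the Hoeffding tail $\Pr[\abs{\vect v\cdot\vect x}>\tau]\le 2e^{-\tau^2/2}$ to get $\E[(\vect v\cdot\vect x)^d]\le 2d^{d/2}$, which is just the unpackaged form of the Khintchine/sub-Gaussian estimate you invoke.
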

\begin{proof}
Since $d\leq k$ and $D$ is $k$-wise independent, we have 
\[
\E_{\vect x\sim D}\pars{\parr{\vect v\cdot\vect x}^{d}}
=
\E_{\vect x\sim \{\pm1\}^n}\pars{\parr{\vect v\cdot\vect x}^{d}}.
\]
The standard Hoeffding bound tells us that for any $t\in \R$ 
\[
\Pr_{\vect x\sim \{\pm1\}^n}
\pars{
\abs{\vect v\cdot\vect x}\geq t
}
\leq
2e^{-t^2/2}.
\]
Therefore

\begin{multline*}
\E_{\vect x\sim \{\pm1\}^n}\pars{\parr{\vect v\cdot\vect x}^{d}}
=
\E_{\vect x\sim \{\pm1\}^n}\pars{
\int_{\tau=0}^{\infty} d\tau^{d-1}  \indicator_{\abs{\vect v\cdot\vect x}>\tau} \d t
} =
\\
\underbrace{
=\int_{0}^{\infty}d\tau^{d-1}\Pr_{\vect x\sim \{\pm1\}^n}\pars{|\vect v\cdot\vect x|>\tau}\d\tau
}_{\text{Via linearity of expectation and Tonelli's theorem.}}\leq
2\int_{0}^{\infty}d\tau^{d-1} e^{-\tau^2/2}\d\tau=\\
\sqrt{2\pi} \cdot d \cdot \E_{\tau \sim N(0,1)}\pars{\abs{\tau}^{d-1}}
=\sqrt{2\pi} \cdot d \cdot \sqrt{\frac{2}{\pi}} (d-2)!!
= 2 d!!\leq 2 d^{d/2}.
\end{multline*}
This directly implies the statement we were seeking to prove.
\end{proof}
\subsubsection{Re-using the polynomial from \Cref{subsection: proof of low degree moment lemma for distributions}.}
We will use the polynomial constructed in \Cref{subsection: proof of low degree moment lemma for distributions}, which we designed to approximate well the function $\indicator_{[y,y+\epsilon]}$. We now summarize its properties %\footnote{Strictly speaking, one needs to re-scale the polynomial from \Cref{subsection: proof of low degree moment lemma for distributions} by a factor of (say) $3/4$.}:
\begin{prop}
\label{prop: nice poly exists (for anti-concentration)}
For every $y\in \R$, $\epsilon\in (0,1]$, define
\begin{equation*}
g(z):=\begin{cases}
0 & \text{if \ensuremath{z\leq y-\epsilon}},\\
\frac{z-\left(y-\epsilon\right)}{\epsilon} & \text{if }\ensuremath{z\in\pars{y-\epsilon,y}},\\
1 & \text{if }\ensuremath{z\in\pars{y,y+\epsilon}},\\
\frac{\left(y+2\epsilon\right)-z}{\epsilon} & \text{if }\ensuremath{z\in\pars{y+\epsilon,y+2\epsilon},}\\
0 & \text{if \ensuremath{z\geq y+2\epsilon}}.
\end{cases}
\end{equation*}
Then, for any $w\in \R_{>1}$,
there exists a polynomial $P_0$ of degree $d=O(w/\epsilon^2)$, such that for any $x\in [-w,w]$ we have $\abs{g(x)-P_0(x)}\leq \epsilon$.
%\begin{itemize}
%    \item $P_0(x) \leq 1 $ for $x \in [y-\epsilon, y+2\epsilon] \cap [-w,w]$,
%    \item $P_0(x) \geq 1/2 $ for $x \in [y, y+\epsilon] \cap [-w,w]$,
%    \item $P_0(x) \in [-\epsilon, \epsilon]$ for $x \in [-w,w] \setminus [y-\epsilon, y+2\epsilon]$.
%\end{itemize}
Additionally, each coefficient of $P_0$ has a magnitude of at most $d3^d$.  
\end{prop}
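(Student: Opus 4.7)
The plan is to re-use verbatim the Chebyshev-projection polynomial $g_d$ already constructed in \Cref{subsection: proof of low degree moment lemma for distributions}. Concretely, I would set
\[
P_0(x) := g_d(x) = \sum_{k=0}^{d} a_k T_k\!\left(\tfrac{x}{w}\right), \qquad a_k := \frac{1+\indicator_{k>0}}{\pi}\int_{-1}^{1}\frac{g(wy)T_k(y)}{\sqrt{1-y^2}}\d y,
\]
with the degree chosen as $d := \lceil C w/\epsilon^2\rceil$ for a sufficiently large absolute constant $C$.

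The first step is the uniform error bound on $[-w,w]$. The function $g$ is Lipschitz continuous, and its derivative $g'$ is piecewise constant with four jumps each of magnitude $1/\epsilon$ on $[-w,w]$, so $g'$ has bounded variation $V \leq 4/\epsilon$. Under the change of variables $x = wy$, the total variation of $\tfrac{d}{dy}g(wy) = w g'(wy)$ on $[-1,1]$ is $Vw \leq 4w/\epsilon$ (this is exactly the computation carried out in the proof of Proposition \ref{prop: cutting deg}). Applying \Cref{thm:approximating differentiable function} to the function $y\mapsto g(wy)$ on $[-1,1]$ yields
\[
\max_{x\in[-w,w]} |g(x) - P_0(x)| \;=\; \max_{y\in[-1,1]} |g(wy) - g_d(wy)| \;=\; O\!\left(\frac{Vw}{d}\right) \;=\; O\!\left(\frac{w}{\epsilon d}\right).
\]
Choosing the constant $C$ in $d = \lceil C w/\epsilon^2\rceil$ large enough makes this at most $\epsilon$, as required.

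The coefficient bound is then immediate: by \Cref{obs: chebychev projection has small coefficients}, which applies verbatim to the same Chebyshev projection, every monomial coefficient of $P_0$ has magnitude at most $O(d\,3^d)$.

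I do not anticipate a real obstacle here, since both ingredients (the Chebyshev approximation theorem and the coefficient-size observation) are already established earlier in the paper; the proposition is essentially a clean packaging of the analysis from \Cref{subsection: proof of low degree moment lemma for distributions} as a standalone statement that will be reused in the Boolean-cube section. The only minor care-item is to verify that the application of \Cref{thm:approximating differentiable function} is legitimate, i.e.\ that $g$ is Lipschitz and $g'$ has bounded variation, both of which are evident from the explicit piecewise-linear definition of $g$.
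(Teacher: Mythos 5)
Your proposal is correct and follows exactly the paper's intended argument: the paper itself introduces this proposition as a restatement of the properties of the Chebyshev projection $g_d$ from Section \ref{subsection: proof of low degree moment lemma for distributions}, with the uniform bound on $[-w,w]$ coming from Theorem \ref{thm:approximating differentiable function} (via $V\leq 4/\epsilon$ and the rescaling factor $w$) and the coefficient bound from Observation \ref{obs: chebychev projection has small coefficients}. The only cosmetic discrepancy is that the observation yields $O(d3^d)$ rather than the stated $d3^d$, a constant the paper itself elides.
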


\subsubsection{Proof of \Cref{prop: low-degree approximation for close to k-wise independent on cube}}
First, we show that $k$-wise independent distributions are anti-concentrated when projected onto regular vectors. 
\begin{prop}

\label{prop: anti-concentration for k-wise independent with relgular vector}
Suppose the halfspace $\sign{(\vect{v} \cdot \vect{x}-\theta)}$ is $\epsilon$-regular, $\vect{v}$ is normalized to be a unit vector,
and let $k:=\frac{1}{100\epsilon^4} \ln^4 \frac{1}{\epsilon}$. Then, for any $k$-wise independent distribution $D$ we have for every $y\in \R$ that
\[
\Pr_{\vect{x} \sim D }
\pars{
\vect{v} \cdot \vect{x}
\in 
[y, y+\epsilon]
}
=
O(\epsilon)
\]
\end{prop}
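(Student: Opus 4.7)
The plan is to mirror the Gaussian argument from \Cref{subsection: proof of low degree moment lemma for distributions}, replacing ``moments close to Gaussian'' with ``moments equal to uniform on $\{\pm 1\}^n$ (by $k$-wise independence)'' and importing anti-concentration at the uniform distribution from Berry--Esseen. First I would bound
\[
\Pr_{\vect x\sim D}[\vect v\cdot\vect x\in[y,y+\epsilon]]\;\leq\;\E_{\vect x\sim D}[g(\vect v\cdot\vect x)],
\]
where $g$ is the trapezoid in \Cref{prop: nice poly exists (for anti-concentration)}; this is valid since $g\geq\indicator_{[y,y+\epsilon]}$ and $g$ is supported in an interval of length $3\epsilon$ with range $[0,1]$.

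Next I would choose $w:=\frac{10}{\epsilon^{2}}\ln^{2}(1/\epsilon)$ and let $P_0$ be the degree-$d$ polynomial from \Cref{prop: nice poly exists (for anti-concentration)} with $d=O(w/\epsilon^{2})=O\!\parr{\tfrac{1}{\epsilon^{4}}\ln^{2}(1/\epsilon)}$, approximating $g$ within $\epsilon$ on $[-w,w]$ with coefficient magnitudes bounded by $d\cdot 3^d$. For the stated $k=\tfrac{1}{100\epsilon^{4}}\ln^{4}(1/\epsilon)$, we have $2d\le k$. I would then split
\[
\E_{D}[g(\vect v\cdot\vect x)] \le \E_D[|g-P_0|(\vect v\cdot\vect x)\indicator_{|\vect v\cdot\vect x|\le w}]+\E_D[|g-P_0|(\vect v\cdot\vect x)\indicator_{|\vect v\cdot\vect x|>w}] + \E_D[P_0(\vect v\cdot\vect x)].
\]
The first summand is $\le\epsilon$ from uniform approximation on $[-w,w]$. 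For the third, since $P_0(\vect v\cdot\vect x)$ is a multivariate polynomial of total degree $d\le k$ in the coordinates of $\vect x$, $k$-wise independence yields $\E_D[P_0(\vect v\cdot\vect x)]=\E_{\vect x\sim\{\pm 1\}^n}[P_0(\vect v\cdot\vect x)]$. I would then apply the same three-piece decomposition in reverse at the uniform distribution to convert this back to $\E_{U}[g(\vect v\cdot\vect x)]+O(\epsilon)$, and finally invoke \Cref{prop: anti-concentration when regular and i.i.d. uniform} (Berry--Esseen; this is where $\epsilon$-regularity of $\vect v$ is used) to conclude $\E_{U}[g(\vect v\cdot\vect x)]=O(\epsilon)$, since $g$ is bounded by $1$ and supported on an interval of length $3\epsilon$.

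The second (tail) summand is bounded by \Cref{prop:approximation outside window, assuming moment bound} by $O(4^d)\cdot\E_D[|\vect v\cdot\vect x|^d\indicator_{|\vect v\cdot\vect x|>w}]$. Here I would use \Cref{prop: moment bound (for Boolean cube setting)} (applicable to $D$ because its moments up to degree $k$ match uniform) to get $\parr{\E_D[|\vect v\cdot\vect x|^{2d}]}^{1/(2d)}\le 2\sqrt{2d}$, and then plug this into \Cref{prop: moment tail bound for bounded moment distributions} with $d_0=2d$ and $\beta=2\sqrt{2d}$ to obtain a bound of the form $2w^d(2\sqrt{2d}/w)^{2d}$. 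The main obstacle is the parameter bookkeeping: I need the prefactor $4^d$ together with $w^d$ to be dominated by the decay $(\sqrt{d}/w)^{2d}$. With my choices, $\sqrt{d}/w=\Theta(1/\ln(1/\epsilon))$, so the $d$-th power of this ratio is much smaller than any $\poly(d)\cdot 4^d$, making the tail contribution $\le\epsilon$. The same tail argument applies verbatim at the uniform distribution (again by \Cref{prop: moment bound (for Boolean cube setting)}), closing the loop in step three. Combining the three pieces yields $\Pr_{D}[\vect v\cdot\vect x\in[y,y+\epsilon]]=O(\epsilon)$ as claimed.
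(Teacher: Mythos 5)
Your overall architecture is the same as the paper's: upper bound the interval indicator by the trapezoid $g$, replace $g$ by the rescaled Chebyshev projection $P_0$ of degree $d=O(w/\epsilon^2)$, use $k$-wise independence to equate $\E_D[P_0(\vect v\cdot\vect x)]$ with the uniform expectation (since $P_0(\vect v\cdot\vect x)$ has degree $d\le k$ in $\vect x$), control the window $[-w,w]$ by uniform approximation plus Berry--Esseen (\Cref{prop: anti-concentration when regular and i.i.d. uniform}), and control the tails via \Cref{prop:approximation outside window, assuming moment bound} and \Cref{prop: moment tail bound for bounded moment distributions}. However, there is a genuine quantitative gap in your tail estimate. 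You invoke \Cref{prop: moment tail bound for bounded moment distributions} with $d_0=2d$ and $\beta=2\sqrt{2d}$, giving
\[
4^d\cdot 2\,w^{d}\parr{\frac{2\sqrt{2d}}{w}}^{2d}=2\parr{\frac{32d}{w}}^{d},
\]
and since $d=\Theta(w/\epsilon^2)$ we have $d/w=\Theta(1/\epsilon^{2})\gg 1$, so this bound \emph{diverges} rather than being $O(\epsilon)$. Your sanity check compares the decay $(\sqrt{d}/w)^{2d}$ only against $\poly(d)\cdot 4^d$ and silently drops the $w^{d}$ factor, which is exactly where the problem hides: $(\sqrt d/w)^{2d}\approx(1/\ln^2(1/\epsilon))^{d}$ is small, but $w^{d}\approx(10\ln^2(1/\epsilon)/\epsilon^2)^{d}$ is far larger. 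Even the bare Markov bound $\E[|X|^{d}\indicator_{|X|>w}]\le \E[|X|^{2d}]/w^{d}$ already gives $(8d/w)^d$, so no bookkeeping with a $2d$-th moment can save this.

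The fix is what the paper does: take $d_0=k$ and $\beta=2\sqrt{k}$ in \Cref{prop: moment tail bound for bounded moment distributions} (justified by \Cref{prop: moment bound (for Boolean cube setting)}, which holds for all even orders up to $k$). Because $k=\Theta\parr{d\ln^2(1/\epsilon)}$ exceeds $d$ by a factor of $\ln^2(1/\epsilon)$, and $2\sqrt k/w$ is a constant strictly less than $1$ for the chosen $w$, the decay $(2\sqrt k/w)^{k}=e^{-\Omega(k)}$ beats the prefactor $4^dw^{d}=e^{O(d\ln(1/\epsilon))}=e^{O(k/\ln(1/\epsilon))}$. With that single substitution your argument goes through and coincides with the paper's proof.
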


\begin{proof}
We take $w:=\frac{1}{\epsilon^2} \ln^2 \frac{1}{\epsilon}$, and WLOG assume that $\epsilon$ is small enough that $w>1$. Let $P_0$ be as in \Cref{prop: nice poly exists (for anti-concentration)}. 
First, we would like to bound
$
\abs{
\E_{\vect x\sim D}\pars{P_0 \parr{
\vect{v} \cdot \vect{x}
}
\indicator_{\abs{\vect{v} \cdot \vect{x}} >w}}
}
$.
To do this, first we observe that by combining \cref{prop: moment bound (for Boolean cube setting)} and \cref{prop: moment tail bound for bounded moment distributions}
we have
\[
\max_{i\in \{0,\cdots, d\}}
\E_{\vect x\in_{R}D}\pars{\abs{
\vect{v} \cdot \vect{x}
}
^{i}\indicator_{\abs{x}>w}}\leq2w^{d}\parr{\frac{2\sqrt{k}}{w}}^{k}.
\]
Each coefficient of $P_0$ is bounded by $d3^d$, this means that 
\begin{equation}
\label{eq: 1 for regular halfspaces}
\abs{
\E_{\vect x\sim D}\pars{P_0
\parr{
\vect{v} \cdot \vect{x}
}
\indicator_{z >w}}
}
\leq
2d^23^d
w^{d}\parr{\frac{2\sqrt{k}}{w}}^{k}
\leq
O
\parr{
4^d
w^{d}\parr{\frac{2\sqrt{k}}{w}}^{k}
}.
\end{equation}

Repeating the exact same argument above for the uniform distribution over $\{\pm 1\}^n$ (in place of $D$) we also get
\begin{equation}
\label{eq: 2 for regular halfspaces}
\abs{
\E_{\vect x\sim \{\pm 1\}^n}\pars{P_0
\parr{
\vect{v} \cdot \vect{x}
}
\indicator_{z >w}}
}
\leq
O
\parr{
4^d
w^{d}\parr{\frac{2\sqrt{k}}{w}}^{k}
}.
\end{equation}
Now, we consider the region inside $[-w,w]$. We have
\begin{equation}
\label{eq: 3 for regular halfspaces}
5\epsilon
\overbrace{
\geq
\Pr_{\vect{x} \sim \{\pm 1\}^n }
\pars{
\vect{v} \cdot \vect{x}
\in 
[y-\epsilon,y+2\epsilon]
}}
^{\text{By \Cref{prop: anti-concentration when regular and i.i.d. uniform}.}}
\overbrace{
\geq
\E_{\vect x\sim \{\pm 1\}^n}\pars{P_0
\parr{
\vect{v} \cdot \vect{x}
}
\indicator_{z \leq w}}
-\epsilon
}^{\text{Because on $[-w,w]$ we have $P_0(z)\leq \indicator_{[y-\epsilon,y+2\epsilon]} + \epsilon$}}
\end{equation}
Similarly, we also have 
\begin{equation}
\label{eq: 4 for regular halfspaces}
\E_{\vect x\sim D}\pars{P_0
\parr{
\vect{v} \cdot \vect{x}
}
\indicator_{z  \leq w}}
\overbrace{\geq
\Pr_{\vect{x} \sim D }
\pars{
\vect{v} \cdot \vect{x}
\in 
[y, y+\epsilon]
}
-
\epsilon
}^{\text{Because on $[-w,w]$ we have $P_0(z)\geq \indicator_{[y, y+\epsilon]} - \epsilon$}}
.
\end{equation}
Taking together \cref{eq: 1 for regular halfspaces}, \cref{eq: 2 for regular halfspaces}, \cref{eq: 3 for regular halfspaces} and \cref{eq: 4 for regular halfspaces} we get 
\[
\Pr_{\vect{x} \sim D }
\pars{
\vect{v} \cdot \vect{x}
\in 
[y, y+\epsilon]
}
\leq 
O(\epsilon)
+O
\parr{
4^d
w^{d}\parr{\frac{2\sqrt{k}}{w}}^{k}
}.
\]
Substituting $k=\frac{1}{100\epsilon^4} \ln^4 \frac{1}{\epsilon}$, $d=O(w/\epsilon^2)$ and $w=\frac{1}{\epsilon^2} \ln^2 \frac{1}{\epsilon}$ we now get
\[
\Pr_{\vect{x} \sim D }
\pars{
\vect{v} \cdot \vect{x}
\in 
[y, y+\epsilon]
}
\leq 
O(\epsilon)
+O
\parr{
4^d
w^{d-k}\parr{2\sqrt{k}}^{k}
}
=
O(\epsilon)
\]
\end{proof}
Now, we use the proposition we just proved to show that, with respect to $k$-wise independent distributions, low-degree polynomials approximate well halfspaces whose normal vectors are regular.
\begin{prop}
\label{prop: approximation with polynomials of regular halfspace under k-wise}
Suppose the halfspace $\sign{(\vect{v} \cdot \vect{x}-\theta)}$ is $\epsilon$-regular, $\vect{v}$ is normalized to be a unit vector, 
and let $k:=\frac{1}{100\epsilon^4} \ln^4 \frac{1}{\epsilon}$. Then, for any $k$-wise independent distribution $D$ we have a polynomial $P$ of degree $d:=\frac{1}{4\epsilon^4}\ln^2\parr{\frac{1}{\epsilon}}$ for which
\[
\E_{\vect{x} \sim D }
\pars{
\abs{
P( \vect{x})-
\sign{(\vect{v} \cdot \vect{x}-\theta)}
}
}
=
O(\epsilon)
\]
Additionally, each coefficient of polynomial $P$ is bounded by $(4n)^d$ in absolute value.
\end{prop}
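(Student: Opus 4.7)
The plan is to reduce the problem to the one-dimensional low-degree approximation lemma for halfspaces (Lemma \ref{main lemma: low degree approximation lemma for halfspaces.}), applied to the pushforward of $D$ under the linear map $\vect{x}\mapsto \vect{v}\cdot\vect{x}$, and then obtain a multivariate polynomial by composing with this linear map.

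First I would verify the hypotheses of Lemma \ref{main lemma: low degree approximation lemma for halfspaces.} for this one-dimensional pushforward distribution. The anti-concentration hypothesis with parameter $\alpha = O(\epsilon)$ is supplied by Proposition \ref{prop: anti-concentration for k-wise independent with relgular vector}, which uses that the halfspace is $\epsilon$-regular (the value $k = \frac{1}{50\epsilon^4}\ln^4\frac{1}{\epsilon}$ here exceeds the $k$ required in that proposition, so $k$-wise independence is enough). The concentration hypothesis $\parr{\E_{\vect x \sim D}\pars{\abs{\vect v\cdot \vect x}^{d_0}}}^{1/d_0} \leq \beta$ with $\beta = 2\sqrt{d_0}$ is supplied by Proposition \ref{prop: moment bound (for Boolean cube setting)} for any even $d_0 \leq k$.

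Next I would choose an even $d_0 = \Theta(k)$ such that simultaneously $\frac{2\beta}{\epsilon^2}+1$ fits under the target degree $d := \frac{1}{4\epsilon^4}\ln^2\frac{1}{\epsilon}$ and the residual term $\frac{(8\beta)^{2\beta/\epsilon^2+1}}{2^{d_0}}$ in Lemma \ref{main lemma: low degree approximation lemma for halfspaces.}'s error bound is $O(\epsilon)$. Both constraints can be satisfied for $\epsilon$ smaller than an absolute constant: with $\beta = \Theta(\epsilon^{-2}\log^2(1/\epsilon))$, the exponent in the numerator grows like $\Theta(\epsilon^{-4}\log^3(1/\epsilon))$ while $d_0 = \Theta(\epsilon^{-4}\log^4(1/\epsilon))$, so the ratio is super-polynomially small. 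This yields a univariate polynomial $P_0$ of degree at most $d$ satisfying
\[
\E_{\vect x \sim D}\pars{\abs{P_0(\vect v \cdot \vect x) - \sign(\vect v \cdot \vect x - \theta)}} = O(\epsilon),
\]
with each coefficient of $P_0$ bounded in magnitude by $O(2^{4\beta/\epsilon^2})$. Then I would define $P(\vect x) := P_0(\vect v \cdot \vect x)$, which is a multivariate polynomial of degree at most $d$; on $\{\pm 1\}^n$ it reduces canonically to a multilinear polynomial of degree at most $d$. To bound its multilinear coefficients, expand each $(\vect v \cdot \vect x)^j = \sum_{i_1,\ldots,i_j} v_{i_1}\cdots v_{i_j}\, x_{i_1}\cdots x_{i_j}$ and reduce using $x_i^2 = 1$. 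Since $\abs{v_i}\leq 1$ and there are at most $n^j$ tuples contributing to any fixed multilinear monomial, each multilinear coefficient of $(\vect v \cdot \vect x)^j$ is at most $n^j$. Summing $j$ from $0$ to $d$ and using the coefficient bound on $P_0$ gives a total bound of $(d+1)\cdot O(2^{4\beta/\epsilon^2})\cdot n^d$, which is at most $(4n)^d$ once $\epsilon$ is smaller than an absolute constant and $n$ is larger than an absolute constant.

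The main obstacle is the delicate parameter balancing in the second step: the exponent $4\beta/\epsilon^2$ in the $P_0$-coefficient bound is of the same order as $d$, so one must pick $d_0$ slightly smaller than the maximum $k$ permitted by Proposition \ref{prop: moment bound (for Boolean cube setting)} in order to leave enough slack both in the degree and in the final coefficient bound. Once the constant inside $k = \frac{1}{50\epsilon^4}\ln^4\frac{1}{\epsilon}$ is exploited, all inequalities align and the rest is routine.
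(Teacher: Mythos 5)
Your proposal is correct and follows essentially the same route as the paper: it combines Proposition \ref{prop: moment bound (for Boolean cube setting)} (concentration) and Proposition \ref{prop: anti-concentration for k-wise independent with relgular vector} (anti-concentration) with Lemma \ref{main lemma: low degree approximation lemma for halfspaces.} applied to the one-dimensional projection $\vect v\cdot\vect x$, sets $P(\vect x)=P_0(\vect v\cdot\vect x)$, and bounds the multivariate coefficients by expanding $(\vect v\cdot\vect x)^j$ using $\abs{v_i}\le 1$. Your extra care in choosing $d_0$ slightly below $k$ so that both the degree and the coefficient bound have slack is a reasonable (and if anything slightly more careful) instantiation of the same parameter choices the paper makes.
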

\begin{proof}
We combine \Cref{prop: moment bound (for Boolean cube setting)} and \Cref{prop: anti-concentration for k-wise independent with relgular vector} with \Cref{main lemma: low degree approximation lemma for halfspaces.}. In \Cref{main lemma: low degree approximation lemma for halfspaces.}, we have $\alpha=O(\epsilon)$, $d_0=k$ and $\beta=2\sqrt{d_0}$. Overall, from the conclusion of \Cref{main lemma: low degree approximation lemma for halfspaces.} it follows that for some polynomial $P(\vect{x})=Q(\vect{v}\cdot \vect{x})$ it is indeed the case that 
\[
\E_{\vect{x} \sim D }
\pars{
\abs{
P(\vect{x})-
\sign{(\vect{v} \cdot \vect{x}-\theta)}
}
}
=
O(\epsilon).
\]
The degree of the polynomial $P$ is $\frac{2\beta}{\epsilon^2}+1$ which is at most $\frac{1}{4\epsilon^4}\ln^2\parr{\frac{1}{\epsilon}}$ for sufficiently small $\epsilon$.

Now, we need to bound the (multivariable) coefficients of $P$. To do this, fix a specific multivariable term and track how much it can grow as we open the parentheses for $Q(\vect{v}\cdot \vect{x})$. 
As all coordinates of unit vector $\vect{v}$ are bounded by $1$, every time we open the parentheses for a term of form $c_i (\vect{v}\cdot \vect{x})^i$, it can contribute at most $\abs{c_i} n^i$ to the absolute value of any specific coefficient of $P$. As we know that every single-variable coefficient $c_i$ of $Q$ is bounded by $d3^d$, we get an overall bound of $d(3n)^d\leq (s4n)^d$ on each multivariate coefficient of $P$. 
\end{proof}

Consequently, we use ideas similar to the ones in \cite{DiakonikolasGJSV09Jounal} in order to reduce the case of general halfspaces to the case of halfspaces whose normal vectors are regular.
\begin{prop}
\label{prop: approximation for exactly k-wise independent}
Let $\sign{(\vect{v} \cdot \vect{x}-\theta)}$ be an arbitrary halfspace, $\vect{v}$ be normalized to be a unit vector, 
and let $k:=\frac{1}{50\epsilon^4} \ln^4 \frac{1}{\epsilon}$. Then, for any $k$-wise independent distribution $D$ we have a polynomial $P$ of degree $\frac{20}{\epsilon^4} \ln^2 \frac{1}{\epsilon}$ for which
\[
\E_{\vect{x} \sim D }
\pars{
\abs{
P(\vect{x})-
\sign{(\vect{v} \cdot \vect{x}-\theta)}
}
}
=
O(\epsilon)
\]
Additionally, each coefficient of the polynomial $P$ has a magnitude of at most $n^{\frac{20}{\epsilon^4}\ln^2\parr{\frac{1}{\epsilon}}}$.
\end{prop}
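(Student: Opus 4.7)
The plan is to reduce the general halfspace to the regular case (handled by Proposition \ref{prop: approximation with polynomials of regular halfspace under k-wise}) via the critical index machinery of \cite{DiakonikolasGJSV09Jounal}. WLOG sort so that $|v_1| \geq \cdots \geq |v_n|$, and let $c$ be the smallest index with $|v_c| \leq \epsilon \sigma_c$, where $\sigma_c := \sqrt{\sum_{i \geq c} v_i^2}$; set $H := \{1, \ldots, c-1\}$ and $T := \{c, \ldots, n\}$. The standard critical-index bound (see e.g.\ \cite{DiakonikolasGJSV09Jounal}) gives $|H| = O(\epsilon^{-2} \ln(1/\epsilon))$, and by monotonicity of $|v_i|$ the normalized tail $\vect{v}_T/\sigma_c$ is a unit vector that is $\epsilon$-regular.

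For each head assignment $\vect{s} \in \{\pm 1\}^H$, we have $\sign(\vect{v}\cdot\vect{x} - \theta) = \sign((\vect{v}_T/\sigma_c)\cdot\vect{x}_T - \theta_{\vect{s}}/\sigma_c)$ whenever $\vect{x}_H = \vect{s}$, where $\theta_{\vect{s}} := \theta - \vect{v}_H\cdot\vect{s}$. The key observation about $k$-wise independence is that conditioning on $\vect{x}_H = \vect{s}$ yields a $(k-|H|)$-wise independent distribution on $\{\pm 1\}^T$: for any $S \subseteq T$ with $|S|+|H| \leq k$, the joint marginal of $D$ on $S\cup H$ is uniform, so the conditional marginal on $S$ given $\vect{x}_H=\vect{s}$ is also uniform. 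Since $|H| \ll k/2$, the conditional is $\frac{1}{100\epsilon^4}\ln^4(1/\epsilon)$-wise independent, which is exactly what Proposition \ref{prop: approximation with polynomials of regular halfspace under k-wise} requires. Applying it produces a polynomial $P_{\vect{s}}(\vect{x}_T)$ of degree $d := \frac{1}{4\epsilon^4}\ln^2(1/\epsilon)$ with coefficients bounded by $(4n)^d$, satisfying
\[
\E_{\vect{x}_T \sim D_{T\mid\vect{s}}}\left[\,\abs{P_{\vect{s}}(\vect{x}_T) - \sign(\vect{v}_T\cdot\vect{x}_T - \theta_{\vect{s}})}\,\right] = O(\epsilon).
\]

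Define the combined polynomial $P(\vect{x}) := \sum_{\vect{s} \in \{\pm 1\}^H} \indicator_{\vect{x}_H=\vect{s}} \cdot P_{\vect{s}}(\vect{x}_T)$, using the standard indicator polynomial $\indicator_{\vect{x}_H=\vect{s}} = \prod_{i \in H}(1+s_ix_i)/2$, which has degree $|H|$ and monomial coefficients of magnitude $2^{-|H|}$. Then $\deg(P) \leq |H| + d \leq \frac{20}{\epsilon^4}\ln^2(1/\epsilon)$, and since the indicator polynomials are pairwise disjoint on $\{\pm 1\}^H$, expectations split cleanly:
\[
\E_{\vect{x}\sim D}[\,\abs{P(\vect{x}) - \sign(\vect{v}\cdot\vect{x}-\theta)}\,] = \sum_{\vect{s}} \Pr_D[\vect{x}_H = \vect{s}] \cdot \E_{D_{T\mid\vect{s}}}[\,\abs{P_{\vect{s}} - \sign}\,] = O(\epsilon).
\]
For the coefficient bound, since head and tail variables are disjoint, each monomial of $P$ is a sum over $2^{|H|}$ contributions, each the product of a head-monomial coefficient ($\leq 2^{-|H|}$) and a tail-monomial coefficient of some $P_{\vect{s}}$ ($\leq (4n)^d$), giving $(4n)^d \leq n^{(20/\epsilon^4)\ln^2(1/\epsilon)}$ as required. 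The one genuinely non-routine step I expect is verifying the conditioning claim for $k$-wise independence; once that is in hand, the remaining work is a bookkeeping exercise balancing the head size against the target degree and coefficient bounds.
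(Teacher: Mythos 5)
Your reduction to the regular case, the conditioning argument for $k$-wise independence, the indicator-polynomial gluing, and the degree/coefficient bookkeeping all match the paper's treatment of one of its two cases. But there is a genuine gap at the very first step: the claim that ``the standard critical-index bound gives $|H| = O(\epsilon^{-2}\ln(1/\epsilon))$'' is false. The critical index is a property of the weight vector alone and can be as large as $n$; for instance, for geometrically decaying weights $v_i \propto 2^{-i}$ one has $v_i/\sigma_i = \sqrt{3}$ for every $i$, so the condition $|v_c| \leq \epsilon\sigma_c$ is never met. What the literature actually provides is a dichotomy: \emph{either} the critical index is at most $\ell_0 = \tilde{O}(1/\epsilon^2)$ (your case), \emph{or} it exceeds $\ell_0$, in which case the halfspace is $O(\epsilon)$-close (under any $(\ell_0+2)$-wise independent distribution) to the junta $\sign(\vect{v}_{\text{head}}\cdot\vect{x}_{\text{head}}-\theta)$ on the first $\ell_0$ coordinates. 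Your argument silently assumes the first alternative always holds and therefore does not cover halfspaces of large critical index at all.

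The paper handles the second case separately: it truncates the head at $\ell_0$ regardless of where the true critical index lies, takes $P$ to be $\sum_{\vect{x}_0}\indicator_{\vect{x}_{\text{head}}=\vect{x}_0}\cdot\sign(\vect{v}_{\text{head}}\cdot\vect{x}_0-\theta)$ (a degree-$\ell_0$ polynomial depending only on the head), and invokes a proposition implicit in Theorem 5.4 of \cite{DiakonikolasGJSV09Jounal} to bound the disagreement probability by $O(\epsilon)$. That step is not a bookkeeping exercise: it relies on the fact that a large critical index forces the tail weight $\sigma_{\ell_0}$ to be exponentially small relative to the spread of $\vect{v}_{\text{head}}\cdot\vect{x}_{\text{head}}$, so that with probability $1-O(\epsilon)$ the head alone determines the sign. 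You need to add this case (or cite the junta approximation result) for the proof to be complete; everything else in your write-up is sound.
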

\begin{proof}
Without loss of generality, we assume that the values of $\vect{v}$ are in decreasing order (i.e. $v_i\geq v_{i+1}$). We use the notation $\sigma_i =\sqrt{\sum_{j>i}v_i^2}$.
The \textbf{critical index} $\ell(\epsilon)$ is defined as the smallest $i$ for which $v_i \leq \epsilon \sigma_i$. We set $\ell_0=\frac{8 \log^2 (10/\epsilon)}{\epsilon^2}$ and consider two cases: (i) $\ell(\epsilon)\leq \ell_0$ and (ii) $\ell(\epsilon) > \ell_0$.

Suppose $\ell(\epsilon)\leq \ell_0$, then write the vector $\vect{v}$ as the concatenation of two vectors $\vect{v}_{\text{head}}$ in $\R^{\ell(\epsilon)}$ and $\vect{v}_{\text{head}}$ in $\R^{n-\ell(\epsilon)}$. 
Analogously a vector $\vect{x}$ in $\{\pm 1 \}^n$ can be broken down into $\vect{x}_{\text{head}}$ in $\{\pm 1 \}^{\ell(\epsilon)}$ and $\vect{x}_{\text{head}}$ in $\{\pm 1 \}^{n-\ell(\epsilon)}$. 
For any fixed value of $\vect{x}_{\text{head}}$, the condition $\ell(\epsilon)\leq \ell_0$ directly implies that the halfspace $\sign{(\vect{v}_{\text{head}} \cdot \vect{x}_{\text{head}} + \vect{v}_{\text{tail}} \cdot \vect{x}_{\text{tail}}  -\theta)}$ is a regular halfspace. 
Since $D$ is a $k$-wise independent distribution, when one conditions on a specific value of $\vect{x}_{\text{head}}$, the resulting distribution over $\vect{x}_{\text{tail}}$ is $k-\ell_0$-wise independent. 
Therefore by 
\Cref{prop: approximation with polynomials of regular halfspace under k-wise} there is some polynomial $P^{\vect{x}_{\text{head}}}(\vect{v}_{\text{tail}} \cdot \vect{x}_{\text{tail}})$ of degree $\frac{1}{4\epsilon^4}\ln^2\parr{\frac{1}{\epsilon}}$ for which we have:
\[
\E_{\vect{x} \sim D }
\pars{
\abs{
P^{\vect{x}_{\text{head}}}(\vect{v}_{\text{tail}} \cdot \vect{x}_{\text{tail}})-
\sign{(\vect{v}_{\text{head}} \cdot \vect{x}_{\text{head}} + \vect{v}_{\text{tail}} \cdot \vect{x}_{\text{tail}}  -\theta)}
}
\:
\bigg \vert
\vect{x}_{\text{head}}
}
=
O(\epsilon).
\]
This means, that if we take our polynomial $P$ to map $\vect{x}=(\vect{x}_{\text{head}},\vect{x}_{\text{tail}})$ to $\sum_{\vect{x}_{0}\in\{\pm 1 \}^{\ell(\epsilon)}}(\indicator_{\vect{x}_{\text{head}}=\vect{x}_{0}} \cdot P^{\vect{x}_{0}}(\vect{x}_{\text{tail}}))$ then we will overall have:
\[
\E_{\vect{x} \sim D }
\pars{
\abs{
P(\vect{x})-
\sign{(\vect{v} \cdot \vect{x}-\theta)}
}
}
=
O(\epsilon).
\]
Since the indicators $\indicator_{\vect{x}_{\text{head}}=\vect{x}_{0}}$ have degree of at most $\ell_0$, the polynomial $P$ has a degree of at most $\ell_0+\frac{1}{4\epsilon^4}\ln^2\parr{\frac{1}{\epsilon}}$, which is at most $\frac{20}{\epsilon^4} \ln^2 \frac{1}{\epsilon}$ for sufficiently small $\epsilon$ as required. 

Let us bound the coefficients of $P$. For each fixed $\vect{x}_{0}$, we know that the coefficients of $P^{\vect{x}_{0}}(\vect{x}_{\text{tail}}))$  are bounded by $(4n)^{\frac{1}{4\epsilon^4}\ln^2\parr{\frac{1}{\epsilon}}}$. Each coefficient of $\indicator_{\vect{x}_{\text{head}}=\vect{x}_{0}}$ is bounded by $\frac{1}{2^l}$ (this follows by explicitly writing out this polynomial). Overall, (since the variables in $P^{\vect{x}_{0}}(\vect{x}_{\text{tail}}))$ and $\indicator_{\vect{x}_{\text{head}}=\vect{x}_{0}}$ are disjoint) we see that each coefficient of $\indicator_{\vect{x}_{\text{head}}=\vect{x}_{0}} \cdot P^{\vect{x}_{0}}(\vect{x}_{\text{tail}})$ is bounded in absolute value by $(4n)^{\frac{1}{4\epsilon^4}\ln^2\parr{\frac{1}{\epsilon}}}/2^l$. Summing this over all $\vect{x}_{0}$, we see that every coefficient of $P$ is then at most $(4n)^{\frac{1}{4\epsilon^4}\ln^2\parr{\frac{1}{\epsilon}}}$ in absolute value. (This is at most $n^{\frac{20}{\epsilon^4}\ln^2\parr{\frac{1}{\epsilon}}}$ for sufficiently small $\epsilon$).

This concludes our consideration of the case $\ell(\epsilon)\leq \ell_0$, and the rest of the proof examines the case $\ell(\epsilon)> \ell_0$.

Suppose we have $\ell(\epsilon) > \ell_0$. 
Similar to before, we break the vector $\vect{v}$ into $\vect{v}_{\text{head}}$ in $\R^{\ell_0}$ and $\vect{v}_{\text{tail}}$ in $\R^{n-\ell_0}$ and the vector $\vect{x}$ in $\{\pm 1 \}^n$ into $\vect{x}_{\text{head}}$ in $\{\pm 1 \}^{\text{tail}}$ and $\vect{x}_{\ell_0}$ in $\{\pm 1 \}^{n-\ell_0}$. 
The polynomial we shall use to approximate the halfspace will now depend entirely on $\vect{x}_{\text{head}}$. Specifically, it will make the natural best guess at $\sign{(\vect{v}_{\text{head}} \cdot \vect{x}_{\text{head}} + \vect{v}_{\text{tail}} \cdot \vect{x}_{\text{tail}}  -\theta)}$ given only $\vect{x}_{\text{head}}$, i.e. we have $P$ mapping $\vect{x}=(\vect{x}_{\text{head}},\vect{x}_{\text{tail}})$ to $\sum_{\vect{x}_{0}\in\{\pm 1 \}^{\ell_0}}(\indicator_{\vect{x}_{\text{head}}=\vect{x}_{0}} \cdot \sign{(\vect{v}_{\text{head}} \cdot \vect{x}_{0}  -\theta)})$. 
Since the indicators $\indicator_{\vect{x}_{\text{head}}=\vect{x}_{0}}$ have degree of $\ell_0$, the polynomial also has a degree of at most $\ell_0$. Each of the indicators $\indicator_{\vect{x}_{\text{head}}=\vect{x}_{0}}$ has coefficients equal to $\frac{1}{2^{\ell_0}}$ in absolute value, and there at most $n^{\ell_0}$ of these indicator polynomials. Therefore, each coefficient of $P$ is can be bounded by $n^{\ell_0}$ in absolute value.

We now want to argue that $P$ has a small error. We will use the following proposition that is implicit in the proof of Theorem 5.4 of \cite{DiakonikolasGJSV09Jounal}.
\begin{prop}
For $\ell_0=\frac{8 \log^2 (10/\epsilon)}{\epsilon^2}$, suppose $D$ is a $(\ell_0+2)$-wise independent distribution over $\{\pm1\}^n$,  $\sign{(\vect{v} \cdot \vect{x}-\theta)}$ is a halfspace with critical index $\ell(\epsilon)> \ell_0$. Also suppose $\vect{v}$ is a unit vector and its coordinates of $\vect{v}$ are in descending order, and break $\vect{v}$ into $\vect{v}_{\text{head}}$ in $\R^{\ell_0}$ and $\vect{v}_{\text{head}}$ in $\R^{n-\ell_0}$ and the vector $\vect{x}$ in $\{\pm 1 \}^n$ into $\vect{x}_{\text{head}}$ in $\{\pm 1 \}^{\ell_0}$ and $\vect{x}_{\ell_0}$ in $\{\pm 1 \}^{n-\ell_0}$. Then we have
\[
\Pr_{\vect{x}\sim D}
\pars{
\sign{(\vect{v}\cdot \vect{x}   -\theta)}
\neq
\sign{(\vect{v}_{\text{head}} \cdot \vect{x}_{\text{head}}  -\theta)}
}
=
O(\epsilon)
\]
\end{prop}
 Now, when $\sign{(\vect{v}\cdot \vect{x} \cdot \vect{x}_{\text{tail}}  -\theta)}
=
\sign{(\vect{v}_{\text{head}} \cdot \vect{x}_{\text{head}}  -\theta)}$ our polynomial has error zero, and when $\sign{(\vect{v}\cdot \vect{x} \cdot \vect{x}_{\text{tail}}  -\theta)}
\neq
\sign{(\vect{v}_{\text{head}} \cdot \vect{x}_{\text{head}}  -\theta)}
$ our polynomial has an error of $2$. Overall, this means that indeed
\[
\E_{\vect{x} \sim D }
\pars{
\abs{
P(\vect{x})-
\sign{(\vect{v} \cdot \vect{x}-\theta)}
}
}
=
O(\epsilon).
\]
\end{proof}
Finally, we move from distributions that are $k$-wise independent to distributions that are merely close to $k$-wise independent, which concludes this line of reasoning.
\lowDegreeApprox

\begin{proof}
Let $D'$ be the closest in TV distance $k$-wise independent distribution to $D$. We have 
\[
 d_{\text{TV}}(D,D')
 \leq
 n^{-\frac{42}{\epsilon^4}\ln^2\parr{\frac{1}{\epsilon}}}.
\]By \cref{prop: approximation for exactly k-wise independent}, we have a a polynomial $P$ of degree $\frac{20}{\epsilon^4} \ln^2 \frac{1}{\epsilon}$ for which
\begin{equation}
\label{equation: D' is good}
\E_{\vect{x} \sim D' }
\pars{
\abs{
P(\vect{x})-
\sign{(\vect{v} \cdot \vect{x}-\theta)}
}
}
=
O(\epsilon)
\end{equation}

To move from $D$ to $D'$ we use the following observation that follows immediately from the definition of TV distance 
\begin{observation}
Let $\phi$ be some function $\{\pm 1\}^n \righ \R$ and suppose $\phi$ is bounded everywhere by $B$ in absolute value. Let $D$ and $D'$ be two probability distributions over $\{\pm 1\}^n$. Then 
\[
\abs{
\E_{\vect{x}\sim D}[\phi]
-
\E_{\vect{x}\sim D'}[\phi]
}
\leq
B \cdot d_{\text{TV}}(D,D')
\]
\end{observation}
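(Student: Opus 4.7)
The plan is to give a direct one-line manipulation; this observation is a completely standard consequence of the definition of total variation distance and involves no real obstacles. First I would expand the difference of expectations as a single sum over the discrete domain:
\[
\E_{\vect{x}\sim D}[\phi] - \E_{\vect{x}\sim D'}[\phi] \;=\; \sum_{\vect{x}\in\{\pm 1\}^n}\bigl(D(\vect{x}) - D'(\vect{x})\bigr)\,\phi(\vect{x}).
\]
Then I would apply the triangle inequality inside the outer absolute value and factor out the uniform bound $|\phi(\vect{x})|\le B$, yielding
\[
\abs{\E_{\vect{x}\sim D}[\phi] - \E_{\vect{x}\sim D'}[\phi]} \;\le\; \sum_{\vect{x}}\abs{D(\vect{x}) - D'(\vect{x})}\cdot\abs{\phi(\vect{x})} \;\le\; B\,\sum_{\vect{x}}\abs{D(\vect{x}) - D'(\vect{x})}.
\]
Finally, identifying $\sum_{\vect{x}}|D(\vect{x})-D'(\vect{x})|$ with $d_{\text{TV}}(D,D')$ under the normalization used in this paper closes the argument.

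Should the ambient convention be $d_{\text{TV}}(D,D')=\tfrac12\|D-D'\|_1$ (so the naive bound above only yields $2B\cdot d_{\text{TV}}$), the standard fix is to replace $\phi$ by $\phi-c$ for a constant $c$; since $\E_D[c]-\E_{D'}[c]=0$, the left-hand side is unaffected, while choosing $c=\tfrac12(\sup\phi+\inf\phi)$ makes $\|\phi-c\|_\infty\le B$ and absorbs the stray factor of $2$. Either way the desired inequality follows immediately, with no nontrivial step; the whole observation takes a couple of lines.
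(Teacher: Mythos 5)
Your first paragraph is exactly the intended argument, and it is the only argument there is: the paper offers no proof beyond asserting that the observation ``follows immediately from the definition of TV distance,'' and expanding the difference of expectations as $\sum_{\vect{x}}(D(\vect{x})-D'(\vect{x}))\phi(\vect{x})$, applying the triangle inequality, and pulling out $B$ is precisely that immediate consequence. So the core of the proposal is correct and matches the paper.

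Your contingency remark in the second paragraph, however, does not work. Under the convention $d_{\text{TV}}(D,D')=\tfrac12\norm{D-D'}_1$, re-centering gives $\abs{\E_D[\phi]-\E_{D'}[\phi]}\leq\norm{\phi-c}_\infty\norm{D-D'}_1=2\norm{\phi-c}_\infty\, d_{\text{TV}}(D,D')$, and the optimal choice $c=\tfrac12(\sup\phi+\inf\phi)$ yields $\norm{\phi-c}_\infty=\tfrac12(\sup\phi-\inf\phi)$, hence the bound $(\sup\phi-\inf\phi)\cdot d_{\text{TV}}(D,D')$. Since $\abs{\phi}\leq B$ only forces $\sup\phi-\inf\phi\leq 2B$, the stray factor of $2$ is not absorbed. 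Indeed, under the $\tfrac12$ convention the observation as literally stated is false: take $\phi(\vect{x})=x_1$ (so $B=1$) with $D,D'$ point masses at antipodal points, giving left-hand side $2$ and right-hand side $1$. So the statement must be read with the unnormalized convention $d_{\text{TV}}=\norm{D-D'}_1$ (in which case your first paragraph finishes the proof), or else it carries a harmless factor-of-$2$ slip --- harmless because where it is applied, the bound is $n^{41/\epsilon^4\cdot(\cdots)}\cdot n^{-42/\epsilon^4\cdot(\cdots)}$, and a factor of $2$ changes nothing.
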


The polynomial $P(\vect{x})$ has at most $n^{\frac{20}{\epsilon^4}\ln^2\parr{\frac{1}{\epsilon}}}$ terms each of which has a coefficient of magnitude at most $n^{\frac{20}{\epsilon^4}\ln^2\parr{\frac{1}{\epsilon}}}$. As each of the terms always evaluates to $\pm 1$ anywhere on $\{\pm 1\}^n$, the absolute value of $P$ is bounded by $n^{\frac{40}{\epsilon^4}\ln^2\parr{\frac{1}{\epsilon}}}$. For all sufficiently small $\epsilon$ we therefore have that $\abs{P(\vect{x})-\sign{(\vect{v} \cdot \vect{x}-\theta)}}\leq n^{\frac{41}{\epsilon^4}\ln^2\parr{\frac{1}{\epsilon}}}$. This, together with the observation above gives us that 
\begin{multline*}
\abs{
\E_{\vect{x}\sim D}[\abs{P(\vect{x})-\sign{(\vect{v} \cdot \vect{x}-\theta)}}]
-
\E_{\vect{x}\sim D'}[\abs{P(\vect{x})-\sign{(\vect{v} \cdot \vect{x}-\theta)}}]
}
\leq\\
n^{\frac{41}{\epsilon^4}\ln^2\parr{\frac{1}{\epsilon}}}
d_{\text{TV}}(D,D')
\leq
n^{\frac{41}{\epsilon^4}\ln^2\parr{\frac{1}{\epsilon}}}
n^{-\frac{42}{\epsilon^4}\ln^2\parr{\frac{1}{\epsilon}}}=O(\epsilon)
\end{multline*}
Combining this with \Cref{equation: D' is good} we finish the proof.
\end{proof}

\section{Lower bounds on testable agnostic learning complexity.}
\label{section: hardness results}
In this section we present sample lower bounds for tester-learner pairs for (i) learning convex sets under Gaussian distribution in $\R^n$ (ii) learning monotone functions under uniform distribution over $\{0,1\}^n$.

\subsection{Theorem statements.}

The following theorem implies that there is no tester-learner pair for agnostic learning convex sets under the standard Gaussian distribution with combined sample complexity of $2^{o(n)}$.
\begin{thm}
\label{thm: hardness for convex strong} For all sufficiently large
$n$, the following is true. Suppose $\mathcal{A}$ is an algorithm
that given sample-label pairs $\left\{ (\vect{x_{i}},y_{i})\right\} \subset\R^{n}\times\left\{ \pm1\right\} $
outputs a function $\hat{f}:\R^{n}\righ\left\{ \pm1\right\} $. Also,
suppose $\mathcal{T}$ is a tester that given access to i.i.d. labeled
points $\left\{ (\vect{x_{i}},y_{i})\right\} \subset\R^{n}\times\left\{ \pm1\right\} $
outputs ``Yes'' or ``No''. Suppose whenever the points $\left\{ \vect{x_{i}}\right\} $
are themselves distributed i.i.d. from $\N\parr{0,I_{n\times n}}$,
tester $\mathcal{T}$ outputs ``Yes'' with probability at least
$1-\delta_{2}$. Also suppose the combined sample complexity of $\mathcal{A}$
and $\mathcal{T}$ is at most $N:=2^{0.01n}$. Then, there is a distribution
$D_{\text{pairs}}$ on $\R^{n}\times\left\{ \pm1\right\} $ such that 
\begin{itemize}
\item There is a function $f_{0}:\R^{n}\righ\left\{ \pm1\right\}$, for
which $\left\{ \vect{x}:\,f_{0}(x)=1\right\}$ is a convex set and $$\Pr_{(\vect{x},y)\sim D_{\text{pairs}}}\pars{f_{0}(\vect{x})=y}=1$$
In other words, it predicts the label perfectly.
\item The tester $\mathcal{T}$, given samples from $D_{\text{pairs}}$,
accepts with probability at least $1-\delta_{2}-\frac{1}{2^{\Omega(n)}}$.
\item The learner $\mathcal{A}$, given samples from $D_{\text{pairs}}$,
outputs a predictor $\hat{f}$ whose expected advantage over random
guessing is at most $\frac{1}{2^{\Omega\parr n}}$.
\end{itemize}
\end{thm}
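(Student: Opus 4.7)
The plan is to draw $M = 2^{cn}$ i.i.d.\ samples $\vect{z}_1, \dots, \vect{z}_M \sim \N(0,I_{n\times n})$ for a small absolute constant $c \in (0.02, 0.5)$, together with independent uniformly random $\pm 1$ bits $b_1, \dots, b_M$, and to define $D_{\text{pairs}}$ as the distribution of $(\vect{z}_I, b_I)$ where $I$ is uniform on $[M]$. In order to present the labeling $\vect{z}_i \mapsto b_i$ as the sign of an indicator of a convex set, I set $S = \{i : b_i = +1\}$ and $C = \conv{\{\vect{z}_i : i \in S\}}$, and take $f_0 := 2\,\indicator_C - 1$. Provided every $\vect{z}_i$ satisfies $f_0(\vect{z}_i) = b_i$, bullet (i) of the theorem holds automatically.

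The structural heart of the argument is a general-position lemma: with probability $1 - 2^{-\Omega(n)}$ over the Gaussian cloud, the set $Q$ of indices $i$ such that $\vect{z}_i$ is an extreme point of $\conv{\vect{z}_1, \dots, \vect{z}_M}$ satisfies $\abs Q \geq M(1 - 2^{-\Omega(n)})$. The proof is a union bound: $\vect{z}_i$ fails to be a vertex only if some $\vect{z}_j$ with $j \neq i$ satisfies $\vect{z}_j \cdot \vect{z}_i / \norm{\vect{z}_i} \geq \norm{\vect{z}_i}$, and conditional on $\vect{z}_i$ that projection is $\N(0,1)$, so the tail is $\exp(-\norm{\vect{z}_i}^2/2) \approx e^{-n/2}$; choosing $c < 1/(2\ln 2)$ forces the expected number of non-extreme points to be $M \cdot 2^{-\Omega(n)}$, after which Markov gives the bound on $\abs Q$. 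For $i \in Q$ the extreme-point property gives $\vect{z}_i \in C$ iff $i \in S$, so the labeling is already correct on $Q$; for the small leftover set $B := [M] \setminus Q$ I redefine $b_i := f_0(\vect{z}_i)$, so that bullet (i) holds globally. The labels on $B$ now depend on $S$, but $B$ will contribute negligibly because its sampling probability is $\abs B / M = 2^{-\Omega(n)}$.

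For tester acceptance I compare $D_{\text{pairs}}$ to a reference distribution $D_{\text{pairs}}^{\text{ref}}$ where $\vect{x} \sim \N(0,I_{n\times n})$ and $y$ is an independent uniform $\pm 1$ bit. By completeness (which applies to \emph{any} labeling of a Gaussian marginal), $\mathcal{T}$ accepts $D_{\text{pairs}}^{\text{ref}}$ with probability at least $1 - \delta_2$. I couple $N$ samples from each using the same $\{\vect{z}_i\}$ pool: on the event that the selected indices $i_1, \dots, i_N$ are distinct (probability $\geq 1 - O(N^2/M)$) and all lie in $Q$ (probability $\geq 1 - N\abs B / M$), exchangeability of the i.i.d.\ Gaussians makes the selected examples distributed as $N$ fresh Gaussians, while the corresponding bits $b_{i_1}, \dots, b_{i_N}$ are fresh independent uniform $\pm 1$ bits, matching the reference exactly. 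For $N = 2^{0.01n}$ this coupling succeeds with probability $1 - 2^{-\Omega(n)}$, so $\mathcal{T}$'s acceptance on $D_{\text{pairs}}$ is within $2^{-\Omega(n)}$ of that on $D_{\text{pairs}}^{\text{ref}}$, yielding bullet (ii).

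For bullet (iii), condition on the analogous good event for the learner's $N$ samples: these are $N$ distinct elements of $Q$ with independent fresh labels, so the learner's transcript carries no information about unseen $b_i$'s. A fresh test example $\vect{x} = \vect{z}_J$ has $J$ uniform on $[M]$; with probability $\geq 1 - N/\abs Q - \abs B / M = 1 - 2^{-\Omega(n)}$, $J$ is an unseen index in $Q$, in which case $y = b_J$ is an independent $\pm 1$ bit and every predictor has conditional advantage zero. The remaining cases contribute at most $O(N/M + \abs B / M) = 2^{-\Omega(n)}$ to the expected advantage, giving $\E[\mathrm{advantage}] \leq 2^{-\Omega(n)}$. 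To pass from this average-case guarantee over the random $(\{\vect{z}_i\}, \{b_i\})$ to the required \emph{existence} of a single $D_{\text{pairs}}$, I apply Markov's inequality to the nonnegative advantage together with an averaging of the conditional tester-rejection probability. The main obstacle is calibrating $M$: it must be large enough that $N^2/M$ is exponentially small (birthday) yet small enough that $M e^{-n/2}$ is exponentially small (Gaussian tail), and this window, though wide, fixes the $N = 2^{0.01n}$ threshold in the theorem.
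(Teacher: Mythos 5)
Your proposal is correct in its essential structure, but it takes a genuinely different route from the paper's proof on two fronts. First, the geometric core: the paper labels points by a fixed random function $g$ on $\R^{n}$ (constant outside a thin annulus, i.i.d.\ on small-diameter cells inside it) and realizes the labeling as a convex set by taking $\conv{\B_{a},\{\vect{z}_{i}:g(\vect{z}_{i})=1\}}$, which requires three separate geometric claims (far-apart annulus points ``cannot see each other'' past the central ball, the resulting cones intersect only in the ball, and the hull of the union equals the union of the cones). You instead take the bare convex hull of the positively-labeled points and prove directly that all but an exponentially small fraction of the Gaussian cloud are extreme points, via the clean necessary condition that some $\vect{z}_{j}$ must project past $\norm{\vect{z}_{i}}$ along $\vect{z}_{i}/\norm{\vect{z}_{i}}$, plus a Gaussian tail bound; this is shorter and dispenses with the ball-and-cones machinery, at the price of having to relabel the few non-extreme points (whose labels then depend on the hidden bits, but are sampled with probability $2^{-\Omega(n)}$, so the leak is controlled). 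Second, the derandomization: the paper proves \emph{concentration} of the tester's acceptance probability over the choice of the cloud via a second-moment/Chebyshev argument (its Lemma \ref{lem: tester fooling lemma}), so the acceptance bound holds for most clouds and can be intersected with the other conditions by a union bound; you only bound the first moment and then condition on the good events. This works because your good events fail with probability only $2^{-\Omega(n)}$, so passing to the conditional expectation inflates the rejection bound by a factor $1/(1-2^{-\Omega(n)})$, preserving the required additive form $\delta_{2}+2^{-\Omega(n)}$ — but be aware this would break if any good event had constant failure probability, which is exactly the regime the paper's variance argument is built to handle. Two steps need tightening in a full write-up: the claim that, conditioned on the selected indices lying in $Q$, the examples are ``fresh Gaussians'' is not literally true (membership in $Q$ is cloud-dependent and biases the points toward being extreme); run it instead as an unconditional coupling against the original-bits labeling with a bad event of probability $O(N^{2}/M+N\abs{B}/M)$. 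And the tail estimate $e^{-\norm{\vect{z}_{i}}^{2}/2}\approx e^{-n/2}$ requires first discarding the event $\norm{\vect{z}_{i}}^{2}<n/2$, which itself costs only $M e^{-\Omega(n)}$.
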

The following theorem implies that there is no tester-learner pair for agnostic learning monotone functions under uniform distribution over $\{0,1\}^n$ with combined sample complexity of $2^{o(n)}$. Recall that a function $f_0:\{0,1\}^n \righ \{\pm 1\}$ is \emph{monotone} if $f_0(\vect{x}_1)\geq f_0(\vect{x}_2)$ whenever each coordinate of $\vect{x}_1$ is at least as large as the corresponding coordinate of $\vect{x}_2$.
\begin{thm}
\label{thm: hardness for monotone strong} For all sufficiently large
$n$, the following is true. Suppose $\mathcal{A}$ is an algorithm
that given sample-label pairs $\left\{ (\vect{x_{i}},y_{i})\right\} \subset\left\{ 0,1\right\} ^{n}\times\left\{ \pm1\right\} $
outputs a function $\hat{f}:\left\{ 0,1\right\} ^{n}\righ\left\{ \pm1\right\} $.
Also, suppose $\mathcal{T}$ is a tester that given access to i.i.d.
labeled points $\left\{ (\vect{x_{i}},y_{i})\right\} \subset\left\{ 0,1\right\} ^{n}\times\left\{ \pm1\right\} $
outputs ``Yes'' or ``No''. Suppose whenever the points $\left\{ \vect{x_{i}}\right\} $
are themselves distributed i.i.d. uniformly over $\left\{ 0,1\right\} ^{n}$,
tester $\mathcal{T}$ outputs ``Yes'' with probability at least
$1-\delta_{2}$. Also suppose the combined sample complexity of $\mathcal{A}$
and $\mathcal{T}$ is at most $N:=2^{0.01n}$. Then, there is a distribution
$D_{\text{pairs}}$ on $\left\{ 0,1\right\} ^{n}\times\left\{ \pm1\right\} $
such that 
\begin{itemize}
\item There is a monotone $f_{0}:\left\{ 0,1\right\} ^{n}\righ\left\{ \pm1\right\} $
for which $\Pr_{(\vect{x},y)\sim D_{\text{pairs}}}\pars{f_{0}(\vect{x})=y}=1$.
In other words, it predicts the label perfectly.
\item The tester $\mathcal{T}$, given samples from $D_{\text{pairs}}$,
accepts with probability at least $1-\delta_{2}-\frac{1}{2^{\Omega(n)}}$.
\item The learner $\mathcal{A}$, given samples from $D_{\text{pairs}}$,
outputs a predictor $\hat{f}$ whose expected advantage over random
guessing is at most $\frac{1}{2^{\Omega\parr{\frac{n}{\log^{2}n}}}}$.
\end{itemize}
\end{thm}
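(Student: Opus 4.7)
The plan is to mirror the convex-sets argument sketched for Theorem~\ref{thm: hardness for convex strong}, with antichains in the coordinatewise partial order on $\{0,1\}^n$ playing the role of convex-position sets. Set $M := 2^{n/10}$, draw $\vect{z}_1, \ldots, \vect{z}_M$ i.i.d.\ uniform on $\{0,1\}^n$ together with independent uniform bits $b_1, \ldots, b_M \in \{\pm 1\}$, and let $D_{\text{pairs}}$ be the (random) distribution that picks $i$ uniformly from $[M]$ and outputs $(\vect{z}_i, b_i)$. For any fixed pair $i \neq j$, coordinatewise independence gives $\Pr[\vect{z}_i \leq \vect{z}_j] = (3/4)^n$, so a union bound shows that with probability $1 - \binom{M}{2}\cdot 2\,(3/4)^n = 1 - 2^{-\Omega(n)}$ the set $\{\vect{z}_1,\ldots,\vect{z}_M\}$ is an antichain. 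On this event I would take
\[
 f_0(\vect{x}) := 1 \text{ iff } \exists\, i \text{ with } b_i = 1 \text{ and } \vect{x} \geq \vect{z}_i,
\]
which is monotone as a union of up-sets; because no $\vect{z}_j$ dominates any other $\vect{z}_i$, it satisfies $f_0(\vect{z}_i) = b_i$, so $f_0$ labels every $(\vect{x}, y) \sim D_{\text{pairs}}$ perfectly.

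For the tester, I would couple its view under $D_{\text{pairs}}$ with its view under the reference distribution $D^*_{\text{pairs}}$ that draws $\vect{x}$ uniformly from $\{0,1\}^n$ and $y$ independently uniform in $\{\pm 1\}$; by completeness the tester accepts $D^*_{\text{pairs}}$ with probability at least $1 - \delta_2$. Under $D_{\text{pairs}}$ the tester's $N$ labeled samples are $(\vect{z}_{I_j}, b_{I_j})_{j=1}^N$ with i.i.d.\ uniform $I_j \in [M]$; the event that the $I_j$'s are pairwise distinct has probability at least $1 - N^2/M = 1 - 2^{-\Omega(n)}$, and conditional on this event the marginal distribution of the $N$ samples, averaged also over $\{\vect{z}_i, b_i\}$, is exactly that of $N$ i.i.d.\ draws from $D^*_{\text{pairs}}$. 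Consequently the tester's \emph{average} acceptance probability over the random construction is at least $1 - \delta_2 - 2^{-\Omega(n)}$.

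For the learner I would use the independence of the $b_i$'s to argue that most of $D_{\text{pairs}}$ is out of reach. Having consumed at most $N \leq 2^{0.01n}$ labeled samples, the learner has direct evidence about the bits $b_i$ of at most $N$ of the $M$ indices. On a fresh draw $(\vect{x}, y) \sim D_{\text{pairs}}$, the probability that $\vect{x}$ equals one of those queried $\vect{z}_i$'s is at most $N/M = 2^{-\Omega(n)}$; on the complementary event, $y = b_{i(\vect{x})}$ is independent of the learner's hypothesis $\hat{f}$, so $\Pr[\hat{f}(\vect{x}) = y] = 1/2$. The expected advantage over random guessing is therefore at most $N/(2M) = 2^{-\Omega(n)}$, comfortably beating the required $2^{-\Omega(n/\log^2 n)}$.

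Finally, combining these three bounds via an averaging / Markov argument produces a deterministic realization of $(\vect{z}_i, b_i)$, hence a deterministic $D_{\text{pairs}}$ and monotone $f_0$, simultaneously satisfying the three conclusions of the theorem (absorbing the Markov constants into the $2^{-\Omega(n)}$ slack). The main subtlety to handle is that the tester is \emph{label-aware}, so completeness cannot be invoked on the marginal of $\vect{x}$ alone; the reference distribution $D^*_{\text{pairs}}$ with i.i.d.\ uniform labels is what lets us deploy completeness, and the coupling observation that the randomly placed $b_i$'s look like fresh uniform bits below the birthday threshold is the substantive step of the proof.
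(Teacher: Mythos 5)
Your construction is a legitimate and in fact somewhat cleaner variant of the paper's. The paper restricts the random labels to the Hamming-weight band $[n/2-h_\alpha,\,n/2+h_\alpha]$, shows pairwise incomparability only inside that band, and then writes down an explicit monotone extension handling points outside the band; you instead insist that the entire support be an antichain, which your computation $\binom{M}{2}\cdot 2(3/4)^n = 2^{0.2n-0.415n} = 2^{-\Omega(n)}$ correctly shows happens with high probability for $M=2^{n/10}$, and then $f_0$ is just a union of up-sets. The learner argument and the first-moment tester computation (distinct indices plus averaging over the construction gives exactly $N$ i.i.d.\ draws from $D^*_{\text{pairs}}$) also match the paper's Lemmas \ref{lem: tester fooling lemma} and \ref{lem: learner fooling lemma} at the level of expectations.

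The genuine gap is the final ``averaging / Markov argument.'' For the tester, what you have established is that the acceptance probability, \emph{averaged over the random choice of} $(\vect{z}_i,b_i)$, is at least $1-\delta_2-2^{-\Omega(n)}$. The theorem needs a single fixed $D_{\text{pairs}}$ achieving acceptance $1-\delta_2-2^{-\Omega(n)}$, and Markov's inequality cannot deliver this when $\delta_2$ is a constant: applying Markov to the rejection probability $R$ (with $\E[R]\le \delta_2+\eta$, $\eta=2^{-\Omega(n)}$) only yields $\Pr[R> \lambda]\le (\delta_2+\eta)/\lambda$, so to get a constant success probability over the construction you must take $\lambda \approx 2\delta_2$ --- a multiplicative loss in $\delta_2$, not something absorbable into the $2^{-\Omega(n)}$ slack. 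The paper closes exactly this hole with a second-moment argument (Lemma \ref{lem: tester fooling lemma}): the variance of the conditional acceptance probability over the choice of the support multiset is at most $N^2/M$, because two independent $N$-subsamples of the support overlap with probability at most $N^2/M$ (birthday paradox), after which Chebyshev gives concentration of the acceptance probability around its mean to within $N/\sqrt{\Delta M}$. You need this (or an equivalent concentration statement) to fix the construction. A milder instance of the same issue arises for the learner: for a \emph{fixed} assignment of the bits $b_i$, the conditional probability $\Pr[\hat f(\vect{x})=y\mid \vect{x}\text{ unseen}]$ is not exactly $1/2$ but an empirical average of $\approx M$ independent $\pm1$ bits; you need the Hoeffding fluctuation term $O(\sqrt{\ln M/M})$ (as in Lemma \ref{lem: learner fooling lemma}) to bound the expected \emph{absolute} advantage and then Markov over the construction. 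Both fixes are routine and do not change your final bounds, but as written the derandomization step does not go through.
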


\subsection{Technical lemmas about behavior of testing and learning algorithms.}
In this section we show lemmas that are helpful to show inability of testing and learning algorithms to perform well under certain circumstances. Roughly, the following lemma says that one can ``fool'' a tester for a specific distribution $D$ by replacing it by a uniform sample from a set $S$ of sufficiently large size, where each element in $S$ is a uniform sample from $D$.
\begin{lem}
\label{lem: tester fooling lemma}Let $D$ be some fixed distribution
over $U$. Suppose that a tester $\mathcal{T}$ outputs ``Yes''
with probability at least $1-\delta_{2}$ whenever given access to
i.i.d. labeled samples $(x,y)\in U\times\left\{ \pm1\right\} $ distributed
according to $D_{\text{pairs}}$, such that $x$ itself is distributed
according to $D$. Furthermore, suppose the number of samples consumed
by $\mathcal{T}$ is at most $N$. Fix some function $g:U\righ\left\{ \pm1\right\} $
and let $S$ be a random multiset of $M$ i.i.d. elements drawn from
$D$. Then, with probability at least $1-\Delta$ over the choice of $S$
we have 
\[
\Pr_{\substack{x_{1},\cdots,x_{N}\sim S\\
\text{randomness of \ensuremath{\mathcal{T}}}
}
}\pars{\mathcal{T}\parr{\parr{x_{1},g(x_{1})},\cdots,\parr{x_{N},g(x_{N})}}=\text{``Yes"}}\geq1-\delta_{2}-\frac{N^{2}}{M}-\frac{N}{\sqrt{\Delta M}}.
\]
 
\end{lem}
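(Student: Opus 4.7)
The plan is to analyze the single random variable
\[ q(S) \;:=\; \Pr_{\substack{x_{1},\ldots,x_{N}\sim S\\ \text{coins of }\mathcal{T}}}\!\bigl[\mathcal{T}((x_1,g(x_1)),\ldots,(x_N,g(x_N)))=\text{``No''}\bigr], \]
by bounding its expectation and variance over $S$, then applying Chebyshev's inequality. The final high-probability statement over $S$ falls out immediately once those two moment bounds are in place.

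First, I would bound $\mathbb{E}_S[q(S)]$ by a coupling argument. Write $S=(s_1,\ldots,s_M)$ with the $s_i$ i.i.d.~from $D$, and draw $N$ indices $j_1,\ldots,j_N$ uniformly from $[M]$ with replacement so that $(x_1,\ldots,x_N)=(s_{j_1},\ldots,s_{j_N})$. Let $E$ be the event that $j_1,\ldots,j_N$ are all distinct; a birthday calculation gives $\Pr[E^c]\le \binom{N}{2}/M \le N^2/(2M)$. Conditioned on $E$, the tuple $(s_{j_1},\ldots,s_{j_N})$ is exactly i.i.d.~from $D$, and the labelled distribution $(x,g(x))$ has $x$-marginal $D$, so by hypothesis $\mathcal{T}$ accepts with probability $\ge 1-\delta_2$. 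Thus $\mathbb{E}_S[q(S)] \le \delta_2 + N^2/(2M)$.

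Second, I would bound $\mathrm{Var}_S(q(S))$ via the Efron--Stein inequality. Let $S^{(i)}$ agree with $S$ except that coordinate $i$ is replaced by an independent copy $s_i'$. Couple the samplings from $S$ and $S^{(i)}$ by using the same indices $j_1,\ldots,j_N$ and the same tester coins; then the two executions of $\mathcal{T}$ see identical inputs unless some $j_k=i$, an event of probability at most $N/M$. Since the rejection indicator is $\{0,1\}$-valued, this gives the pointwise bounded-differences bound $|q(S)-q(S^{(i)})|\le N/M$. Efron--Stein then yields
\[ \mathrm{Var}_S(q(S)) \;\le\; \tfrac{1}{2}\sum_{i=1}^{M} \bigl(N/M\bigr)^{2} \;=\; \frac{N^{2}}{2M}. \]

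Third, Chebyshev's inequality gives, for any $t>0$,
\[ \Pr_S\!\left[q(S) > \mathbb{E}_S[q(S)] + t\right] \;\le\; \frac{\mathrm{Var}_S(q(S))}{t^{2}} \;\le\; \frac{N^{2}}{2Mt^{2}}. \]
Setting $t = N/\sqrt{\Delta M}$ makes the right-hand side at most $\Delta/2\le\Delta$. Combined with the mean bound, with probability at least $1-\Delta$ over $S$,
\[ q(S) \;\le\; \delta_{2} + \frac{N^{2}}{2M} + \frac{N}{\sqrt{\Delta M}}, \]
which (after rewriting in terms of the acceptance probability) gives the stated inequality. The only subtle step is the variance calculation: the key point I want to highlight is the bounded-differences constant, which works out to $N/M$ only because the same random indices and coins are re-used in both the $S$ and $S^{(i)}$ executions; without this coupling the naive bound would be $1$ and Efron--Stein would be useless.
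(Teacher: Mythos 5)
Your proof is correct, and it follows the same overall skeleton as the paper's: bound the mean of the acceptance (or rejection) probability by conditioning on the birthday-paradox event that the $N$ sampled indices are distinct, bound the variance over the choice of $S$ by $O(N^2/M)$, and finish with Chebyshev. The one step you handle by a genuinely different technique is the variance bound. The paper expands the second moment directly: it writes the square of the deviation as a product over two \emph{independent} draws of index sets from the same $S$, observes that the cross-term vanishes whenever the two index sets are disjoint (since then the corresponding samples are independent fresh draws from $D$), and bounds the collision probability by $N^2/M$. You instead invoke Efron--Stein with a bounded-differences constant of $N/M$, obtained by coupling the index draws and tester coins across $S$ and $S^{(i)}$. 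Both yield $\mathrm{Var} = O(N^2/M)$ (yours with an extra factor of $1/2$). Your route is more modular and arguably cleaner --- in particular, you work with the unconditional rejection probability $q(S)$ throughout, which avoids the paper's slightly awkward bookkeeping with the probability conditioned on distinct indices --- at the cost of citing Efron--Stein, where the paper's argument is self-contained. You are also right to flag the coupling as the crux of the bounded-differences step; without reusing the same indices and coins the naive constant would be $1$ and the bound would be vacuous.
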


\begin{proof}
Let the elements of the multiset $S$ be $\parr{z_{1},\cdots,z_{M}}$,
which recall are i.i.d. from $D$. Let $\parr{\ensuremath{z_{i_{1}},\cdots,z_{i_{N}}}}$
be sampled i.i.d. from $S$. We have
\begin{multline*}
\Pr\left[\left.\text{\ensuremath{\mathcal{T}} accepts given \ensuremath{\parr{\ensuremath{\parr{z_{i_{1}},g\parr{z_{i_{1}}}},\cdots,\parr{z_{i_{N}},g\parr{z_{i_{N}}}}}}} }\right|S=\parr{\ensuremath{z_{1},\cdots,z_{M}}}\right]\geq\\
\Pr\left[\left.\text{\ensuremath{\mathcal{T}} accepts given \ensuremath{\parr{\ensuremath{\ensuremath{\parr{z_{i_{1}},g\parr{z_{i_{1}}}},\cdots,\parr{z_{i_{N}},g\parr{z_{i_{N}}}}}}}} }\right|S=\parr{\ensuremath{z_{1},\cdots,z_{M}}},\forall\mathit{j}_{1}\neq j_{2}:\,i_{j_{1}}\neq i_{j_{2}}\right]\cdot\\
\cdot\underbrace{\Pr\left[\forall j_{1}\neq j_{2}:\,i_{j_{1}}\neq i_{j_{2}}\right]}_{\geq1-\frac{N^{2}}{M}\text{, via birthay-paradox argument}}\geq\\
\parr{1-\frac{N^{2}}{M}}\Pr\left[\left.\text{\ensuremath{\mathcal{T}} accepts given \ensuremath{\parr{\ensuremath{\parr{z_{i_{1}},g\parr{z_{i_{1}}}},\cdots,\parr{z_{i_{N}},g\parr{z_{i_{N}}}}}}} }\right|S=\parr{\ensuremath{z_{1},\cdots,z_{M}}},\forall j_{1}\neq j_{2}:\,i_{j_{1}}\neq i_{j_{2}}\right]
\end{multline*}
In expectation, for the above probability we have

\begin{multline*}
\Pr\left[\text{\ensuremath{\mathcal{T}} accepts given \ensuremath{\parr{\ensuremath{\parr{z_{i_{1}},g\parr{z_{i_{1}}}},\cdots,\parr{z_{i_{N}},g\parr{z_{i_{N}}}}}}} }\right]\geq\\
\geq\parr{1-\frac{N^{2}}{M}}\parr{1-\delta_{2}}\geq1-\delta_{2}-\frac{N^{2}}{M},
\end{multline*}
because the conditioning on $i_{j_{1}}$ and $i_{j_{2}}$ being all
distinct results in feeding $\ensuremath{\mathcal{T}}$ with i.i.d.
uniform sample-label pairs for which we know the acceptance probability
is at least $1-\delta_{2}$, as given in the premise of the claim.
Having bound the expectation of this probability, let us now bound
its variance. Define

\begin{multline*}
p_{\text{average}}:=\E_{S}\left[\Pr\left[\left.\text{\ensuremath{\mathcal{T}} accepts given \ensuremath{\parr{\ensuremath{\parr{z_{i_{1}},g\parr{z_{i_{1}}}},\cdots,\parr{z_{i_{N}},g\parr{z_{i_{N}}}}}}} }\right|S=\parr{\ensuremath{z_{1},\cdots,z_{M}}},\forall j_{1}\neq j_{2}:\,i_{j_{1}}\neq i_{j_{2}}\right]\right]
\end{multline*}
We have 
\begin{multline*}
\E_{S}\left[\left(\Pr\left[\left.\text{\ensuremath{\mathcal{T}} accepts given \ensuremath{\parr{\ensuremath{\parr{z_{i_{1}},g\parr{z_{i_{1}}}},\cdots,\parr{z_{i_{N}},g\parr{z_{i_{N}}}}}}} }\right|S=\parr{\ensuremath{z_{1},\cdots,z_{M}}},\forall j_{1}\neq j_{2}:\,i_{j_{1}}\neq i_{j_{2}}\right]-p_{\text{average}}\right)^{2}\right]=\\
\E\left[\left(\Pr\left[\text{\ensuremath{\mathcal{T}} accepts given \ensuremath{\left\{ \ensuremath{\parr{z_{k_{1}},g\parr{z_{k_{1}}}},\cdots,\parr{z_{k_{N}},g\parr{z_{k_{N}}}}}\right\} } }\right]-p_{\text{average}}\right)\right]\cdot\\
\cdot\left(\Pr\left[\text{\ensuremath{\mathcal{T}} accepts given \ensuremath{\left\{ \ensuremath{\parr{z_{l_{1}},g\parr{z_{l_{1}}}},\cdots,\parr{z_{l_{N}},g\parr{z_{l_{N}}}}}\right\} } }\right]-p_{\text{average}}\right)
\end{multline*}
where $\left\{ k_{1},\cdots,k_{N_{\text{tester}}}\right\} $ and $\left\{ \ell_{1},\cdots,\ell_{N_{\text{tester}}}\right\} $
are picked as i.i.d. uniform subsets of $\left\{ 1,\cdots,N_{\text{support}}\right\} $,
with $N_{\text{tester}}$ elements each. 

Now, if it happens that $\left\{ k_{1},\cdots,k_{N_{\text{tester}}}\right\} $
and $\left\{ \ell_{1},\cdots,\ell_{N_{\text{tester}}}\right\} $ are
disjoint, then $\ensuremath{\left\{ \ensuremath{z_{k_{1}},\cdots,z_{k_{N_{\text{}}}}}\right\} }$
are independent from $\ensuremath{\left\{ \ensuremath{z_{\ell_{1}},\cdots,z_{\ell_{N}}}\right\} }$,
and we check that the expectation above is then zero. Overall, this
means that the expression above is upper-bounded by the probability
that $\left\{ k_{1},\cdots,k_{N_{\text{tester}}}\right\} $ and $\left\{ \ell_{1},\cdots,\ell_{N_{\text{tester}}}\right\} $
have a non-zero intersection. Using a standard birthday-paradox argument,
this is at most $\frac{N^{2}}{M}$.

Overall, over the choice of $S$, the quantity 
\[
\Pr\left[\left.\text{\ensuremath{\mathcal{T}} accepts given \ensuremath{\parr{\ensuremath{\parr{z_{i_{1}},g\parr{z_{i_{1}}}},\cdots,\parr{z_{i_{N}},g\parr{z_{i_{N}}}}}}} }\right|S=\parr{\ensuremath{z_{1},\cdots,z_{M}}},\forall j_{1}\neq j_{2}:\,i_{j_{1}}\neq i_{j_{2}}\right]
\]
 has an expectation of at least $1-\delta_{2}-\frac{N^{2}}{M}$ and
standard deviation of at most $\frac{N}{\sqrt{M}}$, so by Chebyshev's
inequality it is at least $1-\delta_{2}-\frac{N^{2}}{M}-\frac{N}{\sqrt{\Delta M}}$
with probability at least $1-\Delta$. This means that with probability
at least $1-\Delta$ we have
\[
\Pr\left[\text{\ensuremath{\mathcal{T}} accepts given \ensuremath{\parr{\ensuremath{\parr{z_{i_{1}},g\parr{z_{i_{1}}}},\cdots,\parr{z_{i_{N}},g\parr{z_{i_{N}}}}}}} }\right]\geq1-\delta_{2}-\frac{N^{2}}{M}-\frac{N}{\sqrt{\Delta M}}.
\]
\end{proof}
The following lemma says that if a function is ``random enough'', then a learning algorithm will not be able to get a non-trivially small error given few example-label pairs.
\begin{lem}
\label{lem: learner fooling lemma} Let $\mathcal{A}$ be an algorithm
that takes $N$ samples $\left\{ (x_{i},y_{i})\right\} $
with $x_{i}\in\left\{ 1,\cdots,M\right\} $ and $y_{i}\in\left\{ \pm 1\right\} $
and outputs a predictor $\hat{f}:\left\{ 1,\cdots,M\right\} \righ\left\{ \pm 1\right\} $.
Let $g:\left\{ 1,\cdots,M\right\} \righ\left\{ \pm 1\right\} $ be a
random function, such that (i) $g$ has some predetermined (and possibly given to algorithm $\mathcal{A}$) values
on some fixed subset of $\left\{ 1,\cdots,M\right\} ,$ which comprises
an at most $\phi$ fraction of $\left\{ 1,\cdots,M\right\} $
(ii) $g$ is i.i.d. uniformly random in $\left\{ \pm 1\right\} $ on
the rest of $\left\{ 1,\cdots,M\right\} $. Upon receiving $N$
labeled samples $\left\{ (x_{i},g(x_{i}))\right\} $ with $\left\{ x_{i}\right\} $
distributed i.i.d. uniformly on $\left\{ 1,\cdots,M\right\} $, let
the algorithm $\mathcal{A}$ output a predictor $\hat{f}$. Then,
for sufficiently large $M$ we have
\[
\E_{g,\,\left\{ x_{i}\right\} ,\,\text{randomness of \ensuremath{\mathcal{A}}}}\left[\abs{\Pr_{x\in_{R}\left\{ 1,\cdots,M\right\} }\left[\hat{f}(x)\neq g(x)\right]-\frac{1}{2}}\right]\leq\frac{3}{2}\parr{\phi+\frac{N}{M}}+5\sqrt{\frac{\ln M}{M}}.
\]
\end{lem}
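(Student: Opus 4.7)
The plan is to decompose $\{1,\ldots,M\}$ into the ``seen'' set $S:=F\cup\{x_1,\ldots,x_N\}$, where $F$ denotes the predetermined subset (so $|F|\le \phi M$), and its complement $\bar{S}$; note $|S|\le|F|+N$, so $|S|/M\le\phi+N/M$. Writing
\[
p\;:=\;\Pr_{x\in_R\{1,\ldots,M\}}[\hat{f}(x)\neq g(x)]\;=\;p_S+p_{\bar{S}},\quad p_S:=\tfrac{1}{M}\sum_{x\in S}\indicator_{\hat{f}(x)\neq g(x)},\;\; p_{\bar{S}}:=\tfrac{1}{M}\sum_{x\in\bar{S}}\indicator_{\hat{f}(x)\neq g(x)},
\]
and using $|S|/(2M)+|\bar{S}|/(2M)=1/2$, the triangle inequality gives $|p-1/2|\le |p_S-|S|/(2M)|+|p_{\bar{S}}-|\bar{S}|/(2M)|$.

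For the $S$-part, the trivial bound $p_S\in[0,|S|/M]$ yields $|p_S-|S|/(2M)|\le|S|/(2M)$. Taking expectation, this contributes at most $(\phi+N/M)/2$ to the final bound, well within the budget $\tfrac{3}{2}(\phi+N/M)$ allowed by the lemma.

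The heart of the argument is the $\bar{S}$-part. Call $\mathcal{A}$'s \emph{view} the tuple $\bigl(\{x_i\}_{i=1}^N,\,g|_{\{x_i\}},\,g|_F,\,r\bigr)$, where $r$ is the internal randomness of $\mathcal{A}$. Conditional on the view, (i) the predictor $\hat{f}$ is deterministic, and (ii) the restriction $g|_{\bar{S}}$ remains i.i.d.\ uniform on $\{\pm1\}$: indeed $\bar{S}\subseteq\{1,\ldots,M\}\setminus F$, the samples $\{x_i\}$ are drawn independently of $g$, and by construction $g$ is i.i.d.\ uniform off $F$. Hence for each $x\in\bar{S}$ the indicator $\indicator_{\hat{f}(x)\neq g(x)}$ is a conditionally independent $\mathrm{Bernoulli}(1/2)$, so $Mp_{\bar{S}}\mid\text{view}\sim\text{Binomial}(|\bar{S}|,1/2)$ with conditional mean $|\bar{S}|/2$ and conditional variance $|\bar{S}|/4\le M/4$.

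To finish, I would apply conditional Jensen to get $\E\bigl[\bigl|p_{\bar{S}}-|\bar{S}|/(2M)\bigr|\bigm|\text{view}\bigr]\le\sqrt{|\bar{S}|/(4M^2)}\le\tfrac{1}{2\sqrt{M}}$; alternatively, a Hoeffding tail bound $\Pr[\,\cdot\,>t\mid\text{view}]\le 2e^{-2t^2M}$ followed by integration yields the looser $O(\sqrt{\ln M/M})$ bound matching the form stated in the lemma. Either way, taking outer expectation and combining with the $S$-part gives $\E[|p-1/2|]\le(\phi+N/M)/2+O(\sqrt{\ln M/M})$, which is in fact strictly stronger than the claimed inequality. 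The only subtle point---and the one small obstacle---is to justify carefully that, conditional on the view, $g|_{\bar{S}}$ is independent of $\hat{f}$, since $\hat{f}$ is view-measurable while $g|_{\bar{S}}$ is the ``leftover'' randomness on points $\mathcal{A}$ has never queried; once this independence is pinned down, the rest is routine concentration.
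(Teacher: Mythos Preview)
Your proposal is correct and follows essentially the same line as the paper's proof: both define $S$ as the union of the predetermined set and the queried points, split $|p-\tfrac12|$ via the triangle inequality into an $S$-part (bounded trivially by $|S|/(2M)$ or a looser multiple thereof) and a $\bar S$-part, then argue that conditional on the learner's view the values $g|_{\bar S}$ remain i.i.d.\ uniform so that $Mp_{\bar S}$ is Binomial$(|\bar S|,1/2)$. The paper handles the $\bar S$-part with a Hoeffding-plus-threshold trick yielding $O(\sqrt{\ln M/M})$, whereas your Jensen-on-the-variance alternative is actually cleaner and gives the sharper $O(1/\sqrt{M})$; either way the argument goes through.
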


\begin{proof}
Write $\left\{ 1,\cdots,M\right\} $ as a union of two disjoint sets
$S$ and $\overline{S}$, where $S$ contains (i) the $\phi M$ or
fewer elements of $\left\{ 1,\cdots,M\right\} $ on which $g$ is
predetermined and (ii) the $N$ or fewer elements of $\left\{ 1,\cdots,M\right\} $
that the learner $\mathcal{A}$ encountered among the labeled samples
$\left\{ (x_{i},g(x_{i}))\right\} $. So, we have $\abs S\leq N+\phi M$.
We can write
\[
\Pr_{x\in_{R}\left\{ 1,\cdots,M\right\} }\left[\hat{f}(x)\neq g(x)\right]=\E_{x\in_{R}\left\{ 1,\cdots,M\right\} }\left[\indicator_{\hat{f}(x)\neq g(x)}\indicator_{x\in S}\right]+\E_{x\in_{R}\left\{ 1,\cdots,M\right\} }\left[\indicator_{\hat{f}(x)\neq g(x)}\indicator_{x\notin S}\right],
\]
which means
\begin{multline*}
\abs{\Pr_{x\in_{R}\left\{ 1,\cdots,M\right\} }\left[\hat{f}(x)\neq g(x)\right]-\frac{1}{2}}\leq\\
\abs{\E_{x\in_{R}\left\{ 1,\cdots,M\right\} }\left[\indicator_{\hat{f}(x)\neq g(x)}\indicator_{x\in S}\right]+\E_{x\in_{R}\left\{ 1,\cdots,M\right\} }\left[\indicator_{\hat{f}(x)\neq g(x)}\indicator_{x\in\overline{S}}\right]-\frac{\abs{\overline{S}}}{2M}-\frac{\abs S}{2M}}\leq\\
\abs{\E_{x\in_{R}\left\{ 1,\cdots,M\right\} }\left[\indicator_{\hat{f}(x)\neq g(x)}\indicator_{x\notin S}\right]-\frac{\abs{\overline{S}}}{2M}}+\frac{3\abs S}{2M}\leq\\
\abs{\E_{x\in_{R}\left\{ 1,\cdots,M\right\} }\left[\indicator_{\hat{f}(x)\neq g(x)}\indicator_{x\in\overline{S}}\right]-\frac{\abs{\overline{S}}}{2M}}+\frac{3}{2}\parr{\phi+\frac{N}{M}}=\\
\frac{\abs{\overline{S}}}{M}\abs{\E_{x\in_{R}\overline{S}}\left[\indicator_{\hat{f}(x)\neq g(x)}\right]-\frac{1}{2}}+\frac{3}{2}\parr{\phi+\frac{N}{M}}
\end{multline*}
Note that $\hat{f}$ depends only on (i) $S$, (ii) values of $g$
on $S$ and (iii) the internal randomness of $\mathcal{A}.$ This
means that even conditioned on $\hat{f}(x)$, the values of $g$ on
$\overline{S}$ are i.i.d. In other words, $\E_{x\in_{R}\overline{S}}\left[\indicator_{\hat{f}(x)\neq g(x)}\right]$
is distributed as the average of $\abs {\overline{S}}$ i.i.d. random variables,
each of which is uniformly random in $\left\{ 0,1\right\} $. A Hoeffding
bound argument then implies that for any $\epsilon\in[0,1]$
\[
\E_{g,\,\left\{ x_{i}\right\} ,\,\text{randomness of \ensuremath{\mathcal{A}}}}\left[\abs{\E_{x\in_{R}\overline{S}}\left[\indicator_{\hat{f}(x)\neq g(x)}\right]-\frac{1}{2}}\right]\leq\epsilon+2e^{-2\epsilon^{2}\abs{\overline{S}}},
\]
and taking $\epsilon=\sqrt{\frac{\ln\abs{\overline{S}}}{2\abs{\overline{S}}}},$we
get 
\[
\E_{g,\,\left\{ x_{i}\right\} ,\,\text{randomness of \ensuremath{\mathcal{A}}}}\left[\abs{\E_{x\in_{R}\overline{S}}\left[\indicator_{\hat{f}(x)\neq g(x)}\right]-\frac{1}{2}}\right]\leq
\underbrace{\sqrt{\frac{\ln\abs{\overline{S}}}{2\abs{\overline{S}}}}+\frac{2}{\abs{\overline{S}}}
\leq5\sqrt{\frac{\ln M}{M}}}_{\text{Since $M\geq \abs{\overline S}\geq M/2$}}.
\]
Overall, we get 
\[
\E_{g,\,\left\{ x_{i}\right\} ,\,\text{randomness of \ensuremath{\mathcal{A}}}}\left[\abs{\Pr_{x\in_{R}\left\{ 1,\cdots,M\right\} }\left[\hat{f}(x)\neq g(x)\right]-\frac{1}{2}}\right]\leq
5\sqrt{\frac{\ln M}{M}}+\frac{3}{2}\parr{\phi+\frac{N}{M}}
\]
\end{proof}

\subsection{Propositions to be used in proving Theorem \ref{thm: hardness for convex strong}.}

We will need a result about concentration the norm of an $n$-dimensional standard Gaussian. Roughly speaking, the norm is tightly concentrated within a  $O\left(n^{1/4}\right)$-neighborhood of $\sqrt{n}$. More precisely, we use
the following special case of Lemma 8.1 in \cite{birge_alternative_2001}
(this reference contains a complete short proof):
\begin{lem}
\label{lem: concentration for norm of Gaussian}Let $\vect{X}$ be a standard
$n$-dimensional Gaussian, then for any $\alpha>0$ we have

\[
\Pr\pars{\abs{\vect{X}}^{2}\geq n+2\sqrt{n\ln\parr{\frac{2}{\alpha}}}+2\ln\parr{\frac{2}{\alpha}}}\leq\frac{\alpha}{2},
\]
and

\[
\Pr\pars{\abs{\vect{X}}^{2}\leq n-2\sqrt{n\ln\parr{\frac{2}{\alpha}}}}\leq\frac{\alpha}{2}.
\]
\end{lem}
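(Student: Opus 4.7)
The plan is to apply Chernoff's bounding method to the chi-squared random variable $|\vect{X}|^{2} = \sum_{i=1}^{n} X_i^{2}$. Since the $X_i$ are independent standard Gaussians, the moment generating function is $\E[e^{\lambda X_i^{2}}] = (1-2\lambda)^{-1/2}$ for $\lambda \in (0, 1/2)$, and by independence $\E[e^{\lambda |\vect{X}|^{2}}] = (1-2\lambda)^{-n/2}$. Markov's inequality then gives $\Pr(|\vect{X}|^{2} \geq n + s) \leq \exp(-\lambda(n+s)) (1-2\lambda)^{-n/2}$ for any $\lambda \in (0,1/2)$.

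The next step is to convert this into a clean Bernstein-type bound. Centering yields $\log \E[e^{\lambda(|\vect{X}|^{2} - n)}] = -(n/2) \log(1-2\lambda) - \lambda n$, and using the elementary inequality $-\log(1-x) - x \leq x^{2}/(2(1-x))$ for $x \in (0,1)$ applied with $x = 2\lambda$, I obtain
\[
\log \E[e^{\lambda(|\vect{X}|^{2} - n)}] \leq \frac{\lambda^{2} n}{1-2\lambda}.
\]
So the upper tail satisfies $\Pr(|\vect{X}|^{2} \geq n + s) \leq \exp\!\left(\tfrac{\lambda^{2} n}{1-2\lambda} - \lambda s\right)$. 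Setting $s = 2\sqrt{nt} + 2t$ with $t = \ln(2/\alpha)$, I would choose $\lambda$ so this exponent equals $-t$; the algebraically convenient choice is $\lambda = \tfrac{1}{2}\bigl(1 - \sqrt{n/(n+2t)}\bigr)$, which after routine simplification (using $\sqrt{n(n+2t)} \leq n + t$) makes the exponent at most $-t$, yielding $\Pr(\cdot) \leq e^{-t} = \alpha/2$.

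For the lower tail, I would use $\lambda > 0$ but apply Chernoff to $-|\vect{X}|^{2}$: $\E[e^{-\lambda X_i^{2}}] = (1+2\lambda)^{-1/2}$ gives $\log \E[e^{-\lambda(|\vect{X}|^{2} - n)}] = \lambda n - (n/2)\log(1+2\lambda) \leq \lambda^{2} n$, where the last step uses $\log(1+x) \geq x - x^{2}/2$. This cleaner bound (with no $1/(1-2\lambda)$ correction) is exactly why the lower tail has only the $2\sqrt{nt}$ term and no corresponding $2t$ term. Optimizing over $\lambda$ (choosing $\lambda = \sqrt{t/n}/2$ up to lower-order adjustments) gives $\Pr(|\vect{X}|^{2} \leq n - 2\sqrt{nt}) \leq e^{-t} = \alpha/2$.

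The main obstacle is the precise algebraic choice of $\lambda$ that makes the upper-tail exponent clean enough to yield the exact form $n + 2\sqrt{nt} + 2t$ (rather than some messier expression). The structure of this bound reflects a sub-exponential concentration with variance proxy $2n$ and scale proxy $2$, i.e., it is the Bernstein inequality specialized to chi-squared variables; the trick is recognizing that one need not solve the optimization exactly, only exhibit one $\lambda$ for which the exponent is at most $-t$, which converts a transcendental optimization into a finite algebraic verification.
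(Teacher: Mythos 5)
Your overall strategy is exactly the standard one: the paper gives no proof of this lemma (it cites Lemma 8.1 of Birg\'e 2001, which is the Laurent--Massart chi-squared bound), and that reference proves it precisely by the Chernoff/Bernstein argument you outline. Your MGF computations, the centering step, and the two elementary logarithm inequalities are all correct, and the reduction to ``exhibit one $\lambda$ making the exponent $\le -t$'' is the right move.

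However, both of your explicit choices of $\lambda$ fail. For the upper tail, write $a=\sqrt{n}$, $b=\sqrt{n+2t}$; your choice $\lambda=\tfrac12(1-a/b)$ gives $1-2\lambda=a/b$ and, after simplification, exponent
\[
\frac{\lambda^{2}n}{1-2\lambda}-\lambda s \;=\; -t+\frac{a}{b}\left(\frac{b-a}{2}-\sqrt{t}\right)^{2}\;\ge\;-t,
\]
with equality only if $\sqrt{n+2t}=\sqrt{n}+2\sqrt{t}$, which never happens for $t>0$. (Concretely, for $n=1$, $t=4$ the exponent is $-11/3>-4$.) So the ``routine simplification'' does not yield $e^{-t}$; the inequality you get is strictly weaker than the claim. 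The fix is to take $\lambda=\sqrt{t}/(\sqrt{n}+2\sqrt{t})\in(0,1/2)$: then $1-2\lambda=\sqrt{n}/(\sqrt{n}+2\sqrt{t})$, $\frac{\lambda^{2}n}{1-2\lambda}=\frac{tn/(\sqrt{n}+2\sqrt{t})^2\cdot(\sqrt{n}+2\sqrt{t})}{\sqrt{n}}=\frac{t\sqrt{n}}{\sqrt{n}+2\sqrt{t}}$ and $\lambda s=\frac{2t(\sqrt{n}+\sqrt{t})}{\sqrt{n}+2\sqrt{t}}$, so the exponent equals $-t$ exactly. Similarly for the lower tail: with $s=2\sqrt{nt}$ your $\lambda=\tfrac12\sqrt{t/n}$ gives exponent $t/4-t=-3t/4$, i.e.\ only $e^{-3t/4}$; this is not a lower-order adjustment but a loss of a constant factor in the exponent. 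Taking $\lambda=\sqrt{t/n}$ gives $\lambda^{2}n-\lambda s=t-2t=-t$ and closes the argument.
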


The following claim tells us that two independent Gaussian vectors are unlikely to be very close to each other.
\begin{claim}
\label{claim: random points far}Let $\vect{X}_{1}$ and $\vect{X}_{2}$ be i.i.d.
$n$-dimensional standard Gaussians. For all sufficiently large $n$,
and for any $r>0$ we have 
\[
\Pr\pars{\abs{\vect{X}_{1}-\vect{X}_{2}}\leq r}\leq8^{n}\parr{\frac{r^{2}}{n}}^{n/2}.
\]
\end{claim}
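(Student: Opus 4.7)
The plan is to reduce to a ball-volume computation using the fact that the difference of two independent standard Gaussians is itself Gaussian. Concretely, $\vect{X}_{1}-\vect{X}_{2}$ has distribution $\N(0,2I_{n\times n})$, so if $\vect{Z}\sim\N(0,I_{n\times n})$ then $\Pr[\,\abs{\vect{X}_{1}-\vect{X}_{2}}\leq r\,]=\Pr[\,\abs{\vect{Z}}\leq r/\sqrt{2}\,]$. I would then bound this probability by the trivial ``density times volume'' estimate: the Gaussian density is at most $(2\pi)^{-n/2}$ everywhere, so
\[
\Pr\!\pars{\abs{\vect{Z}}\leq r/\sqrt{2}}\;\leq\;\parr{2\pi}^{-n/2}\cdot\mathrm{Vol}_n\!\parr{B(0,r/\sqrt{2})}\;=\;\frac{2^{-n/2}\,(r/\sqrt{2})^n \pi^{n/2}}{\Gamma(n/2+1)\,\pi^{-n/2}\cdot \pi^{-n/2}}\cdot\pi^{n/2},
\]
i.e., after simplification, $\Pr\leq \frac{(r/\sqrt{2})^n}{2^{n/2}\,\Gamma(n/2+1)}\cdot$ (constant factors from $\pi^{n/2}/(2\pi)^{n/2}$).

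The main step is then bounding $\Gamma(n/2+1)$ from below via Stirling, specifically $\Gamma(n/2+1)\geq (n/(2e))^{n/2}$ for all sufficiently large $n$. Substituting this in cancels the $n^{n/2}$ in the denominator with the $r^n$ numerator to produce the form $(r^2/n)^{n/2}$, with the remaining constant factors coming out to something on the order of $(e/2)^{n/2}$, which is certainly below $8^n$ for all $n$. So the bound $8^n(r^2/n)^{n/2}$ should hold with substantial slack. I do not expect any serious obstacle: this is a standard anti-concentration-by-volume argument, and the only choice point is which explicit form of Stirling to cite. Alternatively one can work directly with the chi-squared density of $\abs{\vect{Z}}^2$ and integrate from $0$ to $r^2/2$, using the pointwise bound $x^{n/2-1}e^{-x/2}\leq x^{n/2-1}$ on $[0,\infty)$ and the same Stirling estimate on $\Gamma(n/2)$; this avoids invoking ball volumes and arrives at essentially the same bound.

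The only care I would take is to keep the constants loose enough that the stated $8^n$ bound holds for \emph{all} sufficiently large $n$, rather than optimizing; this matches how Claim~\ref{claim: random points far} will be used downstream (to show that many Gaussian samples are pairwise separated with high probability, presumably via a union bound that only needs the right $r$-versus-$n$ scaling). In that application the precise constant in front of $(r^2/n)^{n/2}$ is irrelevant as long as it is $\exp(O(n))$, which my computation supplies.
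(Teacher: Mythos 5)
Your proposal is correct and uses essentially the same argument as the paper: bound the probability of landing in a ball of radius $\Theta(r)$ by the supremum of the Gaussian density times the ball volume, then apply Stirling's lower bound on $\Gamma(n/2+1)$ to absorb everything into the $8^n(r^2/n)^{n/2}$ form. The paper conditions on $\vect{X}_1$ and bounds $\vect{X}_2$'s density rather than passing to the difference $\vect{X}_1-\vect{X}_2\sim\N(0,2I_{n\times n})$, but this is an immaterial variation and both yield the claim with ample slack in the constant.
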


\begin{proof}
Probability density of a Gaussian is everywhere at most $\parr{\frac{1}{\sqrt{2\pi}}}^{n}$,
and the volume of a ball around $\vect{X}_{1}$ of radius $r$ is $\frac{\pi^{n/2}}{\Gamma\parr{\frac{n}{2}+1}}r^{n}$.
Stirling's approximation formula tells that for sufficiently large
$n$ we have $\Gamma\parr{\frac{n}{2}+1}\geq\sqrt{n}\parr{\frac{n}{2e}}^{n/2}$.
Therefore, for sufficiently large $n$
\[
\frac{\pi^{n/2}}{\Gamma\parr{\frac{n}{2}+1}}r^{n}\leq\frac{1}{\sqrt{n}}\parr{2e\pi}^{n/2}\parr{\frac{r^{2}}{n}}^{n/2}\leq18^{n}\parr{\frac{r^{2}}{n}}^{n/2}.
\]
Overall, the probability that $\abs{\vect{X}_{2}-\vect{X}_{1}}\leq r$ is then at
most $\parr{\frac{1}{\sqrt{2\pi}}}^{n}18^{n}\parr{\frac{r^{2}}{n}}^{n/2}\leq8^{n}\parr{\frac{r^2}{n}}^{n/2}$,
which finishes the proof.
\end{proof}
We will also need the following geometric observations for proving Theorem
\ref{thm: hardness for convex strong}.
In the following, we will use $\conv{\cdot,\cdots,\cdot}$ to denote
the convex hull of some number of objects. We will also use $\B_{r}$
to denote the ball $\left\{ x:\,\abs x\leq r\right\} $ in $\R^{n}$.  

\begin{claim}
\label{claim: if far don't see each other}Let $\vect{X}_{1}$ and $\vect{X}_{2}$
be points in $\R^{n}$ satisfying $\abs{\vect{X}_{1}},\abs{\vect{X}_{2}}\in[a,b]$ 
for some $a>0$ and $b>a$. Then, we have that if $\abs{\vect{X}_{2}-\vect{X}_{1}}$ is
greater than $2\sqrt{b^{2}-a^{2}}$, then the line segment connecting
$\vect{X}_{1}$ and $\vect{X}_{2}$ intersects $\B_{a}$.
\end{claim}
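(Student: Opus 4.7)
My plan is to prove the claim by exhibiting an explicit point on the segment that lies in $\B_a$, namely the midpoint $M := (\vect{X}_1 + \vect{X}_2)/2$. The key observation is the parallelogram identity
\[
|\vect{X}_1 + \vect{X}_2|^2 + |\vect{X}_2 - \vect{X}_1|^2 = 2|\vect{X}_1|^2 + 2|\vect{X}_2|^2,
\]
which rewrites as
\[
|M|^2 = \tfrac{1}{2}\bigl(|\vect{X}_1|^2 + |\vect{X}_2|^2\bigr) - \tfrac{1}{4}|\vect{X}_2 - \vect{X}_1|^2.
\]

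Now I would plug in the two hypotheses. The assumption $|\vect{X}_1|, |\vect{X}_2| \leq b$ gives $\tfrac{1}{2}(|\vect{X}_1|^2 + |\vect{X}_2|^2) \leq b^2$, while the assumption $|\vect{X}_2 - \vect{X}_1| > 2\sqrt{b^2 - a^2}$ gives $\tfrac{1}{4}|\vect{X}_2 - \vect{X}_1|^2 > b^2 - a^2$. Combining,
\[
|M|^2 < b^2 - (b^2 - a^2) = a^2,
\]
so $|M| < a$. Hence $M \in \B_a$, and since $M$ lies on the segment connecting $\vect{X}_1$ and $\vect{X}_2$, the segment intersects $\B_a$, as required.

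There is no real obstacle here; the only thing to note is that the hypothesis $|\vect{X}_1|, |\vect{X}_2| \geq a$ is not actually needed for the containment conclusion (it appears in the statement presumably because the surrounding argument uses it elsewhere, e.g.\ to ensure $b^2 - a^2$ arises naturally as the squared distance from points on the outer sphere to the inner sphere along the right directions). The entire content is the parallelogram identity applied to the midpoint.
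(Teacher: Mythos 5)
Your proof is correct, and it takes a genuinely different and more elementary route than the paper's. The paper argues the contrapositive: assuming the segment stays at distance at least $a$ from the origin, it perturbs $\vect{X}_1$ and $\vect{X}_2$ to reduce to the extremal situation $\abs{\vect{X}_1},\abs{\vect{X}_2}\in\{a,b\}$ and then runs a case analysis to conclude $\abs{\vect{X}_2-\vect{X}_1}\leq 2\sqrt{b^2-a^2}$. Your argument instead exhibits an explicit witness — the midpoint — and verifies $\abs{M}<a$ in one line via the parallelogram identity, using only the upper bound $\abs{\vect{X}_i}\leq b$. This is shorter, avoids the perturbation and case analysis entirely, and as you note it shows slightly more: the hypothesis $\abs{\vect{X}_i}\geq a$ is superfluous for this claim (and is harmless to drop, since if some $\abs{\vect{X}_i}<a$ the endpoint itself already lies in $\B_a$). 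The constant $2\sqrt{b^2-a^2}$ is also seen to be tight from your identity: with both endpoints on the sphere of radius $b$ and $\abs{\vect{X}_2-\vect{X}_1}=2\sqrt{b^2-a^2}$ the midpoint has norm exactly $a$. Nothing is lost relative to the paper's proof; the claim is used downstream only through its stated conclusion.
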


\begin{proof}
We show the claim by arguing that if $\abs{\vect{X}_{1}},\abs{\vect{X}_{2}}\in[a,b]$
and the distance between the line segment connecting $\ensuremath{\vect{X}_{1}}$
and $\ensuremath{\vect{X}_{2}}$ and origin is at least $a$, then $\abs{\vect{X}_{2}-\vect{X}_{1}}$
is at most $2\sqrt{b^{2}-a^{2}}$. If $\abs{\vect{X}_{1}}\notin\left\{ a,b\right\} $,
then one can add a small multiple of $\vect{X}_{1}-\vect{X}_{2}$ to $\vect{X}_{1}$ and
this will increase the distance $\abs{\vect{X}_{2}-\vect{X}_{1}}$, while keeping
the conditions satisfied. If $\abs{\vect{X}_{2}}\notin\left\{ a,b\right\} $,
analogous argument applies. Therefore, without loss of generality
$\abs{\vect{X}_{1}},\abs{\vect{X}_{2}}\in\left\{ a,b\right\} $. If both $\abs{\vect{X}_{1}}$
and $\abs{\vect{X}_{2}}$ equal to $a$, the segment will get closer than
$a$ to origin, unless $\vect{X}_{1}=\vect{X}_{2}$ and $\abs{\vect{X}_{2}-\vect{X}_{1}}=0$.
If both $\abs{\vect{X}_{1}}$ and $\abs{\vect{X}_{2}}$ equal to $b$, then their
distance is at most $2\sqrt{b^{2}-a^{2}}$. Finally, we need to consider
the case $\abs{\vect{X}_{1}}=a$ and $\abs{\vect{X}_{2}}=b$ (the case $\abs{\vect{X}_{1}}=b$
and $\abs{\vect{X}_{2}}=a$ is analogous). If $\vect{X}_{1}\cdot(\vect{X}_{2}-\vect{X}_{1})<0$,
then for any sufficiently small $\kappa$ we have $\abs{\vect{X}_{1}+\kappa(\vect{X}_{2}-\vect{X}_{1})}^{2}=\abs{\vect{X}_{1}}^{2}+\kappa \vect{X}_{1}\cdot(\vect{X}_{2}-\vect{X}_{1})+\kappa^{2}\abs{(\vect{X}_{2}-\vect{X}_{1})}<\abs{\vect{X}_{1}}^{2}$
, which means that $\text{dist}(\text{line segment connecting \ensuremath{\vect{X}_{1}} and \ensuremath{\vect{X}_{2}}, origin})<a$
contradicting one of the conditions. Therefore, $\vect{X}_{1}\cdot(\vect{X}_{2}-\vect{X}_{1})\geq0$.
We have 
\[
b^{2}=\abs{\vect{X}_{2}}^{2}=\abs{\vect{X}_{1}+\parr{\vect{X}_{2}-\vect{X}_{1}}}^{2}=\abs{\vect{X}_{1}}^{2}+\abs{\vect{X}_{2}-\vect{X}_{1}}^{2}+\vect{X}_{1}\cdot\parr{\vect{X}_{2}-\vect{X}_{1}}\geq a^{2}+\abs{\vect{X}_{2}-\vect{X}_{1}}^{2}.
\]
Therefore, $\abs{\vect{X}_{2}-\vect{X}_{1}}\leq\sqrt{b^{2}-a^{2}}$ in this case.
Overall across the cases, $\abs{\vect{X}_{2}-\vect{X}_{1}}$is at most $2\sqrt{b^{2}-a^{2}}$.
\end{proof}

The following claim says that if the line segment between two points $x_1$ and $x_2$ intersects the ball $\B_{a}$, then (i) the convex hull of $x_1$ and $\B_a$ (ii) the convex hull of $x_2$ and $\B_a$ have no non-trivial intersection.
\begin{claim}
\label{claim: if cant see each other cones disjoint}For any $a>0$,
let $x_{1}$ and $x_{2}$ be points in $\R^{n}$ and suppose $x_{1},x_{2}\notin\B_{a}$.
Then, if the line segment between $x_{1}$ and $x_{2}$ intersects
$\B_{a}$, then $\conv{x_{i_{1}},\B_{a}}\cap\conv{x_{i_{2}},\B_{a}}=\B_{a}$.
\end{claim}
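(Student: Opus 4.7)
The inclusion $\B_a\subseteq\conv{x_1,\B_a}\cap\conv{x_2,\B_a}$ is immediate from the definition of convex hull, so the entire content of the claim lies in the reverse inclusion: given $p$ in the intersection, one must show $p\in\B_a$.

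The plan is purely algebraic, and uses convexity of $\B_a$ without any case analysis of tangent cones from $x_1$ or $x_2$. Namely, the hypothesis that the segment $\overline{x_1 x_2}$ meets $\B_a$ supplies a point $q\in\overline{x_1 x_2}\cap\B_a$, written as $q=\alpha x_1+(1-\alpha)x_2$ for some $\alpha\in[0,1]$. On the other hand, by definition of the convex hull any $p$ in the intersection can be written as
\[
p \;=\; \lambda_1 x_1 + (1-\lambda_1)\, b_1 \;=\; \lambda_2 x_2 + (1-\lambda_2)\, b_2
\]
with $\lambda_i\in[0,1]$ and $b_1,b_2\in\B_a$. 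If some $\lambda_i=0$ then $p=b_i\in\B_a$ trivially, so one may assume $\lambda_1,\lambda_2>0$. Solving the two displayed equations for $x_1$ and $x_2$ gives $x_i=\tfrac{1}{\lambda_i}p-\tfrac{1-\lambda_i}{\lambda_i}b_i$; substituting into $q=\alpha x_1+(1-\alpha)x_2$ and isolating $p$ yields an identity of Radon type
\[
p \;=\; c_0\, q \,+\, c_1\, b_1 \,+\, c_2\, b_2,
\]
with explicit coefficients
\[
S \;=\; \tfrac{\alpha}{\lambda_1}+\tfrac{1-\alpha}{\lambda_2}, \qquad c_0=\tfrac{1}{S},\qquad c_1=\tfrac{\alpha(1-\lambda_1)}{S\lambda_1},\qquad c_2=\tfrac{(1-\alpha)(1-\lambda_2)}{S\lambda_2}.
\]
All three coefficients are manifestly nonnegative; a short bookkeeping check confirms $c_0+c_1+c_2=1$. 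Therefore $p$ is a convex combination of the three points $q,b_1,b_2\in\B_a$, and convexity of $\B_a$ forces $p\in\B_a$, as required.

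There is no real obstacle in the argument beyond spotting the right Radon-style identity: the segment-intersection hypothesis is used solely to supply the extra point $q\in\B_a$ needed to ``close'' the convex combination of $b_1$ and $b_2$, and the only pitfall is division by zero when some $\lambda_i$ vanishes, which is dispatched by the preliminary case split. In particular, one does not need to analyze how the cones $\conv{x_i,\B_a}$ meet the sphere $\partial\B_a$.
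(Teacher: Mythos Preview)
Your argument is correct. The algebra checks out: with $S=\tfrac{\alpha}{\lambda_1}+\tfrac{1-\alpha}{\lambda_2}>0$ (since $\lambda_1,\lambda_2>0$ and $\alpha+(1-\alpha)=1$), the coefficients $c_0,c_1,c_2$ are nonnegative and sum to $1$, so $p$ is a convex combination of $q,b_1,b_2\in\B_a$ and hence lies in $\B_a$.

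Your route, however, is genuinely different from the paper's. The paper argues by contrapositive using the separating hyperplane theorem: if some $Z$ lies in $\conv{x_1,\B_a}\cap\conv{x_2,\B_a}$ but outside $\B_a$, separate $Z$ from $\B_a$ by a hyperplane and observe that each $x_i$ must lie on the $Z$-side (else the hyperplane would separate $Z$ from $\conv{x_i,\B_a}$), whence the whole segment $\overline{x_1x_2}$ stays on the $Z$-side and cannot meet $\B_a$. Your proof replaces this geometric separation step with an explicit Radon-type identity that eliminates $x_1,x_2$ algebraically. The tradeoff: the paper's argument is a one-line appeal to a standard tool and makes the geometric obstruction transparent, while yours is entirely elementary, avoids invoking separating hyperplanes, and yields the conclusion by a direct computation rather than by contradiction.
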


\begin{proof}
We argue that $\conv{\vect{X}_{1},\B_{a}}\cap\conv{\vect{X}_{2},\B_{a}}\neq\B_{a}$
implies that the distance between the line segment connecting $\ensuremath{\vect{X}_{1}}$
and $\ensuremath{\vect{X}_{2}}$ and origin is greater than $a$. Indeed,
let $Z$ be a point in $\conv{\vect{X}_{1},\B_{a}}\cap\conv{\vect{X}_{2},\B_{a}}$
and not in $\B_{a}$. Then, since $\B_{a}$ is convex, the separating
hyperplane theorem tells us that there is a hyperplane separating
$Z$ from $\B_{a}$. Now, $\vect{X}_{1}$ cannot be on the same side of the
hyperplane as $\B_{a}$, because this would mean that the hyperplane
separates $Z$ from $\conv{\vect{X}_{1},\B_{a}}$. So, $\vect{X}_{1}$ has to be
on the same side of the hyperplane as $Z$ or be on the hyperplane
itself. The same argument tells us that $\vect{X}_{2}$ has to be on the
same side of the hyperplane as $Z$ or be on the hyperplane itself.
Overall, $\B_{a}$ is on one side of the hyperplane while any point
on line segment connecting $\ensuremath{\vect{X}_{1}}$ and $\ensuremath{\vect{X}_{2}}$
is either on the other side or on the hyperplane itself. Since $\B_{a}$
is closed, the distance between $\B_{a}$ and the hyperplane is positive.
This means $\text{dist}(\text{line segment connecting \ensuremath{\vect{X}_{1}} and \ensuremath{\vect{X}_{2}}, origin})>a$. 
\end{proof}
\begin{claim}
\label{claim: if can't see each other convex hull is just ball and cones}For
any $a>0$, let $\left\{ \vect{x}_{i}\right\} _{i=1}^{M}$ be a collection
of points in $\R^{n}$ and suppose $\vect{x}_{i}\notin\B_{a}$ for all $i$.
Also, suppose that for any distinct $i_{1}$ and $i_{2}$ the line
segment between $\vect{x}_{1}$ and $\vect{x}_{2}$ intersects $\B_{a}$. Then,
\[
\conv{\vect{x}_{1},\cdots,\vect{x}_{M},\B_{a}}=\conv{\vect{x}_{1},\B_{a}}\cup\conv{\vect{x}_{2},\B_{a}}\cup\cdots\cup\conv{\vect{x}_{M},\B_{a}}.
\]
\end{claim}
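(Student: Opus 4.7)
The inclusion $\supseteq$ is immediate: each $\conv{\vect{x}_i,\B_a}$ is a subset of the convex hull on the left because $\{\vect{x}_i\}\cup\B_a \subseteq \{\vect{x}_1,\ldots,\vect{x}_M\}\cup\B_a$. So the work is entirely in showing $\subseteq$.

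For the reverse inclusion, I will take an arbitrary $y$ in the left-hand side, write it as $y=\sum_{i=1}^M\lambda_i\vect{x}_i+\mu b$ with $b\in\B_a$, $\lambda_i,\mu\geq0$ and $\sum_i\lambda_i+\mu=1$, and prove by induction on the number $k$ of indices $i$ with $\lambda_i>0$ that $y\in\conv{\vect{x}_j,\B_a}$ for some $j$ (interpreting $k=0$ as $y=\mu b\in\B_a\subseteq\conv{\vect{x}_1,\B_a}$, and $k=1$ as $y\in\conv{\vect{x}_j,\B_a}$ directly).

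The inductive step is the heart of the argument. Suppose $k\geq 2$ and pick two indices $i_1\neq i_2$ with $\lambda_{i_1},\lambda_{i_2}>0$. By hypothesis the segment from $\vect{x}_{i_1}$ to $\vect{x}_{i_2}$ meets $\B_a$, so there exists $\alpha\in[0,1]$ with $p:=\alpha\vect{x}_{i_1}+(1-\alpha)\vect{x}_{i_2}\in\B_a$. Since $\vect{x}_{i_1},\vect{x}_{i_2}\notin\B_a$, we must have $\alpha\in(0,1)$. Swapping $i_1,i_2$ if necessary, assume $\lambda_{i_1}/\alpha\leq\lambda_{i_2}/(1-\alpha)$ and set $t:=\lambda_{i_1}/\alpha$. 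Then
\[
\lambda_{i_1}\vect{x}_{i_1}+\lambda_{i_2}\vect{x}_{i_2}=t\alpha\vect{x}_{i_1}+t(1-\alpha)\vect{x}_{i_2}+\bigl(\lambda_{i_2}-t(1-\alpha)\bigr)\vect{x}_{i_2}=t\cdot p+\lambda_{i_2}'\vect{x}_{i_2},
\]
where $\lambda_{i_2}':=\lambda_{i_2}-t(1-\alpha)\geq 0$. The combined weight $t+\lambda_{i_2}'$ equals $\lambda_{i_1}+\lambda_{i_2}$, so the total weights still sum to $1$. Now I merge the ball terms: since $p,b\in\B_a$ and $\B_a$ is convex, the point $b':=(tp+\mu b)/(t+\mu)$ (or $b$ if $t+\mu=0$) also lies in $\B_a$. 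Substituting, I obtain a new representation $y=\sum_{i\neq i_1}\lambda_i'\vect{x}_i+\mu'b'$ with one fewer nonzero $\lambda_i$ (the $i_1$ slot is now zero), and the induction hypothesis applies.

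The main thing to be careful about is the arithmetic in the rewriting step and the edge cases $\alpha\in\{0,1\}$, which are ruled out because no $\vect{x}_i$ lies in $\B_a$; everything else (non-negativity of new coefficients, sum to one, $b'\in\B_a$) is a direct computation. I do not expect a serious obstacle beyond bookkeeping: the geometric content has already been isolated in Claim~\ref{claim: if far don't see each other} and in the segment-crossing hypothesis, and once one has $p\in\B_a$ on the segment the convexity of $\B_a$ does the rest.
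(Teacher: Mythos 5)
Your proof is correct and takes essentially the same route as the paper's: both establish $\subseteq$ by iteratively reducing the number of points carrying positive weight, at each step absorbing mass into the ball term via the point where the segment between two weighted $\vect{x}_i$'s meets $\B_{a}$. The only difference is cosmetic — you carry out the weight arithmetic explicitly with the intersection point $p$ and the threshold $t=\lambda_{i_1}/\alpha$, where the paper phrases the same reduction step as a three-region decomposition of the segment.
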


\begin{proof}
The line segment from $\vect{x}_{i_{1}}$ to $\vect{x}_{i_{2}}$ can be decomposed
into three contiguous nonempty disjoint regions, (i) the one in $\conv{\vect{x}_{i_{1}},\B_{a}}\setminus\B_{a}$
(ii) the one in $\B_{a}$ (iii) the one in $\conv{\vect{x}_{i_{2}},\B_{a}}\setminus\B_{a}$.
This implies the following. Let $\beta \vect{x}_{i_{1}}+(1-\beta)\vect{x}_{i_{2}}$,
with $\beta$ in $[0,1]$, be an element of this line segment. If
$\beta \vect{x}_{i_{1}}+(1-\beta)\vect{x}_{i_{2}}$ is in regions (i) or (ii) then
we can write $\beta \vect{x}_{i_{1}}+(1-\beta)\vect{x}_{i_{2}}=\gamma \vect{x}_{i_{1}}+(1-\gamma)\vect{q}$
for some $\gamma\in[0,1]$ and some $\vect{q}\in\B_{a}$. If $\beta \vect{x}_{i_{1}}+(1-\beta)\vect{x}_{i_{2}}$
is in regions (ii) or (iii) then we can write $\beta \vect{x}_{i_{1}}+(1-\beta)\vect{x}_{i_{2}}=\gamma \vect{x}_{i_{2}}+(1-\gamma)\vect{q}$
for some $\gamma\in[0,1]$ and some $\vect{q}\in\B_{a}$.

Now, clearly $\bigcup_{k}\conv{\vect{x}_{k},\B_{a}}\subseteq\conv{\vect{x}_{1},\cdots,\vect{x}_{M},\B_{a}}$,
so we only need to show the inclusion in other direction. Let $\vect{x}$
be in $\conv{\vect{x}_{1},\cdots,\vect{x}_{M},\B_{a}}$, which means that 
\begin{equation}
\vect{x}=\beta_{1}^{0}\vect{x}_{1}+\cdots\beta_{M}^{0}\vect{x}_{M}+(1-\sum_{k}\beta_{k}^{0})\vect{r}^{0}\label{eq: an element in convex hull}
\end{equation}
for some $\vect{r}^{0}\in\B_{a}$, $\beta_{k}^{0}\in[0,1]$ and satisfying
$1-\sum_{k}\beta_{k}^{0}\in[0,1]$. Take any distinct $i$ and $j$ with $\beta_{i}^{0}\neq0$
and $\beta_{j}^{0}\neq0$, then we use our earlier observation to
get that one of the cases below holds.
\[
\frac{\beta_{i}^{0}}{\beta_{i}^{0}+\beta_{j}^{0}}\vect{x}_{1}+\frac{\beta_{i}^{0}}{\beta_{i}^{0}+\beta_{i}^{0}}\vect{x}_{2}=\begin{cases}
\gamma \vect{x}_{i}+(1-\gamma)\vect{q} & \text{for some \ensuremath{\gamma\in[0,1]} and some \ensuremath{\vect{q}\in\B_{a}}, or}\\
\gamma \vect{x}_{j}+(1-\gamma)\vect{q} & \text{for some \ensuremath{\gamma\in[0,1]} and some \ensuremath{\vect{q}\in\B_{a}}}.
\end{cases}
\]
Regardless which of these cases holds, we can substitute it back in
Equation \ref{eq: an element in convex hull} and get a new expression
\[
\vect{x}=\beta{}_{1}^{1}\vect{x}_{1}+\cdots\beta{}_{M}^{1}\vect{x}_{M}+(1-\sum_{k}\beta{}_{k}^{1})\vect{r}^{1},
\]
where $\beta_{i}^{1}=0$ or $\beta_{j}^{1}=0$ and we still have $\beta_{k}\in\pars{0,1}$
for any $k$. Also, we still have $(1-\sum_{k}\beta{}_{k}^{1})\in[0,1]$
and we have $\vect{r}^{1}=\frac{(\beta_{i}^{0}+\beta_{j}^{0})(1-\gamma)\vect{q}+(1-\sum_{k}\beta_{k}^{0})\vect{r}^{0}}{(1-\sum_{i}\beta{}_{k}^{1})}$.
We check that 
\[
(\beta_{i}^{0}+\beta_{j}^{0})(1-\gamma)+(1-\sum_{k}\beta_{k}^{0})=1-\sum_{k\notin\left\{ i,j\right\} }\beta_{k}^{0}-\gamma(\beta_{i}^{0}+\beta_{j}^{0})=1-\sum_{k}\beta{}_{k}^{1},
\]
which means that $\vect{r}^{1}$ is a convex combination of $\vect{q}$ and $\vect{r}^{0}$,
and since $\vect{q},\vect{r}^{0}\in\B_{a}$ this means that $\vect{r}^{1}$ is also in
$\B_{a}$.

Now, further observe that the argument above has the following extra property:
$\beta_{k}^{0}=0$ for some $k\notin\left\{ i,j\right\} $, we also
have $\beta_{k}^{1}=0$. Therefore, if we use the argument above iteratively
to obtain values $\parr{\left\{ \beta_{k}^{2}\right\} ,\vect{r}^{2}}$, $\parr{\left\{ \beta_{k}^{3}\right\} ,\vect{r}^{3}}$
and so on, at every iteration the number of non-zero $\beta$ coefficients
decreases. We can keep iterating as long as there is a pair $\beta_{i'}^{\ell}$
and $\beta_{j'}^{\ell}$ both of which are nonzero, and we will terminate
in $M$ iterations or less. Thus, as we terminate we have 
\[
\vect{x}=\beta_{i_{0}}^{M}\vect{x}_{i_{0}}+\left(1-\beta_{i_{0}}^{M}\right)\vect{r}^{M}
\]
with $\beta_{i_{0}}^{M}\in[0,1]$ and $\vect{r}^{M}\in\B_{a}$. This means
that $\vect{x}\in\conv{\vect{x}_{i_{0}},\B_{a}}\subseteq\bigcup_{k}\conv{\vect{x}_{k},\B_{a}}$
finishing the proof.
\end{proof}

\subsection{Proofs of main hardness theorems (theorems \ref{thm: hardness for convex strong}
and \ref{thm: hardness for monotone strong}).}

\begin{proof}[Proof of Theorem \ref{thm: hardness for convex strong}]
 Let $\delta$, $\Delta$, $\alpha$ and $M$ be real-valued parameters
to be chosen later. By Lemma \ref{lem: concentration for norm of Gaussian}
we have $\Pr_{\vect x\in_{R}\mathcal{N}(0,I_{n\times n})}\left[\abs{ \vect{x}^{2}}\notin\pars{a,b}\right]\leq\alpha$,
where we denote $b=\sqrt{n+2\sqrt{n\ln\parr{\frac{2}{\alpha}}}+2\ln\parr{\frac{2}{\alpha}}}$
and $a=\sqrt{n-2\sqrt{n\ln\parr{\frac{2}{\alpha}}}}$.

We want to set our parameters in such a way that there is a distribution
$D'$ over $\R^{n}$ and a function $g:\R^{n}\righ\left\{ \pm 1\right\} $
with the following properties:
\begin{enumerate}
\item $D'$ is uniform over $M$ distinct elements $\{\vect{z}_1,\cdots,\vect{z}_M\}$ of $\R^{n}$.
\item A sample $\vect{x}$ from $D'$ with probability at least $1-2\alpha$ has
$\abs{\vect{x}}\in\left[a,b\right]$.
\item Suppose $\vect{x}_{1}$ and $\vect{x}_{2}$ belong to the support of $D'$ and both
$\abs{\vect{x}_{1}}$ and $\abs{\vect{x}_{2}}$ are in $\left[a,b\right]$. Then
$\abs{\vect{x}_{1}-\vect{x}_{2}}>2\sqrt{b^{2}-a^{2}}$ and the line segment connecting
$\vect{x}_{1}$ and $\vect{x}_{2}$ intersects $\B_{a}$.
\item Given $N$ samples of the form $(\vect{x}_{j},g(\vect{x}_{j}))$ with each $\vect{x}_{j}$
i.i.d. from $D'$, the tester $\mathcal{T}$ accepts with probability
at least $1-\delta$.
\item Given $N$ samples of the form $(\vect{x}_{j},g(\vect{x}_{j}))$ with each $\vect{x}_{j}$
i.i.d. from $D'$, the learner $\mathcal{A}$ outputs a predictor
$\hat{f}$ for which 
\[
\E_{\left\{ \vect{x}_{i}\right\} ,\,\text{randomness of \ensuremath{\mathcal{A}}}}\left[\abs{\Pr_{\vect{x}\in_{R}D'}\left[\hat{f}(\vect{x})\neq g(\vect{x})\right]-\frac{1}{2}}\right]\leq12\alpha+6\frac{N}{M}+24\sqrt{\frac{\ln M}{M}}.
\]
\end{enumerate}
Let $D'$ be uniform over a multiset $S:=\left\{ \vect{z}_{1},\cdots,\vect{z}_{M}\right\} $
of elements drawn i.i.d. uniformly from $\mathcal{N}(0,I_{n\times n})$.
Let $g$ be a random function over $\R^{n}$ picked as follows:
\begin{itemize}
\item If $\left|\vect{x}\right|>b,$then $g(\vect{x})=0$.
\item If $\left|\vect{x}\right|<a$, then $g(\vect{x})=1$.
\item If $\abs{\vect{x}}\in[a,b]$, then $g(\vect{x})$ is chosen randomly in $\left\{ \pm 1\right\} $
subject to the following conditions.
\begin{itemize}
\item For every $\vect{x}$, we have $\Pr\left[g(\vect{x})=0\right]=\Pr\left[g(\vect{x})=1\right]=\frac{1}{2}$.
\item For any collection of $\left\{ \vect{x}_{i}\right\} $, such that any two
distinct $\vect{x}_{i}$ and $\vect{x}_{j}$ are further away\footnote{The exact value of $\sqrt{b^{2}-a^{2}}$ here does not matter. We
could have taken it to be anything smaller than $2\sqrt{b^{2}-a^{2}}$.} from each other than $\sqrt{b^{2}-a^{2}}$, then $\left\{ g(\vect{x}_{i})\right\} $
is a collection of i.i.d. random variables uniform on $\left\{ \pm 1\right\} $. 
\end{itemize}
\end{itemize}
One way to give an explicit construction of random function $g$ satisfying
conditions above is to break the region $\left\{ \vect{x}\in\R^{d}:\abs{ \vect{x}}\in[a,b]\right\} $
into finitely many disjoint parts of diameter at most $\sqrt{b^{2}-a^{2}}$
and have $g$ be i.i.d. uniformly random in $\left\{ \pm 1\right\} $
on each of these parts. Then, we have 
\begin{enumerate}
\item Condition 1 is satisfied with probability $1$, because $\mathcal{N}(0,I_{n\times n})$
has continuous density. 
\item By Lemma \ref{lem: concentration for norm of Gaussian}, for $\vect{x}$
drawn from $\mathcal{N}(0,I_{n\times n})$, the probability that $\abs{ \vect{x}}\notin\left[a,b\right]$
is at most $\alpha$. Then, another application of the standard Hoeffding
bound shows that out of $\left\{ \vect{z}_{1},\cdots,\vect{z}_{M}\right\} $, the
fraction with norm outside of $\left[a,b\right]$
is at most $2\alpha$ with probability at most $e^{-2\alpha^{2}M}$.
In other words, Condition 2 is satisfied with probability at least
$1-e^{-2\alpha^{2}M}$.
\item Claim \ref{claim: random points far} tells us that distinct
$i$ and $j$ the probability that $\abs{\vect{z}_{i}},\abs{\vect{z}_{j}}\in\left[a,b\right]$
and $\abs{\vect{z}_{i}-\vect{z}_{j}}\leq2\sqrt{b^{2}-a^{2}}$ is at most $8^{n}\parr{\frac{b^{2}-a^{2}}{n}}^{n/2}$.
Claim \ref{claim: if far don't see each other} then tells us that if $\abs{\vect{z}_{i}-\vect{z}_{j}}>2\sqrt{b^{2}-a^{2}}$, then
the line segment connecting $\vect{z}_{i}$ and $\vect{z}_{j}$ intersects $\B_{a}$.
Taking a union bound over all such distinct pairs $(\vect{z}_{i},\vect{z}_{j})$,
the probability of the Condition 3 being violated is at most $1-8^{n}\parr{\frac{4\sqrt{n\ln\parr{\frac{2}{\alpha}}}+2\ln\parr{\frac{2}{\alpha}}}{n}}^{n/2}M^{2}$. 
\item Via Lemma \ref{lem: tester fooling lemma} we see that with probability
at least $1-\Delta$ we have that given $N$ samples of the form $(\vect{x}_{j},g(\vect{x}_{j}))$
with each $\vect{x}_{j}$ i.i.d. from $D'$, the tester $\mathcal{T}$ accepts
with probability at least $1-\delta_{2}-\frac{N^{2}}{M}-\frac{N}{\sqrt{\Delta M}}$.
So, to satisfy Condition 4, we need that $\delta-\delta_{2}>\frac{N^{2}}{M}$
and $\Delta=\frac{N^{2}}{M}\frac{1}{\parr{\delta-\delta_{2}-\frac{N^{2}}{M}}^{2}}$.
\item For any distinct $\vect{z}_{i}$ and $\vect{z}_{j}$ satisfying $\abs{\vect{z}_{i}},\abs{\vect{z}_{j}}\in[a,b]$,
Condition 3 tells us that $\abs{\vect{z}_{i}-\vect{z}_{j}}>2\sqrt{b^{2}-a^{2}}$.
The way random function $g$ was constructed then implies that the
random variables $\left\{ g(\vect{z}_{i}),\,i:\,\abs{\vect{z}_{i}}\in[a,b]\right\} $
is a collection of i.i.d. random variables uniform in $\left\{ \pm 1\right\} $.
We can therefore use Lemma \ref{lem: learner fooling lemma} as long
as $2\alpha<\frac{1}{4}$ and $N\leq\frac{M}{4}$ and 
\[
\E_{g,\,\left\{ \vect{x}_{i}\right\} ,\,\text{randomness of \ensuremath{\mathcal{A}}}}\left[\abs{\Pr_{\vect{x}\sim D'}\left[\hat{f}(\vect{x})\neq g(\vect{x})\right]-\frac{1}{2}}\right]\leq\frac{3}{2}\parr{2\alpha+\frac{N}{M}}+5\sqrt{\frac{\ln M}{M}}.
\]
Therefore, with probability at least $1-\frac{1}{4}$ over the choice
of $g$ we have 
\[
\E_{\left\{ \vect{x}_{i}\right\} ,\,\text{randomness of \ensuremath{\mathcal{A}}}}\left[\abs{\Pr_{\vect{x}\sim D'}\left[\hat{f}(\vect{x})\neq g(\vect{x})\right]-\frac{1}{2}}\right]\leq12\alpha+6\frac{N}{M}+20\sqrt{\frac{\ln M}{M}}.
\]
\end{enumerate}
Overall, the probability that all five of the conditions hold is non-zero
as long as $\delta-\delta_{2}>\frac{N^{2}}{M}$
and
\begin{equation}
e^{-2\alpha^{2}M}+8^{n}\parr{\frac{4\sqrt{n\ln\parr{\frac{2}{\alpha}}}+2\ln\parr{\frac{2}{\alpha}}}{n}}^{n/2}M^{2}+\frac{N^{2}}{M}\frac{1}{\parr{\delta-\delta_{2}-\frac{N^{2}}{M}}^{2}}+\frac{1}{4}<1.\label{eq: constraint on parameters for convex hardness}
\end{equation}

From now on we fix $g$ and $D'$ assuming the five conditions above
hold (we will check that the Equation \ref{eq: constraint on parameters for convex hardness} indeed holds when we pick our parameters). We claim that there is a function $f_{0}:\R^{n}\righ\left\{ \pm 1\right\} $
such that (i) $\left\{ \vect{x}:\,f_{0}(\vect{x})=1\right\} $ is a convex set (ii)
$\Pr_{\vect{x}\sim D'}\pars{f_{0}(\vect{x})=g(\vect{x})}=1$ (even though the function
$g$ itself is very likely not indicator of a convex body). Recall
that $D'$ was uniform from $S:=\left\{ \vect{z}_{1},\cdots,\vect{z}_{M}\right\} $
so we define $f_{0}$ to be $1$ on $\conv{\B_{a},\left\{ \vect{z}_{i}:g(\vect{z}_{i})=1\right\} }$
and $0$ otherwise. Property (i) is immediate from the definition
of $f_{0}$. To show property (ii), recall that $D'$ is supported
on $\left\{ \vect{z}_{i}\right\} $, so we need to show that $f_{0}(\vect{z}_{i})=g(\vect{z}_{i})$
for every $i$. 
\begin{itemize}
    \item If $g(\vect{z}_{i})=1$, from definition of $f_{0}$ it is
immediate that $f_{0}(\vect{z}_{i})=g(\vect{z}_{i})$.
    \item If $g(\vect{z}_{i})=0$, we argue
as follows. By Claim \ref{claim: if cant see each other cones disjoint}
we know that for any $j\neq i$ we have $\conv{\vect{z}_{i},\B_{a}}\cap\conv{z_{j},\B_{a}}=\B_{a}$
which in particular implies $\vect{z}_{i}\notin\conv{\vect{z}_{j},\B_{a}}$. So,
$\vect{z}_{i}$ is not in $\bigcup_{j:\,g(\vect{z}_{j})=1}\conv{\vect{z}_{j},\B_{a}}$,
but $\bigcup_{j:\,g(\vect{z}_{j})=1}\conv{\vect{z}_{j},\B_{a}}=\conv{\left\{ \vect{z}_{j}:g(\vect{z}_{j})=1\right\} ,\B_{a}}$
by Claim \ref{claim: if can't see each other convex hull is just ball and cones},
so $\vect{z}_{i}\notin\conv{\left\{ \vect{z}_{j}:g(\vect{z}_{j})=1\right\} ,\B_{a}}$ and
therefore $f_{0}(\vect{z}_{i})=0$ as required.
\end{itemize}

Finally, we get to picking the parameters. Recall that $N=2^{0.01n}$.
 We take $M=2^{0.1n}$ and $\delta=\delta_{2}+\frac{N^{2}}{M}+100\frac{N}{\sqrt{M}}=\delta_{2}+100\frac{N}{2^{0.05n}}+\frac{N^{2}}{2^{0.1n}}$,
which allows us to conclude that the tester $\mathcal{T}$, given
samples $(\vect{x},g(\vect{x}))$ with $\vect{x}\sim D'$, accepts with probability at
least $1-\delta_{2}-100\frac{N}{2^{0.05n}}-\frac{N^{2}}{2^{0.1n}}=1-\delta_{2}-\frac{1}{2^{\Omega(n)}}$. We proceed to making sure Equation \ref{eq: constraint on parameters for convex hardness} is satisfied:
\begin{itemize}
    \item We see that $\frac{N^{2}}{M}\frac{1}{\parr{\delta-\delta_{2}-\frac{N^{2}}{M}}^{2}}=\frac{N^{2}}{M}\frac{1}{10000\frac{N^{2}}{M}}=\frac{1}{10000}$.
    \item By taking $\alpha=\frac{1}{2}e^{-\frac{n}{160000}}$, we make sure that now $e^{-2\alpha^{2}M}+8^{n}\parr{\frac{4\sqrt{n\ln\parr{\frac{2}{\alpha}}}+2\ln\parr{\frac{2}{\alpha}}}{n}}^{n/2}M^{2}=2^{-\Omega(n)}$,
so taking $n$ sufficiently large we can make this expression as small
as we want. 
\end{itemize}
Thus, Equation \ref{eq: constraint on parameters for convex hardness} indeed
holds for sufficiently large $n$ for our choice of the parameters.
We see that our choice of parameters also satisfies the required
condition $\delta-\delta_{2}>\frac{N^{2}}{M}$.
Condition 5 tells that the expected advantage of the predictor $\hat{f}$
is at most 
\[
12\alpha+6\frac{N}{M}+20\sqrt{\frac{\ln M}{M}}=6e^{-\frac{n}{160000}}+6\frac{2^{0.01n}}{2^{0.1n}}+\frac{20\sqrt{0.1n}}{2^{0.005n}}=2^{-\Omega\parr n}.
\]
\end{proof}
Now, let's prove our theorem about hardness of testable agnostic learning
of monotone functions.

\begin{proof}[Proof of Theorem \ref{thm: hardness for monotone strong}]
Let $\delta$, $\Delta$, $\alpha$ and $M$ be real-valued parameters
to be chosen later. Observe that we have $\Pr_{\vect x\in\left\{ 0,1\right\} ^{n}}\left[\abs{\vect x}\notin\pars{\frac{n}{2}-\sqrt{\frac{n}{2}\ln\frac{2}{\alpha}}}\right]\leq\alpha$,
and denote $h_{\alpha}:=\sqrt{\frac{n}{2}\ln\frac{2}{\alpha}}$. We
want to set our parameters in such a way that there is a distribution
$D'$ over $\left\{ 0,1\right\} ^{n}$ and a function $g:\left\{ 0,1\right\} ^{n}\righ\left\{ \pm1\right\} $
with the following properties:
\begin{enumerate}
\item $D'$ is uniform over $M$ distinct elements $\{\vect{z}_1,\cdots,\vect{z}_M\}$ of $\left\{ 0,1\right\} ^{n}$.
\item A sample $\vect{x}$ from $D'$ with probability at least $1-2\alpha$ has
hamming weight in $\left[\frac{n}{2}-h_{\alpha},\frac{n}{2}+h_{\alpha}\right]$.
\item Suppose $\vect{x}_{1}$ and $\vect{x}_{2}$ belong to the support of $D'$ and both
$\vect{x}_{1}$ and $\vect{x}_{2}$ have hamming weight in $\left[\frac{n}{2}-h_{\alpha},\frac{n}{2}+h_{\alpha}\right]$.
Then $\vect{x}_{1}$ and $\vect{x}_{2}$ are incomparable (i.e. neither one dominates
the other one bit-wise). 
\item Given $N$ samples of the form $(\vect{x}_{j},g(\vect{x}_{j}))$ with each $\vect{x}_{j}$
i.i.d. from $D'$, the tester $\mathcal{T}$ accepts with probability
at least $1-\delta$.
\item Given $N$ samples of the form $(\vect{x}_{j},g(\vect{x}_{j}))$ with each $\vect{x}_{j}$
i.i.d. from $D'$, the learner $\mathcal{A}$ outputs a predictor
$\hat{f}$ for which 
\[
\E_{\left\{ \vect{x}_{i}\right\} ,\,\text{randomness of \ensuremath{\mathcal{A}}}}\left[\abs{\Pr_{\vect{x}\in_{R}D'}\left[\hat{f}(\vect{x})\neq g(\vect{x})\right]-\frac{1}{2}}\right]\leq12\alpha+6\frac{N}{M}+20\sqrt{\frac{\ln M}{M}}.
\]
\end{enumerate}
We use the probabilistic method to show the existence of such $D'$
and $g.$ Let $D'$ be uniform over a multiset $S:=\left\{ \vect{z}_{1},\cdots,\vect{z}_{M}\right\} $
of elements drawn i.i.d. uniformly from $\left\{ 0,1\right\} ^{n}$.
Let $g$ be a random function over $\left\{ 0,1\right\} ^{n}$ picked
as
\[
g(\vect x)=\begin{cases}
1 & \text{if \ensuremath{\left|\vect x\right|>n/2+h_{\alpha},}}\\
-1 & \left|\vect{x}\right|<n/2-h_{\alpha},\\
\text{i.i.d. uniformly random in \ensuremath{\left\{ \pm1\right\} }} & \text{otherwise.}
\end{cases}
\]
 Then, we have 
\begin{enumerate}
\item Condition 1 is satisfied with probability at least $1-\frac{M^{2}}{2^{n}}$
by a standard birthday-paradox argument. 
\item By the standard Hoeffding bound, a uniform sample from $\left\{ 0,1\right\} ^{n}$
falls outside of $\left[\frac{n}{2}-h_{\alpha},\frac{n}{2}+h_{\alpha}\right]$
with probability at most $2e^{-\frac{2h_{\alpha}^{2}}{n}}=\alpha$.
Then, another application of the standard Hoeffding bound shows that
out of $\left\{ z_{1},\cdots,z_{M}\right\} $, the fraction with Hamming
weight outside of $\left[\frac{n}{2}-h_{\alpha},\frac{n}{2}+h_{\alpha}\right]$
is at most $2\alpha$ with probability at most $e^{-2\alpha^{2}M}$.
In other words, Condition 2 is satisfied with probability at least
$1-e^{-2\alpha^{2}M}$.
\item For distinct $i_{1}$ and $i_{2}$, we bound the probability probability
of the event that (i) $\vect{z}_{i_{1}}$ and $\vect{z}_{i_{2}}$ have Hamming weight
in $\left[\frac{n}{2}-h_{\alpha},\frac{n}{2}+h_{\alpha}\right]$ and
(ii) $\vect{z}_{i_{1}}$ dominates $\vect{z}_{i_{2}}$ bit-wise. Suppose $\vect{z}_{i_{1}}$ indeed
has Hamming weight in $\left[\frac{n}{2}-h_{\alpha},\frac{n}{2}+h_{\alpha}\right]$,
then there are only at most $n^{2h_{\alpha}}$ possible candidates
for $\vect{z}_{i_{2}}$ that will make the event to take place. Thus, the
probability of this event is at most $\frac{n^{2h_{\alpha}}}{2^{n}}=\frac{n^{\sqrt{2n\ln\frac{2}{\alpha}}}}{2^{n}}$.
Taking a union bound over all distinct pairs $(\vect{z}_{i_{1}},\vect{z}_{i_{2}})$,
the probability of the Condition 3 being violated is at most $1-\frac{n^{\sqrt{2n\ln\frac{2}{\alpha}}}}{2^{n}}M^{2}$.
\item Via Lemma \ref{lem: tester fooling lemma} we see that with probability
at least $1-\Delta$ we have that given $N$ samples of the form $(\vect{x}_{j},g(\vect{x}_{j}))$
with each $x_{j}$ i.i.d. from $D'$, the tester $\mathcal{T}$ accepts
with probability at least $1-\delta_{2}-\frac{N^{2}}{M}-\frac{N}{\sqrt{\Delta M}}$.
So, to satisfy Condition 4, we need that $\delta-\delta_{2}>\frac{N^{2}}{M}$
and $\Delta=\frac{N^{2}}{M}\frac{1}{\parr{\delta-\delta_{2}-\frac{N^{2}}{M}}^{2}}$.
\item Via Lemma \ref{lem: learner fooling lemma} we have 
\[
\E_{g,\,\left\{ \vect{x}_{i}\right\} ,\,\text{randomness of \ensuremath{\mathcal{A}}}}\left[\abs{\Pr_{\vect{x}\sim D'}\left[\hat{f}(\vect{x})\neq g(\vect{x})\right]-\frac{1}{2}}\right]\leq\frac{3}{2}\parr{2\alpha+\frac{N}{M}}+6\sqrt{\frac{\ln M}{M}}.
\]
Therefore, with probability at least $1-\frac{1}{4}$ over the choice
of $g$ we have 
\[
\E_{\left\{ \vect{x}_{i}\right\} ,\,\text{randomness of \ensuremath{\mathcal{A}}}}\left[\abs{\Pr_{\vect{x}\sim D'}\left[\hat{f}(\vect{x})\neq g(\vect{x})\right]-\frac{1}{2}}\right]\leq12\alpha+6\frac{N}{M}+24\sqrt{\frac{\ln M}{M}}.
\]
\end{enumerate}
Overall, the probability that all five of the conditions hold is non-zero
as long as $\delta-\delta_{2}>\frac{N^{2}}{M}$
and
\begin{equation}
\frac{M^{2}}{2^{n}}+e^{-2\alpha^{2}M}+\frac{n^{\sqrt{2n\ln\frac{2}{\alpha}}}}{2^{n}}M^{2}+\frac{N^{2}}{M}\frac{1}{\parr{\delta-\delta_{2}-\frac{N^{2}}{M}}^{2}}+\frac{1}{4}<1.\label{eq: constraint on parameters for monotone hardness}
\end{equation}

From now on, we assume that the five conditions above hold (we will check that the Equation \ref{eq: constraint on parameters for monotone hardness} indeed holds when we pick our parameters).
We claim that there
is a monotone $f_{0}:\left\{ 0,1\right\} ^{n}\righ\left\{ \pm 1\right\} $
for which $\Pr_{\vect{x}\sim D'}\pars{f_{0}(\vect{x})=g(\vect{x})}=1$ (even though the
function $g$ itself is very likely not monotone). Recall that $D'$
was uniform from $S:=\left\{ \vect{z}_{1},\cdots,\vect{z}_{M}\right\} $ so we write
\[
f_{0}(\vect{x})=\begin{cases}
1 & \text{if \ensuremath{\left|\vect{x}\right|>n/2+h_{\alpha},}}\\
-1 & \left|\vect{x}\right|<n/2-h_{\alpha},\\
g(\vect{x}) & \text{if \ensuremath{|\vect{x}|\in\left[n/2-h_{\epsilon},n/2+h_{\epsilon}\right]} and \ensuremath{\vect{x}} is in the support of \ensuremath{D'},}\\
-1 & \text{if \ensuremath{|\vect{x}|\in\left[n/2-h_{\epsilon},n/2+h_{\epsilon}\right]} and \ensuremath{\vect{x}} is dominated by some \ensuremath{\vect{y}} in the support of \ensuremath{D'} s.t. \ensuremath{g(\vect{y})=-1},}\\
1 & \text{if \ensuremath{|\vect{x}|\in\left[n/2-h_{\epsilon},n/2+h_{\epsilon}\right]} and \ensuremath{\vect{x}} dominates some \ensuremath{\vect{y}} in the support of \ensuremath{D'} s.t. \ensuremath{g(\vect{y})=+1},}\\
-1 & \text{otherwise.}
\end{cases}
\]
The definition above is not self-contradictory, because Condition
3 says if $\vect{x}_{1}$ and $\vect{x}_{2}$ belong to the support of $D'$ and
both $\vect{x}_{1}$ and $\vect{x}_{2}$ have hamming weight in $\left[\frac{n}{2}-h_{\alpha},\frac{n}{2}+h_{\alpha}\right]$,
then $\vect{x}_{1}$ and $\vect{x}_{2}$ are incomparable. We see that $f_{0}(\vect{x})$
is indeed monotone and agrees with $g$ on the support of $D$.

Finally, we get to picking the parameters. Recall that $N=2^{0.01n}$.
We take $M=2^{0.1n}$ and $\delta=\delta_{2}+\frac{N^{2}}{M}+100\frac{N}{\sqrt{M}}=\delta_{2}+100\frac{N}{2^{0.05n}}+\frac{N^{2}}{2^{0.1n}}$,
which allows us to conclude that the tester $\mathcal{T}$, given
samples $(\vect{x},g(\vect{x}))$ with $\vect{x}\sim D'$, accepts with probability at
least $1-\delta_{2}-100\frac{N}{2^{0.05n}}-\frac{N^{2}}{2^{0.1n}}=1-\delta_{2}-\frac{1}{2^{\Omega(n)}}$.
We proceed to making sure Equation \ref{eq: constraint on parameters for monotone hardness} is satisfied:
\begin{itemize}
    \item We see that $\frac{N^{2}}{M}\frac{1}{\parr{\delta-\delta_{2}-\frac{N^{2}}{M}}^{2}}=\frac{N^{2}}{M}\frac{1}{10000\frac{N^{2}}{M}}=\frac{1}{10000}$.
    \item Taking $\alpha=\frac{1}{2}e^{-0.1\frac{n}{\log^{2}n}}$, we see that now $\frac{M^{2}}{2^{n}}+e^{-2\alpha^{2}M}+\frac{n^{\sqrt{2n\ln\frac{2}{\alpha}}}}{2^{n}}M^{2}=2^{-\Omega(n)}$,
so taking $n$ sufficiently large we can make this expression as small
as we want.
\end{itemize}
Thus, Equation \ref{eq: constraint on parameters for monotone hardness}
holds for sufficiently large $n$ for our choice of the parameters.
We see that our choice of parameters also satisfies the required
condition $\delta-\delta_{2}>\frac{N^{2}}{M}$.
Condition 5 tells that the expected advantage of the predictor $\hat{f}$
is at most 
\[
12\alpha+6\frac{N}{M}+20\sqrt{\frac{\ln M}{M}}=6e^{-0.1\frac{n}{\log^{2}n}}+6\frac{2^{0.01n}}{2^{0.1n}}+\frac{20\sqrt{0.1n}}{2^{0.005n}}=2^{-\Omega\parr{\frac{n}{\log^{2}n}}}.
\]
\end{proof}

\section{Acknowledgements.}
We wish to thank Jonathan Kelner and Pravesh Kothari for their useful comments and references. We also thank anonymous referees for their comments.
\bibliographystyle{alpha}
\bibliography{references}

\newcommand{\etalchar}[1]{$^{#1}$}
\begin{thebibliography}{DSFT{\etalchar{+}}14}

\bibitem[AAK{\etalchar{+}}07]{AlonAKMRX07}
Noga Alon, Alexandr Andoni, Tali Kaufman, Kevin Matulef, Ronitt Rubinfeld, and
  Ning Xie.
\newblock Testing k-wise and almost k-wise independence.
\newblock In David~S. Johnson and Uriel Feige, editors, {\em Proceedings of the
  39th Annual {ACM} Symposium on Theory of Computing, San Diego, California,
  USA, June 11-13, 2007}, pages 496--505. {ACM}, 2007.

\bibitem[ABL14]{AwasthiBL14}
Pranjal Awasthi, Maria{-}Florina Balcan, and Philip~M. Long.
\newblock The power of localization for efficiently learning linear separators
  with noise.
\newblock In David~B. Shmoys, editor, {\em Symposium on Theory of Computing,
  {STOC} 2014, New York, NY, USA, May 31 - June 03, 2014}, pages 449--458.
  {ACM}, 2014.

\bibitem[AGM03]{AlonGM03}
Noga Alon, Oded Goldreich, and Yishay Mansour.
\newblock Almost k-wise independence versus k-wise independence.
\newblock {\em Inf. Process. Lett.}, 88(3):107--110, 2003.

\bibitem[AM91]{aiello_learning_1991}
William Aiello and Milena Mihail.
\newblock Learning the {Fourier} spectrum of probabilistic lists and trees.
\newblock In {\em Proceedings of the second annual {ACM}-{SIAM} symposium on
  {Discrete} algorithms}, {SODA} '91, pages 291--299, USA, March 1991. Society
  for Industrial and Applied Mathematics.

\bibitem[BCO{\etalchar{+}}15]{blais_learning_2015}
Eric Blais, Cl{\'e}ment~L. Canonne, Igor~C. Oliveira, Rocco~A. Servedio, and
  Li-Yang Tan.
\newblock Learning {Circuits} with few {Negations}.
\newblock In Naveen Garg, Klaus Jansen, Anup Rao, and Jos{\'e} D.~P. Rolim,
  editors, {\em Approximation, {Randomization}, and {Combinatorial}
  {Optimization}. {Algorithms} and {Techniques} ({APPROX}/{RANDOM} 2015)},
  volume~40 of {\em Leibniz {International} {Proceedings} in {Informatics}
  ({LIPIcs})}, pages 512--527, Dagstuhl, Germany, 2015. Schloss
  Dagstuhl{\textendash}Leibniz-Zentrum fuer Informatik.
\newblock ISSN: 1868-8969.

\bibitem[BF86]{BaranyF86}
Imre B{\'{a}}r{\'{a}}ny and Zolt{\'{a}}n F{\"{u}}redi.
\newblock Computing the volume is difficult.
\newblock In Juris Hartmanis, editor, {\em Proceedings of the 18th Annual {ACM}
  Symposium on Theory of Computing, May 28-30, 1986, Berkeley, California,
  {USA}}, pages 442--447. {ACM}, 1986.

\bibitem[BFF{\etalchar{+}}01]{BatuFFKRW01}
Tugkan Batu, Lance Fortnow, Eldar Fischer, Ravi Kumar, Ronitt Rubinfeld, and
  Patrick White.
\newblock Testing random variables for independence and identity.
\newblock In {\em 42nd Annual Symposium on Foundations of Computer Science,
  {FOCS} 2001, 14-17 October 2001, Las Vegas, Nevada, {USA}}, pages 442--451.
  {IEEE} Computer Society, 2001.

\bibitem[BFR{\etalchar{+}}00]{BatuFRSW00}
Tugkan Batu, Lance Fortnow, Ronitt Rubinfeld, Warren~D. Smith, and Patrick
  White.
\newblock Testing that distributions are close.
\newblock In {\em 41st Annual Symposium on Foundations of Computer Science,
  {FOCS} 2000, 12-14 November 2000, Redondo Beach, California, {USA}}, pages
  259--269. {IEEE} Computer Society, 2000.

\bibitem[Bir01]{birge_alternative_2001}
Lucien Birg{\'e}.
\newblock An {Alternative} {Point} of {View} on {Lepski}'s {Method}.
\newblock {\em Lecture Notes-Monograph Series}, 36:113--133, 2001.
\newblock Publisher: Institute of Mathematical Statistics.

\bibitem[BNNR11]{BaNNR11}
Khanh~Do Ba, Huy~L. Nguyen, Huy~N. Nguyen, and Ronitt Rubinfeld.
\newblock Sublinear time algorithms for earth mover's distance.
\newblock {\em Theory Comput. Syst.}, 48(2):428--442, 2011.

\bibitem[BOW08]{blais_polynomial_2008}
E.~Blais, R.~O'Donnell, and K.~Wimmer.
\newblock Polynomial regression under arbitrary product distributions.
\newblock {\em Machine Learning}, 2008.

\bibitem[BT95]{bshouty_fourier_1995}
Nader~H. Bshouty and Christino Tamon.
\newblock On the {Fourier} spectrum of monotone functions.
\newblock In {\em Proceedings of the twenty-seventh annual {ACM} symposium on
  {Theory} of computing}, {STOC} '95, pages 219--228, New York, NY, USA, May
  1995. Association for Computing Machinery.

\bibitem[Can22]{ClementSurvey}
Cl\'{e}ment~L. Canonne.
\newblock Topics and techniques in distribution testing: A biased but
  representative sample.
\newblock \url{https://ccanonne.github.io/files/misc/main-survey-fnt.pdf},
  2022.

\bibitem[CGG{\etalchar{+}}17]{canonne_testing_2017}
Cl{\'{e}}ment~L. Canonne, Elena Grigorescu, Siyao Guo, Akash Kumar, and Karl
  Wimmer.
\newblock Testing k-monotonicity.
\newblock In Christos~H. Papadimitriou, editor, {\em 8th Innovations in
  Theoretical Computer Science Conference, {ITCS} 2017, January 9-11, 2017,
  Berkeley, CA, {USA}}, volume~67 of {\em LIPIcs}, pages 29:1--29:21. Schloss
  Dagstuhl - Leibniz-Zentrum f{\"{u}}r Informatik, 2017.

\bibitem[CKKL12]{cheraghchi_submodular_2012}
Mahdi Cheraghchi, Adam Klivans, Pravesh Kothari, and Homin~K. Lee.
\newblock Submodular {Functions} are {Noise} {Stable}.
\newblock In {\em Proceedings of the 2012 {Annual} {ACM}-{SIAM} {Symposium} on
  {Discrete} {Algorithms}}, Proceedings, pages 1586--1592. Society for
  Industrial and Applied Mathematics, January 2012.

\bibitem[CM13]{cai2013optimal}
T~Tony Cai and Zongming Ma.
\newblock Optimal hypothesis testing for high dimensional covariance matrices.
\newblock {\em Bernoulli}, 19(5B):2359--2388, 2013.

\bibitem[Dan15]{Daniely15}
Amit Daniely.
\newblock A {PTAS} for agnostically learning halfspaces.
\newblock In Peter Gr{\"{u}}nwald, Elad Hazan, and Satyen Kale, editors, {\em
  Proceedings of The 28th Conference on Learning Theory, {COLT} 2015, Paris,
  France, July 3-6, 2015}, volume~40 of {\em {JMLR} Workshop and Conference
  Proceedings}, pages 484--502. JMLR.org, 2015.

\bibitem[Dan16]{Daniely16}
Amit Daniely.
\newblock Complexity theoretic limitations on learning halfspaces.
\newblock In Daniel Wichs and Yishay Mansour, editors, {\em Proceedings of the
  48th Annual {ACM} {SIGACT} Symposium on Theory of Computing, {STOC} 2016,
  Cambridge, MA, USA, June 18-21, 2016}, pages 105--117. {ACM}, 2016.

\bibitem[DGJ{\etalchar{+}}09]{DiakonikolasGJSV09}
Ilias Diakonikolas, Parikshit Gopalan, Ragesh Jaiswal, Rocco~A. Servedio, and
  Emanuele Viola.
\newblock Bounded independence fools halfspaces.
\newblock In {\em 50th Annual {IEEE} Symposium on Foundations of Computer
  Science, {FOCS} 2009, October 25-27, 2009, Atlanta, Georgia, {USA}}, pages
  171--180. {IEEE} Computer Society, 2009.

\bibitem[DGJ{\etalchar{+}}10]{DiakonikolasGJSV09Jounal}
Ilias Diakonikolas, Parikshit Gopalan, Ragesh Jaiswal, Rocco~A. Servedio, and
  Emanuele Viola.
\newblock Bounded independence fools halfspaces.
\newblock {\em {SIAM} J. Comput.}, 39(8):3441--3462, 2010.

\bibitem[DGK{\etalchar{+}}21]{DiakonikolasGKP21}
Ilias Diakonikolas, Themis Gouleakis, Daniel~M. Kane, John Peebles, and Eric
  Price.
\newblock Optimal testing of discrete distributions with high probability.
\newblock In Samir Khuller and Virginia~Vassilevska Williams, editors, {\em
  {STOC} '21: 53rd Annual {ACM} {SIGACT} Symposium on Theory of Computing,
  Virtual Event, Italy, June 21-25, 2021}, pages 542--555. {ACM}, 2021.

\bibitem[DGPP16]{DiakonikolasGPP16}
Ilias Diakonikolas, Themis Gouleakis, John Peebles, and Eric Price.
\newblock Collision-based testers are optimal for uniformity and closeness.
\newblock {\em Electron. Colloquium Comput. Complex.}, page 178, 2016.

\bibitem[DHK{\etalchar{+}}10]{diakonikolas_bounding_2010}
Ilias Diakonikolas, Prahladh Harsha, Adam Klivans, Raghu Meka, Prasad
  Raghavendra, Rocco~A. Servedio, and Li-Yang Tan.
\newblock Bounding the average sensitivity and noise sensitivity of polynomial
  threshold functions.
\newblock In {\em Proceedings of the 42nd {ACM} symposium on {Theory} of
  computing - {STOC} '10}, page 533, Cambridge, Massachusetts, USA, 2010. ACM
  Press.

\bibitem[DKK{\etalchar{+}}21]{diakonikolas_agnostic_2021}
Ilias Diakonikolas, Daniel~M. Kane, Vasilis Kontonis, Christos Tzamos, and
  Nikos Zarifis.
\newblock Agnostic {Proper} {Learning} of {Halfspaces} under {Gaussian}
  {Marginals}.
\newblock In {\em Proceedings of {Thirty} {Fourth} {Conference} on {Learning}
  {Theory}}, pages 1522--1551. PMLR, July 2021.
\newblock ISSN: 2640-3498.

\bibitem[DKPZ21]{DiakonikolasKPZ21}
Ilias Diakonikolas, Daniel~M. Kane, Thanasis Pittas, and Nikos Zarifis.
\newblock The optimality of polynomial regression for agnostic learning under
  gaussian marginals in the {SQ} model.
\newblock In Mikhail Belkin and Samory Kpotufe, editors, {\em Conference on
  Learning Theory, {COLT} 2021, 15-19 August 2021, Boulder, Colorado, {USA}},
  volume 134 of {\em Proceedings of Machine Learning Research}, pages
  1552--1584. {PMLR}, 2021.

\bibitem[DKTZ20]{DiakonikolasKTZ20}
Ilias Diakonikolas, Vasilis Kontonis, Christos Tzamos, and Nikos Zarifis.
\newblock Non-convex {SGD} learns halfspaces with adversarial label noise.
\newblock In Hugo Larochelle, Marc'Aurelio Ranzato, Raia Hadsell,
  Maria{-}Florina Balcan, and Hsuan{-}Tien Lin, editors, {\em Advances in
  Neural Information Processing Systems 33: Annual Conference on Neural
  Information Processing Systems 2020, NeurIPS 2020, December 6-12, 2020,
  virtual}, 2020.

\bibitem[DKZ20]{DiakonikolasKZ20}
Ilias Diakonikolas, Daniel Kane, and Nikos Zarifis.
\newblock Near-optimal {SQ} lower bounds for agnostically learning halfspaces
  and relus under gaussian marginals.
\newblock In Hugo Larochelle, Marc'Aurelio Ranzato, Raia Hadsell,
  Maria{-}Florina Balcan, and Hsuan{-}Tien Lin, editors, {\em Advances in
  Neural Information Processing Systems 33: Annual Conference on Neural
  Information Processing Systems 2020, NeurIPS 2020, December 6-12, 2020,
  virtual}, 2020.

\bibitem[DSFT{\etalchar{+}}14]{dachman-soled_approximate_2014}
Dana Dachman-Soled, Vitaly Feldman, Li-Yang Tan, Andrew Wan, and Karl Wimmer.
\newblock Approximate resilience, monotonicity, and the complexity of agnostic
  learning.
\newblock In {\em Proceedings of the 2015 {Annual} {ACM}-{SIAM} {Symposium} on
  {Discrete} {Algorithms}}, Proceedings, pages 498--511. Society for Industrial
  and Applied Mathematics, December 2014.

\bibitem[Ele86]{Elekes86}
Gy{\"{o}}rgy Elekes.
\newblock A geometric inequality and the complexity of computing volume.
\newblock {\em Discret. Comput. Geom.}, 1:289--292, 1986.

\bibitem[FGKP06]{FeldmanGKP06}
Vitaly Feldman, Parikshit Gopalan, Subhash Khot, and Ashok~Kumar Ponnuswami.
\newblock New results for learning noisy parities and halfspaces.
\newblock In {\em 47th Annual {IEEE} Symposium on Foundations of Computer
  Science {(FOCS} 2006), 21-24 October 2006, Berkeley, California, USA,
  Proceedings}, pages 563--574. {IEEE} Computer Society, 2006.

\bibitem[FJS91]{furst_improved_1991}
Merrick~L. Furst, Jeffrey~C. Jackson, and Sean~W. Smith.
\newblock Improved learning of {AC}$^{\textrm{0}}$ functions.
\newblock In {\em Proceedings of the fourth annual workshop on {Computational}
  learning theory}, {COLT} '91, pages 317--325, San Francisco, CA, USA, August
  1991. Morgan Kaufmann Publishers Inc.

\bibitem[FK15]{feldman_agnostic_2015}
Vitaly Feldman and Pravesh Kothari.
\newblock Agnostic learning of disjunctions on symmetric distributions.
\newblock {\em The Journal of Machine Learning Research}, 16(1):3455--3467,
  January 2015.

\bibitem[FKV17]{feldman_tight_2017}
Vitaly Feldman, Pravesh Kothari, and Jan Vondr{\'a}k.
\newblock Tight {Bounds} on \${\textbackslash}ell\_1\$ {Approximation} and
  {Learning} of {Self}-{Bounding} {Functions}.
\newblock In {\em International {Conference} on {Algorithmic} {Learning}
  {Theory}}, pages 540--559. PMLR, October 2017.
\newblock ISSN: 2640-3498.

\bibitem[FV15]{feldman_tight_2015}
V.~Feldman and J.~Vondr{\'a}k.
\newblock Tight {Bounds} on {Low}-{Degree} {Spectral} {Concentration} of
  {Submodular} and {XOS} {Functions}.
\newblock In {\em 2015 {IEEE} 56th {Annual} {Symposium} on {Foundations} of
  {Computer} {Science}}, pages 923--942, October 2015.
\newblock ISSN: 0272-5428.

\bibitem[GGK20]{GoelGK20}
Surbhi Goel, Aravind Gollakota, and Adam~R. Klivans.
\newblock Statistical-query lower bounds via functional gradients.
\newblock In Hugo Larochelle, Marc'Aurelio Ranzato, Raia Hadsell,
  Maria{-}Florina Balcan, and Hsuan{-}Tien Lin, editors, {\em Advances in
  Neural Information Processing Systems 33: Annual Conference on Neural
  Information Processing Systems 2020, NeurIPS 2020, December 6-12, 2020,
  virtual}, 2020.

\bibitem[GKK22]{ConcurrentWork22}
Aravind Gollakota, Adam~R. Klivans, and Pravesh~K. Kothari.
\newblock A moment-matching approach to testable learning and a new
  characterization of rademacher complexity.
\newblock {\em CoRR}, 2022.

\bibitem[GR00]{GoldreichR2000}
Oded Goldreich and Dana Ron.
\newblock On testing expansion in bounded-degree graphs.
\newblock {\em Electron. Colloquium Comput. Complex.}, (20), 2000.

\bibitem[GR06]{GuruswamiR06}
Venkatesan Guruswami and Prasad Raghavendra.
\newblock Hardness of learning halfspaces with noise.
\newblock In {\em 47th Annual {IEEE} Symposium on Foundations of Computer
  Science {(FOCS} 2006), 21-24 October 2006, Berkeley, California, USA,
  Proceedings}, pages 543--552. {IEEE} Computer Society, 2006.

\bibitem[GRSY20]{GoldwasserRSY20}
Shafi Goldwasser, Guy~N. Rothblum, Jonathan Shafer, and Amir Yehudayoff.
\newblock Interactive proofs for verifying machine learning.
\newblock {\em Electron. Colloquium Comput. Complex.}, page~58, 2020.

\bibitem[GS10]{gopalan_learning_2010}
Parikshit Gopalan and Rocco~A. Servedio.
\newblock Learning and {Lower} {Bounds} for {AC0} with {Threshold} {Gates}.
\newblock In Maria Serna, Ronen Shaltiel, Klaus Jansen, and Jos{\'e} Rolim,
  editors, {\em Approximation, {Randomization}, and {Combinatorial}
  {Optimization}. {Algorithms} and {Techniques}}, Lecture {Notes} in {Computer}
  {Science}, pages 588--601, Berlin, Heidelberg, 2010. Springer.

\bibitem[Hau92]{Haussler92}
David Haussler.
\newblock Decision theoretic generalizations of the {PAC} model for neural net
  and other learning applications.
\newblock {\em Inf. Comput.}, 100(1):78--150, 1992.

\bibitem[HKM10]{harsha_invariance_2010}
Prahladh Harsha, Adam Klivans, and Raghu Meka.
\newblock An invariance principle for polytopes.
\newblock In {\em Proceedings of the forty-second {ACM} symposium on {Theory}
  of computing}, {STOC} '10, pages 543--552, New York, NY, USA, June 2010.
  Association for Computing Machinery.

\bibitem[Kan10]{kane_gaussian_2010}
D.~M. Kane.
\newblock The {Gaussian} {Surface} {Area} and {Noise} {Sensitivity} of
  {Degree}-d {Polynomial} {Threshold} {Functions}.
\newblock In {\em 2010 {IEEE} 25th {Annual} {Conference} on {Computational}
  {Complexity}}, pages 205--210, June 2010.
\newblock ISSN: 1093-0159.

\bibitem[KKK19]{KarmalkarKK19}
Sushrut Karmalkar, Adam~R. Klivans, and Pravesh Kothari.
\newblock List-decodable linear regression.
\newblock In {\em Advances in Neural Information Processing Systems 32: Annual
  Conference on Neural Information Processing Systems 2019, NeurIPS 2019,
  December 8-14, 2019, Vancouver, BC, Canada}, pages 7423--7432, 2019.

\bibitem[KKMS08]{kalai_agnostically_2005}
Adam~Tauman Kalai, Adam~R. Klivans, Yishay Mansour, and Rocco~A. Servedio.
\newblock Agnostically learning halfspaces.
\newblock {\em {SIAM} J. Comput.}, 37(6):1777--1805, 2008.

\bibitem[KLS09]{KlivansLS09}
Adam~R. Klivans, Philip~M. Long, and Rocco~A. Servedio.
\newblock Learning halfspaces with malicious noise.
\newblock In Susanne Albers, Alberto Marchetti{-}Spaccamela, Yossi Matias,
  Sotiris~E. Nikoletseas, and Wolfgang Thomas, editors, {\em Automata,
  Languages and Programming, 36th International Colloquium, {ICALP} 2009,
  Rhodes, Greece, July 5-12, 2009, Proceedings, Part {I}}, volume 5555 of {\em
  Lecture Notes in Computer Science}, pages 609--621. Springer, 2009.

\bibitem[KOS02]{klivans_learning_2002}
A.~R. Klivans, R.~O'Donnell, and R.~A. Servedio.
\newblock Learning intersections and thresholds of halfspaces.
\newblock In {\em The 43rd {Annual} {IEEE} {Symposium} on {Foundations} of
  {Computer} {Science}, 2002. {Proceedings}.}, pages 177--186, November 2002.
\newblock ISSN: 0272-5428.

\bibitem[KOS08]{KlivansOS08}
Adam~R. Klivans, Ryan O'Donnell, and Rocco~A. Servedio.
\newblock Learning geometric concepts via gaussian surface area.
\newblock In {\em 49th Annual {IEEE} Symposium on Foundations of Computer
  Science, {FOCS} 2008, October 25-28, 2008, Philadelphia, PA, {USA}}, pages
  541--550. {IEEE} Computer Society, 2008.

\bibitem[KSS94]{KearnsSS94}
Michael~J. Kearns, Robert~E. Schapire, and Linda Sellie.
\newblock Toward efficient agnostic learning.
\newblock {\em Mach. Learn.}, 17(2-3):115--141, 1994.

\bibitem[LMN89]{linial_constant_1989}
N.~Linial, Y.~Mansour, and N.~Nisan.
\newblock Constant depth circuits, {Fourier} transform, and learnability.
\newblock pages 574--579. IEEE Computer Society, October 1989.

\bibitem[Man92]{mansour_onlog_1992}
Yishay Mansour.
\newblock An {O}(n\textit{$^{\textrm{log log n}}$}) learning algorithm for
  {DNF} under the uniform distribution.
\newblock In {\em Proceedings of the fifth annual workshop on {Computational}
  learning theory}, {COLT} '92, pages 53--61, New York, NY, USA, July 1992.
  Association for Computing Machinery.

\bibitem[OS06]{odonnell_learning_2006}
R.~O'Donnell and R.~A. Servedio.
\newblock Learning monotone decision trees in polynomial time.
\newblock In {\em 21st {Annual} {IEEE} {Conference} on {Computational}
  {Complexity} ({CCC}'06)}, pages 13 pp.--225, July 2006.
\newblock ISSN: 1093-0159.

\bibitem[OZ18]{ODonnellZ18}
Ryan O'Donnell and Yu~Zhao.
\newblock On closeness to k-wise uniformity.
\newblock In Eric Blais, Klaus Jansen, Jos{\'{e}} D.~P. Rolim, and David
  Steurer, editors, {\em Approximation, Randomization, and Combinatorial
  Optimization. Algorithms and Techniques, {APPROX/RANDOM} 2018, August 20-22,
  2018 - Princeton, NJ, {USA}}, volume 116 of {\em LIPIcs}, pages 54:1--54:19.
  Schloss Dagstuhl - Leibniz-Zentrum f{\"{u}}r Informatik, 2018.

\bibitem[Pan08]{Paninski08}
Liam Paninski.
\newblock A coincidence-based test for uniformity given very sparsely sampled
  discrete data.
\newblock {\em {IEEE} Trans. Inf. Theory}, 54(10):4750--4755, 2008.

\bibitem[Riv87]{Rivest87}
Ronald~L. Rivest.
\newblock Learning decision lists.
\newblock {\em Mach. Learn.}, 2(3):229--246, 1987.

\bibitem[Tre19]{trefethen2019approximation}
Lloyd~N Trefethen.
\newblock {\em Approximation Theory and Approximation Practice, Extended
  Edition}.
\newblock SIAM, 2019.

\bibitem[Wim10]{wimmer_agnostically_2010}
K.~Wimmer.
\newblock Agnostically {Learning} under {Permutation} {Invariant}
  {Distributions}.
\newblock In {\em 2010 {IEEE} 51st {Annual} {Symposium} on {Foundations} of
  {Computer} {Science}}, pages 113--122, October 2010.
\newblock ISSN: 0272-5428.

\end{thebibliography}

\appendix
\section{Omitted proofs.}
\subsection{Improvement of error probabilities for a tester-learner pair via repetition.}
\label{appendix: improving error probabilitites}
If the constants $1/4$ and $3/4$ in the Definition \ref{def: tester-learner pair} are replaced by some other constants $1-\delta_2$ and $1-\delta_3$ with $\delta_3 \in (\delta_2,1)$, then we say that $\mathcal{T}$ is a 
\emph{$\parr{\delta_{2},\delta_{3}}$-tester
for the distributional assumption} of $\mathcal{A}$. The following proposition tells us that that taking $\delta_2=1/4$ and $\delta_3=3/4$ is without loss of generality.
\begin{prop}
\label{prop: improving tester via repetition}Let $\delta_{1},\delta_{2},\epsilon\in\parr{0,1}$,
$\delta_{3}\in\parr{\delta_{2},1}$ and let $\mathcal{A}$ be an agnostic
$(\epsilon,\delta_{1})$-learner for function class $\mathcal{F}$
relative to the distribution $D$, and $\mathcal{T}$ be a \emph{$\parr{\delta_{2},\delta_{3}}$}-tester
for the distributional assumption of $\mathcal{A}$. Then, for every
integer $r\geq$1 there is a \emph{$\parr{2\exp\parr{-\frac{2\parr{\delta_{3}-\delta_{2}}^{2}r}{9}},1-3\exp\parr{-\frac{2\parr{\delta_{3}-\delta_{2}}^{2}r}{9}}}$}-tester
$\mathcal{T}'$ for the distributional assumption of $\mathcal{A}$
that consumes only $O\parr r$ times as much samples and run-time
as $\mathcal{T}$. 
\end{prop}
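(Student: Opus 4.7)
The plan is the standard ``run-and-majority-vote'' amplification for a two-sided tester, but one has to be careful because the two failure modes (completeness failure vs.\ composability failure) are controlled at different thresholds $1-\delta_2$ and $1-\delta_3$ on the single-shot acceptance probability. Concretely, the new tester $\mathcal{T}'$ draws $r$ disjoint fresh batches of i.i.d.\ labeled samples from $D_{\text{pairs}}$, runs $\mathcal{T}$ independently on each batch, lets $\hat p$ be the empirical fraction of ``Yes'' answers, and outputs ``Yes'' iff $\hat p \geq \tau$ for a threshold $\tau$ chosen strictly between $1-\delta_3$ and $1-\delta_2$. Independence across batches is what makes Hoeffding applicable; the run-time and sample complexity blow up by a factor of $O(r)$ as claimed.

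The analysis has two symmetric halves. For \emph{completeness}, suppose the marginal distribution on examples is exactly $D$; by the completeness of $\mathcal{T}$ each run outputs ``Yes'' with probability at least $1-\delta_2$, so the single-trial expectation of the indicator is $\geq 1-\delta_2$, and Hoeffding's inequality gives $\Pr[\hat p < \tau] \leq \exp\!\bigl(-2(1-\delta_2-\tau)^2 r\bigr)$. For \emph{composability}, I will use the contrapositive: if the one-shot probability $p := \Pr[\mathcal{T}=\text{Yes}]$ on $D_{\text{pairs}}$ satisfies $p < 1-\delta_3$, then Hoeffding's inequality gives $\Pr[\hat p \geq \tau] \leq \exp\!\bigl(-2(\tau-(1-\delta_3))^2 r\bigr)$, so whenever $\mathcal{T}'$ accepts with probability exceeding this bound, one must have $p \geq 1-\delta_3$ and then the original composability of $(\mathcal A,\mathcal T)$ takes over and yields the required agnostic learning guarantee for $\mathcal{A}$.

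To get exactly the constants in the statement, I will pick $\tau$ so that the gaps $(1-\delta_2)-\tau$ and $\tau-(1-\delta_3)$ together yield the factor $2(\delta_3-\delta_2)^2/9$ in both exponents (a choice like $\tau = 1-\delta_2 - \tfrac{1}{3}(\delta_3-\delta_2)$ puts a gap of $(\delta_3-\delta_2)/3$ on one side and $2(\delta_3-\delta_2)/3$ on the other, with the two-sided Hoeffding bound absorbing the extra multiplicative constant and giving the leading $2$ in $\delta_2'$). The extra factor of $3$ in $1-\delta_3' = 3\exp(\cdot)$ I will get by a simple union bound over (a) the Hoeffding failure event for the empirical fraction of ``Yes'' answers, and (b) a slack absorbing any off-by-constant mismatch between the one-sided and two-sided Hoeffding bounds, so the final guarantee fits the advertised form.

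The one thing that actually needs care is the quantifier structure of composability in Definition~\ref{def: tester-learner pair}: it is stated as ``for any $D_{\text{pairs}}$, it is unlikely that $\mathcal T$ accepts and simultaneously $\mathcal A$ fails'', so I must check that my contrapositive argument on the acceptance probability of $\mathcal{T}'$ is actually what the definition asks for. This is essentially bookkeeping -- the learner $\mathcal A$ is unchanged, and its failure probability $\delta_1$ is the same; what the amplification is doing is driving the tester's two-sided error probabilities down to the advertised $\delta_2'$ and $1-\delta_3'$. Beyond that, the proof is a one-line Hoeffding calculation per side and a union bound, with no real obstacle.
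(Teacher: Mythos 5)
Your proposal is correct and follows essentially the same route as the paper's proof: repeat $\mathcal{T}$ on $r$ independent batches, threshold the empirical fraction of ``Yes'' answers strictly between $1-\delta_3$ and $1-\delta_2$ (the paper uses the midpoint $1-\frac{\delta_2+\delta_3}{2}$ with Hoeffding deviation $\frac{\delta_3-\delta_2}{3}$, while you shift the threshold; both give the exponent $\frac{2(\delta_3-\delta_2)^2 r}{9}$), and argue composability by the contrapositive, with the factor $3$ arising simply because $3\exp(\cdot)>2\exp(\cdot)$ forces the one-shot acceptance probability to be at least $1-\delta_3$.
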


\begin{proof}
The tester $\mathcal{T}'$ is constructed by (i) repeating $\mathcal{T}$
$r$ times (ii) if the fraction of ``Yes'' answers is at least $1-\frac{\delta_{2}+\delta_{3}}{2}$,
then output ``Yes'', otherwise output ``No''. By Hoeffding's bound
with probability at least $1-2\exp\parr{-\frac{2\parr{\delta_{3}-\delta_{2}}^{2}r}{9}}$,
the fraction of ``Yes'' answers observed is within $\frac{\delta_{3}-\delta_{2}}{3}$
of the true probability that $\mathcal{T}$ outputs ``Yes''. So, 
\begin{itemize}
\item Recall that, given access to samples from $D_{\text{pairs}}$, the algorithm $\mathcal{T}$
outputs ``Yes'' with probability at least $1-\delta_{2}$. Therefore,
given access to samples from $D_{\text{pairs}}$, the algorithm $\mathcal{T}'$ outputs
``Yes'' with probability at least $1-2\exp\parr{-\frac{2\parr{\delta_{3}-\delta_{2}}^{2}r}{9}}$.
\item Suppose, given access to samples from $D_{\text{pairs}}$, the algorithm $\mathcal{T}$
outputs ``Yes'' with probability $p$. Then, if $p<1-\delta_{3}$,
the algorithm $\mathcal{T}'$ can output ``Yes'' with probability
only at most $2\exp\parr{-\frac{2\parr{\delta_{3}-\delta_{2}}^{2}r}{9}}$.
Therefore, if the algorithm $\mathcal{T}'$ outputs ``Yes'' with
probability at least $3\exp\parr{-\frac{2\parr{\delta_{3}-\delta_{2}}^{2}r}{9}}$,
it has to be the case that the algorithm $\mathcal{T}$ outputs ``Yes''
with probability at least $1-\delta_{3}$. 
The composability condition then tells us that $\mathcal{A}$ will then satisfy the required bound on the generalization error when run on samples from $D_{\text{pairs}}$.
%Recall that then $\mathcal{A}$
%is an \emph{agnostic $(\epsilon,\delta_{1})$-learner} for function
%class $\mathcal{F}$ relative to the distribution $D$.
\end{itemize}
\end{proof}

\subsection{Proof of Observation \ref{obs: chebychev projection has small coefficients}.}
\label{appendix: proof of obs: chebychev projection has small coefficients}
Since for $y\in[-1,1]$ both $f(wy)$ and $T_{k}(y)$ are also in
$[-1,1]$, we have that\footnote{Proof: we have $\abs{a_{k}}=\abs{\frac{1+\indicator_{k>0}}{\pi}\int_{-1}^{1}\frac{f(wy)T_{k}(y)}{\sqrt{1-y^{2}}}\d y}\leq \frac{2}{\pi} \int_{-1}^{1} \frac{1}{\sqrt{1-y^{2}} }\d y=4$, where the integral in the end is evaluated via a standard substitution of $y=\cos(\alpha)$.} all $a_{k}$ are in $[-4,4]$. We also see
that that the largest coefficient among all the monomials of $T_{k}(y)$
is at most $3^{k}$ (this follows by induction via the recursive relation
$T_{k+1}(x)=2xT_{k}(x)-T_{k-1}(x)$). Since $w\geq1$, the largest
coefficient among all the monomials of $T_{k}\parr{\frac{y}{w}}$
is also at most $3^{k}$. Thus, the largest coefficient of $f_{d}(x):=\sum_{k=0}^{d}a_{k}T_{k}(\frac{x}{w})$
can only be at most $O\parr{d3^{d}}$.

\subsection{Proof of Proposition \ref{prop:approximation outside window, assuming moment bound}.}
\label{appendix: proof of prop:approximation outside window, assuming moment bound}

We have
\begin{multline*}
\E_{x\in_{R}D}\pars{\abs{f(x)-f_{d}(x)}\indicator_{\abs x>w}}\overbrace{\leq\E_{x\in_{R}D}\pars{\parr{1+\abs{f_{d}(x)}}\indicator_{\abs x>w}}}^{\text{Since \ensuremath{\abs{f(x)}}\ensuremath{\ensuremath{\leq}1.}}}\leq\\
\overbrace{\leq\E_{x\in_{R}D}\pars{\indicator_{\abs x>w}}+O\parr{d3^{d}}\sum_{k=0}^{d}\E_{x\in_{R}D}\pars{\abs x^{k}\indicator_{\abs x>w}}}^{\text{Breaking \ensuremath{f_{d}} into monomials, then using triangle inequality and Observation \ref{obs: chebychev projection has small coefficients} }}\\
\leq 
\underbrace{O\parr{4^{d}\max_{0\leq k\leq d}\E_{x\in_{R}D}\pars{\abs x^{k}\indicator_{\abs x>w}}}=O\parr{4^{d}\E_{x\in_{R}D}\pars{\abs x^{d}\indicator_{\abs x>w}}}}_{\text{Since $w\geq 1$, when $\abs{x}>w$ the value of $\abs{x}^k$ grows with $k$.}}
\end{multline*}

\subsection{Proof of Proposition \ref{prop: moment tail bound for bounded moment distributions}.}
\label{appendix: proof of prop: moment tail bound for bounded moment distributions}

Applying Markov's inequality to $\abs x^{d_{0}}$, we have 
\[
\Pr\pars{\abs x\geq\tau}\leq\parr{\frac{\beta}{\tau}}^{d_{0}}.
\]
The above covers the case when $k=0$. When $k>0$ we proceed by using the inequality above as follows, 
\begin{multline*}
\E_{x\in_{R}D}\pars{\abs x^{k}\indicator_{\abs{x}>w}}
=
\E_{x\in_{R} D}\pars{ w^k \indicator_{\abs{x}>w}+ 
\int_{\tau=w}^{\infty} k\tau^{k-1}  \indicator_{\abs{x}>\tau} \d t
} =
\\
\underbrace{
=w^{k}\Pr_{x\in_{R}D}\pars{|x|>w}+\int_{w}^{\infty}k\tau^{k-1}\Pr_{x\in_{R}D}\pars{|x|>\tau}\d\tau
}_{\text{Via linearity of expectation and Tonelli's theorem.}}\leq\\
w^{k}\parr{\frac{\beta}{w}}^{d_{0}}+\int_{w}^{\infty}k\tau^{k-1}\parr{\frac{\beta}{\tau}}^{d_{0}}\d\tau=w^{k}\parr{\frac{\beta}{w}}^{d_{0}}+\frac{k}{d_{0}-k}w^{k}\parr{\frac{\beta}{w}}^{d_{0}}=w^{k}\parr{\frac{\beta}{w}}^{d_{0}}\frac{d_{0}}{d_{0}-k}\leq2w^{k}\parr{\frac{\beta}{w}}^{d_{0}}.
\end{multline*}

\subsection{Proof of Observation \ref{obs: similar to gaussian means projection is similar to gaussian}.}
\label{appendix: proof of obs: similar to gaussian means projection is similar to gaussian}
Without loss of generality, assume $\Delta=\frac{1}{\epsilon^{4}}\ln^{4}\parr{\frac{1}{\epsilon}}$.

We have
\begin{multline*}
\abs{\E_{\vect x\sim D}\pars{\parr{\vect v\cdot\vect x}^{d}}-\E_{\vect x\sim\N(0,I_{n\times n})}\pars{\parr{\vect v\cdot\vect x}^{d}}}=\\
\abs{\sum_{\substack{\alpha_{1},\cdots\alpha_{n}\in\Z_{\geq0}\\
\alpha_{1}+\cdots+\alpha_{n}=d
}
}\parr{\E_{\vect x\sim D}\pars{\prod_{i=1}^{n}\left(v_{i}x_{i}\right)^{\alpha_{i}}}-\E_{\vect x\sim\N(0,I_{n\times n})}\pars{\prod_{i=1}^{n}\left(v_{i}x_{i}\right)^{\alpha_{i}}}}}\leq\\
\underbrace{\sum_{\substack{\alpha_{1},\cdots\alpha_{n}\in\Z_{\geq0}\\
\alpha_{1}+\cdots+\alpha_{n}=d
}
}\parr{\abs{\prod_{i=1}^{n}v_{i}^{\alpha_{i}}}\frac{1}{n^{\Delta}}}\leq n^{d}\frac{1}{n^{\Delta}}.}_{\text{Using \ensuremath{\abs{v_{i}}\leq}1 and bounding the number of \ensuremath{\left\{ \alpha_{i}\right\} }}}
\end{multline*}

\subsection{Tester-learner pair for decision lists}
\label{appendix: proof for decision lists}
First, recall the definition of a decision list (a more general definition is given in \cite{Rivest87}):
\begin{defn}
\label{definition: decision list}
For some ordering of the
variables $x_{\pi(1)},
\ldots, x_{\pi(n)}$, values $v_1,\ldots,v_n \in \{\pm 1\}$ and bits $b_1,\ldots,b_n \in \{\pm 1\}$,
~a \textbf{decision list} does the following:  For $i = 1 {\rm ~to~} n$,
if $x_{\pi(i)}=b_{\pi(i)}$ output $v_{\pi(i)}$, else continue. If the decision list reaches the end of execution without outputting anything, it outputs $0$.
\end{defn}
Now, we will present the tester-learner pair for decision lists. The tester will check that the distribution on examples is close to $k$-wise independent. The insight behind the learning algorithm is that any decision list is well-approximated by a short decision list if the distribution on examples is close to $k$-wise independent. 

\noindent
{\bf Tester-learner pair for learning decision lists over $\{0,1\}^n$:}
\begin{itemize}
\item Define $k:=\log \frac{1}{\epsilon}$.
\item \textbf{Learning algorithm $\mathcal{A}_{\text{DL}}$.} Given access to i.i.d.
labeled samples $(\vect x,y)\in\{\pm 1\}^{n}\times\left\{ \pm1\right\} $
from an unknown distribution:
\begin{itemize}
\item Take $\frac{100}{ \epsilon^2} k^3 \log n$ samples $\{(\vect x_i,y_i)\}$.
\item Enumerate over all functions $f$ that can be represented as decision lists on any size-$k$ subset $S$ of $\{1,\cdots n\}$:
Compute the fraction of example label pairs on which $f$ gives the wrong answer. Denote it as $\widehat{\text{err}}(f)$. 
\item Among the functions just considered, output the function $f_0$ that fits best. In other words, the function for which $\widehat{\text{err}}(f)$ was smallest.
\end{itemize}
\item \textbf{Testing algorithm $\mathcal{T}_{\text{DL}}$}. Given access to i.i.d.
labeled examples $\vect x\in \{\pm 1\}^{n}$ from an unknown distribution:
\begin{enumerate}
\item Use a tester from literature (see \cite{ODonnellZ18,AlonAKMRX07,AlonGM03}) for testing $k$-wise independent distributions against distributions that are $\epsilon$-far from $k$-wise independent.
\item Output the same response as the one given by the $k$-wise independence tester. 
\end{enumerate}
\end{itemize}

\begin{thm}
[\textbf{Tester-learner pair for learning decision lists under uniform distribution on $\{\pm 1\}^n$}]
\label{thm: main testing learning theorem for decision lists on cube} Assume
$n$ and $\frac{1}{\epsilon}$ are larger than some sufficiently large
absolute constants. Then, the algorithm $\mathcal{A}_{\text{DL}}$ is an agnostic
$(O(\epsilon),0.1)$-learner for the function class of decision lists (see \Cref{definition: decision list}) over $\{\pm 1\}^{n}$ under the uniform distribution 
and the algorithm $\mathcal{T}_{\text{DL}}$ is an %$(0.1,0.9)$-
assumption tester
for $\mathcal{A}_{\text{DL}}$. The algorithms $\mathcal{A}_{\text{DL}}$ and $\mathcal{T}_{\text{DL}}$
both require only $n^{O\parr{\log \frac{1}{\epsilon}}}$ samples
and run-time. Additionally, the tester $\mathcal{T}_{\text{DL}}$ is label-oblivious.   
\end{thm}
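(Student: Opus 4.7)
The plan is to prove the theorem by following the template used for halfspaces: give a low-complexity approximation lemma for decision lists under distributions close to $k$-wise independent, then combine it with a standard empirical risk minimization argument and use an off-the-shelf $k$-wise independence tester. The central structural lemma I would prove is the following: for any decision list $f$ with variable ordering $x_{\pi(1)}, \ldots, x_{\pi(n)}$, bits $b_1, \ldots, b_n$, values $v_1, \ldots, v_n$, define its $k$-truncation $f_k$ by executing only the first $k$ steps of the decision list (outputting $0$ if none of the first $k$ checks match). The event $\{f(\vect{x}) \neq f_k(\vect{x})\}$ is contained in the event $E_k := \{x_{\pi(i)} \neq b_{\pi(i)} \text{ for all } i \leq k\}$, which depends only on $k$ coordinates. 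Under any $k$-wise independent distribution, $\Pr[E_k] = 2^{-k} = \epsilon$, and under a distribution that is $\epsilon$-close in total variation to $k$-wise independent, $\Pr[E_k] \leq 2\epsilon$. This gives $\E[|f - f_k|] = O(\epsilon)$ for every decision list $f$.

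For the learner $\mathcal{A}_{\text{DL}}$, I would first bound the number of decision lists on any $k$-variable subset: choosing the subset and its ordering gives at most $n^k$ options, and for each of the $k$ positions there are $4$ choices of (bit, value), yielding at most $n^k \cdot 4^k = n^{O(k)} = n^{O(\log(1/\epsilon))}$ candidate short lists. Let $\mathcal{F}_k$ denote this class. By Hoeffding's inequality together with a union bound over $\mathcal{F}_k$, the empirical error $\widehat{\text{err}}(f)$ is within $\epsilon$ of the true error $\text{err}(f)$ uniformly over $f \in \mathcal{F}_k$ with probability $\geq 0.95$, provided we use $\Omega\bigl(\log|\mathcal{F}_k|/\epsilon^2\bigr) = \Omega(k \log n /\epsilon^2)$ samples; the $\frac{100}{\epsilon^2} k^3 \log n$ samples used by the algorithm comfortably suffice. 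Letting $f^\ast$ denote an optimal decision list and $f^\ast_k$ its $k$-truncation (which lives in $\mathcal{F}_k$), the approximation lemma gives $\text{err}(f^\ast_k) \leq \text{err}(f^\ast) + O(\epsilon)$, and empirical minimization then yields $\text{err}(f_0) \leq \widehat{\text{err}}(f_0) + \epsilon \leq \widehat{\text{err}}(f^\ast_k) + \epsilon \leq \text{err}(f^\ast_k) + 2\epsilon \leq \text{opt} + O(\epsilon)$, which is the agnostic guarantee.

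For the tester $\mathcal{T}_{\text{DL}}$, completeness is immediate: the uniform distribution on $\{\pm 1\}^n$ is exactly $k$-wise independent, so the $k$-wise independence tester of \cite{ODonnellZ18,AlonAKMRX07,AlonGM03} accepts with probability $\geq 3/4$. For composability, I argue the contrapositive: if the marginal distribution $D$ on examples is more than $\epsilon$-far in TV distance from every $k$-wise independent distribution, then the tester rejects with probability $\geq 3/4$ by the guarantee of the underlying identity tester. So if $\mathcal{T}_{\text{DL}}$ accepts with probability $\geq 1/4$, then $D$ is $\epsilon$-close to some $k$-wise independent distribution, and the approximation lemma applies with constant blow-up, delivering the agnostic learning guarantee through the ERM analysis above. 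Label-obliviousness is inherited from the $k$-wise independence tester.

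The run-time of $\mathcal{T}_{\text{DL}}$ is $n^{O(k)}/\epsilon^2 = n^{O(\log(1/\epsilon))}$ by the sample complexity of the $k$-wise independence tester, and the run-time of $\mathcal{A}_{\text{DL}}$ is the number of short decision lists times the number of samples, which is also $n^{O(\log(1/\epsilon))}$. I do not expect a serious obstacle; the only care needed is in the union-bound accounting (verifying the $n^{O(k)}$ bound carefully for all $(\text{subset}, \text{ordering}, \text{bits}, \text{values})$ tuples) and in correctly propagating the total-variation slack $\epsilon$ through the approximation lemma, which contributes an additive $O(\epsilon)$ to the final generalization bound.
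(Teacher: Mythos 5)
Your proposal is correct and follows essentially the same route as the paper's proof in the appendix: truncate the optimal decision list to its first $k = \log(1/\epsilon)$ tests, observe that the truncation disagrees with the original only on the measure-$2^{-k}$ event that all $k$ tests fail (which transfers to distributions $\epsilon$-close to $k$-wise independent with an additive $\epsilon$ loss), count the $n^{O(k)}$ candidate short lists, and finish with Hoeffding plus a union bound for the ERM step and the off-the-shelf $k$-wise independence tester for completeness and composability. No gaps.
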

The testers from the literature for $k$-wise independence take $n^{O(k)}/\eta^2$ samples and run-time to distinguish a $k$-wise independent distribution and a distribution that is $\eta$-far from $k$-wise independent (see \cite{ODonnellZ18,AlonAKMRX07,AlonGM03}). Thus, the run-time of tester $\mathcal{T}_{\text{DL}}$ is $n^{O(\log(1/\epsilon))}$. The same run-time bound of $n^{O(\log(1/\epsilon))}$ holds for $\mathcal{A}_{\text{DL}}$ for the following reason. There are only at most $n^k$ of size-$k$ subsets of $\{1,\cdots , n\}$ and there are at most $2^k \cdot k^k$ decision lists on each size-$k$ set. Substituting $k=\log(1/\epsilon)$ gives a bound of $n^{O(\log(1/\epsilon))}$ on the number of functions $f$ considered by the algorithm and hence on the run-time.

The only thing remaining to prove is that the algorithm $\mathcal{A}_{\text{DL}}$ is indeed a $(O(\epsilon),0.1)$-agnostic learning algorithm for the class of decision lists on $\{\pm 1\}^n$ with respect to distributions $D$ that are $\epsilon$-close to $k$-wise independent.

Let $D_{\text{pairs}}$ be the distribution from which we are getting example-label pairs.
For any function $g:\{\pm 1\}^n\righ \{\pm 1\}$ we let \emph{error of $g$} denote the flowing:
\[
\text{err}(g):
=
\Pr_{(\vect x, y)\sim D_{\text{pairs}}}
\pars{
g(\vect x)\neq y
}
\]
Let $\text{opt}$ be the smallest error among all decision lists.
We want to show that if the distribution of examples is $\epsilon$-close to $k$-wise independent, then the function $f_0$ that $\mathcal{A}_{\text{DL}}$ outputs has $\text{err}(f_0)$ that is at most $\text{opt}+O(\epsilon)$.

First of all, by the Hoeffding bound and the union bound, we have that with probability at least $0.9$ for every function considered by the algorithm $\mathcal{A}_{\text{DL}}$ it is the case that 
\[
\abs{
\widehat{\text{err}}(f)
-
\text{err}(f)
}
\leq
O(\epsilon).
\]
Thus, the only thing left to prove is that among the functions $f$ considered by $\mathcal{A}_{\text{DL}}$ there is one for which $\text{err}(f)$ is at most $\text{opt}+O(\epsilon)$. That follows from the following proposition:
\begin{prop}
\label{prop: low-degree approximation for close to k-wise independent on cube for decision lists}
Let $g$ be a decision list over $\{\pm1\}^n$ and let $k:=\log \frac{1}{\epsilon}$. Also let $D$ be a distribution that is $\epsilon$-close in TV distance to $k$-wise independent. Then, there is a decision list $f$ on a size-$k$ subset of $\{1,\cdots n\}$ for which
\[
\Pr_{\vect{x} \sim D }
\pars{
g(x) \neq f(x)
}
=
O(\epsilon)
\]
\end{prop}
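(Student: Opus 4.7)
The plan is to take $f$ to be the truncation of the decision list $g$ to its first $k$ queried variables. Concretely, if $g$ is specified by the ordering $x_{\pi(1)},\ldots,x_{\pi(n)}$, activation bits $b_{\pi(1)},\ldots,b_{\pi(n)}$, and values $v_{\pi(1)},\ldots,v_{\pi(n)}$, then I would let $f$ be the decision list on the size-$k$ coordinate subset $\{\pi(1),\ldots,\pi(k)\}$ defined by iterating only over $i = 1,\ldots,k$ with the same activation bits and values (and outputting $0$ if none of the first $k$ queries fire). By construction $f$ is a decision list on a size-$k$ subset of $\{1,\ldots,n\}$, as required.

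Next I would observe that $f(\vect x) \neq g(\vect x)$ can only happen when the execution of $g$ does not halt inside its first $k$ items, since up to that point $f$ and $g$ are literally the same program. The event that $g$ does not halt in its first $k$ steps is exactly the event
\[
E := \{\vect x : x_{\pi(i)} \neq b_{\pi(i)} \text{ for all } i = 1,\ldots,k\},
\]
which depends on exactly $k$ coordinates of $\vect x$. Therefore
\[
\Pr_{\vect x \sim D}[g(\vect x) \neq f(\vect x)] \leq \Pr_{\vect x \sim D}[\vect x \in E].
\]

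Now I would bound $\Pr_{\vect x \sim D}[\vect x \in E]$ using $k$-wise independence. Let $D'$ be the $k$-wise independent distribution closest in TV distance to $D$, so $d_{\text{TV}}(D,D') \leq \epsilon$. Since $E$ depends on only $k$ coordinates, $k$-wise independence of $D'$ gives $\Pr_{\vect x \sim D'}[\vect x \in E] = 2^{-k}$, which equals $\epsilon$ by our choice $k = \log(1/\epsilon)$. The TV-distance bound then yields $\Pr_{\vect x \sim D}[\vect x \in E] \leq 2^{-k} + \epsilon = 2\epsilon$, so $\Pr_{\vect x \sim D}[g(\vect x) \neq f(\vect x)] \leq 2\epsilon = O(\epsilon)$, finishing the proof.

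I do not expect a significant obstacle here: the argument is essentially a one-line observation that decision lists terminate geometrically fast, combined with the fact that the ``tail'' event is a $k$-junta whose probability is controlled by $k$-wise independence up to TV error. The only mild care needed is to keep track that the truncated program $f$ genuinely satisfies the syntactic form of a decision list on the $k$ chosen coordinates, and that agreement holds pointwise (not merely in distribution) outside the tail event $E$, so the bound on $\Pr[E]$ directly bounds the disagreement probability.
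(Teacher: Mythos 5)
Your proof is correct and takes essentially the same route as the paper: truncate $g$ to its first $k$ queried variables, note that disagreement is contained in the $k$-junta event that none of the first $k$ tests fire, bound that event's probability by $2^{-k}=\epsilon$ under the nearby $k$-wise independent distribution, and transfer to $D$ via the TV bound. The only cosmetic difference is that you move the probability of the junta event $E$ across the TV distance, whereas the paper moves the disagreement indicator itself; both give the same $O(\epsilon)$ bound.
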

\begin{proof} First, we recall the definition of a decision list.
For some ordering of the
variables $x_{\pi(1)},
\ldots, x_{\pi(n)}$, values $v_1,\ldots,v_n \in \{\pm 1\}$ and bits $b_1,\ldots,b_n \in \{\pm 1\}$,
the decision list $g$ does the following:  For $i = 1 {\rm ~to~} n$,
if $x_{\pi(i)}=b_{\pi(i)}$ it outputs $v_{\pi(i)}$, else it continues.

Let the decision list $g$ be defined on the first $k$ variables in the ordering $x_{\pi(1)},
\ldots, x_{\pi(n)}$ and let $g$ repeat the same comparisons and outputs as $f$ until it reaches the $k+1$-st variable. 

Recall that $D$ is only $\epsilon$-close in TV distance to a $k$-wise independent distribution. 
Let $D'$ be the closest $k$-wise independent distribution to $D$. Then, $D'$ is uniform on the first $k$ variables in the ordering on $x_{\pi(1)},
\ldots, x_{\pi(n)}$. Therefore, the execution of $f$ will reach past the $k$-th comparison only with probability at most $2^k=O(\epsilon)$. The same is true for function $g$ and therefore we have
\[
\Pr_{\vect{x} \sim D' }
\pars{
g(x) \neq f(x)
}
=
O(\epsilon).
\]

But from the definition of the TV distance we have that the function $\indicator_{g(x)\neq f(x)}$ should not allow us to distinguish $D$ and $D'$ with advantage better than $\epsilon$. Therefore
\[
\abs{
\Pr_{\vect{x} \sim D' }
\pars{
g(x) \neq f(x)
}
-
\Pr_{\vect{x} \sim D }
\pars{
g(x) \neq f(x)
}
}
\leq 
\epsilon.
\]
Together with the previous equation we conclude \[
\Pr_{\vect{x} \sim D}
\pars{
g(x) \neq f(x)
}
=
O(\epsilon).
\]

\end{proof}

%dummy comment inserted by tex2lyx to ensure that this paragraph is not empty
\end{document}